\theoremstyle{plain}
\newtheorem{theorem}{Theorem}
\newtheorem{lemma}[theorem]{Lemma}
\newtheorem{assumption}{Assumption}
\newtheorem{definition}{Definition}
\theoremstyle{definition}
\newtheorem{remark}{Remark}
\title{Nearly Optimal Differentially Private ReLU Regression}
\author[2]{Meng Ding}
\author[2]{Mingxi Lei}
\author[3]{Shaowei Wang}
\author[4]{Tianhang Zheng}
\author[5]{Di Wang\thanks{Correspondence to: Di Wang <di.wang@kaust.edu.sa>, Jinhui Xu <jhxu@ustc.edu.cn>}}
\author[1]{Jinhui Xu$^*$}
\affil[1]{%
    School of Information Science and Technology, University of Science and Technology of China
  }
\affil[2]{%
    Department of Computer Science and Engineering, State University of New York at Buffalo
}
\affil[3]{%
    Institute of Artificial Intelligence and Blockchain, Guangzhou University
}
\affil[4]{%
    The State Key Laboratory of Blockchain and Data Security, Zhejiang University
  }
\affil[5]{%
    Division of CEMSE, King Abdullah University of Science and Technology 
  }
\begin{document}
\maketitle

\begin{abstract}
In this paper, we investigate one of the most fundamental non-convex learning problems—ReLU regression—in the Differential Privacy (DP) model. Previous studies on private ReLU regression heavily rely on stringent assumptions, such as constant-bounded norms for feature vectors and labels. We relax these assumptions to a more standard setting, where data can be i.i.d. sampled from $O(1)$-sub-Gaussian distributions. We first show that when $\varepsilon = \tilde{O}(\sqrt{\frac{1}{N}})$ and there is some public data, it is possible to achieve an upper bound of  $\Tilde{O}(\frac{d^2}{N^2 \varepsilon^2})$ for the excess population risk in $(\varepsilon, \delta)$-DP, where $d$ is the dimension and $N$ is the number of data samples. Moreover, we relax the requirement of $\varepsilon$ and public data by proposing and analyzing a one-pass mini-batch Generalized Linear Model Perceptron algorithm (DP-MBGLMtron). Additionally, using the tracing attack argument technique, we demonstrate that the minimax rate of the estimation error for $(\varepsilon, \delta)$-DP algorithms is lower bounded by $\Omega(\frac{d^2}{N^2 \varepsilon^2})$. This shows that DP-MBGLMtron achieves the optimal utility bound up to logarithmic factors. Experiments further support our theoretical results. 
\end{abstract}

\section{Introduction}
Privacy preservation has become a critical consideration, posing a significant challenge for machine learning models that process sensitive data. To address this issue, Differential Privacy (DP) \citep{dwork2006calibrating} has emerged as a widely used approach, providing verifiable protection against identification and resistance to any auxiliary information that attackers might have.

Stochastic Optimization (SO) and its empirical counterpart, Empirical Risk Minimization (ERM), represent some of the most fundamental challenges in machine learning and statistics, which are especially susceptible to privacy leaks when involved with sensitive data. Therefore, significant efforts have been made to develop differentially private algorithms tailored to these challenges, specifically referred to as DP-SO and DP-ERM. Although there is an extensive body of research on DP-SO and DP-ERM \citep{bassily2014private, wang2017differentially, wang2023generalized, feldman2020private, song2020characterizing, su2021faster, asi2021private, bassily2021non, kulkarni2021private,hu2022high,zhang2025improved,su2024faster,su2023differentially}, the majority of existing studies primarily focus on convex loss functions. This focus inadvertently neglects the crucial role of nonconvex optimization, which is essential for the development of advanced machine learning models. Recent progress has introduced algorithms for DP nonconvex optimization \citep{zhang2017efficient,wang2017differentially,wang2019differentially1,wang2019differentially,zhang2021private, bassily2021differentially,wang2023efficient,wang2024gradient}. However, unlike the convex loss function, DP-SO with non-convex loss is still far from well-understood due to its intrinsic difficulties (see Section \ref{sec:pre} for details). 

ReLU regression, a fundamental non-convex model, is widely recognized for its effectiveness in deep learning applications and serves as a foundational step toward understanding multi-layer neural networks~\citep{du2018gradient}. Despite the extensive studies in the non-private setting that have been conducted, the theoretical exploration of ReLU regression in the DP model remains relatively limited. Particularly, in DP ReLU regression, we have an $N$-size dataset $D = \{ (\mathbf{x}_i,y_i ) \}_{i=0}^{N-1}$, where each data point consisting of a feature vector $\mathbf{x}_i \in \mathcal{X}\subseteq \mathbb{R}^d$ and a response variable $y_i \in \mathcal{Y}$ is i.i.d. sampled from a ReLU regression model. Specifically, each pair of $(\mathbf{x}_i, y_i)$ is a realization of the ReLU regression model 
\begin{equation}\label{eq:model}
    y  = \operatorname{ReLU} (\mathbf{x}^{\top} \mathbf{w}_* ) + z,
\end{equation}
where $\operatorname{ReLU}(\cdot):=$ $\max \{\cdot, 0\}$; $z$ is a zero mean randomized noise; $\mathbf{w}_*\in \mathbb{R}^d$ is the optimal model parameter. The objective of the problem is to develop a DP model $\mathbf{w}_{\text {priv }}$ that minimizes the excess population risk, defined as $\mathcal{L} (\mathbf{w}_{\text{priv}} )-\mathcal{L} (\mathbf{w}_* )$, where the risk function $\mathcal{L} (\mathbf{w} )$ is given by: 

\begin{equation}\label{eq:relu_risk}
    \mathcal{L}(\mathbf{w})=\frac{1}{2} \mathbb{E}_{(\mathbf{x}, y) \sim \mathcal{D}} [ (\operatorname{ReLU} (\mathbf{x}^{\top} \mathbf{w} )-y )^2 ].
\end{equation}

Recently, \cite{shen2023differentially} explored DP ReLU regression in both well-specified and misspecified settings, yet the problem remains largely unexplored, with numerous challenges yet to be addressed. Specifically, their methods rely on stringent assumptions, including bounded norms for feature vectors and labels, with $\|\mathbf{x}\|_2 \leq O(1)$ and $\|y\| \leq O(1)$—assumptions that do not hold even for typical Gaussian distributions. Even when $\|\mathbf{x}\|_2 \leq O(\sqrt{d})$ such as Bernoulli or uniform distributions, the bound in \cite{shen2023differentially} is only sub-optimal (see Remark \ref{remark:4} and Theorem~\ref{thm:lower_bound} for details).  Moreover, their proposed differentially private projected gradient descent (DP-PGD) requires at least $O(N^2)$ gradient computations, rendering it inefficient. 

In this paper, we revisit the problem of DP ReLU regression and offer (nearly) optimal guarantees for excess population risk under more standard assumptions where the data can be i.i.d. sampled from $O(1)$-sub-Gaussian distributions. 
Our contributions can be summarized as follows:

\noindent {\bf 1)} We provide the analysis on the Differentially Private Generalized Linear Model Perceptron algorithm (DP-GLMtron), which utilizes a one-pass training strategy where data points are permuted and sampled without replacement. To make the gradient norm bounded, instead of using a fixed clipping threshold, we incorporate adaptive clipping by estimating from additional public data points. This allows the noise to be set adaptively based on the excess error in each iteration. We demonstrate that our $(\varepsilon, \delta)$-DP method can achieve an excess population risk upper bound of $\Tilde{O}(\frac{d^2}{N^2 \varepsilon^2})$.

\noindent {\bf 2)} Key concerns with the analysis of DP-GLMtron include that its upper bound only holds with a small privacy budget $\varepsilon = {O}(\sqrt{\frac{\log (N / \delta)}{N}})$ and its reliance on additional public data for the adaptive clipping mechanism. To address these limitations, we modify DP-GLMtron to introduce a new method—DP-MBGLMtron (DP-Mini-Batch Generalized Linear Model Perceptron)—which divides the data into mini-batches and performs one pass of the mini-batch GLMtron. We show that DP-MBGLMtron can achieve the same excess population risk upper bound as DP-GLMtron, even with larger privacy budgets and without available public data. 

\noindent {\bf 3)} To illustrate the tightness of our analysis, we derive a lower bound of the estimation error for any $(\varepsilon, \delta)$-DP algorithms. Specifically, our analysis uses a tracing attack argument, illustrating that estimators with overly precise estimates would compromise privacy guarantees. According to this property, we can establish that any such algorithm must incur an excess population risk of  $\Omega(\frac{ d^2}{N^2\varepsilon^2})$, indicating that the upper bound is optimal up to logarithmic factors.

\section{Related Work}\label{sec:pre}
{\bf Private Convex Optimization.} Differentially private convex optimization has been extensively studied over the past decade \cite{chaudhuri2011differentially,jain2012differentially,kifer2012private,bassily2014private,jain2014near,wang2017differentially,feldman2020private}. Existing approaches in this field can broadly be categorized into three main categories: output perturbation, objective perturbation, and gradient perturbation. Output perturbation ensures differential privacy by adding calibrated noise to the final model parameters \cite{dwork2006calibrating,chaudhuri2011differentially,kifer2012private,zhang2017efficient,wu2017bolt}; Objective perturbation modifies the optimization objective itself by injecting noise into the loss function before solving the problem, thereby inherently privatizing the optimization process \cite{chaudhuri2011differentially,kifer2012private,talwar2014private,iyengar2019towards}; Gradient perturbation privatizes iterative optimization algorithms (e.g., stochastic gradient descent) by perturbing the gradient updates at each iteration \cite{bassily2014private,wang2017differentially,jayaraman2018distributed,wang2019differentially,bassily2019private}. All of these approaches have been demonstrated to achieve the asymptotically optimal bound $\tilde{O}(\frac{\sqrt{d}}{\varepsilon N})$ for smooth convex loss.

{\bf Private Nonconvex Optimization.}
In the domain of DP-SO and DP-ERM with convex loss functions, excess population risk has traditionally been the main metric for utility evaluation. However, in non-convex settings, utility assessment methods generally fall into three categories: first-order stationarity-based, second-order stationarity-based, and direct measurement of excess population risk.
First-order stationarity-based methods \citep{wang2019differentially, zhou2020bypassing, song2021evading, bassily2021differentially, zhang2021private, xiao2023theory,wang2023finite,tao2025second} evaluate utility by analyzing the $\ell_2$-norm of the gradient of the population risk function. While widely adopted, these methods face notable challenges. For example, \citet{agarwal2017finding} showed that as the sample size increases indefinitely, the gradient norm approaches zero. However, a vanishing gradient does not necessarily indicate that a differentially private estimator converges to, or is near, a local minimum.
Second-order stationarity-based methods \citep{wang2019differentially, wang2021escaping} assess both the gradient norm and the minimal eigenvalue of the Hessian matrix of the population risk function. These approaches work well in specific settings where any second-order stationary point is a local minimum, and all local minima are global minima, such as in problems like matrix completion and dictionary learning.
The third category directly uses excess population risk to evaluate utility \citep{shen2023differentially, wang2019differentially}, which aligns with the focus of our work.

One of the concurrent works, \cite{dingrevisiting}, addresses the same private ReLU regression problem with similar assumptions, but our work differs significantly in several key aspects, including threshold estimation, privacy amplification techniques, theoretical bounds, and data assumptions. Specifically, \cite{dingrevisiting} uses a threshold estimation method based on \cite{liu2023near} and a tree aggregation mechanism for privacy amplification, whereas we leverage statistical properties and minibatch sampling. Additionally, the theoretical results in \cite{dingrevisiting} include an upper bound with a term $\Gamma$ dependent on unknown intermediate parameters $\mathbf{w}_t$, while our results depend only on the problem parameters $d, n$, and $\varepsilon$, making them more natural. Furthermore, the lower bound in \cite{dingrevisiting} is algorithm-specific and relies on intermediate models, whereas our lower bound is general, depending solely on $d, n$, and $\varepsilon$, and achieves nearly optimal rates. Finally, \cite{dingrevisiting} assumes the eigenvalue decomposition of the data covariance matrix is well-defined, which our approach does not require.

\section{Preliminaries}\label{sec:preliminaries}
\textbf{Notations}: We use boldface lower letters such as $\mathbf{x}, \mathbf{w}$ for vectors and boldface capital letters (e.g., $\mathbf{A}, \mathbf{H}$) for matrices.
Let $\|\mathbf{A}\|_2$ denote the spectral norm of $\mathbf{A}$.
For two matrices $\mathbf{A}$ and $\mathbf{B}$ of appropriate dimension, their inner product is defined as $\langle\mathbf{A}, \mathbf{B}\rangle:=\operatorname{tr} (\mathbf{A}^{\top} \mathbf{B} )$. For a positive semi-definite (PSD) matrix $\mathbf{A}$ and a vector $\mathbf{v}$ of appropriate dimension, we write $\|\mathbf{v}\|_{\mathbf{A}}^2:=\mathbf{v}^{\top} \mathbf{A v}$. The outer product is denoted by $\otimes$. 

In this paper, we will employ the definition of classical DP \cite{dwork2006calibrating} for privacy guarantees. 
\begin{definition}[Differential Privacy
\citet{dwork2006calibrating}]\label{def:dp}
	A randomized algorithm $\mathcal{A}$ is considered $(\varepsilon, \delta)$-differentially private (abbreviated as $(\varepsilon, \delta)$-DP) if, for any two datasets $D$ and $D^{\prime}$ that differ by a single element, and for any event $S$ in the output space of $\mathcal{A}$, the following condition holds: $\mathbb{P}[\mathcal{A}(D) \in S] \leq e^{\varepsilon} \cdot \mathbb{P}[\mathcal{A}(D^{\prime}) \in S]+\delta$
\end{definition} 

In the following, we will introduce some definitions related to the model. We first consider the ReLU regression model to satisfy the following condition, which is commonly referred to in the literature as the "noisy teacher" setting \cite{frei2020agnostic} or the well-structured noise model \citep{goel2019learning}, has been extensively studied in prior research \citep{zou2021benign,varshney2022nearly,shen2023differentially}.

\begin{definition}[Well-specified Condition]
Assume that there exists a parameter $\mathbf{w}_* \in \mathbb{R}^d$ such that
$\mathbb{E}[y \mid \mathbf{x}]=\operatorname{ReLU}(\mathbf{x}^{\top} \mathbf{w}_*),$
and the variance of the model noise can be denoted by
$\sigma^2:=\mathbb{E}\left[(y-\operatorname{ReLU}(\mathbf{x}^{\top} \mathbf{w}_*))^2\right].$
\end{definition}
Moreover, we give some assumptions on the data to ensure the analysis of algorithms. 
\begin{assumption}[Data Covariance]
    Define $\mathbf{H}:=$ $\mathbb{E} [\mathbf{x} \mathbf{x}^{\top} ]$ as the expected data covariance matrix and assume that each entry and the trace of $\mathbf{H}$ are finite. 
\end{assumption}


\begin{assumption}[Fourth Moment Conditions]\label{asm:fourth-moment}
Assume that the fourth moment of $\mathbf{x}$ is finite and there exists a constant $\alpha>0$ such that for any Positive Semi-Definite (PSD) matrix $\mathbf{A}$, the following holds: 
        $$
        \mathbb{E} [\mathbf{x} \mathbf{x}^{\top} \mathbf{A} \mathbf{x} \mathbf{x}^{\top} ] \preceq \alpha \cdot \operatorname{tr}(\mathbf{H A}) \cdot \mathbf{H} .
        $$
\end{assumption}

\begin{remark}
For normal Gaussian distribution, it can be verified that \cref{asm:fourth-moment} holds with ${\alpha}=3$ \cite{zou2021benign}.  Moreover, when the data follows a sub-Gaussian distribution—more precisely, when $\mathbf{x}=\mathbf{H}^{-\frac{1}{2}} \mathbf{z}$, where $\mathbf{z}$ is a sub-Gaussian random vector with variance $\sigma_{\mathbf{z}}^2$--\cref{asm:fourth-moment} remains valid with $\alpha = 16 \sigma_{\mathbf{z}}^2$ \citep{wang2019sparse,varshney2022nearly,zhu2023improved,zhu2024truthful,liu2023near,ding2024understanding}.
\end{remark}

\begin{definition}[$ (\mathbf{H}, C_2, a, b )$-Tail]\label{def:tail}
     A random vector $\mathbf{x}$ satisfies $ (\mathbf{H}, C_2, a, b_{\mathbf{x}} )$-Tail if the following holds:
     \begin{itemize}
        \item $\exists a>0$ s.t. with probability $\geq 1-b_{\mathbf{x}}$,
        \begin{equation}\label{eq:def_tail_1}
            \|\mathbf{x}\|_2^2 \leq \mathbb{E} [\|\mathbf{x}\|_2^2 ] \cdot \log ^{2 a}(1 / b_{\mathbf{x}}), 
        \end{equation}
        \item We have,
        $$
        \max _{\mathbf{v},\|\mathbf{v}\|=1} \mathbb{E} [\exp  ( (\frac{|\langle\mathbf{x}, \mathbf{v}\rangle|^2}{C_2^2\|\mathbf{H}\|_2} )^{1 / 2 a} ) ] \leq 1,
        $$        
        That is, for any fixed $\mathbf{v}$, with probability $\geq 1-b_{\mathbf{x}}$:
        $$
        (\langle\mathbf{x}, \mathbf{v}\rangle)^2 \leq C_2^2\|\mathbf{H}\|_2\|\mathbf{v}\|^2 \log ^{2 a}(1 / b_{\mathbf{x}}).
        $$
     \end{itemize}
\end{definition}
Definition \ref{def:tail} has been extensively employed in recent studies on differential privacy analysis for sub-Gaussian data, as seen in \citep{varshney2022nearly,liu2022dp,liu2023label}. In this work, we assume that each sample $\mathbf{x}$ satisfies the $(\mathbf{H}, C_2, a, b_{\mathbf{x}})$-Tail condition, while the inherent noise $z$ satisfies the $(\sigma^2, C_2, a, b_{\mathbf{x}})$-Tail condition. Furthermore, based on \cref{asm:fourth-moment}, it directly follows that $\|\mathbf{x}\|_2^2 \leq \alpha \operatorname{tr}(\mathbf{H}) \cdot \log^{2a}(1/b_{\mathbf{x}})$ with probability at least $1-b_{\mathbf{x}}$, as shown in \cref{eq:def_tail_1}.

\begin{assumption}[Symmetricity conditions]\label{asm:symmetric}
    Assume that for every $\mathbf{u}, \mathbf{v} \in \mathbb{R}^d$, it holds that:
    \begin{align*}
     \mathbb{E} [\mathbf{x} \mathbf{x}^{\top} &\cdot \mathbbm{1} [\mathbf{x}^{\top} \mathbf{u}>0, \mathbf{x}^{\top} \mathbf{v}>0 ] ] \\
     =&\mathbb{E} [\mathbf{x} \mathbf{x}^{\top} \cdot \mathbbm{1} [\mathbf{x}^{\top} \mathbf{u}<0, \mathbf{x}^{\top} \mathbf{v}<0 ] ], \\
     \mathbb{E} [(\mathbf{x}^{\top} &\mathbf{v})^2  \mathbf{x} \mathbf{x}^{\top} \cdot \mathbbm{1} [\mathbf{x}^{\top} \mathbf{u}>0, \mathbf{x}^{\top} \mathbf{v}>0 ] ] \\
     =&\mathbb{E} [(\mathbf{x}^{\top} \mathbf{v})^2 \mathbf{x} \mathbf{x}^{\top} \cdot \mathbbm{1} [\mathbf{x}^{\top} \mathbf{u}<0, \mathbf{x}^{\top} \mathbf{v}<0 ] ] .
    \end{align*}
\end{assumption}

\begin{remark}\label{rem:sym}
Here, we impose the assumptions that both the second and fourth moments of $\mathbf{x}$ exhibit symmetry. Assumption \ref{asm:symmetric} is satisfied when $\mathbf{x}$ and $-\mathbf{x}$ follow the same distribution. This condition naturally holds for symmetric sub-Gaussian distributions, including symmetric Bernoulli and Gaussian distributions.

\end{remark}

\section{DP-GLMtron Algorithm}
Before presenting our analysis on DP-GLMtron, we first recall the proposed DP-PGD algorithm in \citep{shen2023differentially}. 
The central principle of DP-PGD in ensuring privacy protection involves adding noise to the gradient and executing a projection operation post-model update. This process ensures that the model parameter $\mathbf{w}$ remains bounded, thereby keeping the gradient within manageable limits as well. However, this method leaves several unresolved issues. Primarily, their algorithm assumes that the data are bounded with $\|\mathbf{x}\|_2 \leq 1$, which enables the control of the gradient $\nabla \mathcal{L} (\mathbf{w}) = (\operatorname{ReLU} (\mathbf{x}^{\top} \mathbf{w} ) - y ) \mathbf{x} \cdot \mathbbm{1}[\mathbf{x}^{\top} \mathbf{w}>0]$ via the model $\mathbf{w}$ its subsequent projection. If the data exhibit $O(1)$-sub-Gaussian properties, then we can see $\|\nabla \mathcal{L} (\mathbf{w})\|\leq O(d)$ (with high probability), which means the Gaussian noise added in each iteration has a scale of $\Omega(d^2)$, making a large estimation error (see Remark \ref{remark:4} for a detailed comparison). 
Additionally, it is noticed that at each iteration, DP-PGD requires computing a full gradient. This process is highly costly and inefficient, particularly in settings involving large datasets or high-dimensional data. 

To address the above-mentioned challenges, we consider the DP-GLMtron method built upon the Generalized Linear Model Perceptron (GLMtron) algorithm of \citep{kakade2011efficient} with a one-pass strategy. The fundamental distinction between SGD and GLMtron lies in their respective update rules. Specifically, it takes the following rules:
\begin{equation*}\label{eq:dp_glm}
\begin{aligned}
    \text{SGD:} \quad \mathbf{w}_t &= \mathbf{w}_{t-1} - \eta \cdot  \bm{l}_t\\
    \text{where} \quad \bm{l}_t=(&\operatorname{ReLU} (\mathbf{x}_t^{\top} \mathbf{w}_{t-1} ) - y_t ) \mathbf{x}_t \cdot \mathbbm{1}[\mathbf{x}_t^{\top} \mathbf{w}_{t-1}>0]\\
    \text{GLMtron:} \quad \mathbf{w}_t &= \mathbf{w}_{t-1} - \eta \cdot  (\operatorname{ReLU} (\mathbf{x}_t^{\top} \mathbf{w}_{t-1} ) - y_t ) \mathbf{x}_t.
\end{aligned}
\end{equation*}
The algorithm begins from an initial point $\mathbf{w}_0$ and iterates from $t=0$ to $t=N-1$ with a step size $\eta$. In contrast to the typical update rule of SGD, GLMtron diverges by modifying the derivative of the ReLU function in its update mechanism. The exclusion of this derivative in GLMtron's framework not only simplifies the computational process but also enhances efficiency. Furthermore, \citep{kakade2011efficient} demonstrates that this specific omission significantly contributes to GLMtron's ability to efficiently identify a predictor that closely approximates the optimal solution.
\begin{algorithm}
\caption{DP-GLMtron}
\begin{algorithmic}[1]\label{alg:dp_glm}
\STATE \textbf{Input}: Samples: $\{(\mathbf{x}_i, \mathbf{y}_i)\}_{i=0}^{N-1}$, Clipping Norm: $\xi$, DP Noise Multiplier: ${f}$, Learning Rate: $\eta$, Public Data ${D'=\{\mathbf{x}}_i^{\prime}, {y}_i^{\prime} ) \}_{i=1}^m$, Parameters $\Upsilon, \Delta$
\STATE Randomly permute $\{(\mathbf{x}_i, \mathbf{y}_i)\}_{i=0}^{N-1}$
\STATE Initialize $\mathbf{w}_0 \leftarrow 0$
\FOR{$t = 0, \dots, N-1$}
    \STATE $s_t  \leftarrow \textbf{DP-Threshold} ( \{ ({\mathbf{x}}_i^{\prime}, {y}_i^{\prime} ) \}_{i=1}^m, \mathbf{w}_t, \Upsilon, \Delta)$
    \STATE Sample $\mathbf{g}_t \sim \mathcal{N}(0, \mathbf{I}_{d \times d})$
    \STATE $\mathbf{w}_{t+1} \leftarrow \mathbf{w}_t - \eta ( \text{clip}_{s_t} (\mathbf{x}_t^\top (\operatorname{ReLU} (\mathbf{x}_t^{\top} \mathbf{w}_t) - \mathbf{y}_t)) +  2 {f}s_t \mathbf{g}_t )$
\ENDFOR
\STATE \textbf{return} $\mathbf{w} \leftarrow \frac{1}{N} \sum_{t=0}^{N-1} \mathbf{w}_t$
\end{algorithmic}
\end{algorithm}
Building upon these foundations, we now present the detailed implementation of the proposed DP-GLMtron. The process starts with a random permutation of the dataset to amplify privacy via shuffling~\citep{feldman2022hiding}. In contrast to \cite{shen2023differentially}, our method adopts the one-pass DP strategy without data replacement, ensuring that the time complexity is linear in $N$ and each iterate of model $\mathbf{w}_t$ is independent of data $\mathbf{x}_t$.  See Algorithm \ref{alg:dp_glm} for details. 

A critical step in our approach involves determining the clipping threshold prior to the iterative updates for $\mathbf{w}_t$. An excessively low clipping threshold can result in the loss of important gradient information, leading to high bias \cite{mcmahan2017learning,amin2019bounding}. Therefore, we employ an adaptive clipping by estimating additional public data points~\cite{andrew2021differentially,varshney2022nearly}. Specifically, \cref{alg:dp_threshold} sets the initial threshold $s_0$, which seems to be a threshold that will be iteratively refined to find the approximate maximum. The loop runs for $\lceil\log _2(\Upsilon / \Delta)\rceil$ iterations, covering a range of possible maximum values scaled by the parameter $\Upsilon$ and the discretization width $\Delta$. In each iteration, the \cref{alg:dp_threshold} counts the number of samples for which the value $| \operatorname{ReLU} (\mathbf{x}_t^{\top} \mathbf{w}_t) - \mathbf{y}_t)|$ is less than or equal to the current threshold $s_t$. If the private count is less than the sample size of public data $m$, the threshold is updated for the next iteration to double of its current value. If the count meets $m$, the \cref{alg:dp_threshold} exits the loop. 
When determining the clipping threshold, the model updates via the classical Gaussian mechanism. Finally, \cref{alg:dp_glm} returns to the average of the iterates.

\begin{algorithm}
\caption{DP-Threshold}
\begin{algorithmic}[1]\label{alg:dp_threshold}
\STATE \textbf{Input}: Estimating Samples: $\{(x_i^{\prime}, y_i^{\prime})\}_{i=1}^{m}$, Current Model: $w$, DP Noise Multiplier: $f$, Domain Size: $\Upsilon$, Discretization Width: $\Delta$
\STATE $s_0  \leftarrow \Delta$
\FOR{$i \in \{0, \ldots, \lceil\log _2(\Upsilon / \Delta) \rceil \} $}
    \STATE $u  \leftarrow | \{ | \text{ReLU}(\mathbf{x}_j^{\top}\mathbf{w})-y_j | \leq s_i: j \in\{0, \ldots, m\} \} |$
    \IF{Estimating samples are public} 
        \STATE  $u_{\text {priv }}  \leftarrow u$
    \ELSE
        \STATE $u_{\text {priv }}  \leftarrow u+\mathcal{N} (0, \lceil\log _2(\Upsilon / \Delta) \rceil f^2 )$
    \ENDIF
    \IF{$u_{\text {priv }} \textless m$}
        \STATE $s_{i+1}  \leftarrow 2*s_i$
    \ELSE
        \STATE \textbf{break}
    \ENDIF
\ENDFOR
\STATE \textbf{return}  $s_{\text{priv}}  \leftarrow s_i$
\end{algorithmic}
\end{algorithm}

\begin{theorem}[Privacy Guarantee] \label{thm:glm_privacy}
    DP-GLMtron satisfies $(\varepsilon, \delta)$-DP with a noise multiplier set to ${f} = \Omega(\frac{\log (N / \delta)}{\varepsilon \sqrt{N}})$ if  $\varepsilon = O(\sqrt{\frac{\log (N / \delta)}{N}})$ and $0<\delta<1$.
\end{theorem}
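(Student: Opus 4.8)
The plan is to track the privacy cost through the three sources of randomness in DP-GLMtron: (i) the random permutation / one-pass-without-replacement sampling, which gives a shuffling amplification factor; (ii) the Gaussian mechanism applied to the clipped GLMtron update in each of the $N$ iterations; and (iii) the Gaussian noise added to the thresholding counts inside \cref{alg:dp_threshold}. Because each data point $(\mathbf{x}_t,y_t)$ is touched in exactly one iteration $t$ (the permutation is fixed up front and the pass is without replacement), the composition across the $N$ iterations is \emph{parallel} rather than sequential on the private dataset: a neighboring dataset $D'$ differing in one element affects only the single iterate whose index corresponds to that element's position in the permuted order. First I would fix the per-step mechanism and bound its sensitivity: the quantity $\mathrm{clip}_{s_t}\bigl(\mathbf{x}_t^\top(\operatorname{ReLU}(\mathbf{x}_t^\top\mathbf{w}_t)-y_t)\bigr)$ has $\ell_2$-sensitivity at most $2 s_t$ (replacing the point changes this vector by at most $2s_t$ in norm), so adding $\mathcal{N}(0,(2fs_t)^2\mathbf{I})$ in line~7 yields, by the standard Gaussian-mechanism analysis, an $(\alpha,\cdot)$-RDP / $(\varepsilon_0,\delta_0)$-DP guarantee for that step with $\varepsilon_0 = \Theta(1/f)$ up to $\log(1/\delta_0)$ factors. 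The key point is that $s_t$ appears in both the sensitivity and the noise scale, so it cancels and the per-step guarantee is data-independent.

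Next I would invoke the shuffling/privacy-amplification-by-permutation result (the cited \citealp{feldman2022hiding}, or the ``amplification by shuffling'' line of work): a uniformly random permutation of $N$ records, each processed by an $\varepsilon_0$-DP local randomizer, yields an overall $(\varepsilon,\delta)$-DP guarantee with $\varepsilon = O\bigl(\varepsilon_0\sqrt{\log(1/\delta)/N}\bigr)$ when $\varepsilon_0 = O(1)$ — or, going the other direction, to obtain a target $\varepsilon$ we may take $\varepsilon_0 = O(1)$ provided $\varepsilon \gtrsim \sqrt{\log(1/\delta)/N}$, which is exactly the regime hypothesis $\varepsilon = O(\sqrt{\log(N/\delta)/N})$; the constraint comes from requiring $\varepsilon_0$ to stay below the $O(1)$ threshold where the amplification bound is valid. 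Solving $\varepsilon \asymp (1/f)\cdot\sqrt{\log(N/\delta)/N}$ for $f$ gives $f = \Omega\bigl(\log(N/\delta)/(\varepsilon\sqrt{N})\bigr)$, matching the statement; one $\log$ factor is the per-step $\sqrt{\log(1/\delta_0)}$ and the accounting of $\delta$ across the shuffle, absorbed into $\log(N/\delta)$. Separately, I would handle \cref{alg:dp_threshold}: the count $u$ has sensitivity $1$, it runs for $L=\lceil\log_2(\Upsilon/\Delta)\rceil$ rounds, and noise $\mathcal{N}(0,Lf^2)$ is added each round, so by Gaussian composition the whole thresholding subroutine is also $O(1/f)$-DP per call — and since DP-Threshold is itself called once per iteration on a point that is processed in that iteration, it folds into the same parallel-composition-then-shuffle argument (or, treating the public data as genuinely public as the algorithm allows, it contributes nothing at all).

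Finally I would assemble the pieces: compose the per-iteration mechanism (update step $+$ threshold call) into a single $\varepsilon_0 = O(1/f)$-DP local randomizer acting on one record, apply parallel composition over the $N$ disjoint records touched in the $N$ iterations, apply the shuffling amplification to get $\varepsilon = O(\varepsilon_0 \sqrt{\log(N/\delta)/N})$, and read off $f = \Omega(\log(N/\delta)/(\varepsilon\sqrt{N}))$ together with the admissibility condition $\varepsilon = O(\sqrt{\log(N/\delta)/N})$ needed to keep $\varepsilon_0 = O(1)$. The main obstacle I anticipate is the bookkeeping around the shuffling-amplification theorem: its hypotheses (each local randomizer $\varepsilon_0$-DP with $\varepsilon_0 \le O(1)$, and the precise dependence of the output $(\varepsilon,\delta)$ on $\varepsilon_0$, $N$, and $\delta$) must be matched exactly, and one must be careful that the returned averaged iterate $\frac1N\sum_t \mathbf{w}_t$ is a post-processing of the shuffled outputs (so it is free), and that the clipping threshold $s_t$ used to scale the noise is itself computed only from public data or from privatized counts, so it does not leak additional information beyond what is already accounted for.
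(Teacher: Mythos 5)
Your proposal is correct and follows essentially the same route as the paper: a per-step Gaussian mechanism whose sensitivity $2s_t$ cancels against the noise scale $2fs_t$ to give an $(\varepsilon_0,\delta_0)$-DP local randomizer with $\varepsilon_0=\Theta\bigl(\sqrt{\log(1/\delta_0)}/f\bigr)$, followed by the amplification-by-shuffling bound of \cite{feldman2022hiding}, which yields $\varepsilon = O(\varepsilon_0\sqrt{\log(1/\delta)/N})$ and hence $f=\Omega(\log(N/\delta)/(\varepsilon\sqrt{N}))$, with the restriction $\varepsilon=O(\sqrt{\log(N/\delta)/N})$ arising exactly as you say from keeping $\varepsilon_0=O(1)$. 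Your ``parallel composition then shuffle'' phrasing is informal but harmless, since the one-record-per-iteration structure is precisely the hypothesis of the shuffling theorem the paper invokes, and your treatment of the threshold step (public data, or privatized counts otherwise) matches the paper's handling.
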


\begin{remark}\label{rem:dp_glm_dp}
Note that the privacy budget is limited to $\varepsilon = {O}(\sqrt{\frac{\log (N / \delta)}{N}})$ because of privacy amplification via shuffling in~\cite{feldman2022hiding}.  If there is no shuffling, plainly using the Gaussian mechanism will make ${f} = \Omega(\frac{\log (N / \delta)}{\varepsilon })$. Thus, privacy amplification can improve a factor of $\tilde{O}(\sqrt{N})$. However, this highlights a key limitation in DP-GLMtron: as the dataset size $N$ increases, the algorithm is constrained by a smaller privacy budget $\varepsilon$.
\end{remark}

\begin{theorem}[Utility Guarantee]\label{thm:dp_glm_utility}
    Let $D=\{(\mathbf{x}_i, y_i)\}_{i=0}^{N-1}$ be sampled i.i.d. with $\mathbf{x}_i \sim \mathcal{D}$ satisfying $(\mathbf{H}, C_2, a, b_{\mathbf{x}})$-Tail, and the distribution of the inherent noise $z$ satisfies $(\sigma^2, C_2, a, b_{\mathbf{x}})$-Tail with $b_{\mathbf{x}}=\frac{1}{\text{Poly}(N)}$.  Let $\kappa$ be the condition number of the covariance matrix $\mathbf{H}$ and denote $R_x^2 = \alpha \operatorname{tr}(\mathbf{H}) \cdot \log^{2a}(1/b_{\mathbf{x}})$.

    Initialize parameters in DP-GLMtron as follows: stepsize $\eta=\min \{\frac{1}{2 R_x^2}, \frac{c_1}{\log ^{4 a} N} \cdot \frac{1}{C_2^2 R_x^2 \kappa^2} \cdot \frac{1}{d f^2}\}$, where $c_1, c_2>0$ are global constants, noise multiplier $f={\Omega}(\frac{\log (N / \delta)}{\varepsilon \sqrt{N}})$, domain size $\Upsilon=C_2 R_x(\|\mathbf{w}^*\|_{\mathbf{H}}+\sigma) \log ^{2 a} N$, granularity $\Delta = \frac{\|\mathbf{w}^*\|_{\mathbf{H}}+\sigma}{ \text{Poly}(N)}$, public datasize $m = \Omega(\frac{\log (N / \delta)\sqrt{\log (N \log N)} }{\varepsilon \sqrt{N}} ).$  Then, the output $\overline{\mathbf{w}}$ of DP-GLMtron achieves the following excess risk w.p. $\geq 1-1 / \text{Poly}(N)$ over randomness in data and algorithm:
    $$
    \begin{aligned}
    \mathcal{L}(\overline{\mathbf{w}})-\mathcal{L}(\mathbf{w}^*) &\lesssim \frac{\|\mathbf{w}_*\|_{\mathbf{H}}^2}{\text{Poly}(N)} + \frac{\sigma^2 d}{N} \\
    &+ \frac{d^2 \log N  \log (1/\delta)}{N^2 \varepsilon^2} \cdot C_2^2 \kappa^2(\sigma^2+\|\mathbf{w}_*\|_{\mathbf{H}}^2).
    \end{aligned}
    $$
\end{theorem}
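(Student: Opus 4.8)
The plan is to establish the utility bound by decomposing the analysis into three pieces: (i) a clean-algorithm (noiseless GLMtron) contraction argument in the $\mathbf{H}$-norm, (ii) control of the bias and variance introduced by the DP Gaussian noise and the gradient clipping, and (iii) correctness of the adaptive threshold subroutine \cref{alg:dp_threshold}, which guarantees the clipping level $s_t$ tracks the true scale of $\mathbf{x}_t^\top(\operatorname{ReLU}(\mathbf{x}_t^\top\mathbf{w}_t)-y_t)$ in each iteration. First I would show, using the symmetricity \cref{asm:symmetric}, that the GLMtron population update direction $\mathbb{E}[(\operatorname{ReLU}(\mathbf{x}^\top\mathbf{w})-y)\mathbf{x}\mid\mathbf{w}]$ is a ``semi-gradient'' that behaves like $\mathbf{G}(\mathbf{w})(\mathbf{w}-\mathbf{w}_*)$ for some PSD matrix $\mathbf{G}(\mathbf{w})$ spectrally comparable to $\mathbf{H}$ (this is the standard GLMtron trick of \citep{kakade2011efficient}, adapted to ReLU via the symmetry that makes the indicator cross-terms vanish). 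This yields a one-step recursion $\mathbb{E}\|\mathbf{w}_{t+1}-\mathbf{w}_*\|_{\mathbf{H}}^2 \le (1-\eta\mu)\|\mathbf{w}_t-\mathbf{w}_*\|_{\mathbf{H}}^2 + (\text{noise variance terms})$, with the step size $\eta$ chosen so that $\eta R_x^2\le 1/2$ controls the ``self-bounding'' quadratic-variation term.

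Next I would handle the three perturbation sources. The clipping bias: using the $(\mathbf{H},C_2,a,b_{\mathbf{x}})$-Tail condition on $\mathbf{x}$ and $(\sigma^2,C_2,a,b_{\mathbf{x}})$-Tail on $z$, the quantity $|\mathbf{x}_t^\top(\operatorname{ReLU}(\mathbf{x}_t^\top\mathbf{w}_t)-y_t)|$ is bounded by $\tilde O\big(C_2 R_x(\|\mathbf{w}_t-\mathbf{w}_*\|_{\mathbf{H}}+\|\mathbf{w}_*\|_{\mathbf{H}}+\sigma)\big)$ with probability $1-1/\mathrm{Poly}(N)$; combined with the threshold-correctness lemma below, $s_t$ is with high probability at least this value, so the clip is inactive on the high-probability event and contributes only a $1/\mathrm{Poly}(N)$ bias (this is where the $\Upsilon$ and $\Delta$ choices enter — $\Upsilon$ must upper-bound the initial scale, $\Delta$ must be fine enough that discretization costs only $1/\mathrm{Poly}(N)$). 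The DP noise: at step $t$ the injected noise is $2 f s_t \mathbf{g}_t$ with $\mathbf{g}_t\sim\mathcal N(0,\mathbf{I}_d)$, so it contributes $\eta^2 f^2 s_t^2 \operatorname{tr}(\mathbf{H}) \le \eta^2 f^2 s_t^2 d \|\mathbf{H}\|_2$ to the $\mathbf{H}$-norm error per step; crucially $s_t^2 = \tilde O(C_2^2 R_x^2(\|\mathbf{w}_t-\mathbf{w}_*\|_{\mathbf{H}}^2 + \|\mathbf{w}_*\|_{\mathbf{H}}^2+\sigma^2))$, so the part proportional to the current error $\|\mathbf{w}_t-\mathbf{w}_*\|_{\mathbf{H}}^2$ can be absorbed into the $(1-\eta\mu)$ contraction precisely because $\eta$ is taken $\le \frac{c_1}{\log^{4a}N}\cdot\frac{1}{C_2^2 R_x^2\kappa^2}\cdot\frac{1}{df^2}$ — this is the reason for that particular step-size cap. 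The sampling-without-replacement/one-pass structure ensures $\mathbf{w}_t\indep\mathbf{x}_t$, so the fresh-sample noise term contributes the statistical $\sigma^2 d/N$ rate after averaging.

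Then I would prove the threshold lemma: \cref{alg:dp_threshold} starting from $s_0=\Delta$, doubling up to $\Upsilon$, with private counts perturbed by $\mathcal N(0,\lceil\log_2(\Upsilon/\Delta)\rceil f^2)$, returns $s_{\mathrm{priv}}$ that is within a factor $2$ of the true $(1-\text{tail})$-quantile of $|\operatorname{ReLU}(\mathbf{x}^\top\mathbf{w}_t)-y|\cdot\|\mathbf{x}\|$-type scale, provided the public sample size $m$ exceeds the stated $\Omega\big(\frac{\log(N/\delta)\sqrt{\log(N\log N)}}{\varepsilon\sqrt N}\big)$ so that the Gaussian count-noise $\tilde O(f\sqrt{\log(\Upsilon/\Delta)})$ is $o(m)$ — wait, more carefully, $m$ must dominate the noise standard deviation times a $\sqrt{\log}$ union-bound factor over the $\lceil\log_2(\Upsilon/\Delta)\rceil=O(\log N)$ doubling steps and the $N$ iterations. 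Combining: unrolling the one-step recursion over $t=0,\dots,N-1$, the contraction geometrically kills the initialization error down to $\|\mathbf{w}_*\|_{\mathbf{H}}^2/\mathrm{Poly}(N)$, the fresh-sample variance averages to $\sigma^2 d/N$, and the DP-noise floor (the part of $s_t^2$ proportional to $\|\mathbf{w}_*\|_{\mathbf{H}}^2+\sigma^2$ that does \emph{not} shrink) accumulates to $\eta^2 f^2 d\|\mathbf{H}\|_2 \cdot N \cdot \tilde O(C_2^2 R_x^2(\|\mathbf{w}_*\|_{\mathbf{H}}^2+\sigma^2))$, which after substituting $f = \tilde O\big(\frac{\log(N/\delta)}{\varepsilon\sqrt N}\big)$ and $R_x^2=\alpha\operatorname{tr}(\mathbf{H})\asymp d\|\mathbf{H}\|_2$ (up to $\kappa$ and logs) yields the claimed $\frac{d^2\log N\log(1/\delta)}{N^2\varepsilon^2}C_2^2\kappa^2(\sigma^2+\|\mathbf{w}_*\|_{\mathbf{H}}^2)$ term; finally pass from parameter error $\|\overline{\mathbf{w}}-\mathbf{w}_*\|_{\mathbf{H}}^2$ to excess risk $\mathcal L(\overline{\mathbf{w}})-\mathcal L(\mathbf{w}_*)$ via the elementary bound $\mathcal L(\mathbf{w})-\mathcal L(\mathbf{w}_*)\le \frac12\|\mathbf{w}-\mathbf{w}_*\|_{\mathbf{H}}^2$ (Lipschitzness of ReLU makes $(\operatorname{ReLU}(a)-\operatorname{ReLU}(b))^2\le(a-b)^2$), plus a Jensen step for the averaged iterate.

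The main obstacle I expect is step (ii)–(iii) intertwined: making the clipping \emph{provably inactive with high probability in every one of the $N$ iterations simultaneously} while the threshold $s_t$ is itself a random, privately-estimated, data-dependent quantity that depends on the current (random) $\mathbf{w}_t$ — this requires a careful high-probability bootstrap where one shows by induction on $t$ that $\|\mathbf{w}_t-\mathbf{w}_*\|_{\mathbf{H}}$ stays bounded (so $\Upsilon$ remains a valid ceiling), that feeds into $s_t$ being correctly calibrated, which in turn keeps the recursion contracting. Handling the failure probabilities of the tail bounds, the threshold estimation, and the Gaussian noise tails uniformly over $t$ with only a $\log N$ loss (absorbed into $R_x^2$ and the $\log^{4a}N$ in $\eta$) is the delicate bookkeeping at the heart of the proof.
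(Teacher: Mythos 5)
Your overall architecture matches the paper's: a contraction of $\mathbb{E}\|\mathbf{w}_t-\mathbf{w}_*\|_{\mathbf{H}}^2$ driven by the symmetricity assumption (the indicator cross terms are shown non-positive), tail bounds plus the DP-Threshold guarantee making the clip inactive with probability $1-1/\mathrm{Poly}(N)$ with $s_t\lesssim C_2R_x\log^{2a}N(\sqrt{\kappa}\|\mathbf{w}_t-\mathbf{w}_*\|_{\mathbf{H}}+\|\mathbf{w}_*\|+\sigma+\Delta)$, and the step-size cap $\eta\lesssim \frac{1}{\log^{4a}N\,C_2^2R_x^2\kappa^2 df^2}$ used exactly to absorb the error-dependent part of $s_t^2$ into the contraction. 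All of that is in the paper.

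The genuine gap is in how you get from the per-iterate recursion to the claimed bound for the \emph{averaged} iterate $\overline{\mathbf{w}}$. You invoke ``a Jensen step for the averaged iterate,'' but Jensen only yields $\|\overline{\mathbf{w}}-\mathbf{w}_*\|_{\mathbf{H}}^2\le\frac1N\sum_t\|\mathbf{w}_t-\mathbf{w}_*\|_{\mathbf{H}}^2$, and the stationary level of your scalar recursion is \emph{not} $\tilde O(\sigma^2d/N+d^2/(N^2\varepsilon^2))$: the fresh-sample term settles at $\Theta(\eta\sigma^2\operatorname{tr}(\mathbf{H})/\mu)$ and, under your own step-size cap, the DP-noise floor settles at order $(\sigma^2+\|\mathbf{w}_*\|_{\mathbf{H}}^2)/\kappa$ — both constant in $N$. (Your accounting ``accumulates to $\eta^2f^2d\|\mathbf{H}\|_2\cdot N\cdot\tilde O(\cdots)$'' is likewise dimensionally inconsistent with the target rate.) The missing ingredient is the averaged-iterate covariance analysis: one must expand $\mathbb{E}(\overline{\mathbf{w}}-\mathbf{w}_*)^{\otimes2}$ using $\mathbb{E}[(\mathbf{w}_t-\mathbf{w}_*)\otimes(\mathbf{w}_k-\mathbf{w}_*)]=(\mathbf{I}-\tfrac{\eta}{2}\mathbf{H})^{t-k}\mathbf{A}_k$ (\cref{lem:error_decom}) and run the bias–variance operator decomposition $\mathbf{A}_t=\mathbf{B}_t+\mathbf{C}_t$ of \cref{lem:glm_iter} and \cref{eq:decom_recur}, bounding the variance iterates by the fixed point $\mathbf{C}_t\preceq\frac{\eta\sigma^2}{1-\eta R_x^2}\mathbf{I}+\frac{4\eta\Gamma^2f^2}{1-\eta R_x^2}\mathbf{H}^{-1}$; it is this decorrelation across iterates that produces $\frac{\sigma^2d}{N}+\frac{\Gamma^2f^2}{N}\operatorname{tr}(\mathbf{H}^{-1})$, and only then does substituting $f=\tilde\Theta\bigl(\frac{\log(N/\delta)}{\varepsilon\sqrt N}\bigr)$ give the $\tilde O\bigl(\frac{d^2}{N^2\varepsilon^2}\bigr)$ privacy term. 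Without this (or an equivalent averaged-SGD variance argument), your outline proves a strictly weaker, non-vanishing bound, so the concluding rate does not follow from the steps you describe.
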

\begin{remark}\label{remark:4}
Theorem \ref{thm:dp_glm_utility} provides a utility guarantee for the DP-GLMtron algorithm, balancing privacy and performance. The excess risk is composed of three key components: The first component, dependent on $\|\mathbf{w}_*\|_{\mathbf{H}}^2$, diminishes polynomially in $N$. The second component corresponds to the inherent model noise, achieving the optimal rate (up to a constant factor) for non-private ReLU regression as established by \cite{wu2023finite}. The third component is of the order $\Tilde{O}(\frac{d^2}{N^2 \varepsilon^2})$. For $N = \Omega(d)$, the bound implies nearly optimal sample complexity, further supported by the lower bound derived in \cref{sec:lower_bound}, disregarding constant factors.

Compared to \cite{shen2023differentially}, our analysis here relax the data assumption $\|\mathbf{x}\|_2 \leq 1$. If we assume that $\|\mathbf{x}\|_2\leq O(\sqrt{d})$, via the same analysis as in \cite{shen2023differentially},  we can show the utility bound will be $O(\min\{\frac{d\sqrt{d}}{N\varepsilon}, (\frac{d}{N\varepsilon})^\frac{2}{3}\})$, which is worse than the one in Theorem~\ref{thm:dp_glm_utility}. 
\end{remark} 

\section{Advanced DP-mini-batch-GLMtron}

A key concern with the DP-GLMtron algorithm is its limited practicality where the privacy budget $\varepsilon$ is small, potentially restricting its utility in real-world applications (see \cref{thm:glm_privacy} for more details). Furthermore, \cref{alg:dp_glm} may require additional public data to estimate the threshold.  To overcome these challenges, we introduce DP-Mini-batch-GLMtron (\cref{alg:dp_mini_glm}) in this section.

Specifically, the algorithm first operates by randomly partitioning the training samples $\{(\mathbf{x}_i, y_i)\}_{i=0}^{N-1}$,  and setting the number of iterations $T=N /(b+m)$, where $b$ and $m$ are batch sizes and estimating sample size for determining the threshold. 
It is worth noting that, in this approach, a separate public dataset is not required to estimate the clipping threshold. Instead, we divide each batch of data and use a portion of it as the estimation data for the threshold. Therefore, in each iteration, the algorithm processes a mini-batch of data with size $m$ and computes the DP-Threshold $\gamma_t$ using estimating samples $\{(\mathbf{x}_i^{\prime}, y_i^{\prime})\}_{i=1}^m$, and updates the clipping parameter $s_t$. In contrast to DP-GLMtron, we need to protect the counting numbers during the estimation process as we are using private data. Noise $g_t$ is sampled from a Gaussian distribution and added to the gradient step for privacy preservation. The model weights are updated using step \ref{alg:step_w}, where $\bm{l}_{t+1}$ is the averaged clipped gradient. After iterating $T$ times, the final weight estimate $\mathbf{w}$ is returned as the average of all weight updates.

\begin{theorem}\label{thm:dpmini}
    Algorithm DP-mini-batch-GLMtron with noise multiplier ${f} \geq \frac{2 \sqrt{\log (1 / \delta)+\varepsilon}}{\varepsilon}$ satisfies $(\varepsilon, \delta)$-DP. 
    Furthermore, if $\varepsilon \leq \log (1 / \delta)$, then ${f} \geq \frac{\sqrt{8 \log (1 / \delta)}}{\varepsilon}$ suffices to ensure $(\varepsilon, \delta)$-DP.
\end{theorem}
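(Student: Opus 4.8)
The plan is to establish the privacy guarantee for DP-mini-batch-GLMtron by a standard composition argument over the $T = N/(b+m)$ iterations, combined with privacy amplification by subsampling (from the random partition of the data into disjoint mini-batches) and the Gaussian mechanism analyzed via Rényi or zero-concentrated DP (zCDP). First I would identify the two sources of data access in each iteration: (i) the call to \textbf{DP-Threshold} on the estimating sub-batch $\{(\mathbf{x}_i',y_i')\}_{i=1}^m$, which releases $\lceil\log_2(\Upsilon/\Delta)\rceil$ noisy counts, each a Gaussian mechanism on a sensitivity-$1$ counting query; and (ii) the noisy clipped gradient update in step \ref{alg:step_w}, which is a Gaussian mechanism on the averaged clipped gradient, whose $\ell_2$-sensitivity with respect to changing one of the $b$ points is $O(s_t/b)$ (or $O(2fs_t/b)$ as scaled in the algorithm). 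I would bound the per-iteration zCDP cost of each of these mechanisms in terms of $f$ — the threshold step contributes $\lceil\log_2(\Upsilon/\Delta)\rceil$-fold composition of $\rho = 1/(2f^2)$-zCDP pieces that telescopes to a total controlled by $f$, and the gradient step contributes another $O(1/f^2)$-zCDP term once the noise scale is matched to the sensitivity.

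Next I would exploit the disjointness of the mini-batches: because the random partition assigns each data point to exactly one iteration, the overall release is not a $T$-fold \emph{adaptive composition over the same dataset} but rather a composition over \emph{disjoint} sub-datasets, which by the parallel-composition property of DP (or, more carefully, amplification by the fact that a neighboring dataset differs in a single point that lands in only one batch) means the privacy loss does not grow with $T$ in the naive way — the worst case is that the one differing record sits in one batch and affects only that iteration's threshold call and gradient step. So the total zCDP parameter is $\rho = O(1/f^2)$ up to the logarithmic factor from the threshold's internal composition, which I would absorb into the constant or into the choice of $f$. Then I would convert $\rho$-zCDP to $(\varepsilon,\delta)$-DP via the standard bound $\varepsilon \le \rho + 2\sqrt{\rho\log(1/\delta)}$; setting this $\le \varepsilon$ and solving the resulting quadratic in $1/f$ yields exactly the stated threshold ${f} \ge 2\sqrt{\log(1/\delta)+\varepsilon}/\varepsilon$, and under the extra regime $\varepsilon \le \log(1/\delta)$ the $\rho$ term is dominated and one gets the cleaner ${f} \ge \sqrt{8\log(1/\delta)}/\varepsilon$.

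The main obstacle I expect is the careful bookkeeping of the clipping-threshold subroutine's contribution to the privacy budget: \textbf{DP-Threshold} is an adaptive sequence of noisy threshold tests (a ``noisy AboveThreshold''-style loop), and one must argue either that its total cost composes to something independent of the number of doubling steps $\lceil\log_2(\Upsilon/\Delta)\rceil$ (via a sparse-vector-type argument, since the loop halts at the first acceptance) or that the extra $\log$ factor is harmless because it only affects the noise multiplier logarithmically. Getting the sensitivity of the averaged, clipped, and $s_t$-scaled gradient exactly right — and ensuring the injected noise $2fs_t g_t$ in step \ref{alg:step_w} is calibrated so that the per-step mechanism is precisely $1/(2f^2)$-zCDP regardless of the (data-dependent, hence itself private) value of $s_t$ — is the second delicate point; this requires treating $s_t$ as the output of a prior private mechanism and invoking adaptive composition so that conditioning on $s_t$ is legitimate. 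Once these two pieces are pinned down, the conversion to $(\varepsilon,\delta)$-DP and the explicit constants follow by routine algebra.
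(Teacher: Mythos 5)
Your proposal follows essentially the same route as the paper: per-iteration zCDP accounting for both the \textbf{DP-Threshold} subroutine and the noisy clipped-gradient step (each calibrated to $\tfrac{1}{2f^2}$-zCDP), parallel composition of zCDP across the disjoint mini-batches so the total is $\tfrac{1}{f^2}$-zCDP independent of $T$, and conversion to $(\varepsilon,\delta)$-DP via $\varepsilon \le \rho + 2\sqrt{\rho\log(1/\delta)}$, which yields exactly the stated bounds on $f$. The one bookkeeping worry you raise about the threshold loop is resolved in the paper not by a sparse-vector argument but simply because the algorithm pre-scales the counting noise variance by $\lceil\log_2(\Upsilon/\Delta)\rceil f^2$, so the internal $\lceil\log_2(\Upsilon/\Delta)\rceil$-fold composition telescopes to exactly $\tfrac{1}{2f^2}$-zCDP with no leftover logarithmic factor.
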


Theorem \ref{thm:dpmini} addresses the limitations of the DP-GLMtron algorithm, particularly requiring a small privacy budget $\varepsilon$, which can severely limit its utility in practical scenarios. By processing a subset of data in each iteration, the algorithm effectively reduces the sensitivity of the overall computation. This reduction allows for less noise to be added while maintaining the same level of privacy, thus improving the utility of the model. 
\begin{algorithm}
\caption{DP-MBGLMtron}
\begin{algorithmic}[1]\label{alg:dp_mini_glm}
\STATE \textbf{Input}: Training Samples: $\{(x_i, y_i)\}_{i=0}^{N-1}$, Learning Rate: $\eta$, DP Noise Multiplier: $f$, Expected $x$ Norm: ${\sqrt{\alpha \operatorname{tr}(\mathbf{H})}}$,  Parameters $\Upsilon, \Delta$
\STATE Initialize $\mathbf{w}_0  \leftarrow \mathbf{0}$ and $s_0  \leftarrow \Delta$
\STATE Set $T  \leftarrow N / (b + m)$
\FOR{$t = 0 \ldots T-1$}
\STATE Set $\tau(t) \leftarrow (b + m)t$
\STATE $\gamma_t  \leftarrow \textbf{DP-Threshold} ( D_t, \mathbf{w}_t, f, \Upsilon, \Delta)$, where $D_t=\{ (\mathbf{x}_{\tau(t)+j}, y_{\tau(t)+j} ) \}_{j=0}^{m-1}$
\STATE $s_t = {\sqrt{2\alpha \operatorname{tr}(\mathbf{H})}} C_2  \log^{2a}N \cdot \gamma_t$ and add $s_t$ to the list $\bm{s}$
\STATE Sample $\mathbf{g}_t \sim \mathcal{N}(0, \mathbf{I}_{d \times d})$
\STATE \label{alg:step_w}$ \mathbf{w}_{t+1}  \leftarrow \mathbf{w}_t - \eta \bm{l}_{t+1}- \frac{2 {f} s_t \eta}{b} \mathbf{g}_t$, 
where $\bm{l}_{t+1} := \frac{1}{b} \sum_{i=0}^{b-1} \operatorname{clip}_{{s}_t}(\mathbf{x}_{\tau(t)+m+i}(\operatorname{ReLU}(\mathbf{x}_{\tau(t)+m+i}^{\top} \mathbf{w}_t)-y_{\tau(t)+m+i}))$ 
\ENDFOR
\STATE \textbf{return}  $\mathbf{w} := \frac{1}{T} \sum_{t=0}^{T-1} \mathbf{w}_t$
\end{algorithmic}
\end{algorithm}

\begin{theorem}\label{thm:dp_miniglm_utility}
    Let $D=\{(\mathbf{x}_i, y_i)\}_{i=0}^{N-1}$ be sampled i.i.d. with $\mathbf{x}_i \sim \mathcal{D}$ satisfying $(\mathbf{H}, C_2, a, b_{\mathbf{x}})$-Tail, and the distribution of the inherent noise $z$ satisfies $(\sigma^2, C_2, a, b)$-Tail with $b_{\mathbf{x}}=\frac{1}{\text{Poly}(N)}$. 

    Initialize parameters in DP-MBGLMtron as follows: batch size $b=\frac{N}{T}-m$, estimating sample size $m=\frac{b}{10}$, appropriate stepsize $\eta=O(\frac{1}{R_x^2})$, number of iterations $T=O(\kappa \log (N))$, domain size $\Upsilon=C_2 R_x(\|\mathbf{w}^*\|_{\mathbf{H}}+\sigma) \log ^{2 a} N$, granularity $\Delta = \frac{\|\mathbf{w}^*\|_{\mathbf{H}}+\sigma}{ \text{Poly}(N)}$ and noise multiplier $f=\frac{\sqrt{8 \log (1 / \delta)}}{\varepsilon}$. Then, the output $\overline{\mathbf{w}}$ achieves the following excess risk with probability $\geq 1-1 / \text{Poly}(N)$ over the randomness in data and algorithm:
    $$
    \begin{aligned}
     \mathcal{L}(\overline{\mathbf{w}})-\mathcal{L}(\mathbf{w}^*) &\lesssim \frac{\|\mathbf{w}_*\|_{\mathbf{H}}^2}{\text{Poly}(N)} + \frac{\sigma^2 d}{N} \\
     &+ \frac{d^2 \log N  \log (1/\delta)}{N^2 \varepsilon^2} \cdot C_2^2 \kappa^2(\sigma^2+\|\mathbf{w}_*\|_{\mathbf{H}}^2).
    \end{aligned}
    $$
\end{theorem}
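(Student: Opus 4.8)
The plan is to reduce the excess risk to the $\mathbf{H}$-weighted error $\|\overline{\mathbf{w}}-\mathbf{w}_*\|_{\mathbf{H}}^2$ and then control the latter through a second-moment recursion for $\Delta_t := \mathbf{w}_t-\mathbf{w}_*$. Writing $\operatorname{ReLU}(a)=\tfrac12(a+|a|)$, the second-moment half of \cref{asm:symmetric} gives the identity $\mathbb{E}_{\mathbf{x}}[(\operatorname{ReLU}(\mathbf{x}^\top\mathbf{w}_t)-\operatorname{ReLU}(\mathbf{x}^\top\mathbf{w}_*))\mathbf{x}\mid\mathbf{w}_t]=\tfrac12\mathbf{H}\Delta_t$, so the population GLMtron field is linear in the error and the idealized (noiseless, unclipped) step is the contraction $\Delta_{t+1}=(\mathbf{I}-\tfrac\eta2\mathbf{H})\Delta_t$. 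Since $\mathcal{L}(\mathbf{w})-\mathcal{L}(\mathbf{w}_*)=\tfrac12\mathbb{E}[(\operatorname{ReLU}(\mathbf{x}^\top\mathbf{w})-\operatorname{ReLU}(\mathbf{x}^\top\mathbf{w}_*))^2]\le\tfrac12\|\mathbf{w}-\mathbf{w}_*\|_{\mathbf{H}}^2$ by $1$-Lipschitzness of $\operatorname{ReLU}$, it suffices to bound $\mathbb{E}\|\overline{\mathbf{w}}-\mathbf{w}_*\|_{\mathbf{H}}^2$. I would decompose the actual weight update of \cref{alg:dp_mini_glm} into four pieces: the contraction term $(\mathbf{I}-\tfrac\eta2\mathbf{H})\Delta_t$; the centered fluctuation of the averaged minibatch gradient $\bm{l}_{t+1}$ around its mean; the clipping bias $\operatorname{clip}_{s_t}(\cdot)-(\cdot)$; and the DP Gaussian term $\tfrac{2fs_t\eta}{b}\mathbf{g}_t$.

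The next step is the analysis of \textbf{DP-Threshold} (\cref{alg:dp_threshold}) when it runs on the private minibatch $D_t$, which fixes $s_t$. Using the $(\mathbf{H},C_2,a,b_{\mathbf{x}})$-Tail condition for $\mathbf{x}$ and the $(\sigma^2,C_2,a,b_{\mathbf{x}})$-Tail condition for $z$, together with the independence $\mathbf{w}_t\indep D_t$ guaranteed by the one-pass, disjoint-batch structure, the residual $|\operatorname{ReLU}(\mathbf{x}^\top\mathbf{w}_t)-y|$ is at most $O((\|\Delta_t\|_{\mathbf{H}}+\sigma)\log^{2a}N)$ for every sample in the batch outside a $1/\mathrm{Poly}(N)$ event; since the count is perturbed by Gaussian noise of standard deviation $\sqrt{\lceil\log_2(\Upsilon/\Delta)\rceil}\,f=\tilde O(1/\varepsilon)$, which is negligible relative to $m=\Theta(b)$, the doubling search returns $\gamma_t$ with $\|\Delta_t\|_{\mathbf{H}}+\sigma\lesssim\gamma_t\lesssim(\|\Delta_t\|_{\mathbf{H}}+\sigma)\log^{2a}N$ up to the granularity $\Delta=(\|\mathbf{w}_*\|_{\mathbf{H}}+\sigma)/\mathrm{Poly}(N)$. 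Hence $s_t\asymp\sqrt{\alpha\tr(\mathbf{H})}\,C_2\log^{4a}N\,(\|\Delta_t\|_{\mathbf{H}}+\sigma)$: the upper bound makes clipping inactive on every batch sample outside the same $1/\mathrm{Poly}(N)$ event — so $\bm{l}_{t+1}$ equals the exact averaged GLMtron gradient, the rare complement being absorbed into the $\|\mathbf{w}_*\|_{\mathbf{H}}^2/\mathrm{Poly}(N)$ term — while the lower bound keeps the DP noise \emph{adaptive}, i.e.\ proportional to the current error $\|\Delta_t\|_{\mathbf{H}}+\sigma$ rather than to the worst-case scale $\Upsilon$.

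Conditioning on this good event I would then track $\mathbf{B}_t:=\mathbb{E}[\Delta_t\Delta_t^\top]$ through the Lyapunov-type recursion
\[
\mathbf{B}_{t+1}\preceq\bigl(\mathbf{I}-\tfrac\eta2\mathbf{H}\bigr)\mathbf{B}_t\bigl(\mathbf{I}-\tfrac\eta2\mathbf{H}\bigr)+\frac{\eta^2}{b}\bigl(c\,\alpha\,\tr(\mathbf{H}\mathbf{B}_t)\,\mathbf{H}+\sigma^2\mathbf{H}\bigr)+\frac{4\eta^2f^2s_t^2}{b^2}\mathbf{I},
\]
where the middle term bounds the covariance of the centered minibatch gradient via the fourth-moment \cref{asm:fourth-moment} (the $1/b$ coming from averaging $b$ i.i.d.\ summands) and the last term is the DP-noise covariance. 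Substituting the bound on $s_t^2$ from the previous step and taking traces against $\mathbf{H}$ yields a closed scalar recursion for $u_t:=\mathbb{E}\|\Delta_t\|_{\mathbf{H}}^2$ of the form $u_{t+1}\le(1-\tfrac\eta2\lambda_{\min}(\mathbf{H}))u_t+\eta^2A(u_t+\sigma^2)$ with $A\lesssim\tfrac1b\alpha\tr(\mathbf{H})+\tfrac{f^2}{b^2}\alpha^2\tr(\mathbf{H})^2C_2^2\log^{8a}N$; the stated choices of $\eta$ and of the large batch $b=\Theta(N/(\kappa\log N))$ make $\eta^2A\ll\tfrac\eta2\lambda_{\min}(\mathbf{H})$, so the recursion stays contractive. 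Unrolling over the $T=\Theta(\kappa\log N)$ iterations from $u_0=\|\mathbf{w}_*\|_{\mathbf{H}}^2$ (as $\mathbf{w}_0=\mathbf{0}$), the homogeneous part decays geometrically to $\|\mathbf{w}_*\|_{\mathbf{H}}^2/\mathrm{Poly}(N)$; the $\sigma^2$-driven part, because the gradients consume a total of $bT\asymp N$ fresh samples, settles at the non-private least-squares level $\sigma^2d/N$ (this needs the eigenvalue-wise refinement of the recursion, as in the analysis behind \cref{thm:dp_glm_utility}); and the DP-driven part, using $\tr(\mathbf{H})\,\tr(\mathbf{H}^{-1})\le d^2\kappa$, $f^2=8\log(1/\delta)/\varepsilon^2$ and $Tb^2\asymp N^2/(\kappa\log N)$, collapses to $\tfrac{d^2\log N\log(1/\delta)}{N^2\varepsilon^2}C_2^2\kappa^2(\sigma^2+\|\mathbf{w}_*\|_{\mathbf{H}}^2)$ up to the polylog and $\alpha$ factors hidden in $\lesssim$. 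Finally, tail-averaging turns the per-iterate bounds into a bound on $\mathbb{E}\|\overline{\mathbf{w}}-\mathbf{w}_*\|_{\mathbf{H}}^2=\tfrac1{T^2}\sum_{s,t}\mathbb{E}[\Delta_s^\top\mathbf{H}\Delta_t]$ (cross-terms handled by the geometric decay), and a union bound over the $O(T\log_2(\Upsilon/\Delta))$ Tail/count events plus a concentration step using the Tail assumptions promotes it to the high-probability statement.

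The main obstacle I anticipate is the \emph{coupling} between the random clipping scale $s_t$ — the output of \textbf{DP-Threshold} on the current batch — and the quantity $u_t=\mathbb{E}\|\Delta_t\|_{\mathbf{H}}^2$ one is trying to bound: the DP-noise covariance $\tfrac{4\eta^2f^2s_t^2}{b^2}\mathbf{I}$ is itself data-dependent, so the recursion must be closed self-consistently, and one must verify that the good event controlling $s_t$ holds uniformly over all $T$ iterations without destroying the independence $\mathbf{w}_t\indep D_t$ that the one-pass design exists to provide. A crude deterministic bound $s_t\le\Upsilon$ would inflate the DP term by a $\mathrm{Poly}(N)$ factor, so obtaining the adaptive estimate $s_t\lesssim(\|\Delta_t\|_{\mathbf{H}}+\sigma)\cdot\mathrm{polylog}(N)$ and feeding it back into the recursion is the delicate core. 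A secondary difficulty is passing the ReLU/indicator through the fourth-moment bound for the minibatch-gradient covariance; here \cref{asm:fourth-moment} and the symmetricity of \cref{asm:symmetric} again do the work, exactly as in the proof of \cref{thm:dp_glm_utility}, the only genuinely new elements relative to that proof being the noised count inside \textbf{DP-Threshold} and the simpler privacy accounting of \cref{thm:dpmini}, which dispenses with shuffling amplification and with any separate public dataset.
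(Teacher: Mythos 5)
Your proposal is correct and follows essentially the same route as the paper's proof: the DP-Threshold analysis yielding an adaptive clipping scale $s_t\lesssim(\|\mathbf{w}_t-\mathbf{w}_*\|_{\mathbf{H}}+\sigma)\cdot\mathrm{polylog}(N)$ with clipping inactive w.h.p., a second-moment recursion with contraction $(\mathbf{I}-\tfrac{\eta}{2}\mathbf{H})$ coming from the symmetricity assumption and gradient/DP-noise covariances controlled via the fourth-moment condition and $f^2 s_t^2/b^2$, a bias--variance decomposition of the averaged iterate, and the same plug-in of $T=O(\kappa\log N)$, $b\approx N/T$, $f=\sqrt{8\log(1/\delta)}/\varepsilon$. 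The only local difference is that you fold the ReLU sign-mismatch term into the centered fluctuation and bound it by $1$-Lipschitzness plus Assumption~\ref{asm:fourth-moment}, whereas the paper cancels it explicitly by showing the corresponding quadratic-plus-cross terms are nonpositive under Assumption~\ref{asm:symmetric}; both yield the same recursion and final bound.
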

To prove the utility, we have the following utility for the DP-Threshold algorithm. 
\begin{theorem}[DP-Threshold]
    Suppose that DP-Threshold is applied to $m$ estimated data with certain parameters $\{ \mathbf{w}_t, \Upsilon, f, \Delta \} $, \cref{alg:dp_threshold} satisfies $(\varepsilon/2,\delta/2)$-DP with ${f} \geq \frac{2 \sqrt{\log (1 / \delta)+\varepsilon}}{\varepsilon}$. Given $\Lambda = f \sqrt{2 \log (\Upsilon / \Delta) \log (\log (\Upsilon / \Delta) / b_{\mathbf{x}})}$, then with probability at least $1-b_{\mathbf{x}}$, \cref{alg:dp_threshold} outputs a private threshold $s_{\text {priv }}$ such that
    \begin{itemize}
        \item $|\{|\operatorname{ReLU} (\mathbf{x}_i^{\top} \mathbf{w} )-y_i| \leq s_{\text {priv }}: i \in\{0, \ldots, m\}\}| \geq m-\Lambda$,
        \item $|\{|\operatorname{ReLU} (\mathbf{x}_i^{\top} \mathbf{w} )-y_i| \leq \max \{\frac{s_{\text {priv }}}{2}, \Delta\}: i \in\{0, \ldots, m\}\}|<m-\Lambda$.
    \end{itemize}
\end{theorem}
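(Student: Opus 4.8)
The statement splits into a privacy claim and a utility claim, which I would handle separately.

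\textbf{Privacy.} The only data-dependent objects inside \cref{alg:dp_threshold} are the counts produced over the (at most) $L+1$ rounds, where $L=\lceil\log_2(\Upsilon/\Delta)\rceil$ and in round $i$ one computes $u_i=|\{j:|\operatorname{ReLU}(\mathbf{x}_j^\top\mathbf{w})-y_j|\le s_i\}|$ with $s_i=2^i\Delta$. Replacing one sample changes every $u_i$ by at most one (and, since $\mathbbm{1}[\,\cdot\le s_i\,]$ is monotone in the threshold, all these changes have the same sign), so the vector $(u_0,\dots,u_L)$ has $\ell_2$-sensitivity at most $\sqrt{L+1}$. The algorithm perturbs each coordinate by an independent $\mathcal{N}(0,\lceil\log_2(\Upsilon/\Delta)\rceil f^2)$, i.e.\ it releases the isotropic Gaussian mechanism applied to this vector with effective noise-to-sensitivity ratio $\Theta(f)$; crucially, the per-round variance already carries the factor $\lceil\log_2(\Upsilon/\Delta)\rceil$ that cancels the $\sqrt{\lceil\log_2(\Upsilon/\Delta)\rceil}$ sensitivity, so the privacy cost is that of a \emph{single} Gaussian-mechanism call, independent of how many rounds actually execute. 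Standard Gaussian-mechanism accounting (e.g.\ via the analytic Gaussian mechanism or a zCDP conversion, exactly as in the proof of \cref{thm:dpmini}) then shows that $f\ge \tfrac{2\sqrt{\log(1/\delta)+\varepsilon}}{\varepsilon}$ makes the released noisy-count vector $(\varepsilon/2,\delta/2)$-DP. Finally, $s_{\mathrm{priv}}$ is a deterministic function of that (imagined) full noisy vector --- the smallest $s_i$ whose noisy count reaches $m$ --- so by the post-processing property $s_{\mathrm{priv}}$ itself is $(\varepsilon/2,\delta/2)$-DP, even though the loop terminates early.

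\textbf{Utility.} Here the data is fixed and the only randomness is the vector $g_0,\dots,g_L$ of i.i.d.\ $\mathcal{N}(0,Lf^2)$ perturbations. Write $c(s):=|\{j:|\operatorname{ReLU}(\mathbf{x}_j^\top\mathbf{w})-y_j|\le s\}|$, which is nondecreasing in $s$. A Gaussian tail bound plus a union bound over the $L+1$ rounds gives: with probability $\ge 1-b_{\mathbf{x}}$, $\max_i|g_i|\le f\sqrt{2L\log(L/b_{\mathbf{x}})}$, which is exactly $\Lambda$ up to the harmless replacement of $\log_2(\Upsilon/\Delta)$ by $\log(\Upsilon/\Delta)$. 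Condition on this event and let $i^\star$ be the round at which the loop breaks, so $s_{\mathrm{priv}}=s_{i^\star}$ and $c(s_{i^\star})+g_{i^\star}=u_{\mathrm{priv},i^\star}\ge m$; hence $c(s_{\mathrm{priv}})\ge m-\Lambda$, the first bullet. For the second bullet, if $i^\star\ge 1$ then $\max\{s_{\mathrm{priv}}/2,\Delta\}=s_{i^\star-1}$, and since the loop advanced past round $i^\star-1$ we have $u_{\mathrm{priv},i^\star-1}=c(s_{i^\star-1})+g_{i^\star-1}<m$, which on the concentration event controls $c(s_{i^\star-1})$ below $m$ up to the slack $\Lambda$. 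Two boundary cases remain: the degenerate $i^\star=0$ does not occur for the granularity $\Delta=\tfrac{\|\mathbf{w}^*\|_{\mathbf{H}}+\sigma}{\mathrm{Poly}(N)}$ used in \cref{thm:dp_glm_utility}/\cref{thm:dp_miniglm_utility}, since then $c(\Delta)$ is far below $m$ and the noisy count cannot reach $m$ --- so the $\max$ with $\Delta$ is only a safeguard; and if the loop never breaks then $s_{\mathrm{priv}}=s_L=2^L\Delta\ge\Upsilon$, and under the chosen $\Upsilon=C_2 R_x(\|\mathbf{w}^*\|_{\mathbf{H}}+\sigma)\log^{2a}N$ together with the $(\mathbf{H},C_2,a,b_{\mathbf{x}})$-tail of $\mathbf{x}$ and the $(\sigma^2,C_2,a,b_{\mathbf{x}})$-tail of $z$, every residual $|\operatorname{ReLU}(\mathbf{x}_j^\top\mathbf{w})-y_j|$ lies below $\Upsilon$, forcing $c(s_L)=m\ge m-\Lambda$.

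\textbf{Main obstacle.} The delicate points are: (i) on the privacy side, decoupling the adaptive, early-terminating search from the privacy bill, so that the cost does not grow with the data-dependent number of executed rounds --- the resolution is the variance-inflation-then-post-processing argument above (a sparse-vector-style analysis would give an alternative but is not needed); and (ii) on the utility side, pushing the single union bound over all rounds through the \emph{random} stopping index $i^\star$ and pinning down the boundary cases, especially the ``never breaks'' branch, which is the one place the tail assumptions and the specific choice of $\Upsilon$ must be invoked. The rest is a deterministic trace of the doubling sequence $s_i=2^i\Delta$ and is routine.
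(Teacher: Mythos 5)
Your privacy argument is sound and is, in substance, the same accounting the paper carries out inside its DP-MBGLMtron privacy analysis: there each noisy count has sensitivity $1$ and noise variance $\lceil\log_2(\Upsilon/\Delta)\rceil f^2$, so each round is $\tfrac{1}{2\lceil\log_2(\Upsilon/\Delta)\rceil f^2}$-zCDP, composition over the at most $\lceil\log_2(\Upsilon/\Delta)\rceil$ rounds gives $\tfrac{1}{2f^2}$-zCDP, and the zCDP-to-$(\varepsilon,\delta)$ conversion yields the stated noise multiplier. Your packaging as a single vector Gaussian mechanism with $\ell_2$-sensitivity $\sqrt{L+1}$ followed by post-processing is the same bill by a different route, and your ``sample all the noise up front'' treatment of the adaptive early stop is a legitimate substitute for the adaptive zCDP composition the paper uses implicitly. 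Your first utility bullet is also correct (at the break round the noisy count is at least $m$, hence the true count is at least $m-\Lambda$), and invoking the choice of $\Upsilon$ together with the tail conditions to dispose of the never-break branch is exactly where those assumptions enter; note the paper never writes out the utility part at all (only Remark~\ref{rem:threshold} uses it informally), so there is no detailed paper proof to match against.

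The genuine gap is your second bullet. From the fact that the loop advanced past round $i^\star-1$ you only know the \emph{noisy} count there was below $m$, so on the concentration event the true count satisfies $c(s_{i^\star-1}) < m - g_{i^\star-1} \le m+\Lambda$; your phrase ``controls $c(s_{i^\star-1})$ below $m$ up to the slack $\Lambda$'' hides that the slack comes out with the wrong sign, and $c(s_{i^\star-1})<m+\Lambda$ is vacuous (the count never exceeds $m$), whereas the statement demands $c<m-\Lambda$. In fact, for \cref{alg:dp_threshold} exactly as printed (break when $u_{\mathrm{priv}}\ge m$) the stated bullet can fail on the high-probability event: if all residuals are equal and the true count first reaches $m$ at round $j$, a single noise draw $g_j\in[-\Lambda,0)$ postpones the break by one round, after which $\max\{s_{\mathrm{priv}}/2,\Delta\}=s_j$ already covers all $m$ points. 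The bullet becomes provable only if the break test is deflated, e.g.\ break when $u_{\mathrm{priv}}\ge m-2\Lambda$ (as in the adaptive-clipping threshold estimation this construction follows), in which case the previous round gives $c(s_{i^\star-1})< m-2\Lambda+\Lambda=m-\Lambda$ and the first bullet survives with its constant adjusted to $m-3\Lambda$. So as written, your step for the second bullet does not go through for the algorithm in the paper; you would need to either modify the comparison target (and say so) or weaken the bullet to the noisy-count statement your argument actually delivers, and the downstream bound on $\gamma_t$ in Remark~\ref{rem:threshold} inherits the same issue.
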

\begin{remark}\label{rem:threshold}
We provide further details regarding the threshold here. Suppose $\Lambda = \Omega(f \log N)$. With probability at least $1 - 1/\text{Poly}(N)$, at least $m - \Lambda$ data points satisfy the condition$|\operatorname{ReLU}(\mathbf{x}_i^{\top} \mathbf{w}) - y_i| \leq s_{\text{priv}}.$
According to \cref{def:tail}, and considering that $\mathbf{w}_t$ is independent of $\mathbf{x}_{\tau(t)+j}$, we have the following with probability $\geq 1 - 1/\text{Poly}(N)$:
$$
\|\mathbf{x}_{\tau(t)+j}( \operatorname{ReLU}((\mathbf{x}_{\tau(t)+j})^{\top} \mathbf{w}_t) - \mathbf{x}_{\tau(t)+j})\| \leq s_t,
$$
by setting $s_t = O(R_x \gamma_t \log^a N)$ for all iterations. Moreover, recalling that $\Delta$ is the granularity of the search for the approximate maximum threshold and \cref{alg:dp_threshold} will terminate once most of the samples fit under the current guess for $\gamma_{t}$, meaning $\gamma_{t}$ will not exceed the current value plus the granularity $\Delta$. That is,  $\gamma_t \leq C_2 \log^a N(\sqrt{\kappa}\|\mathbf{w}_t-\mathbf{w}^*\|_{\mathbf{H}} + \sigma+\Delta)$ and $\Delta=\frac{\|\mathbf{w}^*\|_\mathbf{H}+\sigma}{\text{Poly}(N)}$. The choice of $\Delta$ reflects the granularity needed as the current weight approaches the optimal one. Therefore, with probability $\geq 1 - 1/\text{Poly}(N)$, both events hold: 1) the threshold is not required for any data point in its batch, and 2) the above condition on $\gamma_t$ is satisfied in each iteration. 
\end{remark}

\section{Lower Bound} \label{sec:lower_bound}
In this section, we demonstrate that the minimax rate of the excess population risk for $(\varepsilon, \delta)$-DP algorithms is lower bounded by $\Omega(\frac{d^2}{N^2\varepsilon^2})$, indicating that the bound mentioned above is optimal up to logarithmic factors. To show this, we consider the following class of distributions for $(\mathbf{x}, y)$: 
\begin{equation}
    \begin{aligned}
    &\mathcal{P}(\sigma, d, \mathcal{W})=\{P(\mathbf{x}, y)| \mathbf{w} \in \mathcal{W}, \mathbf{x}\sim \text{Uni}([-1, 1]^d), \\
    &f_{\mathbf{w}}(y|\mathbf{x})=\frac{1}{\sqrt{2\pi}\sigma}\exp(-\frac{(y-\text{ReLU}(\mathbf{w}^\top \mathbf{x}))^2 }{2\sigma^2}\}, 
\end{aligned}
\end{equation}
where $\mathcal{W}=\{\mathbf{w} \in \mathbb{R}^d| \|\mathbf{w} \|_2\leq 1\}$, and $f_{\mathbf{w} }(y|\mathbf{x})$ is the density function of $y$ given $\mathbf{w} $ and $\mathbf{x}$. Thus, for any $(\mathbf{x}, y)\sim P\in  \mathcal{P}(\sigma, d, \mathcal{W})$ we have $y=\text{ReLU}(\mathbf{w}^\top \mathbf{x})+z$, where $z\sim \mathcal{N}(0, \sigma^2)$, and the covariate $\mathbf{x}$ satisfies Assumption \ref{asm:fourth-moment} with $\alpha, \beta=O(1)$ and Assumption \ref{asm:symmetric}. It also satisfies $ (\mathbf{I}_d, C_2, a, b )$-Tail with some $a, b=O(1)$. 

Our lower bounds will be in the form of private minimax risk. Let $\mathcal{P}$ be a class of distributions over a data universe $\mathcal{X}$. For each distribution $p\in \mathcal{P}$, there is a deterministic function $\mathbf{w}(p)\in \mathcal{W}$, where $\mathcal{W}$ is the parameter space. Let $\rho: \mathcal{W} \times \mathcal{W} :\mapsto \mathbb{R}_+ $ be  a semi-metric function on the space $\mathcal{W}$ and $\Phi: \mathbb{R}_+\mapsto \mathbb{R}_+$ be a non-decreasing function with $\Phi(0)=0$.\footnote{In this paper, we assume that  $\rho(\mathbf{w},\mathbf{w}^{\prime})=\|{\mathbf{w}}-{\mathbf{w}}^{\prime}\|_{\Sigma_\mathbf{x}}$ and $\Phi(\mathbf{x})=\mathbf{x}^2$ unless specified otherwise, where $\Sigma_\mathbf{x}=I_d$ is the covariance matrix of $x$. Here, we do not omit $\Sigma_\mathbf{x}$ to make our results consistent with previous results.} We further assume that  $D=\{X_i\}_{i=1}^{n}$ are  $n$ i.i.d observations drawn according to some distribution $p\in \mathcal{P}$, and   $\hat{\mathbf{w}}:\mathcal{X}^N\mapsto \mathcal{W}$ be some estimator. In the $(\varepsilon, \delta)$-DP model, the estimator $\hat{\mathbf{w}}$ is obtained via some $(\varepsilon, \delta)$-DP mechanism $Q$.  The $(\varepsilon, \delta)$-private minimax risk is defined as:   
\begin{equation*}
\mathcal{M}_n(\theta(\mathcal{P}), \Phi\circ \rho):=\inf_{Q\in \mathcal{Q}_{\varepsilon,\delta}}
\sup_{p\in \mathcal{P}}\mathbb{E}_{p, Q}[\Phi(\rho(Q(D), \mathbf{w}(p))],
\end{equation*}
where $\mathcal{Q}_{\varepsilon, \delta}$ is the set of all the $(\varepsilon, \delta)$-DP mechanisms. 

To prove the lower bound, we aim to use the tracing attack argument in \citep{cai2021cost}. Specifically,  a tracing attacker attempts to construct an attack to detect the absence/presence of a sample $\mathbf{x}$ in a target dataset $D$ by looking at the (private) estimator $M(D)$ for the dataset. If one can construct a tracing attack that is powerful, given an accurate
estimator, an argument by contradiction leads to a lower bound: suppose
a differentially private estimator computed from the target data set is sufficiently accurate, the tracing adversary will be able to determine whether
a given sample belongs to the dataset or not, thereby contradicting with
the differential privacy guarantee. The privacy guarantee and the tracing
adversary together ensure that a differentially private estimator cannot be
"too accurate". In detail, for a dataset $D$ and a target sample $(\mathbf{x}, y)$, we consider the following tracing attack: 
\begin{equation}\label{eq:attack}
    \begin{aligned}
        \mathcal{T}_\mathbf{w}((\mathbf{x},y), M(D))&=\langle M(D)-\mathbf{w}, (y-\text{ReLU}(\mathbf{w}^\top \\
        & \mathbf{x}))\mathbf{x} \cdot \mathbbm{1}(\mathbf{w}^\top \mathbf{x}>0)\rangle. 
    \end{aligned}
\end{equation}
We will first show that if $(\mathbf{x},y)\in D$, then the attack value is small; otherwise, it will be large. 
\begin{lemma}\label{thm:low_1}
    Consider $D=(Y, X)=\{(\mathbf{x}_i, y_i)\}_{i=1}^N$ be i.i.d. sampled from $P\in \mathcal{P}(\sigma, d, \mathcal{W})$ with the underlying $\mathbf{w}$. For every $(\varepsilon,\delta)$-DP algorithm $M$ satisfying $\mathbb{E}_{Y, X|\mathbf{w}} 
 \|M(D)-\mathbf{w}\|_2^2 =o(1)$ for all $\mathbf{w}\in \mathcal{W} $, then we have the following: 
\begin{enumerate}
    \item For each $i\in [n]$, denote $D_i'$ as the dataset obtained by replacing $(\mathbf{x}_i, y_i)$ in $D$ with an independent copy, then we have 
    $$\left\{
    \begin{aligned}
        &\mathbb{E}\mathcal{T}_{\mathbf{w}}((\mathbf{x}_i,y_i), M(D_i')) = 0,  \\ 
        &\mathbb{E}|\mathcal{T}_{\mathbf{w}} ((\mathbf{x}_i,y_i), M(D_i'))| \leq  \sigma \sqrt{\mathbb{E}\|M(D)-\mathbf{w}\|^2_{\Sigma_\mathbf{x}}}, 
    \end{aligned}
    \right.$$ 
    \item There exists a prior distribution of $\pi$ for $\mathbf{w}$ supported on $\mathcal{W}$ such that 
    $$ \sum_{i\in [n]}\mathbb{E}_\pi  \mathbb{E}_{Y, X|\mathbf{w}}[\mathcal{T}_\mathbf{w}((\mathbf{x}_i,y_i), M(D))]\geq \Omega(\sigma^2 d).$$
\end{enumerate}
\end{lemma}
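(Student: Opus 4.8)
The plan is to treat the two claims separately. For the first claim I would substitute $y_i = \operatorname{ReLU}(\mathbf{w}^{\top}\mathbf{x}_i) + z_i$ with $z_i\sim\mathcal{N}(0,\sigma^2)$ drawn independently of $\mathbf{x}_i$, so that $\mathcal{T}_{\mathbf{w}}((\mathbf{x}_i,y_i),M(D_i')) = z_i\langle M(D_i')-\mathbf{w},\, \mathbf{x}_i\mathbbm{1}(\mathbf{w}^{\top}\mathbf{x}_i>0)\rangle$. Since $D_i'$ replaces sample $i$ by an independent copy, $M(D_i')$ is independent of $(z_i,\mathbf{x}_i)$, and $z_i$ is centered and independent of $(\mathbf{x}_i, M(D_i'))$; conditioning on $(\mathbf{x}_i, M(D_i'))$ and averaging over $z_i$ gives the mean-zero identity. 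For the absolute bound I would peel off $\mathbb{E}|z_i|\le\sigma$ by independence, then apply Cauchy--Schwarz and condition on $\mathbf{u}:=M(D_i')-\mathbf{w}$: the residual second moment is $\mathbf{u}^{\top}\mathbb{E}[\mathbf{x}\mathbf{x}^{\top}\mathbbm{1}(\mathbf{w}^{\top}\mathbf{x}>0)]\mathbf{u}$, which by \cref{asm:symmetric} (applied with $\mathbf{u}=\mathbf{v}=\mathbf{w}$, so that $\mathbb{E}[\mathbf{x}\mathbf{x}^{\top}\mathbbm{1}(\mathbf{w}^{\top}\mathbf{x}>0)] = \tfrac12\Sigma_{\mathbf{x}}$) equals $\tfrac12\|\mathbf{u}\|_{\Sigma_{\mathbf{x}}}^2$. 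Since $D_i'\overset{d}{=}D$, this yields $\mathbb{E}|\mathcal{T}_{\mathbf{w}}(\cdot)|\le\tfrac{\sigma}{\sqrt2}\sqrt{\mathbb{E}\|M(D)-\mathbf{w}\|_{\Sigma_{\mathbf{x}}}^2}$, which is stronger than stated.

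For the second claim the crucial observation is that $\mathbf{g}_i := (y_i-\operatorname{ReLU}(\mathbf{w}^{\top}\mathbf{x}_i))\mathbf{x}_i\mathbbm{1}(\mathbf{w}^{\top}\mathbf{x}_i>0) = \sigma^2\,\nabla_{\mathbf{w}}\log f_{\mathbf{w}}(y_i\mid\mathbf{x}_i)$ is exactly $\sigma^2$ times the per-sample score, so that $\sum_{i\in[N]}\mathcal{T}_{\mathbf{w}}((\mathbf{x}_i,y_i),M(D)) = \langle M(D)-\mathbf{w},\, \textstyle\sum_i\mathbf{g}_i\rangle = \sigma^2\langle M(D)-\mathbf{w},\, \nabla_{\mathbf{w}}\log L(D\mid\mathbf{w})\rangle$, where $L$ is the joint likelihood. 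I would then pick $\pi$ to be a smooth product prior supported on $\mathcal{W}$ — each coordinate a fixed smooth density rescaled to $[-1/\sqrt d,\,1/\sqrt d]$, so that $\|\mathbf{w}\|_2\le 1$ $\pi$-almost surely and the prior Fisher information $J_\pi:=\mathbb{E}_{\pi}\|\nabla_{\mathbf{w}}\log p(\mathbf{w})\|_2^2$ is finite (indeed $O(d^2)$) — and integrate by parts in $\mathbf{w}$. Using $L\,\nabla_{\mathbf{w}}\log L = \nabla_{\mathbf{w}}L$ and that $p$ vanishes on $\partial\mathcal{W}$, coordinatewise integration by parts gives the identity $\mathbb{E}_{\pi}\mathbb{E}_{D\mid\mathbf{w}}\langle M(D)-\mathbf{w},\, \nabla_{\mathbf{w}}\log L\rangle = d - \mathbb{E}_{\pi}\mathbb{E}_{D\mid\mathbf{w}}\langle\nabla_{\mathbf{w}}\log p(\mathbf{w}),\, M(D)-\mathbf{w}\rangle$; by Cauchy--Schwarz the subtracted term is at most $\sqrt{J_\pi}\cdot\sqrt{\sup_{\mathbf{w}}\mathbb{E}_{D\mid\mathbf{w}}\|M(D)-\mathbf{w}\|_2^2} = \sqrt{J_\pi}\cdot o(1)$, which is $\le d/2$ once $N$ is large. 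Hence $\sum_{i}\mathbb{E}_{\pi}\mathbb{E}_{D\mid\mathbf{w}}[\mathcal{T}_{\mathbf{w}}(\cdot)]\ge \sigma^2 d/2 = \Omega(\sigma^2 d)$. For a randomized mechanism $M=Q$ one first replaces $M(D)$ by $\mathbb{E}[M(D)\mid D]$, which is valid since $Q$'s internal randomness is independent of $\mathbf{w}$.

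The main obstacle is making the integration-by-parts (fingerprinting-style) identity rigorous: one must argue that $\mathbf{w}\mapsto L(D\mid\mathbf{w})$ is absolutely continuous in each coordinate despite the non-smooth ReLU — this holds because $\operatorname{ReLU}$ is Lipschitz and the kink set $\{\mathbf{w}^{\top}\mathbf{x}_i=0\}$ is Lebesgue-null, so the a.e. gradient $\mathbf{x}_i\mathbbm{1}(\mathbf{w}^{\top}\mathbf{x}_i>0)$ is the correct one — justify differentiating under the integral sign (the relevant integrands are dominated by integrable Gaussian/bounded-covariate tails) and the vanishing of the boundary term, and exhibit a prior on the ball $\mathcal{W}$ with finite Fisher information while keeping $\|\mathbf{w}\|_2\le1$. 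Once these pieces are in place the $\Omega(\sigma^2 d)$ bound falls out, with the factor $d$ coming from $\sum_j\int p\,L = d$ in the integration by parts and the $\sigma^2$ from the score normalization; by contrast, the first claim is a routine conditioning computation.
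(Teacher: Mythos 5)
Your proposal is correct and follows essentially the same route as the paper: part 1 is the same conditioning/Cauchy--Schwarz computation (your use of the symmetry assumption to get the extra factor $\tfrac{1}{\sqrt 2}$ is a harmless sharpening of the paper's cruder bound $\mathbb{E}[\mathbf{x}\mathbf{x}^{\top}\mathbbm{1}(\mathbf{w}^{\top}\mathbf{x}>0)]\preceq \Sigma_{\mathbf{x}}$), and part 2 is the same fingerprinting argument: rewrite $\sum_i \mathbb{E}[\mathcal{T}_{\mathbf{w}}]$ as $\sigma^2\sum_j \partial_{\mathbf{w}_j}\mathbb{E}_{Y,X\mid\mathbf{w}}M(D)_j$ via the score identity, then lower-bound the prior average by integration by parts against a product prior plus Cauchy--Schwarz and the accuracy hypothesis. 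The only substantive difference is the prior: the paper rescales a truncated Gaussian so that $\pi_j'/\pi_j=-d\,\mathbf{w}_j$ and gets the main term as $d\,\mathbb{E}_\pi\|\mathbf{w}\|_2^2=\Theta(d)$, while you take a smooth density vanishing on $\partial\mathcal{W}$ with Fisher information $O(d^2)$ and get the main term exactly $d$; your choice has the modest advantage that the integration by parts is exact with no boundary terms (the paper's truncated Gaussian does not vanish at the endpoints, so its Stein-lemma step implicitly discards a boundary contribution), at the cost of carrying $\sqrt{J_\pi}=O(d)$ into the error term, which the $o(1)$ accuracy assumption absorbs just as in the paper.
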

\begin{remark}
\cref{thm:low_1} establishes a connection between the accuracy of a DP estimator and the potential for privacy breaches via tracing attacks. Specifically, when $(x_i, y_i)$ is independent on $D_i'$, we can control the variance of $\mathcal{T}_{\mathbf{w}}((\mathbf{x}_i,y_i), M(D_i'))$, which is upper bounded by $\sigma\sqrt{\mathbb{E}\|M(D)-\mathbf{w}\|^2_{\Sigma_\mathbf{x}}}$. Moreover, if they are dependent, then from part 2 we can see there exists $w$ such that $\mathcal{T}_\mathbf{w}((\mathbf{x}_i,y_i), M(D))\geq \Omega(\frac{\sigma^2 d}{n})$. These results show that when $\|M(D)-\mathbf{w}\|^2_{\Sigma_\mathbf{x}}$ is small enough, then the attacker can distinguish $D_i'$ and $D$, making DP failed. Specifically, we have the following result:

\end{remark}

\begin{theorem}\label{thm:lower_bound}
    For $0<\varepsilon<1$ and $\delta\leq N^{-(1+u)}$ for some $u>0$, we have 
    \begin{align*}
& \inf_{M\in \mathcal{Q}_{\varepsilon,\delta}}
        \sup_{p\in \mathcal{P}}\mathbb{E}_{D\sim p^N, M}[\mathcal{L}(M(D))-\mathcal{L}(\mathbf{w})]\geq \\
      & \frac{1}{4} \inf_{M\in \mathcal{Q}_{\varepsilon,\delta}}
        \sup_{p\in \mathcal{P}}\mathbb{E}_{D\sim p^N, M}[\|M(D)-\mathbf{w}\|_{\Sigma_\mathbf{x}}^2]\geq O( \frac{\sigma^2 d^2}{N^2\varepsilon^2}). 
    \end{align*}
\end{theorem}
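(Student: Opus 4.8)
The asserted chain consists of two links, which I would establish separately. The first link, $\mathbb{E}[\mathcal{L}(M(D))-\mathcal{L}(\mathbf{w})]\ge\tfrac14\,\mathbb{E}\|M(D)-\mathbf{w}\|_{\Sigma_\mathbf{x}}^2$, is a deterministic quadratic–growth bound for the ReLU population risk. For any $p\in\mathcal{P}(\sigma,d,\mathcal{W})$ with teacher $\mathbf{w}=\mathbf{w}(p)$, using $y=\operatorname{ReLU}(\mathbf{x}^\top\mathbf{w})+z$ with $\mathbb{E}[z\mid\mathbf{x}]=0$ and $\mathcal{L}(\mathbf{w})=\tfrac12\sigma^2$ one gets, for any $\mathbf{v}$, $\mathcal{L}(\mathbf{v})-\mathcal{L}(\mathbf{w})=\tfrac12\,\mathbb{E}[(\operatorname{ReLU}(\mathbf{x}^\top\mathbf{v})-\operatorname{ReLU}(\mathbf{x}^\top\mathbf{w}))^2]$. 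Writing $\operatorname{ReLU}(t)=\tfrac12(t+|t|)$ and using that $\mathbf{x}\stackrel{d}{=}-\mathbf{x}$ for $\mathrm{Uni}([-1,1]^d)$ (the cross term $\mathbb{E}[(\mathbf{x}^\top(\mathbf{v}-\mathbf{w}))(|\mathbf{x}^\top\mathbf{v}|-|\mathbf{x}^\top\mathbf{w}|)]$ then vanishes, cf.\ \cref{asm:symmetric}), this equals $\tfrac18(\|\mathbf{v}-\mathbf{w}\|_{\Sigma_\mathbf{x}}^2+\mathbb{E}[(|\mathbf{x}^\top\mathbf{v}|-|\mathbf{x}^\top\mathbf{w}|)^2])\gtrsim\|\mathbf{v}-\mathbf{w}\|_{\Sigma_\mathbf{x}}^2$. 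Since the bound is pointwise in $(p,\mathbf{v})$, it survives $\mathbb{E}_{D,M}$, then $\sup_p$, then $\inf_M$, which yields the first inequality.

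The second link, $\inf_M\sup_p\mathbb{E}\|M(D)-\mathbf{w}(p)\|_{\Sigma_\mathbf{x}}^2\gtrsim\sigma^2d^2/(N^2\varepsilon^2)$, is the core and is proved by the tracing argument of \cref{thm:low_1}. Suppose toward contradiction that some $(\varepsilon,\delta)$-DP mechanism $M$ achieves $R^2:=\sup_p\mathbb{E}\|M(D)-\mathbf{w}(p)\|_{\Sigma_\mathbf{x}}^2=o(\sigma^2d^2/(N^2\varepsilon^2))$, which in the (only nontrivial) regime where the target is $\lesssim1$ forces $R^2=o(1)$, so that \cref{thm:low_1} applies. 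Fix the planted prior $\pi$ from part 2 of \cref{thm:low_1}, and for each $i\in[N]$ set $A_i=\mathcal{T}_{\mathbf{w}}((\mathbf{x}_i,y_i),M(D))$ (test point in-sample) and $B_i=\mathcal{T}_{\mathbf{w}}((\mathbf{x}_i,y_i),M(D_i'))$ (test point out-of-sample, $D_i'$ replacing the $i$-th training point by a fresh copy). Conditioning on $(\mathbf{x}_i,y_i)$, the remaining training points, $\mathbf{w}$, and the algorithmic randomness, $D$ and $D_i'$ are adjacent, and $A_i,B_i$ are the same functional of the output evaluated on them. To make this functional bounded I truncate $|z_i|$ at $O(\sigma\sqrt{\log N})$; since $\|M(D)-\mathbf{w}\|_2\le2$ (both lie in the unit ball $\mathcal{W}$) and $\|\mathbf{x}\|_2\le\sqrt d$, the truncated statistic satisfies $\|\mathcal{T}\|_\infty=O(\sigma\sqrt{d\log N})$, with discarded mass $N^{-\omega(1)}$ that perturbs the two conclusions of \cref{thm:low_1} only at the $1/\mathrm{Poly}(N)$ level. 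Splitting the functional into positive and negative parts and integrating the $(\varepsilon,\delta)$-DP inequality over super-level sets gives, after taking expectations and averaging over $\pi$,
\begin{equation*}
\sum_{i}\mathbb{E}_\pi\mathbb{E}[A_i]\ \le\ \sum_i\mathbb{E}_\pi\mathbb{E}[B_i]+(e^\varepsilon-1)\sum_i\mathbb{E}_\pi\mathbb{E}|B_i|+2N\delta\,\|\mathcal{T}\|_\infty .
\end{equation*}
By part 1 of \cref{thm:low_1}, $\mathbb{E}_\pi\mathbb{E}[B_i]=0$ and $\mathbb{E}_\pi\mathbb{E}|B_i|\le\sigma\sqrt{\mathbb{E}_\pi\mathbb{E}\|M(D)-\mathbf{w}\|_{\Sigma_\mathbf{x}}^2}\le\sigma R$ (Jensen over $\pi$); by part 2, $\sum_i\mathbb{E}_\pi\mathbb{E}[A_i]\ge c\,\sigma^2 d$. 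Hence $c\sigma^2 d\le(e^\varepsilon-1)N\sigma R+2N\delta\|\mathcal{T}\|_\infty$. Since $\varepsilon<1$ gives $e^\varepsilon-1=O(\varepsilon)$ and $\delta\le N^{-(1+u)}$ gives $2N\delta\|\mathcal{T}\|_\infty=O(N^{-u}\sigma\sqrt{d\log N})=o(\sigma^2 d)$, this forces $\sigma^2 d\lesssim\varepsilon N\sigma R$, i.e.\ $R\gtrsim\sigma d/(N\varepsilon)$ and $R^2\gtrsim\sigma^2d^2/(N^2\varepsilon^2)$, contradicting the hypothesis. Therefore $\inf_M\sup_p\mathbb{E}\|M(D)-\mathbf{w}(p)\|_{\Sigma_\mathbf{x}}^2\gtrsim\sigma^2d^2/(N^2\varepsilon^2)$, and chaining with the first link proves the theorem.

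\textbf{Main obstacle.} The delicate step is controlling the tracing statistic $\mathcal{T}$: $z$ is Gaussian and, worse, $M(D)$ itself depends on all the noises $z_j$, so $\mathcal{T}$ admits no deterministic bound. The truncation above is what makes the $(\varepsilon,\delta)$-DP comparison usable, and it is exactly through the term $2N\delta\|\mathcal{T}\|_\infty$ that the hypothesis $\delta\le N^{-(1+u)}$ enters. One must also arrange the conditioning so that the DP inequality is applied to a genuinely adjacent pair while the prior-averaging in part 2 of \cref{thm:low_1} still goes through verbatim, and verify (routinely) that truncation shifts both conclusions of \cref{thm:low_1} by only $1/\mathrm{Poly}(N)$.
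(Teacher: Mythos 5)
Your proposal is correct and follows essentially the same route as the paper: the quadratic-growth step via the symmetry of $\mathbf{x}$ (your $\operatorname{ReLU}(t)=\tfrac12(t+|t|)$ identity is just a repackaging of the paper's two-sided ReLU decomposition), followed by the tracing-attack contradiction built on both parts of \cref{thm:low_1} with the DP comparison inequality, where $\varepsilon<1$ and $\delta\le N^{-(1+u)}$ kill the extra terms and force $R\gtrsim \sigma d/(N\varepsilon)$. The only cosmetic difference is that you truncate the Gaussian noise to make the tracing statistic bounded, whereas the paper invokes the tail-integral form of the DP inequality (Lemma B.2 of Cai et al.) with a sub-Gaussian tail bound and $T=\sqrt{2}\sigma\sqrt{d\log(1/\delta)}$; these are interchangeable technical devices.
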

\begin{remark}
    \cref{thm:lower_bound} shows that under differential privacy, if the estimator $M$ is too accurate, it may inadvertently leak information about the presence of specific data points. Therefore, the mechanism must maintain the error rate of $O( \frac{d^2}{N^2\varepsilon^2})$,  which aligns with our previous upper bound, thus confirming the tightness of our results.
\end{remark}

\begin{figure*}[t]
    \begin{subfigure}{0.31\textwidth}
        \centering
        \includegraphics[width=\linewidth]{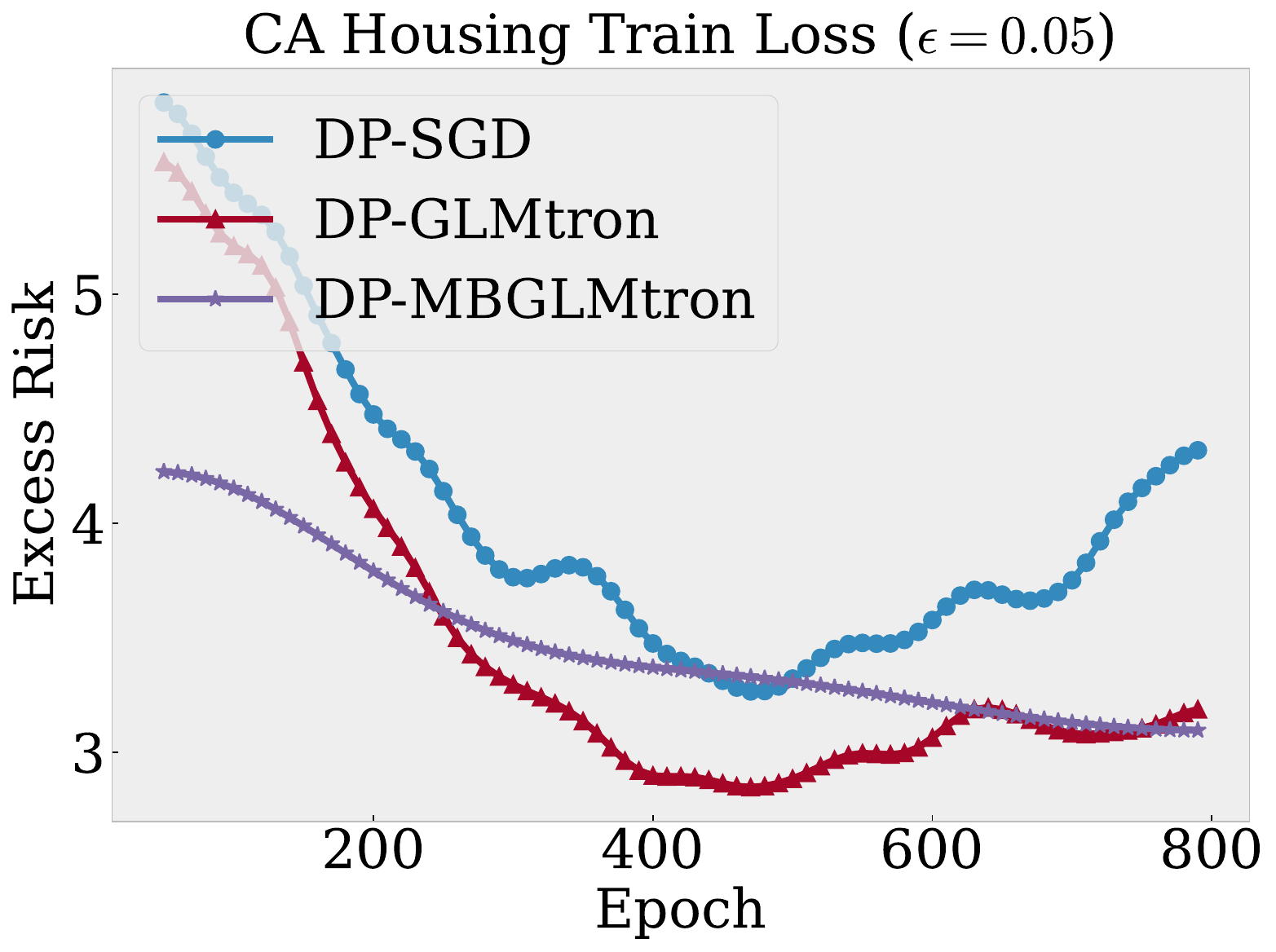}
        \caption{}
        \label{fig:catrain005}
    \end{subfigure}%
    \begin{subfigure}{0.32\textwidth}
        \centering
        \includegraphics[width=\linewidth]{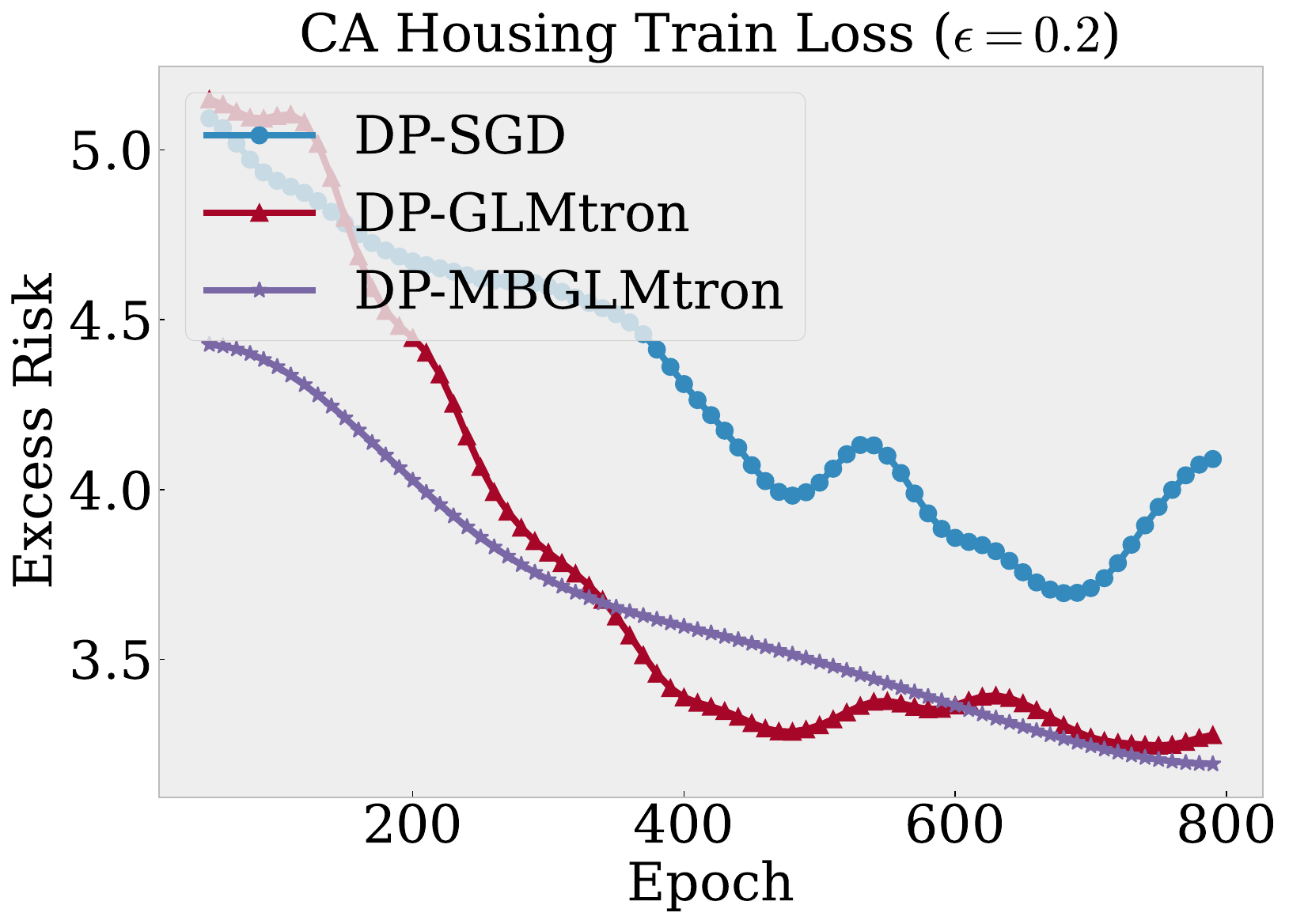}
        \caption{}
        \label{fig:catrain02}
    \end{subfigure}
    \begin{subfigure}{0.32\textwidth}
        \centering
        \includegraphics[width=\linewidth]{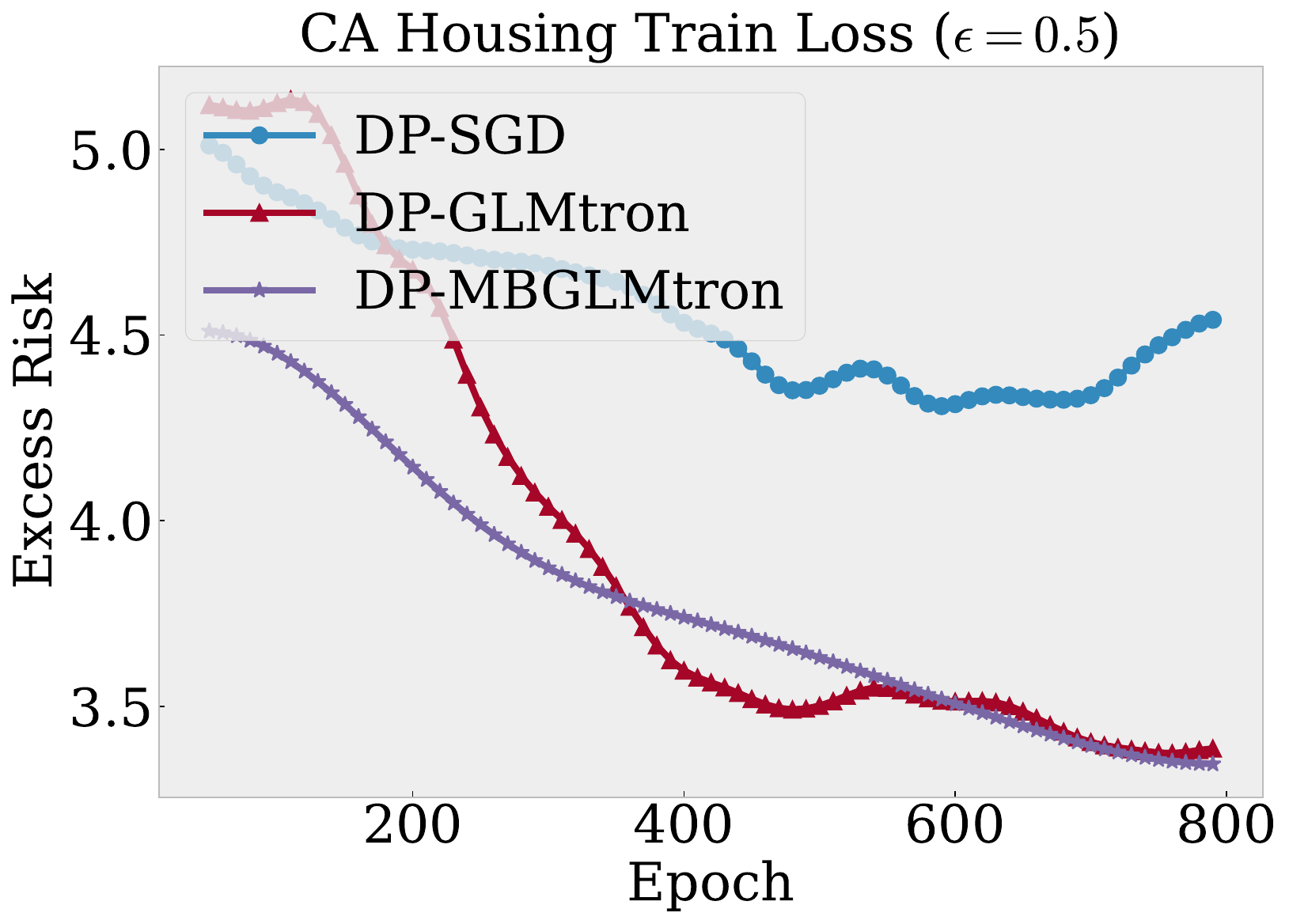}
        \caption{}
        \label{fig:catrain05}
    \end{subfigure}%

    \begin{subfigure}{0.32\textwidth}
        \centering
        \includegraphics[width=\linewidth]{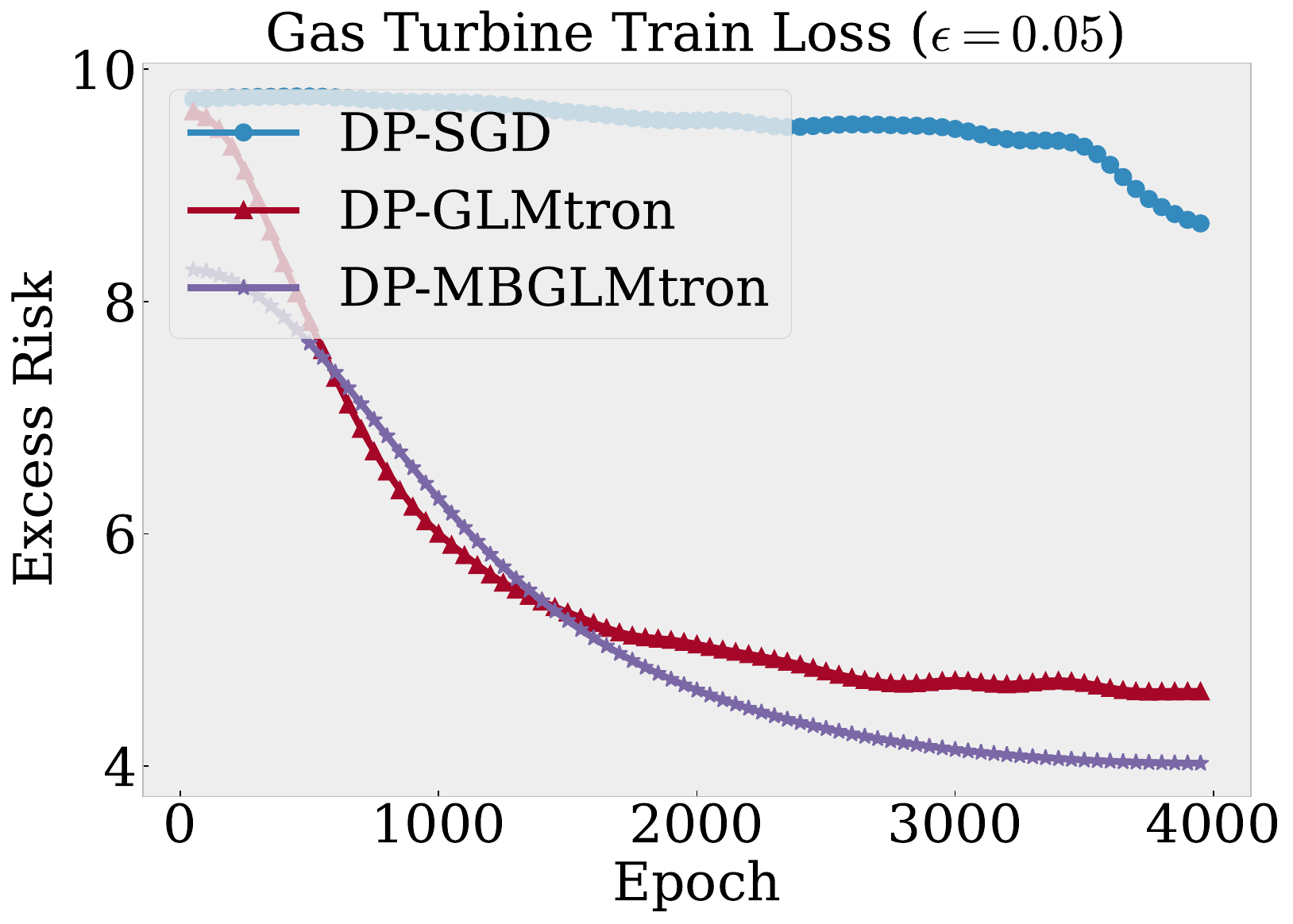}
        \caption{}
        \label{fig:gastrain005}
    \end{subfigure}%
    \begin{subfigure}{0.32\textwidth}
        \centering
        \includegraphics[width=\linewidth]{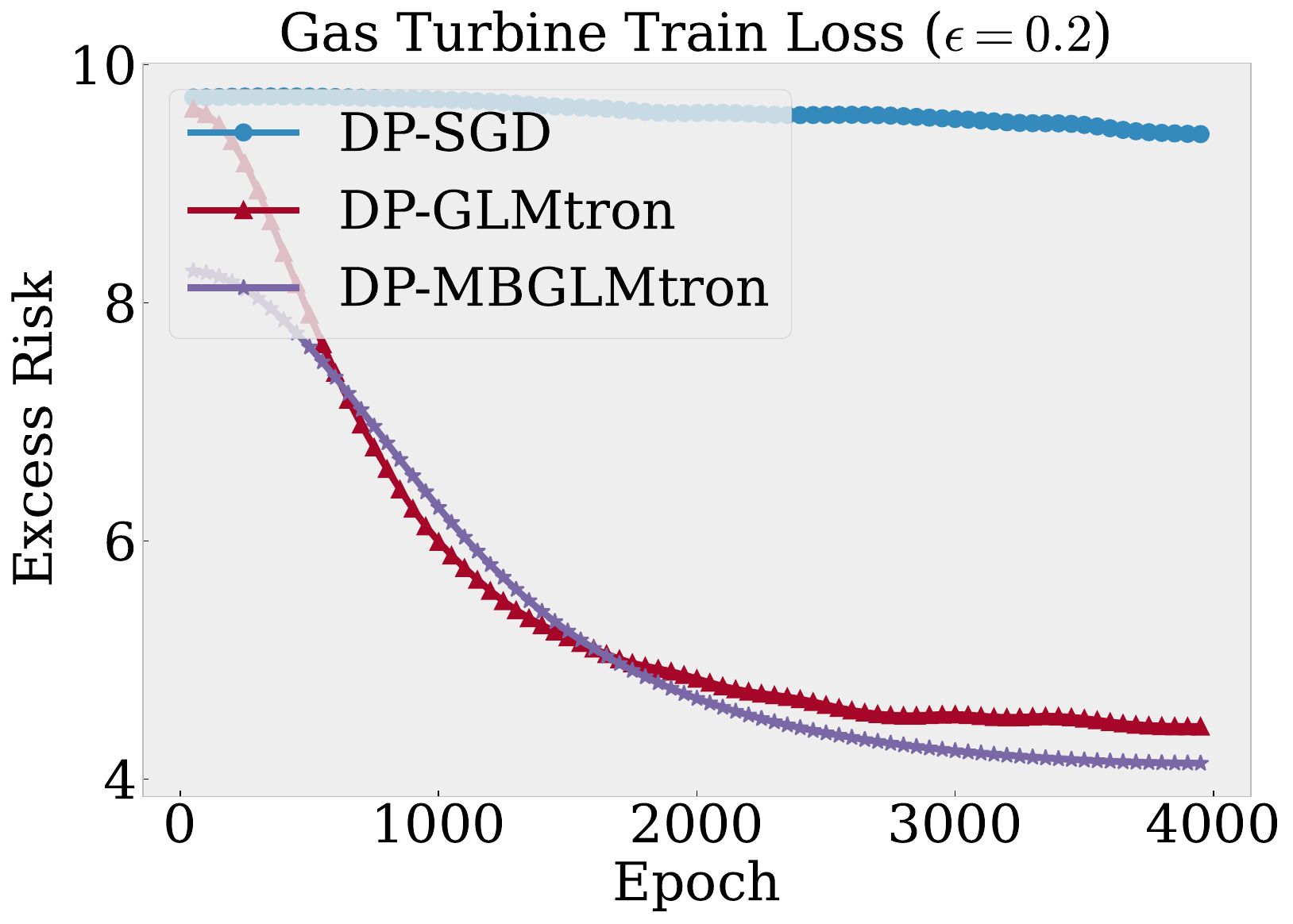}
        \caption{}
        \label{fig:gastrain02}
    \end{subfigure}%
    \begin{subfigure}{0.32\textwidth}
        \centering
        \includegraphics[width=\linewidth]{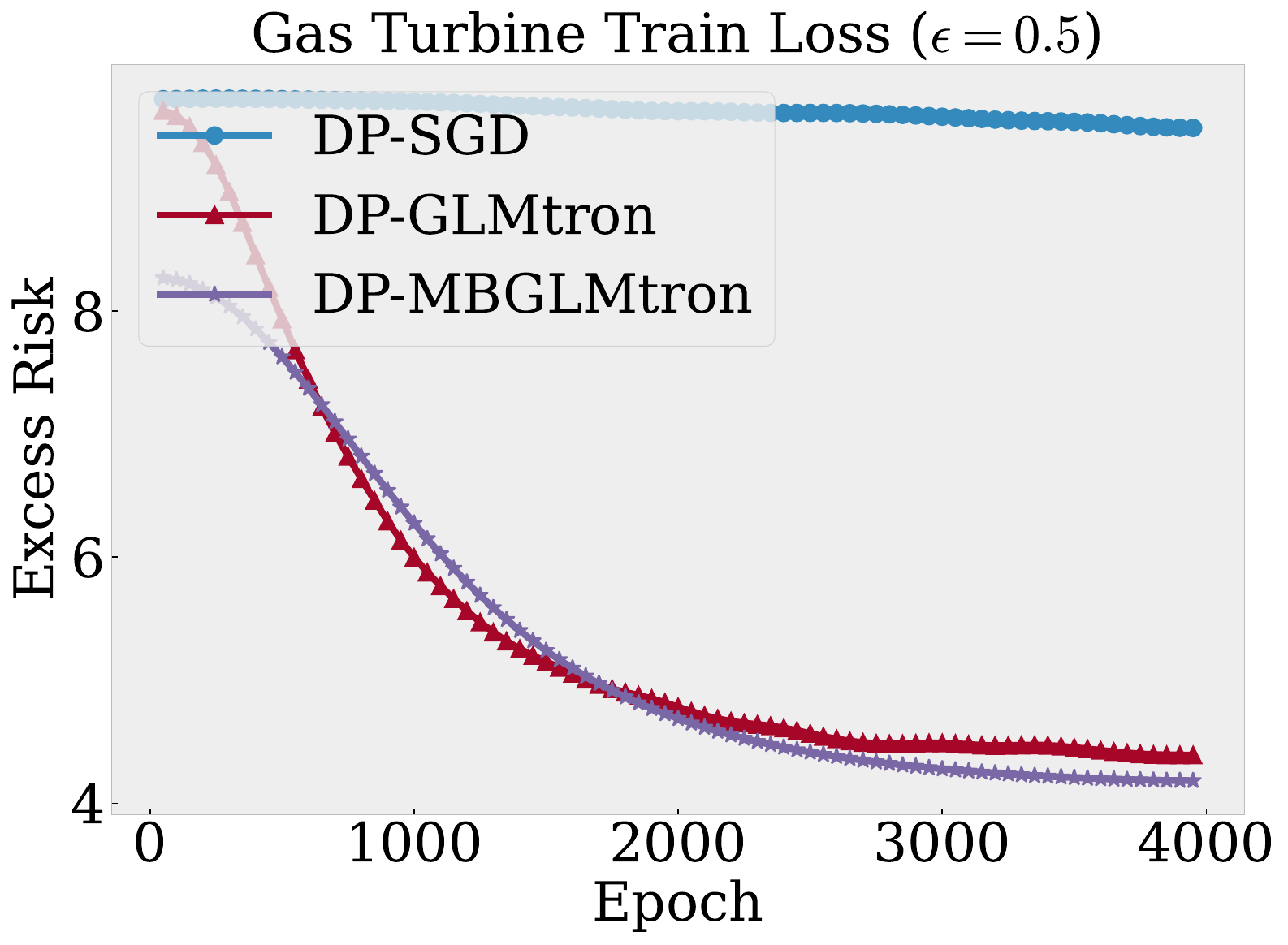}
        \caption{}
        \label{fig:gastrain05}
    \end{subfigure}

    \begin{subfigure}{0.32\textwidth}
        \centering
        \includegraphics[width=\linewidth]{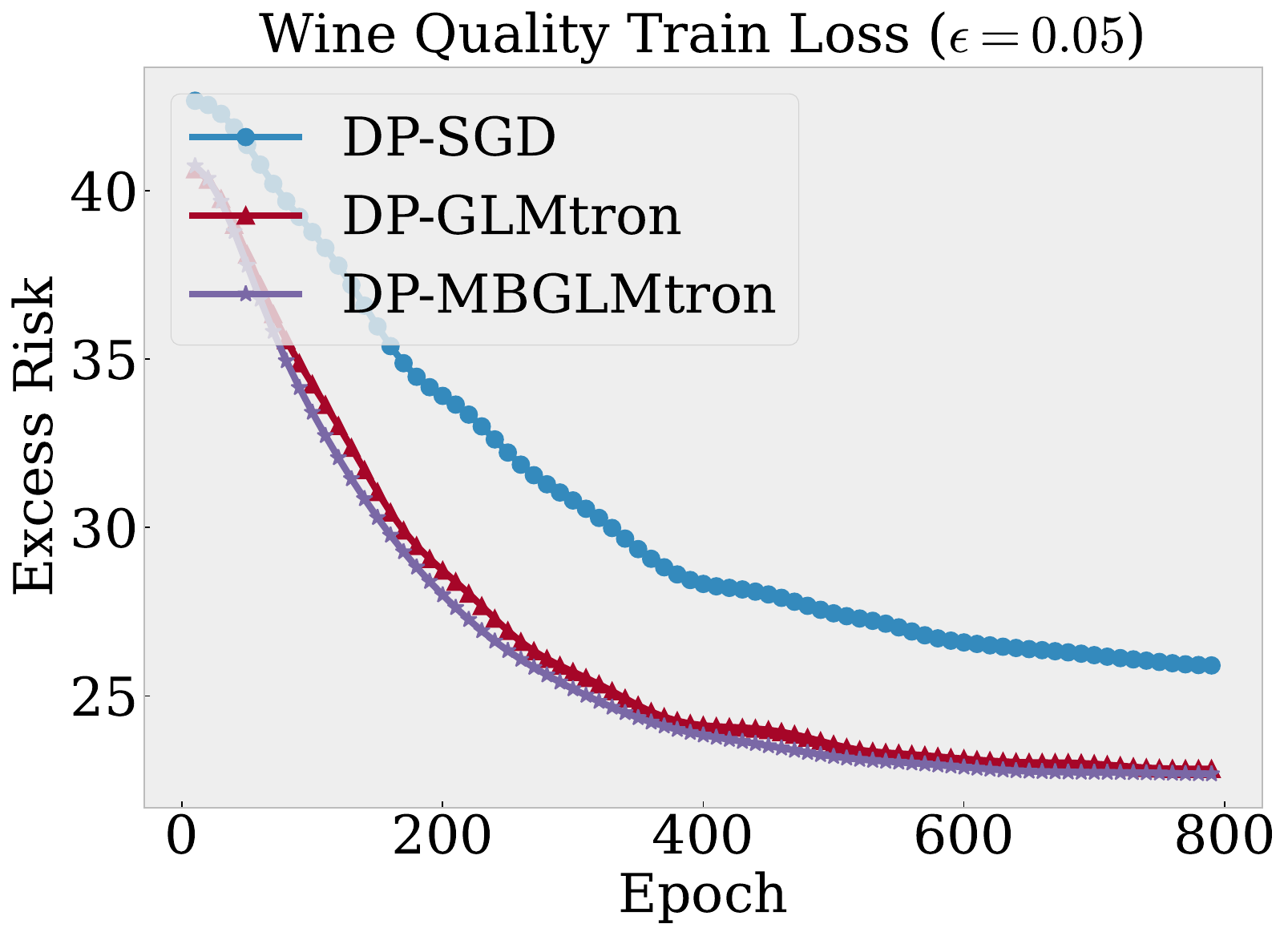}
        \caption{}
        \label{fig:blogtrain005}
    \end{subfigure}%
    \begin{subfigure}{0.32\textwidth}
        \centering
        \includegraphics[width=\linewidth]{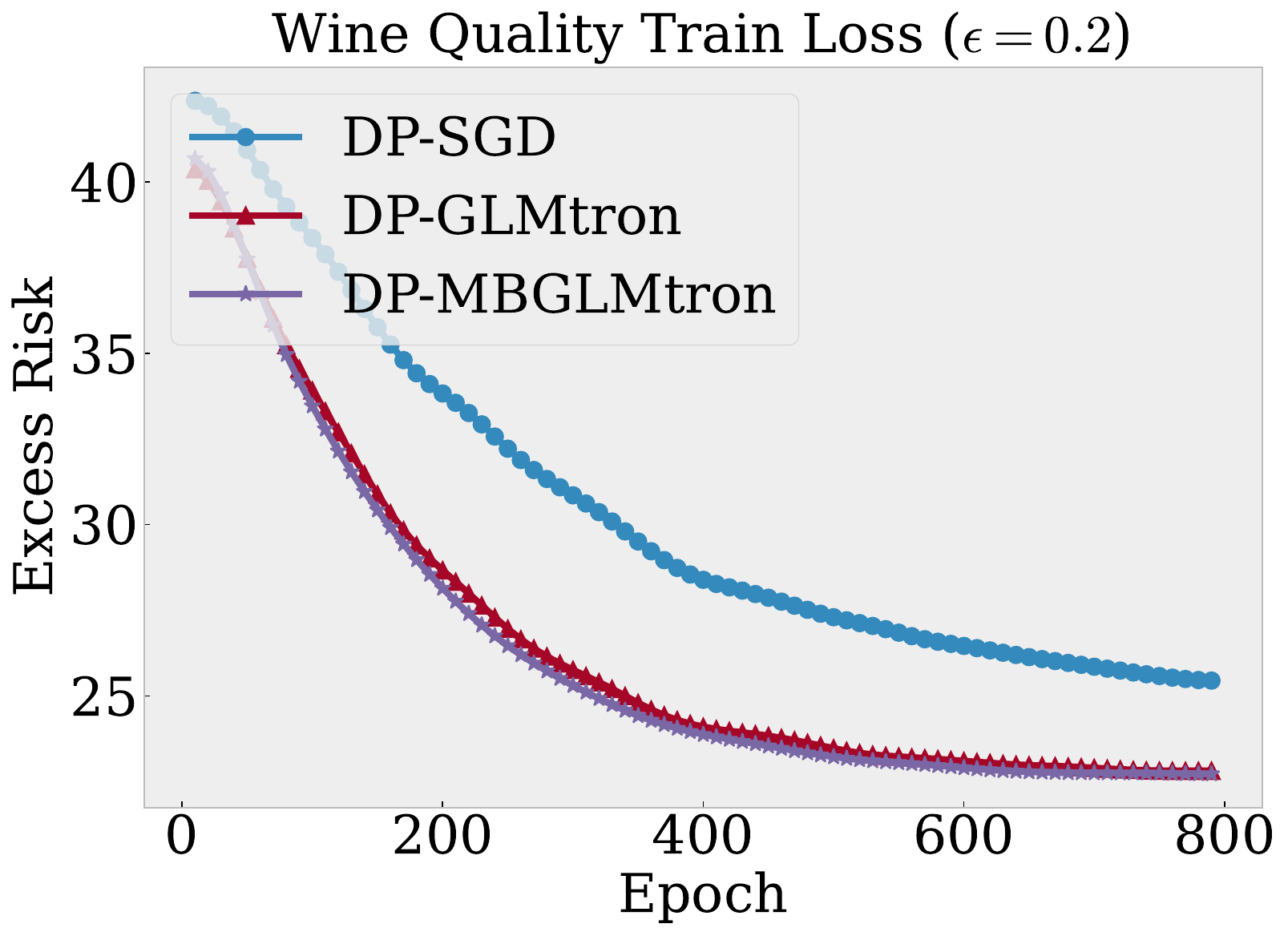}
        \caption{}
        \label{fig:blogtrain02}
    \end{subfigure}%
        \begin{subfigure}{0.32\textwidth}
        \centering
        \includegraphics[width=\linewidth]{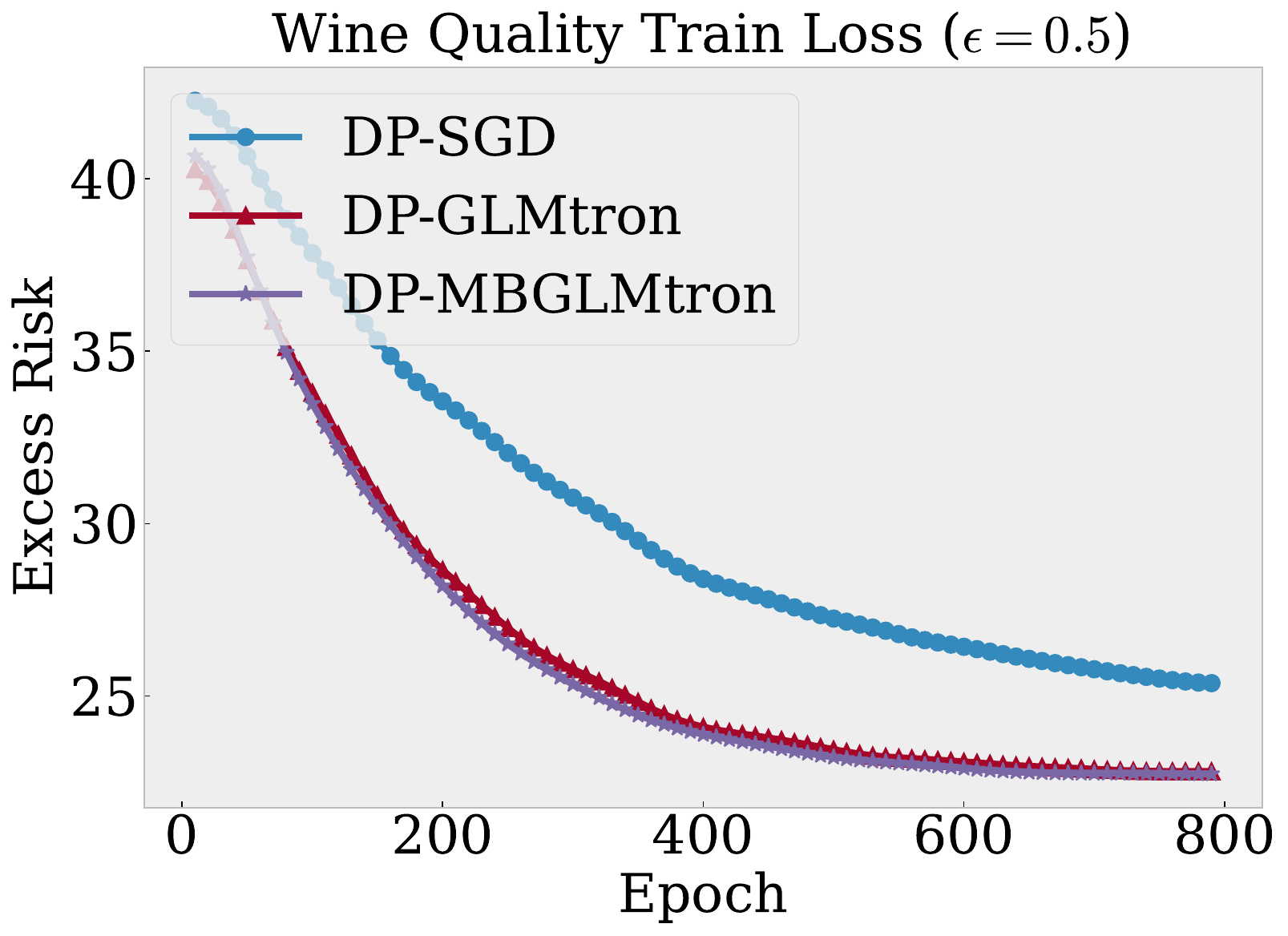}
        \caption{}
        \label{fig:blogtrain05}
    \end{subfigure}%
    
\caption{Training loss over epochs for DP-SGD, DP-GLMtron, and DP-MBGLMtron on three regression datasets: California Housing, Gas Turbine, and Wine Quality, under varying privacy budgets ($\varepsilon = 0.05, 0.2, 0.5$)}
\label{fig:regression_train}
\end{figure*}
 
\section{Experiments}
  
In this section, we present experimental results to validate our theoretical findings. Due to space constraints, the detailed experimental setup and implementation details are provided in \cref{sec:appendix_exp}.

\textbf{Datasets and Models.}
We conducted experiments using three regression datasets: California Housing \citep{california_housing_1997}, Gas Turbine CO and NOx Emission DataSet \citep{gas_turbine_co_and_nox_emission_data_set_551}, and Wine Quality \citep{wine_quality_186}. The information of three datasets used in our experiments is summarized in \cref{tab:data}. For each dataset, the data was randomly split into an 80\% training set and a 20\% test set. All numeric attributes were standardized to have a mean of zero and a standard deviation of one. The target variables were normalized by dividing them by the maximum absolute value of the target variable across the entire dataset. The model used for the experiments was ReLU regression, and evaluations were performed under three different privacy budgets with $\delta=\frac{1}{N^{1.1}}$, $\varepsilon = \{0.05,0.2,0.5\}$. See Appendix \ref{sec:appendix_exp} for more details. 

\begin{table}[h]
\label{tab:data}
\centering
\caption{Summary of Dataset Statistics.}
\resizebox{.4\textwidth}{!}{\begin{tabular}{lcc}
\hline
\textbf{Dataset} & \textbf{ Samples} & \textbf{ Attributes} \\
\hline
\hline
California Housing       & 20640  & 8  \\
Gas Turbine CO and NOx Emission & 36733  & 9  \\
Wine Quality & 4898 & 11 \\
\hline
\end{tabular}}
\end{table}

\textbf{Implementation Details.}
We implemented DP-SGD, DP-GLMtron, and DP-MBGLMtron for regression tasks, tuning hyperparameters to ensure fair comparisons. Specifically, we set the learning rate to 0.01 for DP-SGD and DP-MBGLMtron, while DP-GLMtron used a higher learning rate of 0.05 to account for its single-pass training strategy. Each model was trained for 500 epochs to allow sufficient training progress, with DP-MBGLMtron utilizing a minibatch size of 32.
To ensure the robustness of our findings, every experiment was repeated five times, and the average performance was reported along with standard deviations where applicable. The experiments were conducted on an NVIDIA A6000 GPU. Throughout the training, we monitored the training loss, validation loss, and gradient norms to track convergence and model stability under varying privacy constraints.

\textbf{Experiment Results.} We evaluated the implemented algorithms using two criteria: training loss and test loss, both measured against the number of training epochs. We report the training loss here, with additional experimental results provided in \cref{sec:appendix_exp}.
From Fig.~\ref{fig:regression_train}, we can see 
across all datasets and privacy budgets, DP-MBGLMtron and DP-GLMtron consistently outperform DP-SGD, achieving lower excess risk and faster convergence. These results indicate that the minibatch approach in DP-MBGLMtron is particularly effective under strict privacy constraints, improving both stability and performance in differential privacy settings. The minibatch strategy in DP-MBGLMtron allows for more frequent gradient updates, which helps mitigate the negative effects of privacy-induced noise and stabilize training. In contrast, DP-SGD struggles with convergence, particularly at smaller privacy budgets (e.g., $\varepsilon=0.05$). In Figures \ref{fig:catrain005}-\ref{fig:gastrain05}, the excess risks remain high or do not decrease as effectively as with DP-MBGLMtron and DP-GLMtron, highlighting DP-SGD's limitations in maintaining performance in the ReLU regression model.

\section{Conclusion}
In this work, we revisited differentially private learning in the ReLU regression model under standard assumptions of i.i.d. data from $O(1)$-sub-Gaussian distributions, presenting nearly optimal guarantees for excess population risk.
We introduced and analyzed two algorithms: DP-GLMtron and DP-MBGLMtron. DP-GLMtron leverages adaptive gradient clipping derived from additional public data, achieving an excess risk upper bound of $\Tilde{O}(\frac{d^2}{N^2 \varepsilon^2})$. To mitigate its limitations regarding privacy budgets and public data dependence, we proposed DP-MBGLMtron, which uses mini-batching to eliminate the need for public data and supports larger privacy budgets without compromising performance.
Additionally, we established a matching lower bound through a tracing attack, confirming the tightness of our theoretical results. Empirical evaluations on regression tasks further validated our theoretical insights.



\begin{acknowledgements} 

    Di Wang is supported in part by the  funding BAS/1/1689-01-01, URF/1/4663-01-01,  REI/1/5232-01-01,  REI/1/5332-01-01,  and URF/1/5508-01-01  from KAUST, and funding from KAUST - Center of Excellence for Generative AI, under award number 5940.
\end{acknowledgements}

\bibliography{uai25}

\newpage
\appendix 
\onecolumn

\title{Nearly Optimal Differentially Private ReLU Regression\\(Supplementary Material)}
\maketitle

\section{Additional Experiment} \label{sec:appendix_exp}

\textbf{Datasets Information.} The information of three datasets used in our experiments is summarized in \cref{tab:data1}.

\begin{table}[h]
\label{tab:data1}
\centering
\caption{Summary of Dataset Statistics.}
\begin{tabular}{lcc}
\hline
\textbf{Dataset} & \textbf{ Samples} & \textbf{ Attributes} \\
\hline
\hline
California Housing       & 20640  & 8  \\
Gas Turbine CO and NOx Emission & 36733  & 9  \\
Wine Quality & 4898 & 11 \\
\hline
\end{tabular}
\end{table}

\textbf{Experimental Results.} Across all datasets and privacy budgets, DP-GLMtron and DP-MBGLMtron consistently outperform DP-SGD. The two methods converge faster and stabilize at test loss, suggesting that they are more effective at maintaining performance while adhering to privacy constraints. The trend holds consistent across varying privacy budgets ($\varepsilon = 0.05, 0.2, 0.5$), further highlighting the robustness of our approaches.

\begin{figure*}[h]
    \centering
    \begin{subfigure}{0.29\textwidth}
        \centering
        \includegraphics[width=\linewidth]{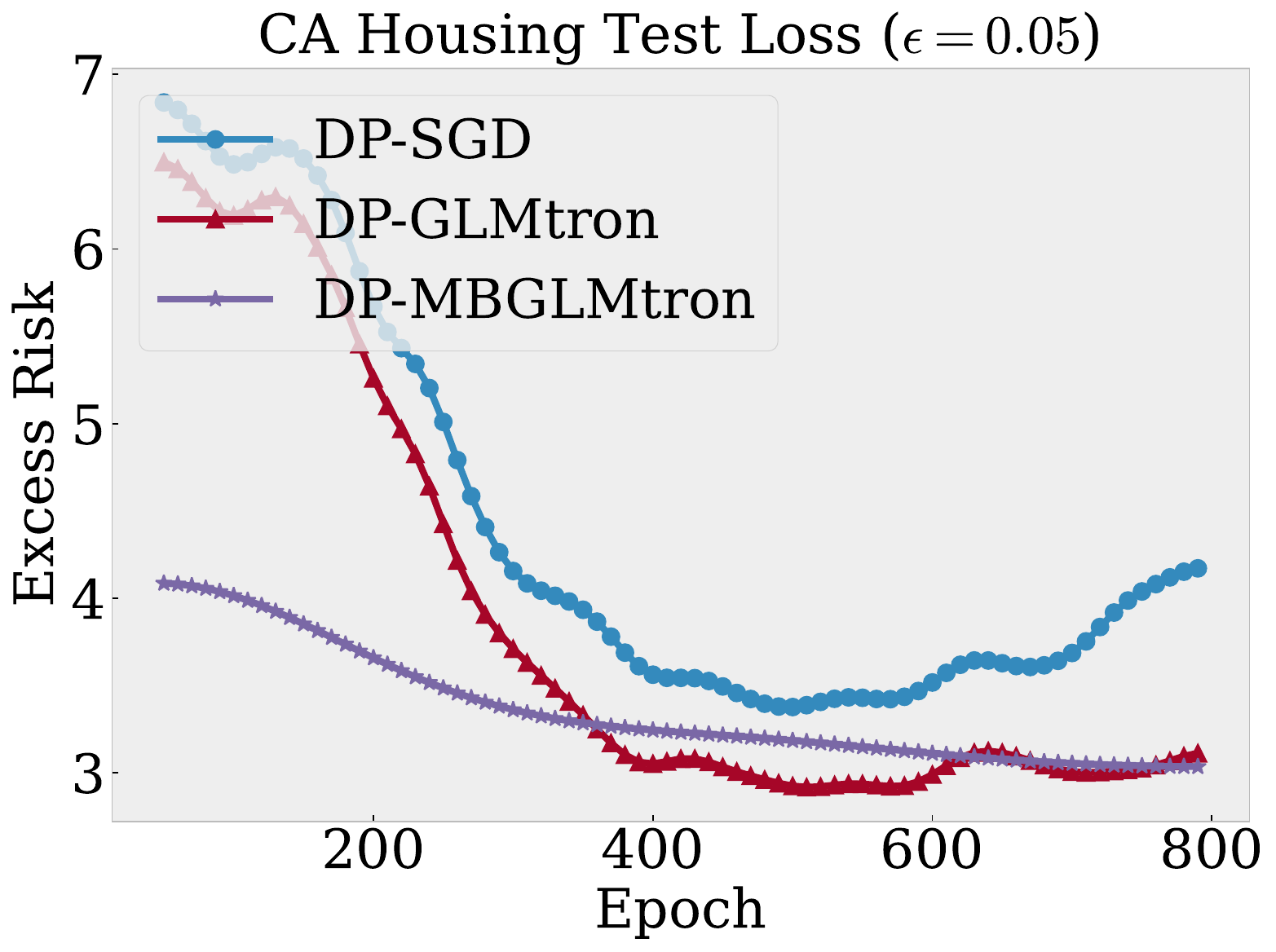}
        \caption{}
        \label{fig:catest005}
    \end{subfigure}%
    \begin{subfigure}{0.3\textwidth}
        \centering
        \includegraphics[width=\linewidth]{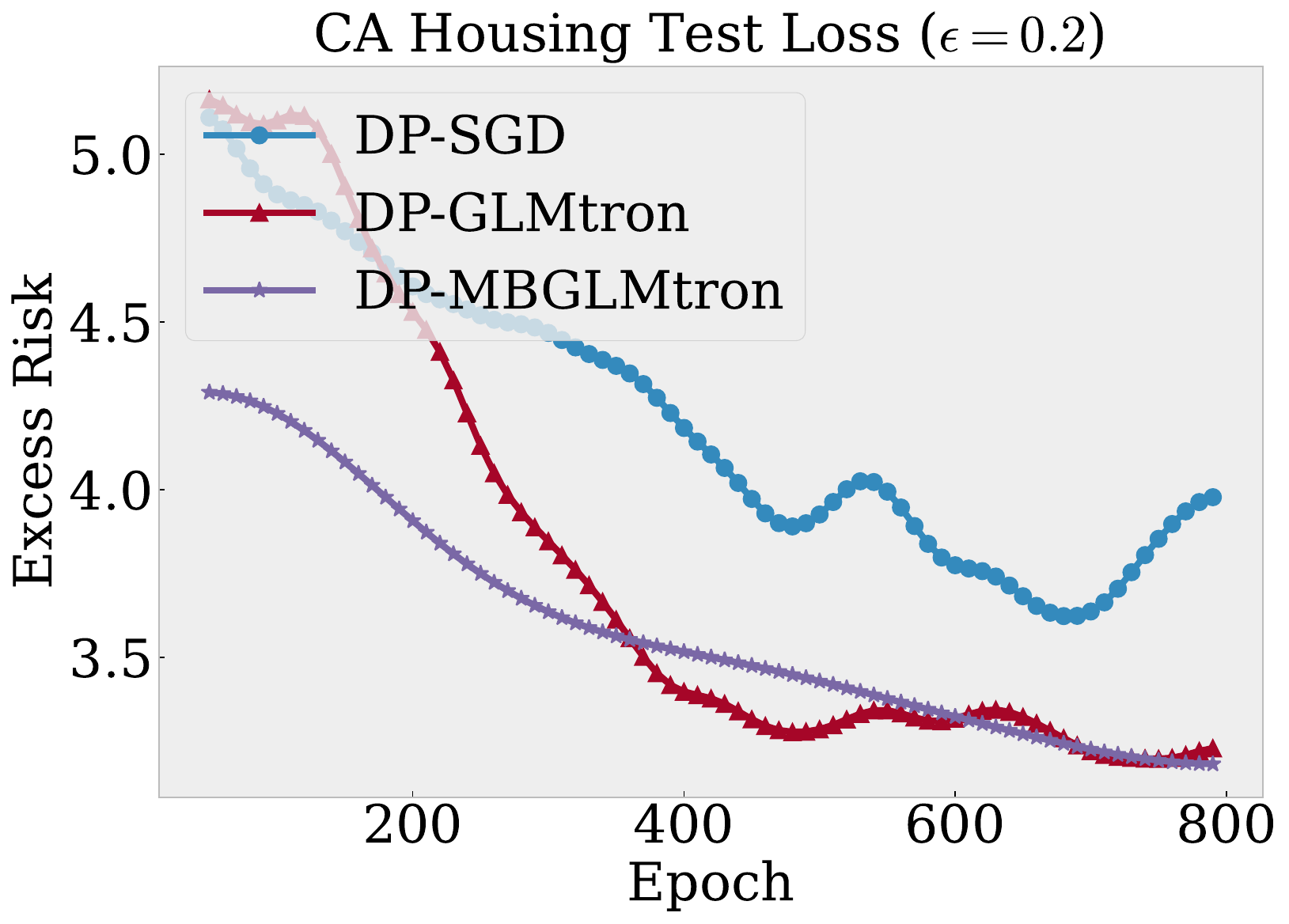}
        \caption{}
        \label{fig:catest02}
    \end{subfigure}
    \begin{subfigure}{0.3\textwidth}
        \centering
        \includegraphics[width=\linewidth]{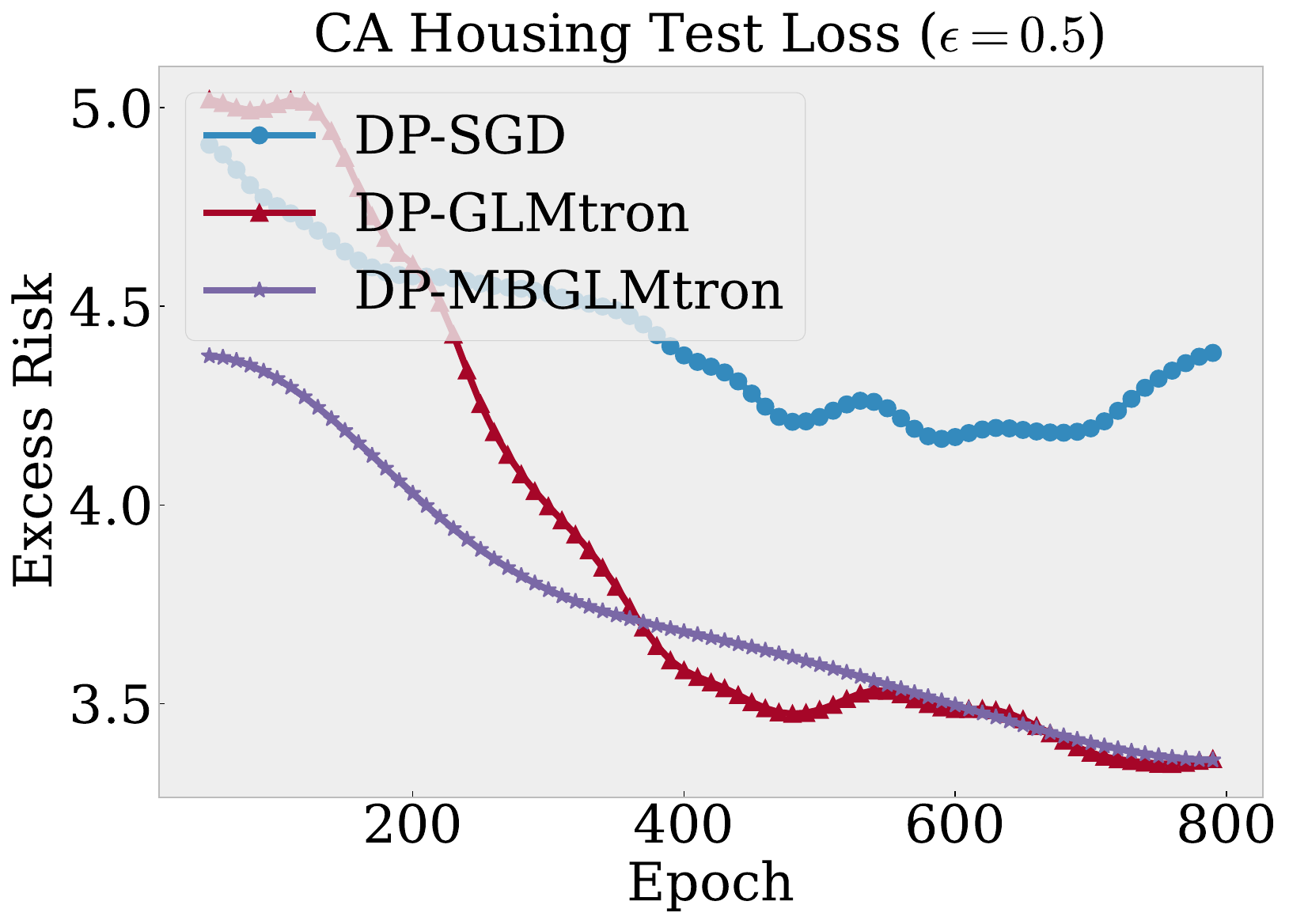}
        \caption{}
        \label{fig:catest05}
    \end{subfigure}%

    \begin{subfigure}{0.3\textwidth}
        \centering
        \includegraphics[width=\linewidth]{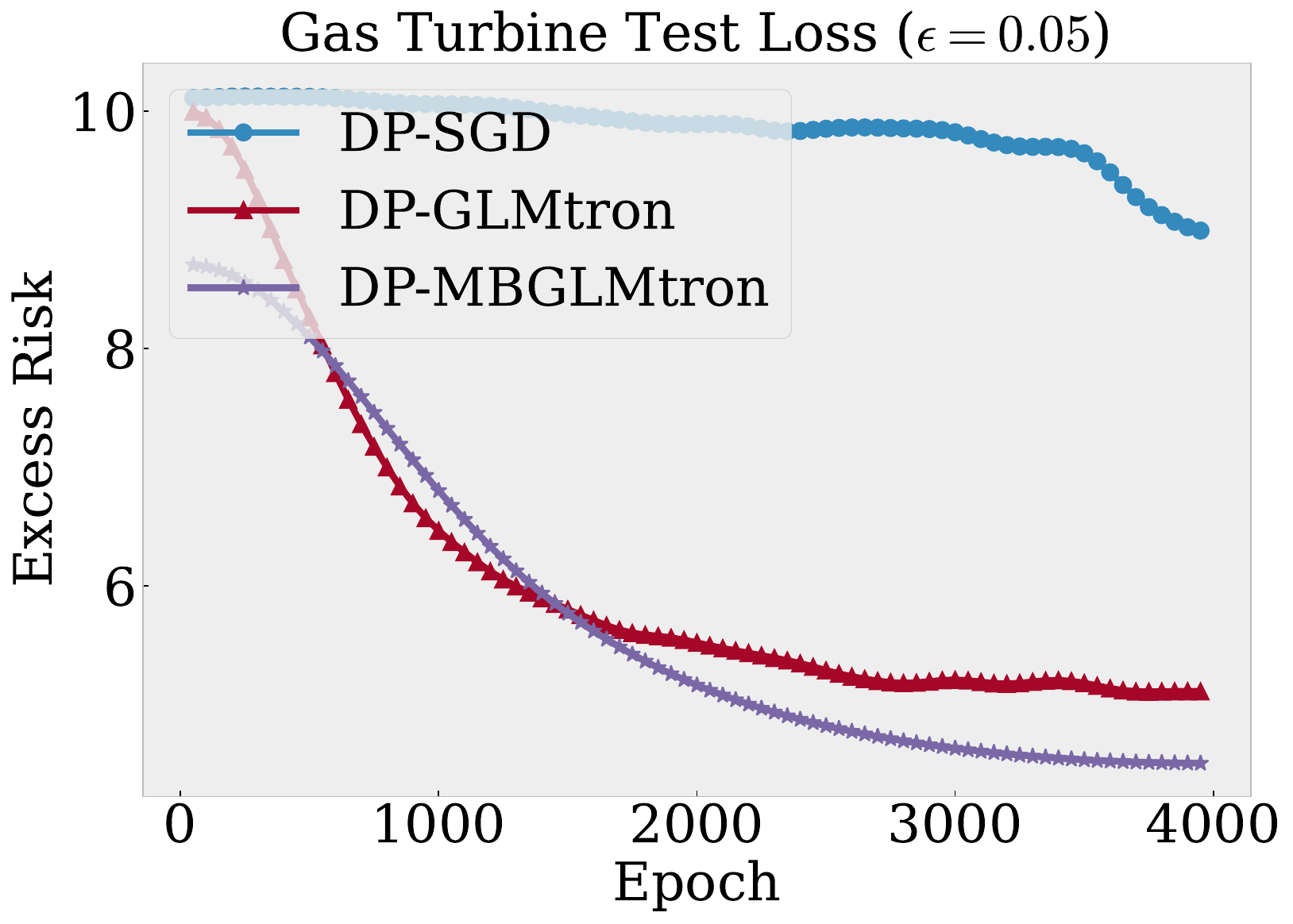}
        \caption{}
        \label{fig:gastest005}
    \end{subfigure}%
    \begin{subfigure}{0.3\textwidth}
        \centering
        \includegraphics[width=\linewidth]{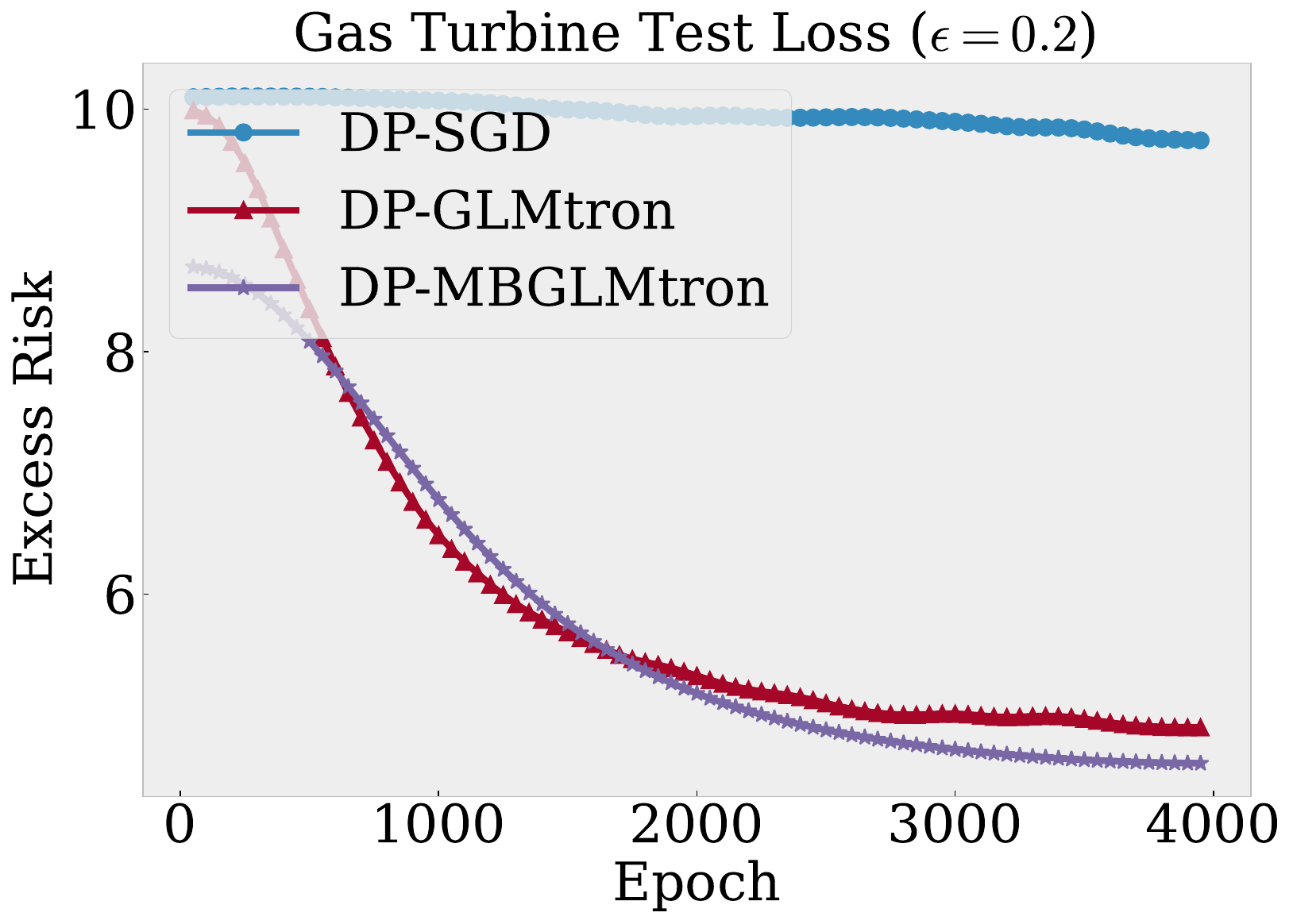}
        \caption{}
        \label{fig:gastest02}
    \end{subfigure}%
    \begin{subfigure}{0.3\textwidth}
        \centering
        \includegraphics[width=\linewidth]{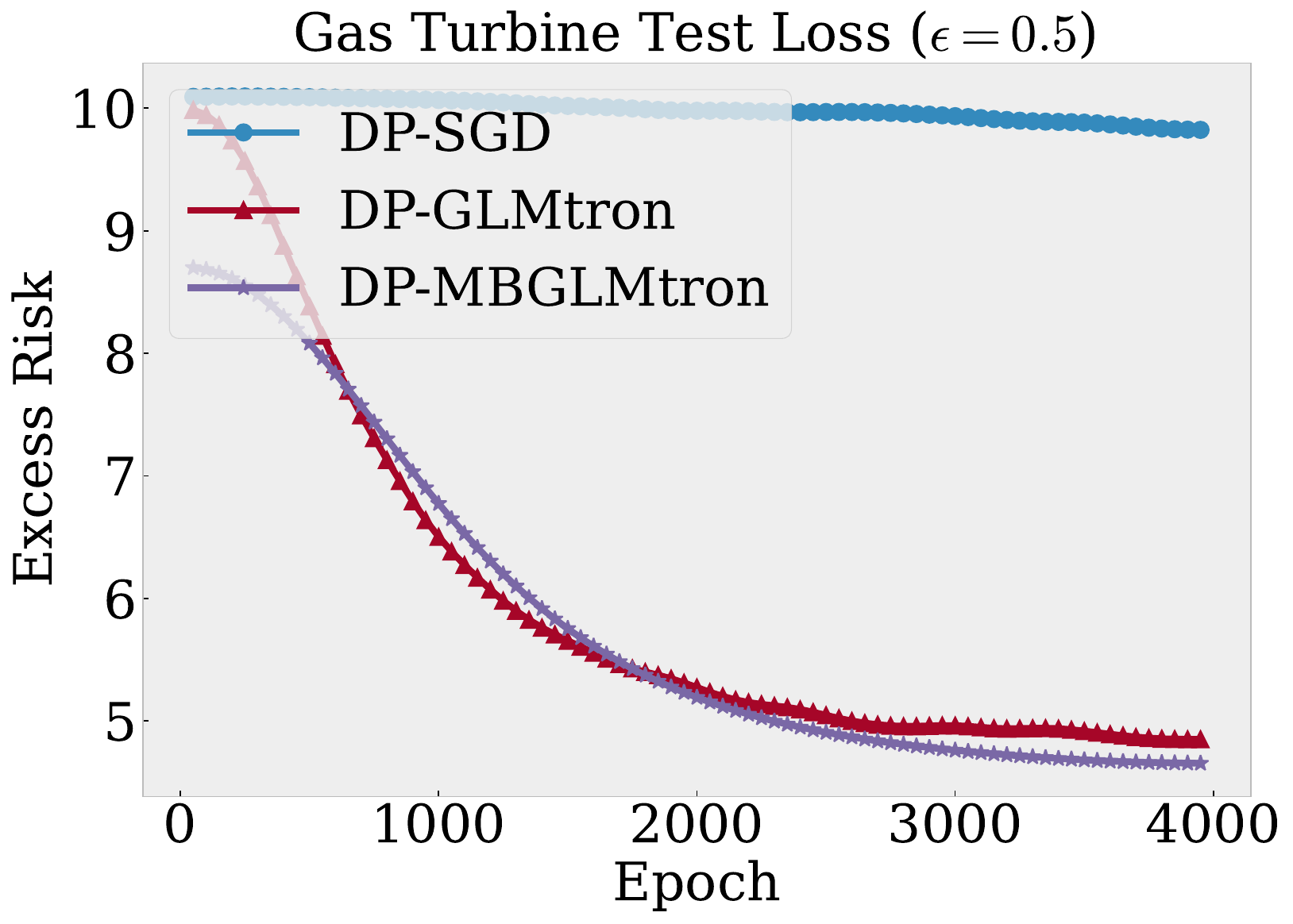}
        \caption{}
        \label{fig:gastest05}
    \end{subfigure}

    \begin{subfigure}{0.3\textwidth}
        \centering
        \includegraphics[width=\linewidth]{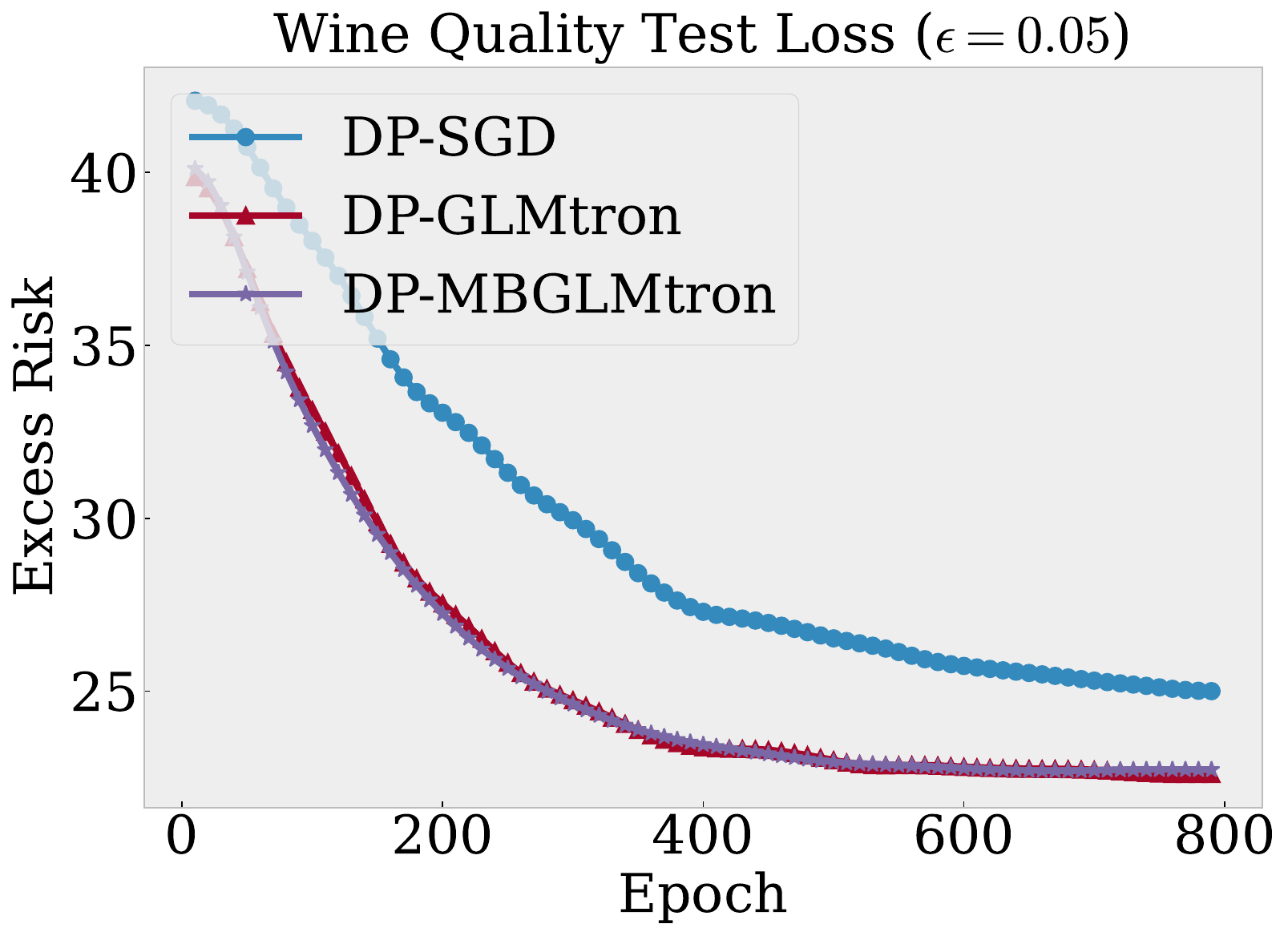}
        \caption{}
        \label{fig:blogtest005}
    \end{subfigure}%
    \begin{subfigure}{0.3\textwidth}
        \centering
        \includegraphics[width=\linewidth]{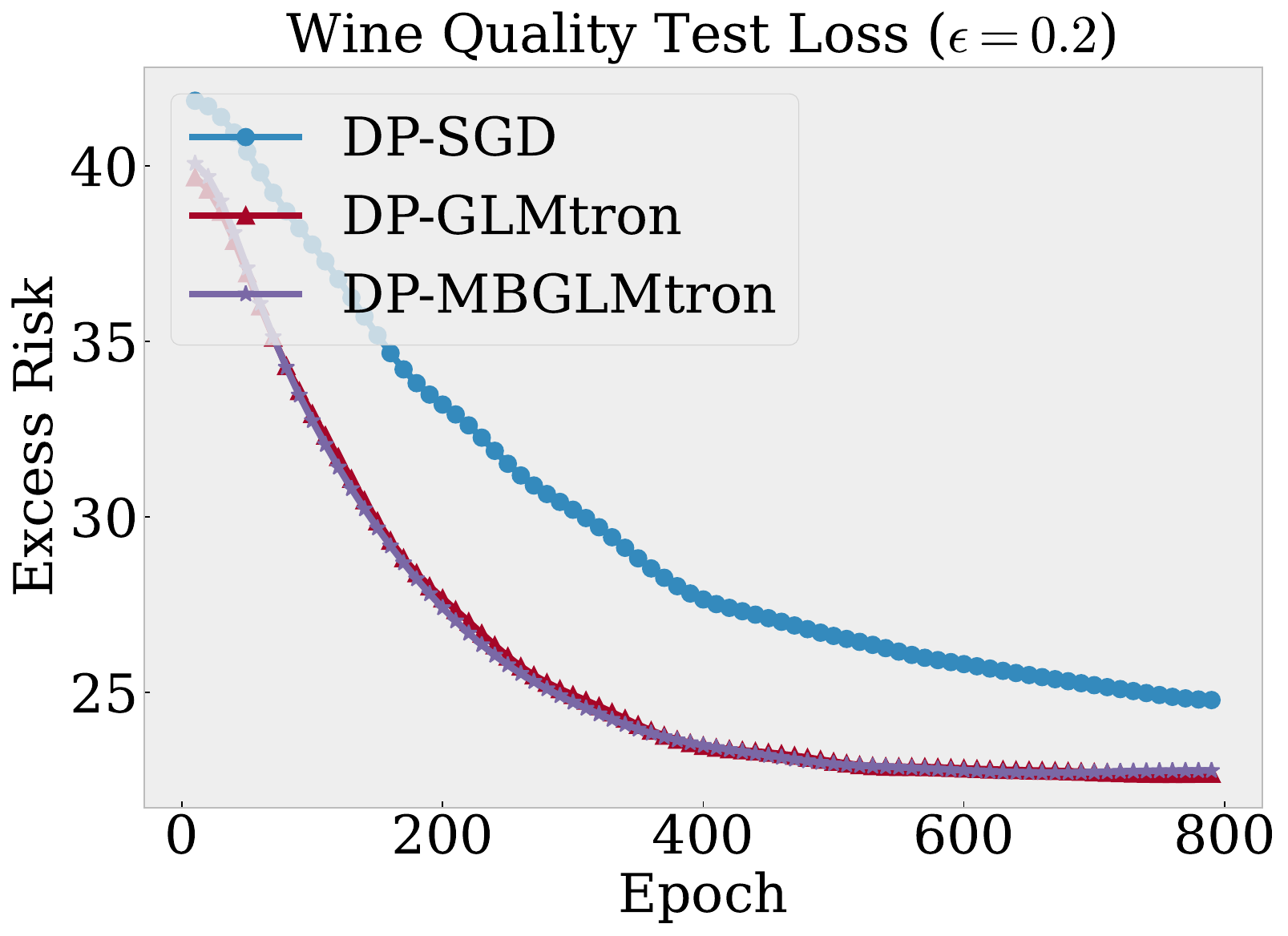}
        \caption{}
        \label{fig:blogtest02}
    \end{subfigure}%
        \begin{subfigure}{0.3\textwidth}
        \centering
        \includegraphics[width=\linewidth]{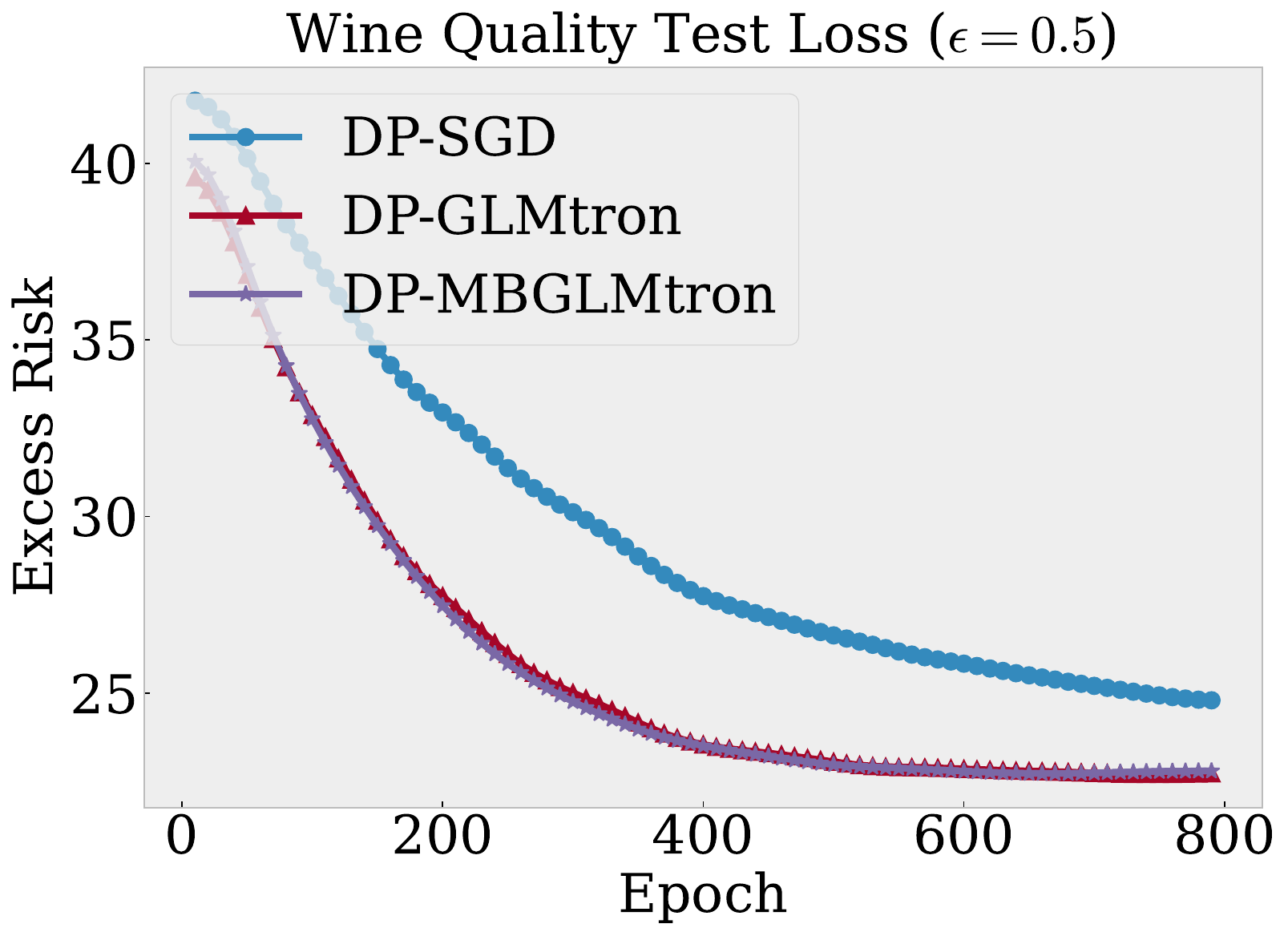}
        \caption{}
        \label{fig:blogtest05}
    \end{subfigure}%
    
\caption{Test loss over epochs for DP-SGD, DP-GLMtron, and DP-MiniBatch GLMtron on three regression datasets: California Housing, Gas Turbine, and Blog Feedback, under varying privacy budgets ($\varepsilon = 0.05, 0.2, 0.5$)}
\label{fig:regression_test}
\end{figure*}

\textbf{Computing Infrastructures.} The information of training configuration used in our experiments is summarized in \cref{tab:hardware_software_config}.

\begin{table}[h]
\centering
\caption{Hardware and Software Configuration.}
\label{tab:hardware_software_config}
\begin{tabular}{lc} 
\hline
\textbf{Components}              & \textbf{Details}         \\
\hline
\hline
Operating System        & Ubuntu 16.04.6                    \\ 
CPU                     & AMD EPYC 7552, 48-Core Processor  \\ 
CPU Memory              & 1.0 TB                            \\ 
GPU&NVIDIA RTX A6000\\
Programming Language    & Python~3.9.12                     \\ 
Deep Learning Framework & Pytorch~1.12.1                    \\
\hline
\end{tabular}
\end{table}

\begin{table}[h]
\centering
\begin{tabular}{llccccc}
\toprule
& \textbf{$\varepsilon$} & \textbf{DP-SGD} & \textbf{DP-GLMtron} & \textbf{DP-MBGLMtron} & \textbf{DP-TAGLMtron} \\
\midrule
\multirow{3}{*}{\textbf{Train}} 
& 0.05 & 9.14 & 8.26 & 5.19 & 7.77 \\
& 0.2  & 6.40 & 5.47 & 4.40 & 4.90 \\
& 0.5  & 5.77 & 4.83 & 4.38 & 4.55 \\
\midrule
\multirow{3}{*}{\textbf{Test}} 
& 0.05 & 9.74 & 8.74 & 5.63 & 8.06 \\
& 0.2  & 7.38 & 6.25 & 4.82 & 5.14 \\
& 0.5  & 6.82 & 5.63 & 4.81 & 5.25 \\
\bottomrule
\end{tabular}
\caption{Excess Risk (lower is better) under different privacy budgets.}
\label{tab:dp_methods_comparison}
\end{table}

To better illustrate our theoretical findings, we have added an additional experiment using synthetic data that satisfies our assumptions. In this setting, we could observe that the excess risk is closely related to the privacy budget in the table~\ref{tab:dp_methods_comparison}.

\section{Additional Definitions}
\begin{definition}[zCDP
\citet{bun2016concentrated}]\label{def:zcdp}
	A randomized algorithm $\mathcal{A}$ is $\rho$-zCDP if for any pair of data sets $D$ and $D^{\prime}$ that differ in one record, we have $D_p(\mathcal{A}(D) \| \mathcal{A}(D^{\prime})) \leq \rho p$ for all $p>1$, where $D_p$ is the Rényi divergence of order $p$.
\end{definition} 

\begin{definition}[Sub-Gaussian random variable]
     A zero-mean random variable $X \in \mathbb{R}$ is said to be sub-Gaussian with variance $\sigma^2(X \sim \operatorname{subG} (\sigma^2))$ if its moment generating function satisfies $\mathbb{E}[\exp (t X)] \leq \exp \left(\frac{\sigma^2 t^2}{2}\right)$ for all $t>0$. For a sub-Gaussian random variable $X$, its sub-Gaussian norm $\|X\|_{\psi_2}$ is defined as $\|X\|_{\psi_2}=\inf \{c>0: \mathbb{E}[\exp \left(\frac{X^2}{c^2})\right] \leq 2\}$. Specifically, if $X \sim \operatorname{subG}(\sigma^2)$ we have $\|X\|_{\psi_2} \leq O(\sigma)$.
\end{definition}

\begin{definition}[Sub-Gaussian random vector]
    A zero mean random vector $\mathbf{X} \in \mathbb{R}^d$ is said to be sub-Gaussian with variance $\sigma^2$ (for simplicity, we call it $\sigma^2$-sub-Gaussian), which is denoted as $(\mathbf{X} \sim \operatorname{subG}_d(\sigma^2))$, if $\langle \mathbf{X}, \mathbf{u}\rangle$ is sub-Gaussian with variance $\sigma^2$ for any unit vector $\mathbf{u} \in \mathbb{R}^d$.
\end{definition}

\section{DP-GLMtron}
    
\subsection{Privacy Guarantee}
To guarantee the privacy of DP-GLMtron, we should first ensure that each step of DP-GLMtron is private.

\begin{lemma}\label{lem:dp_glm}
     Each update step of DP-GLMtron (Algorithm 1) ensures $ (\varepsilon_0, \delta_0 )$-differential privacy, provided that ${f} = \frac{c}{\varepsilon_0}$, where $c \geq \sqrt{2 \log  (1.25 / \delta_0 )}$, and $s_t$ denotes the clipping norm.
\end{lemma}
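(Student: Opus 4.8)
The plan is to show that a single update step of DP-GLMtron is an instance of the classical Gaussian mechanism applied to a function whose $\ell_2$-sensitivity is controlled by the clipping operation. Recall that in iteration $t$, the quantity being privatized is the clipped gradient-like term $\text{clip}_{s_t}\!\bigl(\mathbf{x}_t(\operatorname{ReLU}(\mathbf{x}_t^{\top}\mathbf{w}_t)-y_t)\bigr)$, to which Gaussian noise $2fs_t\mathbf{g}_t$ with $\mathbf{g}_t\sim\mathcal{N}(0,\mathbf{I}_d)$ is added before the step is taken. The first step is to fix the current iterate $\mathbf{w}_t$ and the threshold $s_t$; since the data are processed one-pass without replacement and the permutation is applied up front, $\mathbf{w}_t$ depends only on samples $0,\dots,t-1$, so it is independent of $\mathbf{x}_t,y_t$ and may be treated as a constant when analyzing the release at step $t$. (The threshold $s_t$ comes from \textbf{DP-Threshold} on the public data $D'$, hence also carries no privacy cost from $D$.)

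Next I would bound the $\ell_2$-sensitivity. For two neighboring datasets differing only in the single record indexed $t$, the only change in the release is the clipped vector. Because $\|\text{clip}_{s_t}(\mathbf{v})\|_2\le s_t$ for every $\mathbf{v}$, the two clipped vectors each have norm at most $s_t$, so by the triangle inequality their difference has $\ell_2$-norm at most $2s_t$; thus the sensitivity is $\Delta_2 \le 2s_t$. Then, by the standard Gaussian mechanism guarantee (e.g.\ Dwork--Roth), adding $\mathcal{N}(0,\sigma_{\mathrm{noise}}^2\mathbf{I}_d)$ with $\sigma_{\mathrm{noise}} \ge \frac{\Delta_2\sqrt{2\log(1.25/\delta_0)}}{\varepsilon_0}$ yields $(\varepsilon_0,\delta_0)$-DP. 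Here the noise actually added is $2fs_t\mathbf{g}_t$, i.e.\ $\sigma_{\mathrm{noise}} = 2fs_t$. Matching the requirement $2fs_t \ge \frac{2s_t\sqrt{2\log(1.25/\delta_0)}}{\varepsilon_0}$ gives exactly $f \ge \frac{\sqrt{2\log(1.25/\delta_0)}}{\varepsilon_0}$, which is the stated condition $f = c/\varepsilon_0$ with $c \ge \sqrt{2\log(1.25/\delta_0)}$. The post-update map $\mathbf{w}_t \mapsto \mathbf{w}_t - \eta(\cdot)$ is a fixed (data-independent, given $\mathbf{w}_t$) affine transformation, so post-processing preserves the guarantee.

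I do not expect a serious obstacle here: the lemma is essentially a bookkeeping statement that the per-step mechanism is Gaussian noise calibrated to a clipped sensitivity of $2s_t$. The one point that needs a careful sentence is justifying that $\mathbf{w}_t$ and $s_t$ can be regarded as constants for the step-$t$ analysis — this is where the one-pass-without-replacement structure and the use of \emph{public} data for the threshold are essential, and it is precisely what lets the composition over the $N$ steps (handled later, via shuffling amplification in Theorem~\ref{thm:glm_privacy}) go through cleanly. Everything else is a direct invocation of the Gaussian mechanism and post-processing.
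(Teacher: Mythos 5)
Your proposal is correct and follows essentially the same route as the paper: bound the per-step $\ell_2$-sensitivity by $2s_t$ (the paper works with the updated iterate, getting $2\eta s_t$, but the $\eta$ cancels against the noise scale $2\eta f s_t$ identically) and invoke the standard Gaussian mechanism to get $f\geq \sqrt{2\log(1.25/\delta_0)}/\varepsilon_0$, with post-processing and the one-pass/public-threshold structure justifying treating $\mathbf{w}_t$ and $s_t$ as fixed. The only cosmetic difference is that the paper additionally remarks on a DP guarantee for the thresholds (via a Karwa--Vadhan-type lemma), which, as you correctly note, is unnecessary for DP-GLMtron since the thresholds are computed from public data.
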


\begin{proof}[{\bf Proof}]
We first consider $\{\mathbf{w}_t\}_{t=0}^{N-1}$.

Each update step (excluding the DP-noise addition) is of the form:
$$
\mathbf{w}_{t+1}  \leftarrow \mathbf{w}_t-\eta \operatorname{clip}_{s_t} (\mathbf{x}_t (\operatorname{ReLU} (\mathbf{x}_t^{\top} \mathbf{w}_{t} )-y_t ) ),
$$
where $\operatorname{clip}_{s_t}(\mathbf{v}) = \mathbf{v} \cdot \max  \{1, \frac{{s_t}}{\|\mathbf{v}\|_2} \}$. Consequently, the local $L_2$ sensitivity of $\mathbf{w}_{t+1}$ is determined by analyzing the variation in the $t^{\text{th}}$ iteration data sample, as follows:
\begin{align*}
\Delta_2 & = \|\mathbf{w}_{t+1}^{\prime}-\mathbf{w}_{t+1} \| \\
& = \|\eta \operatorname{clip}_{s_t} (\mathbf{x}_t^{\prime} (\operatorname{ReLU} ({\mathbf{x}_t^{\prime}}^{\top} \mathbf{w}_{t} )-y_t^{\prime} ) )-\eta \operatorname{clip}_{s_t} (\mathbf{x}_t (\operatorname{ReLU} (\mathbf{x}_t^{\top} \mathbf{w}_{t} )-y_t ) ) \| \\
& \leq 2 \eta \| \operatorname{clip}_{s_t} (\mathbf{x}_t (\operatorname{ReLU} (\mathbf{x}_t^{\top} \mathbf{w}_{t} )-y_t ) ) \| \\
& =2 \eta {s_t}.
\end{align*}

Moreover, denoting $\varepsilon^{\prime} =\varepsilon_0 / \sqrt{8 N \log (2 / \delta_0)}$ and $\delta^{\prime}=\delta_0 /(2 N)$, according to Lemma 2.3 in \citet{karwa2017finite}, we could know that $\{s_t\}_{t=0}^{N}$ is $(\varepsilon^{\prime},\delta^{\prime})$-DP.

\end{proof}

\begin{lemma}[\citet{feldman2022hiding}]\label{lem:feld_one}
     For a domain $\mathcal{D}$, let $\mathcal{R}^{(i)}: f \times \mathcal{D}  \rightarrow \mathcal{S}^{(i)}$ for $i \in[n]$ (where $\mathcal{S}^{(i)}$ is the range space of $\mathcal{R}^{(i)}$ ) be a sequence of algorithms such that $\mathcal{R}^{(i)} (z_{1: i-1}, \cdot )$ is a $ (\varepsilon_0, \delta_0 )$-DP local randomizer for all values of auxiliary inputs $z_{1: i-1} \in \mathcal{S}^{(1)} \times \cdots \times \mathcal{S}^{(i-1)}$. Let $\mathcal{A}_s: \mathcal{D}^N  \rightarrow \mathcal{S}^{(1)} \times \cdots \times \mathcal{S}^{(n)}$ be the algorithm that given a dataset $x_{1: N} \in \mathcal{D}^N$, samples a uniformly random permutation $\pi$, then sequentially computes $z_i=\mathcal{R}^{(i)} (z_{1: i-1}, x_{\pi(i)} )$ for $i \in[n]$, and outputs $z_{1: N}$. Then for any $\delta \in[0,1]$ such that $\varepsilon_0 \leq \log  (\frac{n}{16 \log (2 / \delta)} ), \mathcal{A}_s$ is $ (\varepsilon, \delta+O (e^{\varepsilon} \delta_0 n ) )$-DP where $\varepsilon$ is:
$$
\varepsilon=O ( (1-e^{-\varepsilon_0} ) (\frac{\sqrt{e^{\varepsilon_0} \log (1 / \delta)}}{\sqrt{n}}+\frac{e^{\varepsilon_0}}{n} ) ).
$$
\end{lemma}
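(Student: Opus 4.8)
The plan is to establish Lemma~\ref{lem:feld_one} via the ``hiding among the clones'' strategy of \cite{feldman2022hiding}. At a high level, the shuffled output of the local randomizers is recast as a scalar estimation problem about a binomial count, and the privacy parameters are read off from concentration of that binomial.

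\textbf{Step 1: reduction to a canonical local randomizer.} First I would absorb the $\delta_0$ slack: each $(\varepsilon_0,\delta_0)$-DP local randomizer is, on the two neighboring inputs, within total variation $O(\delta_0)$ of a convex combination of an $(\varepsilon_0,0)$-DP randomizer and an arbitrary map, and over the $n$ users this exchange contributes precisely the additive term $O(e^{\varepsilon}\delta_0 n)$; so one may henceforth assume $\delta_0=0$. Fix the two datasets differing only in user $1$, whose record is either $x_0$ or $x_1$. By post-processing and a worst-case argument, it suffices to analyze the \emph{canonical} randomizer: on an input in $\{x_0,x_1\}$ it outputs a single ``true'' bit ($0$ or $1$), while on every other input it outputs a uniformly random ``clone'' bit with probability $q:=\tfrac{1}{e^{\varepsilon_0}+1}$ and a dummy symbol $\perp$ otherwise, where $q$ is the largest probability with which a non-special input can masquerade as either $x_0$ or $x_1$ under $\varepsilon_0$-local DP.

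\textbf{Step 2: the clone paradigm.} Condition on the number $C$ of the $n-1$ non-special users that emit a clone bit rather than $\perp$, so that $C\sim\mathrm{Bin}(n-1,2q)$. Given $C=c$, the analyst sees a shuffled multiset made of $c$ i.i.d.\ uniform bits from the non-special users, one bit from user $1$ (equal to $0$ if the true value is $x_0$ and $1$ if it is $x_1$), and a fixed number of $\perp$'s. Since the shuffle erases user labels and the $c$ non-special clone bits are exchangeable with user $1$'s bit, the posterior over which neighbor generated the view depends only on the total number of $1$'s. Hence distinguishing $x_0$ from $x_1$ reduces exactly to distinguishing $1+\mathrm{Bin}(c,\tfrac12)$ from $\mathrm{Bin}(c,\tfrac12)$, i.e.\ to controlling the $(\varepsilon,\delta)$ hockey-stick divergence between these two scalar distributions.

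\textbf{Step 3: divergence bound via binomial concentration.} Averaging over $C$, the privacy profile of $\mathcal{A}_s$ is bounded by $\mathbb{E}_{C}$ of the scalar hockey-stick divergence between $1+\mathrm{Bin}(C,\tfrac12)$ and $\mathrm{Bin}(C,\tfrac12)$. Since $2q=\tfrac{2}{e^{\varepsilon_0}+1}=\Theta(e^{-\varepsilon_0})$, a Chernoff bound gives $C=\Omega(n e^{-\varepsilon_0})$ except on an event of probability $e^{-\Omega(n e^{-\varepsilon_0})}$, which is folded into $\delta$ (this is exactly where the hypothesis $\varepsilon_0\le\log(\tfrac{n}{16\log(2/\delta)})$ enters, guaranteeing $C$ concentrates). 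On the good event, a Bernstein-type bound on the binomial likelihood ratio shows the pointwise privacy loss is $O(1/\sqrt{C}+1/C)=O(\sqrt{e^{\varepsilon_0}/n}+e^{\varepsilon_0}/n)$; combining the $\delta$-budget with the overall scaling by $(1-e^{-\varepsilon_0})$ (the chance that user $1$'s output discriminates the neighbors at all) yields $\varepsilon=O\!\Big((1-e^{-\varepsilon_0})\big(\tfrac{\sqrt{e^{\varepsilon_0}\log(1/\delta)}}{\sqrt{n}}+\tfrac{e^{\varepsilon_0}}{n}\big)\Big)$ with total failure probability $\delta+O(e^{\varepsilon}\delta_0 n)$.

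\textbf{Main obstacle.} The delicate part is Step 1: rigorously justifying that the shuffle of the \emph{canonical} randomizer's outputs is a worst case among all $(\varepsilon_0,0)$-DP local randomizers (so that no additional structure of the original $\mathcal{R}^{(i)}$ can be exploited beyond the clone count), and correctly propagating the $\delta_0$ slack through the adaptive composition over the random permutation. Once the problem is reduced to the scalar binomial, Steps 2 and 3 are a careful but essentially routine tail computation.
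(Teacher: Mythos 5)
You should note at the outset that the paper does not prove this lemma: it is imported verbatim from \cite{feldman2022hiding} and used as a black box, so there is no internal proof to compare your argument against. Judged on its own terms, your sketch correctly reproduces the ``hiding among the clones'' strategy of that reference: decompose each local randomizer so that every non-special user emits, with some probability, a clone of one of the two special outputs; condition on the clone count $C$; reduce the shuffled view by post-processing to distinguishing $1+\mathrm{Bin}(C,\tfrac12)$ from $\mathrm{Bin}(C,\tfrac12)$; and finish with Chernoff concentration of $C$ (which is exactly where the hypothesis $\varepsilon_0\le\log(n/(16\log(2/\delta)))$ is consumed) plus a likelihood-ratio bound for the binomial. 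The handling of the $\delta_0$ slack as an additive $O(e^{\varepsilon}\delta_0 n)$ term and the provenance of the $(1-e^{-\varepsilon_0})$ prefactor are also attributed correctly.

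Two caveats keep this a plan rather than a proof. First, the clone probability in the canonical decomposition is $e^{-\varepsilon_0}/2$ per special input (total $e^{-\varepsilon_0}$), coming from $\mathcal{R}(x)\succeq e^{-\varepsilon_0}\mathcal{R}(x_b)$ for each $b$, not $q=1/(e^{\varepsilon_0}+1)$; this is harmless asymptotically but your stated value is not the one that makes the mixture decomposition valid. Second, and more substantively, the lemma is stated for \emph{adaptive} randomizers $\mathcal{R}^{(i)}(z_{1:i-1},\cdot)$ composed along a random permutation, and your Steps 2--3 implicitly treat the $n$ randomizers as identical and non-adaptive; making the clone reduction and the exchangeability argument go through under adaptivity is the main technical content of the original proof. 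You flag this in your ``main obstacle'' paragraph, which is the right instinct, but as written the reduction to a single scalar binomial problem is only justified in the non-adaptive case.
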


We are now prepared to prove the privacy of DP-GLMtron, utilizing the lemmas discussed above.

Firstly, we reformulate the update rule into a sequence of one-step algorithms as follows:
$$
\mathcal{R}^{(t+1)} (u_{0: t},(\mathbf{x}, y) ):=\mathbf{w}_{t+1}  \leftarrow \mathbf{w}_t (u_{0: t} )-\eta \operatorname{clip}_{s_t} (\mathbf{x}_{\pi(t)} (\operatorname{\text{ReLU}} (\mathbf{x}_{\pi(t)}^{\top} \mathbf{w}_t (u_{0: t} ) )-y_{\pi(t)} ) )-2 \eta {s_t} {f} g_t,
$$
where $u$ denotes auxiliary inputs, and $\pi(t)$ represents the sample at the $t$-th iteration after randomly permuting the input data.

From \cref{lem:dp_glm}, each $\mathcal{R}^{(t+1)}(u_{0: t}, \cdot)$ is a $(\varepsilon_0, \delta_0)$-DP local randomizer algorithm, where $\varepsilon_0 \leq \log  (\frac{N}{16 \log (2 / \widehat{\delta})} )$. The output of DP-GLMtron is derived through post-processing of the shuffled outputs $u_{t+1} = \mathcal{R}^{(t+1)}(u_{0: t},(\mathbf{x}, y))$ for $t \in {0, \ldots, N-1}$. Therefore, by \cref{lem:feld_one}, Algorithm DP-GLMtron adheres to $(\widehat{\varepsilon}, \widehat{\delta}+O(e^{\widehat{\varepsilon}} \delta_0 N))$-DP, where:
$$
\widehat{\varepsilon}=O ( (1-e^{-\varepsilon_0} ) (\frac{\sqrt{e^{\varepsilon_0} \log (1 / \widehat{\delta})}}{\sqrt{N}}+\frac{e^{\varepsilon_0}}{N} ) ).
$$
Assuming $\varepsilon_0 \leq \frac{1}{2}$, we can infer the existence of some constant $c_1 > 0$ such that:
\begin{align} \notag
\widehat{\varepsilon} & \leq c_1 \cdot (1-e^{-\varepsilon_0} ) (\frac{\sqrt{e^{\varepsilon_0} \log (1 / \widehat{\delta})}}{\sqrt{N}}+\frac{e^{\varepsilon_0}}{N} ) \\ \notag
& \leq c_1 \cdot ( (e^{\varepsilon_0 / 2}-e^{-\varepsilon_0 / 2} ) \sqrt{\frac{\log (1 / \widehat{\delta})}{N}}+ (e^{\varepsilon_0}-1 ) \frac{1}{N} ) \\
& \leq c_1 \cdot ( ( (1+\varepsilon_0 )- (1-\varepsilon_0 / 2 ) ) \sqrt{\frac{\log (1 / \widehat{\delta})}{N}}+ ( (1+2 \varepsilon_0 )-1 ) \frac{1}{N} ) \\ \notag
& =c_1 \cdot \varepsilon_0 (\frac{1}{2} \sqrt{\frac{\log (1 / \widehat{\delta})}{N}}+\frac{2}{N} ) .
\end{align}
By setting ${f} = \frac{\sqrt{2 \log (1.25 / \delta_0)}}{\varepsilon_0}$ in \cref{lem:dp_glm}, we ensure that each update step of DP-GLMtron independently satisfies $(\varepsilon_0, \delta_0)$-DP, based on standard Gaussian mechanism. Replacing $\varepsilon_0 = \frac{\sqrt{2 \log (1.25 / \delta_0)}}{{f}}$, we obtain:
\begin{equation}
\widehat{\varepsilon} \leq c_1 \cdot \frac{\sqrt{2 \log  (1.25 / \delta_0 ) \log (1 / \widehat{\delta})}}{{f} \sqrt{N}}
\end{equation}
To satisfy overall $(\varepsilon, \delta)$-DP, set $\widehat{\delta} = \frac{\delta}{2}$, and $\delta_0 = c_2 \cdot \frac{\delta}{e^{\widehat{\varepsilon} N}}$ for some constant $c_2 > 0$. From this, we have:
\begin{equation}
\widehat{\varepsilon} \leq c_1 \cdot \frac{\sqrt{2 \log  (c_2 \cdot 1.25 \cdot e^{\widehat{\varepsilon}} N / \delta ) \cdot \log (2 / \delta)}}{{f} \sqrt{N}}
\end{equation}
For any $\varepsilon \leq 1$, setting ${f} = c_3 \cdot \frac{\log (N / \delta)}{\varepsilon \sqrt{N}} \geq c_3 \frac{\sqrt{\log (N / \delta) \log (1 / \delta)}}{\varepsilon \sqrt{N}}$ for a sufficiently large $c_3 > 0$ ensures that $\widehat{\varepsilon} \leq \varepsilon$. Additionally, to fulfill \cref{lem:feld_one}'s assumption, $\varepsilon_0 < \frac{1}{2}$ must be satisfied, which is attainable by setting $\varepsilon = O(\sqrt{\frac{\log (N / \delta)}{N}})$.

This implies that for ${f} = \Omega(\frac{\log (N / \delta)}{\varepsilon \sqrt{N}})$, DP-GLMtron achieves $(\varepsilon, \delta)$-DP as long as $\varepsilon = O(\sqrt{\frac{\log (N / \delta)}{N}})$, thereby completing the proof.

\subsection{Utility Guarantee}
Here we first provide several auxiliary results that will be used in our DP-GLMtron utility bound. 
\begin{assumption}[Moment symmetricity conditions \cite{wu2023finite}]\label{asm:sym_restated}
    Assume that
    \begin{enumerate}
        \item[(A).] For every $\mathbf{u} \in \mathbb{H}$, it holds that
$$
\mathbb{E} [\mathbf{x} \mathbf{x}^{\top} \cdot \mathbbm{1} [\mathbf{x}^{\top} \mathbf{u}>0 ] ]=\mathbb{E} [\mathbf{x} \mathbf{x}^{\top} \cdot \mathbbm{1} [\mathbf{x}^{\top} \mathbf{u}<0 ] ].
$$
        \item[(B).] For every $\mathbf{u} \in \mathbb{H}$ and $\mathbf{v} \in \mathbb{H}$, it holds that
$$
\mathbb{E} [\mathbf{x} \mathbf{x}^{\top} \cdot \mathbbm{1} [\mathbf{x}^{\top} \mathbf{u}>0, \mathbf{x}^{\top} \mathbf{v}>0 ] ]=\mathbb{E} [\mathbf{x} \mathbf{x}^{\top} \cdot \mathbbm{1} [\mathbf{x}^{\top} \mathbf{u}<0, \mathbf{x}^{\top} \mathbf{v}<0 ] ] .
$$
        \item[(C).] For every $\mathbf{u} \in \mathbb{H}$, it holds that
$$
\mathbb{E} [\mathbf{x}^{\otimes 4} \cdot \mathbbm{1} [\mathbf{x}^{\top} \mathbf{u}>0 ] ]=\mathbb{E} [\mathbf{x}^{\otimes 4} \cdot \mathbbm{1} [\mathbf{x}^{\top} \mathbf{u}<0 ] ].
$$
        \item[(D).] For every $\mathbf{u} \in \mathbb{H}$ and $\mathbf{v} \in \mathbb{H}$, it holds that
$$
\mathbb{E} [ (\mathbf{x}^{\top} \mathbf{v} )^2 \mathbf{x} \mathbf{x}^{\top} \cdot \mathbbm{1} [\mathbf{x}^{\top} \mathbf{u}>0, \mathbf{x}^{\top} \mathbf{v}>0 ] ]=\mathbb{E} [ (\mathbf{x}^{\top} \mathbf{v} )^2 \mathbf{x} \mathbf{x}^{\top} \cdot \mathbbm{1} [\mathbf{x}^{\top} \mathbf{u}<0, \mathbf{x}^{\top} \mathbf{v}<0 ] ].
$$
    \end{enumerate}
\end{assumption}

The following results are direct consequences of \cref{asm:sym_restated}.
\begin{lemma}[\citet{wu2023finite}]\label{lem:sysm}
The following results are direct consequences of \cref{asm:sym_restated}.
    \begin{enumerate}
        \item Under \cref{asm:sym_restated} (A), it holds that: for every vector $\mathbf{u} \in \mathbb{H}$,
$$
\mathbb{E} [\mathbf{x} \mathbf{x}^{\top} \cdot \mathbbm{1} [\mathbf{x}^{\top} \mathbf{u}>0 ] ]=\frac{1}{2} \cdot \mathbb{E} [\mathbf{x} \mathbf{x}^{\top} ]=: \frac{1}{2} \cdot \mathbf{H} .
$$
        \item Under \cref{asm:sym_restated} (C), it holds that: for every vector $\mathbf{u} \in \mathbb{H}$,
$$
\mathbb{E} [\mathbf{x}^{\otimes 4} \cdot \mathbbm{1} [\mathbf{x}^{\top} \mathbf{u}>0 ] ]=\frac{1}{2} \cdot \mathbb{E} [\mathbf{x}^{\otimes 4} ]=: \frac{1}{2} \cdot \mathcal{M}
$$
    \end{enumerate}
\end{lemma}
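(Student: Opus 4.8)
The plan is to reduce both statements to the elementary observation that, for any fixed direction $\mathbf{u}$, the two open half-space indicators add up to the constant function $1$ up to a null set: almost surely $\mathbbm{1}[\mathbf{x}^{\top}\mathbf{u}>0]+\mathbbm{1}[\mathbf{x}^{\top}\mathbf{u}<0]=1$, because the hyperplane $\{\mathbf{x}:\mathbf{x}^{\top}\mathbf{u}=0\}$ carries zero probability for the (absolutely continuous) covariate laws considered here. Equivalently, one may simply adopt the convention that the boundary is attached to one of the two sides; this changes nothing in the computation below.

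For the first statement I would multiply this identity by the rank-one matrix $\mathbf{x}\mathbf{x}^{\top}$ and take expectations, using finiteness of the second moment (the data-covariance assumption) to split the expectation by linearity:
$$\mathbf{H}=\mathbb{E}[\mathbf{x}\mathbf{x}^{\top}]=\mathbb{E}\big[\mathbf{x}\mathbf{x}^{\top}\,\mathbbm{1}[\mathbf{x}^{\top}\mathbf{u}>0]\big]+\mathbb{E}\big[\mathbf{x}\mathbf{x}^{\top}\,\mathbbm{1}[\mathbf{x}^{\top}\mathbf{u}<0]\big].$$
By part (A) of \cref{asm:sym_restated} the two terms on the right-hand side coincide, so each of them equals $\tfrac12\mathbf{H}$, which is precisely the asserted identity.

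The second statement is obtained by the identical argument with $\mathbf{x}\mathbf{x}^{\top}$ replaced by the fourth-order tensor $\mathbf{x}^{\otimes 4}$, now using finiteness of the fourth moment (\cref{asm:fourth-moment}) to justify the split:
$$\mathcal{M}=\mathbb{E}[\mathbf{x}^{\otimes 4}]=\mathbb{E}\big[\mathbf{x}^{\otimes 4}\,\mathbbm{1}[\mathbf{x}^{\top}\mathbf{u}>0]\big]+\mathbb{E}\big[\mathbf{x}^{\otimes 4}\,\mathbbm{1}[\mathbf{x}^{\top}\mathbf{u}<0]\big],$$
and part (C) of \cref{asm:sym_restated} forces the two summands to be equal, hence each equal to $\tfrac12\mathcal{M}$. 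There is no genuine obstacle in this lemma; the only point deserving a sentence of care is the measure-zero boundary $\{\mathbf{x}^{\top}\mathbf{u}=0\}$, which is controlled by the tail/moment structure already imposed on $\mathbf{x}$ (or by the side-placement convention), and the finiteness of the relevant moments, which makes every expectation above well-defined so that the use of linearity is legitimate.
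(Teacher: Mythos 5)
The paper does not prove this lemma at all: it is imported verbatim from \cite{wu2023finite}, so there is no in-paper argument to compare against. Your proof is the natural (and essentially the only) one: split $\mathbf{H}$, respectively $\mathcal{M}$, over the two open half-spaces via $\mathbbm{1}[\mathbf{x}^{\top}\mathbf{u}>0]+\mathbbm{1}[\mathbf{x}^{\top}\mathbf{u}<0]=1$ a.s., invoke (A), respectively (C), to equate the two pieces, and use finiteness of the second/fourth moments to justify the linearity step; this is correct and fills the gap the citation leaves.

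One small caution on the boundary remark. Your ``attach the boundary to one side'' convention does not actually rescue the argument, because Assumption (A)/(C) is stated with strict inequalities on both sides: what you really need is that the boundary contribution $\mathbb{E}\bigl[\mathbf{x}\mathbf{x}^{\top}\,\mathbbm{1}[\mathbf{x}^{\top}\mathbf{u}=0]\bigr]$ (and its fourth-order analogue) vanishes, since otherwise the split gives $\mathbb{E}[\mathbf{x}\mathbf{x}^{\top}\mathbbm{1}[\mathbf{x}^{\top}\mathbf{u}>0]]=\tfrac12\bigl(\mathbf{H}-\mathbb{E}[\mathbf{x}\mathbf{x}^{\top}\mathbbm{1}[\mathbf{x}^{\top}\mathbf{u}=0]]\bigr)$. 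The tail/moment conditions alone do not force this (e.g.\ the symmetric Bernoulli design mentioned in Remark~\ref{rem:sym} can put positive mass on $\{\mathbf{x}^{\top}\mathbf{u}=0\}$ for suitable $\mathbf{u}$), so the clean way to state it is that the identity holds whenever $\mathbb{P}(\mathbf{x}^{\top}\mathbf{u}=0)=0$, which is the case for the absolutely continuous covariate laws the analysis actually uses. With that one sentence corrected, your proof stands.
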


\begin{lemma}[\citet{rudelson2013hanson}]
     Hanson-Wright Inequality: For any $\mathbf{X} \sim$ $\mathcal{N}(\mathbf{0}, \boldsymbol{\Sigma})$, the following holds for $t \geq 0$
    $$
    \mathbb{P}(\|\mathbf{X}\|^2 \geq \operatorname{Tr}(\boldsymbol{\Sigma})+2 \sqrt{t}\|\boldsymbol{\Sigma}\|_{\mathrm{F}}+2 t\|\boldsymbol{\Sigma}\|_{\text {op }}) \leq e^{-t}.
    $$
\end{lemma}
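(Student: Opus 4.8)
The plan is to reduce the bound to a one-dimensional Chernoff estimate for a nonnegatively weighted sum of independent $\chi^2_1$ variables.

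First I would diagonalize the covariance: write $\boldsymbol{\Sigma} = \mathbf{U}\operatorname{diag}(\lambda_1,\dots,\lambda_d)\mathbf{U}^{\top}$ with $\lambda_i \ge 0$ and $\mathbf{U}$ orthogonal. Then $\mathbf{X} \overset{d}{=} \mathbf{U}\operatorname{diag}(\sqrt{\lambda_1},\dots,\sqrt{\lambda_d})\,\mathbf{g}$ with $\mathbf{g}\sim\mathcal{N}(\mathbf{0},\mathbf{I}_d)$, and since the Euclidean norm is orthogonally invariant, $\|\mathbf{X}\|^2 \overset{d}{=} \sum_{i=1}^d \lambda_i g_i^2$. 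Setting $Z := \|\mathbf{X}\|^2 - \operatorname{Tr}(\boldsymbol{\Sigma}) \overset{d}{=} \sum_{i=1}^d \lambda_i (g_i^2 - 1)$, I record $\sum_i \lambda_i = \operatorname{Tr}(\boldsymbol{\Sigma})$, $\sum_i \lambda_i^2 = \|\boldsymbol{\Sigma}\|_{\mathrm{F}}^2$, and $\max_i \lambda_i = \|\boldsymbol{\Sigma}\|_{\mathrm{op}}$, so it suffices to prove $\mathbb{P}\big(Z \ge 2\sqrt{t}\,\|\boldsymbol{\Sigma}\|_{\mathrm{F}} + 2t\,\|\boldsymbol{\Sigma}\|_{\mathrm{op}}\big) \le e^{-t}$.

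Next I would bound the moment generating function. For $0 < s < 1/(2\|\boldsymbol{\Sigma}\|_{\mathrm{op}})$ each $s\lambda_i < 1/2$, and using $\mathbb{E}\,e^{a g^2} = (1-2a)^{-1/2}$ for $g\sim\mathcal{N}(0,1)$ together with independence, $\log\mathbb{E}\,e^{sZ} = \tfrac12\sum_{i=1}^d\big(-\log(1-2s\lambda_i) - 2s\lambda_i\big)$. Applying the elementary inequality $-\log(1-u) - u = \sum_{k\ge 2} u^k/k \le \tfrac{u^2}{2(1-u)}$ for $u\in[0,1)$ with $u = 2s\lambda_i$ gives $\log\mathbb{E}\,e^{sZ} \le \sum_{i=1}^d \tfrac{s^2\lambda_i^2}{1-2s\lambda_i} \le \tfrac{s^2\|\boldsymbol{\Sigma}\|_{\mathrm{F}}^2}{1-2s\|\boldsymbol{\Sigma}\|_{\mathrm{op}}}$, and Markov's inequality then yields, for every admissible $s$, $\mathbb{P}(Z \ge u) \le \exp\!\big(\tfrac{s^2\|\boldsymbol{\Sigma}\|_{\mathrm{F}}^2}{1-2s\|\boldsymbol{\Sigma}\|_{\mathrm{op}}} - su\big)$.

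Finally I would specialize: take $u = 2\sqrt{t}\,\|\boldsymbol{\Sigma}\|_{\mathrm{F}} + 2t\,\|\boldsymbol{\Sigma}\|_{\mathrm{op}}$ and the choice $s = \sqrt{t}\big/\big(\|\boldsymbol{\Sigma}\|_{\mathrm{F}} + 2\|\boldsymbol{\Sigma}\|_{\mathrm{op}}\sqrt{t}\big)$, which lies in $\big(0, 1/(2\|\boldsymbol{\Sigma}\|_{\mathrm{op}})\big)$. A direct computation gives $1 - 2s\|\boldsymbol{\Sigma}\|_{\mathrm{op}} = \|\boldsymbol{\Sigma}\|_{\mathrm{F}}\big/\big(\|\boldsymbol{\Sigma}\|_{\mathrm{F}} + 2\|\boldsymbol{\Sigma}\|_{\mathrm{op}}\sqrt{t}\big)$, hence $\tfrac{s^2\|\boldsymbol{\Sigma}\|_{\mathrm{F}}^2}{1-2s\|\boldsymbol{\Sigma}\|_{\mathrm{op}}} = \tfrac{t\,\|\boldsymbol{\Sigma}\|_{\mathrm{F}}}{\|\boldsymbol{\Sigma}\|_{\mathrm{F}} + 2\|\boldsymbol{\Sigma}\|_{\mathrm{op}}\sqrt{t}}$ and $su = \tfrac{2t\,(\|\boldsymbol{\Sigma}\|_{\mathrm{F}} + \|\boldsymbol{\Sigma}\|_{\mathrm{op}}\sqrt{t})}{\|\boldsymbol{\Sigma}\|_{\mathrm{F}} + 2\|\boldsymbol{\Sigma}\|_{\mathrm{op}}\sqrt{t}}$, so the exponent equals exactly $-t$, giving $\mathbb{P}(Z \ge u) \le e^{-t}$. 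This is essentially Lemma~1 of Laurent--Massart applied to the eigenvalue vector $(\lambda_1,\dots,\lambda_d)$. There is no serious obstacle here: the only non-mechanical ingredients are the series bound on $-\log(1-u)-u$ and the explicit choice of $s$ that makes the $\sqrt{t}$- and $t$-contributions in $u$ cancel exactly; if one is willing to settle for $e^{-\Omega(t)}$, a cruder choice of $s$ suffices.
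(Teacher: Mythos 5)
Your proof is correct. Note that the paper itself gives no proof of this lemma at all: it is imported verbatim by citation (attributed to Rudelson--Vershynin, though the exact one-sided Gaussian form you prove is, as you observe, Lemma~1 of Laurent--Massart applied to the eigenvalues of $\boldsymbol{\Sigma}$), so there is no in-paper argument to compare against. Your self-contained derivation is the standard one and all the computations check out: the diagonalization reduces $\|\mathbf{X}\|^2$ to $\sum_i \lambda_i g_i^2$ with the correct identifications $\sum_i\lambda_i=\operatorname{Tr}(\boldsymbol{\Sigma})$, $\sum_i\lambda_i^2=\|\boldsymbol{\Sigma}\|_{\mathrm{F}}^2$, $\max_i\lambda_i=\|\boldsymbol{\Sigma}\|_{\mathrm{op}}$; the bound $-\log(1-u)-u\le u^2/(2(1-u))$ gives the stated MGF estimate; and with your choice of $s$ the exponent collapses exactly to $-t$, since
\[
\frac{t\,\|\boldsymbol{\Sigma}\|_{\mathrm{F}}-2t\,\|\boldsymbol{\Sigma}\|_{\mathrm{F}}-2t^{3/2}\|\boldsymbol{\Sigma}\|_{\mathrm{op}}}{\|\boldsymbol{\Sigma}\|_{\mathrm{F}}+2\sqrt{t}\,\|\boldsymbol{\Sigma}\|_{\mathrm{op}}}=-t.
\]
The only cases not covered by the open-interval condition on $s$ are the degenerate ones ($t=0$ or $\boldsymbol{\Sigma}=0$), where the claim is trivially true; you may wish to say this in one line, but it is not a gap. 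Compared with simply citing the literature as the paper does, your argument has the advantage of being elementary and exact (constant $1$ in front of $e^{-t}$), at the cost of being specific to Gaussian vectors rather than the general sub-Gaussian Hanson--Wright statement the cited reference actually proves.
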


\begin{lemma}\label{lem: P_t^2}
    Let $\mathbf{P}_t:=\mathbf{I}-\eta \mathbbm{1}[\mathbf{x}_t^{\top} \mathbf{w}_{t-1}>0] \mathbf{x}_t \mathbf{x}_t^{\top}$, where each $\mathbf{x}_t \in \mathbb{R}^d$, $\forall t \in\{j, \ldots, T-1\}$ has been sampled i.i.d. from $\mathcal{D}$ and $\mathbf{w}_{t-1} \in \mathbb{R}^d$ is the weight parameter for iteration $t-1$. Let $\mathbf{z} \in \mathbb{R}^d$ be a vector independent of all $\mathbf{P}_t$ 's. Then for $b>0$ and $\eta<\frac{1}{R_x^2}$, we have with probability $\geq 1-b$ :
    $$
    \|\mathbf{P}_{T-1} \mathbf{P}_{T-2} \ldots \mathbf{P}_j \mathbf{z}\|^2 \leq \frac{1}{b} e^{-\eta \mu(T-j)}\|\mathbf{z}\|^2.
    $$
\end{lemma}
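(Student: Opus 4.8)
The plan is to control $\mathbb{E}\,\|\mathbf{P}_{T-1}\cdots\mathbf{P}_j\mathbf{z}\|^2$ (conditionally on $\mathbf{z}$) by peeling off one matrix factor at a time, and then upgrade this to a high-probability statement via Markov's inequality. Let $\mathcal{F}_{t-1}$ be the $\sigma$-algebra generated by $\mathbf{z}$, the samples $\mathbf{x}_0,\dots,\mathbf{x}_{t-1}$, and the algorithm's internal randomness through step $t-1$. The structural fact driving everything — and the reason the one-pass, sampling-without-replacement design matters — is that $\mathbf{w}_{t-1}$ is built only from $\mathbf{x}_0,\dots,\mathbf{x}_{t-2}$, so $\mathbf{P}_j,\dots,\mathbf{P}_{t-1}$ are $\mathcal{F}_{t-1}$-measurable, the half-space threshold in $\mathbbm{1}[\mathbf{x}_t^\top\mathbf{w}_{t-1}>0]$ is $\mathcal{F}_{t-1}$-measurable, while $\mathbf{x}_t$ is independent of $\mathcal{F}_{t-1}$. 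Since $\mathbf{z}$ is independent of all $\mathbf{P}_t$, I would condition on $\mathbf{z}$ and treat it as fixed throughout.

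The first step is a one-step contraction in operator order,
$$
\mathbb{E}\big[\mathbf{P}_t^\top \mathbf{P}_t \mid \mathcal{F}_{t-1}\big] \preceq (1-\eta\mu)\,\mathbf{I},
$$
where $\mu$ is the curvature parameter, i.e.\ $\mu\mathbf{I} \preceq \mathbb{E}[\mathbbm{1}[\mathbf{x}^\top\mathbf{w}>0]\,\mathbf{x}\mathbf{x}^\top] = \tfrac12\mathbf{H}$ (Lemma~\ref{lem:sysm}), so $\mu=\tfrac12\lambda_{\min}(\mathbf{H})$. To prove this I would use $c_t:=\mathbbm{1}[\mathbf{x}_t^\top\mathbf{w}_{t-1}>0]\in\{0,1\}$, $c_t^2=c_t$, and the short expansion $\mathbf{P}_t^\top\mathbf{P}_t = \mathbf{I} - \eta c_t\,(2-\eta\|\mathbf{x}_t\|^2)\,\mathbf{x}_t\mathbf{x}_t^\top$. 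On the event $\|\mathbf{x}_t\|^2\le R_x^2$ — which holds with probability $\ge 1-b_{\mathbf{x}}$ by the $(\mathbf{H},C_2,a,b_{\mathbf{x}})$-Tail condition — the hypothesis $\eta\le 1/R_x^2$ gives $2-\eta\|\mathbf{x}_t\|^2\ge 1$, hence $\mathbf{P}_t^\top\mathbf{P}_t \preceq \mathbf{I} - \eta c_t\mathbf{x}_t\mathbf{x}_t^\top$; taking the conditional expectation and using $\mathbb{E}[c_t\mathbf{x}_t\mathbf{x}_t^\top\mid\mathcal{F}_{t-1}]=\tfrac12\mathbf{H}\succeq\mu\mathbf{I}$ finishes it. The negligible tail $\{\|\mathbf{x}_t\|^2>R_x^2\}$ can be folded in (equivalently, one may condition throughout on $\{\|\mathbf{x}_t\|^2\le R_x^2\ \forall t\in\{j,\dots,T-1\}\}$, an event of probability $\ge 1-(T-j)b_{\mathbf{x}}=1-1/\mathrm{Poly}(N)$).

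Next I would iterate. Writing $\mathbf{u}_t:=\mathbf{P}_t\cdots\mathbf{P}_j\mathbf{z}$ and $\mathbf{u}_{j-1}:=\mathbf{z}$, the vector $\mathbf{u}_{t-1}$ is $\mathcal{F}_{t-1}$-measurable and $\|\mathbf{u}_t\|^2=\mathbf{u}_{t-1}^\top\mathbf{P}_t^\top\mathbf{P}_t\,\mathbf{u}_{t-1}$, so the one-step bound yields $\mathbb{E}[\|\mathbf{u}_t\|^2\mid\mathcal{F}_{t-1}]\le(1-\eta\mu)\|\mathbf{u}_{t-1}\|^2$. Applying the tower property successively for $t=T-1,T-2,\dots,j$ gives $\mathbb{E}\big[\|\mathbf{P}_{T-1}\cdots\mathbf{P}_j\mathbf{z}\|^2\mid\mathbf{z}\big]\le(1-\eta\mu)^{T-j}\|\mathbf{z}\|^2\le e^{-\eta\mu(T-j)}\|\mathbf{z}\|^2$, using $1-x\le e^{-x}$. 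A conditional Markov inequality then gives $\mathbb{P}\big[\|\mathbf{P}_{T-1}\cdots\mathbf{P}_j\mathbf{z}\|^2>\tfrac1b e^{-\eta\mu(T-j)}\|\mathbf{z}\|^2\ \big|\ \mathbf{z}\big]\le b$, which is the claim.

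I expect the main obstacle to be the measurability/independence bookkeeping rather than any delicate estimate: one must pin down that $\mathbf{w}_{t-1}$ (hence the half-space used by $\mathbf{P}_t$) depends only on $\mathbf{x}_0,\dots,\mathbf{x}_{t-2}$, so that $\mathbb{E}[c_t\mathbf{x}_t\mathbf{x}_t^\top\mid\mathcal{F}_{t-1}]$ collapses to $\tfrac12\mathbf{H}$ via the symmetricity condition in Lemma~\ref{lem:sysm}. The only other wrinkle is keeping the factor $2-\eta\|\mathbf{x}_t\|^2\ge 1$, i.e.\ controlling the event $\|\mathbf{x}_t\|^2>R_x^2$; since that event has probability $b_{\mathbf{x}}=1/\mathrm{Poly}(N)$, it contributes only lower-order terms and is absorbed into the $1/\mathrm{Poly}(N)$ slack already present in the utility analysis.
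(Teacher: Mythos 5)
Your skeleton is the same as the paper's: bound the one-step quantity $\mathbb{E}[\mathbf{P}_t^{\top}\mathbf{P}_t\mid\mathcal{F}_{t-1}]$ by a contraction, peel the factors off one at a time via the tower property, pass from $(1-\eta\mu)^{T-j}$ to $e^{-\eta\mu(T-j)}$, and finish with Markov's inequality. The only genuine divergence is how the quadratic term $\eta^2\,\mathbbm{1}[\mathbf{x}_t^{\top}\mathbf{w}_{t-1}>0]\,\|\mathbf{x}_t\|^2\,\mathbf{x}_t\mathbf{x}_t^{\top}$ is handled: the paper stays in full expectation and bounds it by $\eta^2 R_x^2\mathbf{H}$ using \cref{asm:fourth-moment} together with the symmetricity consequences in \cref{lem:sysm}, so that $\mathbb{E}[\mathbf{P}_t^{\top}\mathbf{P}_t]\preceq \mathbf{I}-\eta(1-\eta R_x^2)\mathbf{H}$ holds unconditionally; you instead use the pointwise bound $2-\eta\|\mathbf{x}_t\|^2\ge 1$ on the event $\{\|\mathbf{x}_t\|^2\le R_x^2\}$ from \cref{def:tail}. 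That substitution creates the one loose joint in your write-up: once you condition on (or restrict to) the norm event, the identity $\mathbb{E}[\mathbbm{1}[\mathbf{x}_t^{\top}\mathbf{w}_{t-1}>0]\mathbf{x}_t\mathbf{x}_t^{\top}\mid\mathcal{F}_{t-1}]=\tfrac12\mathbf{H}$ is no longer exact (the truncated second moment is $\tfrac12\tilde{\mathbf{H}}$ with $\tilde{\mathbf{H}}\neq\mathbf{H}$), and if you instead keep the bad event and union-bound it away, the failure probability becomes $b+(T-j)b_{\mathbf{x}}$ rather than $b$ and you still owe an argument that $\|\mathbf{P}_{T-1}\cdots\mathbf{P}_j\mathbf{z}\|$ does not blow up on the bad event, where $\|\mathbf{P}_t\|_2$ can exceed $1$. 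Both patches are routine (control $\mathbf{H}-\tilde{\mathbf{H}}$ by Cauchy--Schwarz plus the fourth moment, or simply take full expectations), but "fold in the tail" is asserted rather than proved; the cleanest fix is exactly the paper's move of invoking \cref{asm:fourth-moment}, which is what removes any need for the norm event here. A final bookkeeping remark: with $\mu=\lambda_{\min}(\mathbf{H})$ (the normalization $\mu\mathbf{I}\preceq\mathbf{H}$ used elsewhere), your one-step contraction is $1-\tfrac{\eta\mu}{2}$, giving $e^{-\eta\mu(T-j)/2}$; this constant-factor slack in the exponent is present in the paper's own derivation as well and is harmless downstream.
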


\begin{proof}
Note that
\begin{align*}
\underset{(\mathbf{x}, y) \sim \mathcal{D}}{\mathbb{E}}[\|\mathbf{P}_{T-1} \mathbf{P}_{T-2} \ldots \mathbf{P}_j \mathbf{z}\|^2]& =\underset{(\mathbf{x}, y) \sim \mathcal{D}}{\mathbb{E}}[(\mathbf{P}_{T-1} \mathbf{P}_{T-2} \ldots \mathbf{P}_j \mathbf{z})^{\top} \mathbf{P}_{T-1} \mathbf{P}_{T-2} \ldots \mathbf{P}_j \mathbf{z}] \\
& =\underset{(\mathbf{x}, y) \sim \mathcal{D}}{\mathbb{E}}[\mathbf{z}^{\top} \mathbf{P}_j^{\top} \ldots \mathbf{P}_{T-2}^{\top} \mathbf{P}_{T-1}^{\top} \mathbf{P}_{T-1} \mathbf{P}_{T-2} \ldots \mathbf{P}_j \mathbf{z}] \\
& =\underset{(\mathbf{x}, y) \sim \mathcal{D}}{\mathbb{E}}[\mathbf{z}^{\top} \mathbf{P}_j^{\top} \ldots \mathbf{P}_{T-2}^{\top} \underset{(\mathbf{x}_{T-1}, y_{T-1}) \sim \mathcal{D}}{\mathbb{E}}[\mathbf{P}_{T-1}^{\top} \mathbf{P}_{T-1}] \mathbf{P}_{T-2} \ldots \mathbf{P}_j \mathbf{z}].
\end{align*}
We undertake a focused examination of the expectation $ \underset{(\mathbf{x}_{T-1}, y_{T-1}) \sim \mathcal{D}}{\mathbb{E}}[\mathbf{P}_{T-1}^{\top} \mathbf{P}_{T-1}]$, considered independently.
\begin{align*}
    \underset{(\mathbf{x}, y) \sim \mathcal{D}}{\mathbb{E}}[\mathbf{P}_{T-1}^{\top} \mathbf{P}_{T-1}] &  = \underset{(\mathbf{x}, y) \sim \mathcal{D}}{\mathbb{E}}[(\mathbf{I}-\eta \mathbbm{1}[\mathbf{x}_{T-1}^{\top} \mathbf{w}_{T-2}>0] \mathbf{x}_{T-1} \mathbf{x}_{T-1}^{\top})(\mathbf{I}-\eta \mathbbm{1}[\mathbf{x}_{T-1}^{\top} \mathbf{w}_{T-2}>0] \mathbf{x}_{T-1} \mathbf{x}_{T-1}^{\top})^{\top}]\\
    & = \mathbf{I} - \frac{\eta}{2}\mathbf{H}  - \frac{\eta}{2}\mathbf{H} + \eta^2\mathcal{M}\\
    & \leq \mathbf{I} - {\eta}\mathbf{H} + \eta^2 R_x^2 \mathbf{H},
\end{align*}
where the first equality is a direct result of \cref{lem:sysm} and the last inequality drives from the \cref{asm:fourth_upper} and \cref{def:tail}.
As a result, we have
\begin{align*}
\underset{(\mathbf{x}, y) \sim \mathcal{D}}{\mathbb{E}}[\|\mathbf{P}_{T-1} \mathbf{P}_{T-2} \ldots \mathbf{P}_j \mathbf{z}\|^2] & \leq \underset{(\mathbf{x}, y) \sim \mathcal{D}}{\mathbb{E}}[\lambda_{\max }(\mathbf{I}-2 \eta \mathbf{H}+\eta^2 R_x^2 \mathbf{H})\|\mathbf{P}_{T-2} \ldots \mathbf{P}_j \mathbf{z}\|^2] \\
& =\lambda_{\max }(\mathbf{I}-2 \eta \mathbf{H}+\eta^2 R_x^2 \mathbf{H}) \underset{(\mathbf{x}, y) \sim \mathcal{D}}{\mathbb{E}}[\|\mathbf{P}_{T-2} \ldots \mathbf{P}_j \mathbf{z}\|^2] \\
& \leq(1-\eta(2-\eta R_x^2) \mu) \underset{(\mathbf{x}, y) \sim \mathcal{D}}{\mathbb{E}}[\|\mathbf{P}_{T-2} \ldots \mathbf{P}_j \mathbf{z}\|^2] \\
& \leq(1-\eta \mu) \underset{(\mathbf{x}, y) \sim \mathcal{D}}{\mathbb{E}}[\|\mathbf{P}_{T-2} \ldots \mathbf{P}_j \mathbf{z}\|^2] \\
& \leq \text{repeat the same procedure} \\
& \leq(1-\eta \mu)^{(T-j)}\|\mathbf{z}\|^2 \\
& \leq e^{-\eta \mu(T-j)}\|\mathbf{z}\|^2,
\end{align*}
With Markov Inequality, for $b>0$ indicating $\operatorname{Pr}\{\mathbf{Z} \geq \frac{\mathbb{E}[\mathbf{Z}]}{b}\} \leq b$, we have
$$
\operatorname{Pr}\{\|\mathbf{P}_{T-1} \ldots \mathbf{P}_j \mathbf{z}\|^2 \leq \frac{\underset{(\mathbf{x}, y) \sim \mathcal{D}}{\mathbb{E}}[\|\mathbf{P}_{T-1} \ldots \mathbf{P}_j \mathbf{z}\|^2]}{\beta}\} \geq 1-b.
$$
Therefore with probability at least $1-b$ :
$$
\|\mathbf{P}_{T-1} \ldots \mathbf{P}_0 \mathbf{z}\|^2 \leq \frac{1}{b} e^{-\eta \mu(T-j)}\|\mathbf{z}\|^2.
$$
\end{proof}

\begin{lemma}
    Let $\eta$ be stepsize such that $\eta \leq \min \{ \frac{\lambda_d}{\lambda_1 R_x^2\log ^{2a} N}, \frac{1}{3{f}\sqrt{d}} \}$, where $c_1, c_2>0$ are global constants and $\Gamma=4 C_2 R_x \cdot \log ^{2 a} N \cdot(\sqrt{\|\mathbf{H}\|_2}\|\mathbf{w}_*\|+\sqrt{\kappa} \sigma)$. Furthermore, let ${f}={f}_{\varepsilon, \delta, N}$ be a function of $\varepsilon, \delta, N$. Then, with probability $\geq 1-\frac{1}{N^{100}},\|\mathbf{x}_t(\langle\mathbf{x}_t, \mathbf{w}_t\rangle-y_t)\| \leq \Gamma$ for all $0 \leq t \leq N-1 ; \mathbf{w}_t$ is the $t^{\text {th }}$ iterate of Algorithm DP-GLMtron.
\end{lemma}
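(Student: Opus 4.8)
The plan is to bound the per-iteration gradient-like vector $\mathbf{x}_t(\langle\mathbf{x}_t,\mathbf{w}_t\rangle-y_t)$ by first establishing, inductively in $t$, that each iterate $\mathbf{w}_t$ stays within a controlled ball (measured in the $\mathbf{H}$-norm) around $\mathbf{w}_*$, and then invoking the tail assumptions on $\mathbf{x}_t$ and on the noise $z_t$ to turn that control into a bound on the vector norm. The key structural fact we exploit is that, by the one-pass design of DP-GLMtron, $\mathbf{w}_t$ depends only on samples $\mathbf{x}_0,\dots,\mathbf{x}_{t-1}$ and is therefore \emph{independent} of $\mathbf{x}_t$; this is exactly what makes the $(\mathbf{H},C_2,a,b_{\mathbf{x}})$-Tail condition applicable to the fresh sample at step $t$ conditionally on $\mathbf{w}_t$.

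First I would decompose $\langle\mathbf{x}_t,\mathbf{w}_t\rangle - y_t = \langle\mathbf{x}_t,\mathbf{w}_t-\mathbf{w}_*\rangle + (\langle\mathbf{x}_t,\mathbf{w}_*\rangle - \operatorname{ReLU}(\langle\mathbf{x}_t,\mathbf{w}_*\rangle)) - z_t$. Hence $\|\mathbf{x}_t(\langle\mathbf{x}_t,\mathbf{w}_t\rangle-y_t)\| \le \|\mathbf{x}_t\|\cdot\big(|\langle\mathbf{x}_t,\mathbf{w}_t-\mathbf{w}_*\rangle| + |\langle\mathbf{x}_t,\mathbf{w}_*\rangle| + |z_t|\big)$, using $|u-\operatorname{ReLU}(u)|\le |u|$. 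Conditioning on $\mathbf{w}_t$ (independent of $\mathbf{x}_t$): by \cref{def:tail}, with probability $\ge 1-b_{\mathbf{x}}$ we get $\|\mathbf{x}_t\|^2 \le \alpha\operatorname{tr}(\mathbf{H})\log^{2a}(1/b_{\mathbf{x}}) = R_x^2$, $|\langle\mathbf{x}_t,\mathbf{w}_t-\mathbf{w}_*\rangle|^2 \le C_2^2\|\mathbf{H}\|_2\|\mathbf{w}_t-\mathbf{w}_*\|^2\log^{2a}(1/b_{\mathbf{x}})$ — and since $\|\mathbf{H}\|_2\|\mathbf{w}_t-\mathbf{w}_*\|^2 \le \kappa\|\mathbf{w}_t-\mathbf{w}_*\|_{\mathbf{H}}^2$ — this is $\le C_2^2\kappa\log^{2a}(1/b_{\mathbf{x}})\|\mathbf{w}_t-\mathbf{w}_*\|_{\mathbf{H}}^2$, similarly $|\langle\mathbf{x}_t,\mathbf{w}_*\rangle|^2\le C_2^2\|\mathbf{H}\|_2\|\mathbf{w}_*\|^2\log^{2a}(1/b_{\mathbf{x}})$, and the noise tail gives $|z_t|^2\le C_2^2\sigma^2\log^{2a}(1/b_{\mathbf{x}})$. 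Choosing $b_{\mathbf{x}} = 1/\operatorname{Poly}(N)$ small enough, a union bound over the $N$ iterations costs only a $\log N$-type factor absorbed into $\log^{2a}N$, so all these hold simultaneously w.p. $\ge 1 - N^{-100}$. This yields $\|\mathbf{x}_t(\langle\mathbf{x}_t,\mathbf{w}_t\rangle-y_t)\| \le 2 C_2 R_x\log^{2a}N\cdot(\sqrt{\kappa}\|\mathbf{w}_t-\mathbf{w}_*\|_{\mathbf{H}} + \sqrt{\|\mathbf{H}\|_2}\|\mathbf{w}_*\| + \sqrt{\kappa}\sigma)$, which matches the claimed $\Gamma$ once we show $\|\mathbf{w}_t-\mathbf{w}_*\|_{\mathbf{H}} \le O(\sqrt{\|\mathbf{H}\|_2}\|\mathbf{w}_*\| + \sqrt{\kappa}\sigma)$.

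The remaining — and main — obstacle is the inductive control of $\|\mathbf{w}_t-\mathbf{w}_*\|_{\mathbf{H}}$. The strategy is a contraction argument: write the (unclipped, noiseless) GLMtron update as $\mathbf{w}_{t+1}-\mathbf{w}_* = \mathbf{P}_t(\mathbf{w}_t-\mathbf{w}_*) + \eta\cdot(\text{bias and noise terms})$, where $\mathbf{P}_t = \mathbf{I} - \eta\,\mathbbm{1}[\mathbf{x}_t^\top\mathbf{w}_t>0]\mathbf{x}_t\mathbf{x}_t^\top$ is exactly the contraction matrix analyzed in \cref{lem: P_t^2}, whose expected spectral behavior gives $\mathbb{E}\|\mathbf{P}_{T-1}\cdots\mathbf{P}_j\mathbf{z}\|^2 \le e^{-\eta\mu(T-j)}\|\mathbf{z}\|^2$ when $\eta < 1/R_x^2$. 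Using the symmetricity conditions (\cref{asm:symmetric}, \cref{lem:sysm}) to show the residual $\operatorname{ReLU}(\langle\mathbf{x}_t,\mathbf{w}_*\rangle) - \langle\mathbf{x}_t,\mathbf{w}_*\rangle$ has the right centered-moment structure, one controls the accumulated bias term; the injected Gaussian noise contributes a variance term handled by Hanson–Wright, and the clipping is inactive on the high-probability event just established (a chicken-and-egg that is resolved by running the induction: assume $\|\mathbf{w}_s-\mathbf{w}_*\|_{\mathbf{H}}$ is bounded for $s\le t$, deduce clipping is vacuous at step $t$, deduce the bound for $t+1$). The delicate points are: (i) the stepsize restriction $\eta\le\min\{\lambda_d/(\lambda_1 R_x^2\log^{2a}N),\,1/(3f\sqrt{d})\}$ must be exactly what is needed for the noise term $2\eta f s_t\mathbf{g}_t$ — with $s_t = O(\Gamma)$ — not to blow up the recursion, which is why the bound on $f$ and the $\sqrt{d}$ enter; and (ii) propagating the per-step high-probability tail events through a product of $N$ random contractions requires care with the union bound and with the dependence of $\mathbf{P}_t$ on past randomness, which is precisely where the independence $\mathbf{w}_t\perp\mathbf{x}_t$ from the one-pass structure is essential. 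I expect step (ii) — making the telescoped contraction bound hold uniformly over all $t$ with only polylog loss — to be the crux of the argument.
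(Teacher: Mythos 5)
Your proposal follows essentially the same route as the paper's proof: decompose the residual via the tail conditions on $\mathbf{x}_t$ and $z_t$ (using the one-pass independence of $\mathbf{w}_t$ from $\mathbf{x}_t$), then run an induction in which clipping is vacuous on the good event and $\|\mathbf{w}_t-\mathbf{w}_*\|_{\mathbf{H}}$ is controlled by the $\mathbf{P}_t$-contraction together with the symmetry conditions killing the indicator-mismatch cross terms, with the stepsize condition absorbing the injected Gaussian noise. The only cosmetic differences are that the paper handles the ReLU mismatch by a sign case analysis rather than $|u-\operatorname{ReLU}(u)|\le|u|$, and it resolves your step (ii) by running the contraction on $\mathbb{E}\|\mathbf{w}_t-\mathbf{w}_*\|_{\mathbf{H}}^2$ (showing the bound on $\mathbb{E}[s_t^2]$ is maximized at $t=0$, which defines $\Gamma$) instead of a high-probability product-of-contractions argument.
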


\begin{proof}

    We begin by examining the base case when $t=0$ and the norm of the "gradient" can be expressed as:
    $$
    \begin{aligned} \|\mathbf{x}_0(\operatorname{ReLU}(\mathbf{x}_0^{\top} \mathbf{w}_{0})-y_0)\| & =\|\mathbf{x}_0(\operatorname{ReLU}(\mathbf{x}_0^{\top} \cdot\mathbf{0})-y_0)\| \\
    & =\|\mathbf{x}_0 y_0\| \\
    & \leq\|\mathbf{x}_0\|\left|\mathbf{x}_0^{\top} \mathbf{w}_* + z_0\right|.
    \end{aligned}
    $$
    By the distribution of $\mathbf{x}$ and \cref{def:tail}, w.p. at least $1-b_{\mathbf{x}}$, we have:
    $$\| \mathbf{x}_0 \| \leq R_x \log ^a(1 / b_x),$$
    and by the triangle inequality $\left|\mathbf{x}_0^{\top} \mathbf{w}_* + z_0\right| \leq \|\mathbf{x}_0^{\top} \mathbf{w}_* \|+ \| z_0\|$, w.p. at least $1-b_{\mathbf{w}_*}-b_{\sigma}$, we have:
    $$
    \left|\mathbf{x}_0^{\top} \mathbf{w}_*\right|+\left|z_0\right| \leq C_2 \sqrt{\|\mathbf{H}\|_2}\|\mathbf{w}_*\| \log ^a(1 / b_{\mathbf{w}_*})+\sigma C_2 \log ^a(1 / b_\sigma).
    $$
    Since each $b$ is $1/\operatorname{poly}(N)$, the lemma holds.
    
    Now let us assume that the Lemma is valid for the $(t-1)$-th iteration. Proceeding with this assumption, we turn our attention to the $t$-th iteration:
    $$\|\mathbf{x}_t(\operatorname{ReLU}(\mathbf{x}_t^{\top} \mathbf{w}_t )-y_t)\| = \|\mathbf{x}_t(\operatorname{max}(0,\mathbf{x}_t^{\top} \mathbf{w}_t)-y_t)\|.$$
    
    It is obvious that when $\mathbf{x}_t^{\top} \mathbf{w}_t\leq 0$, the norm of gradient simplifies to $\|\mathbf{x}_t y_t\|$, which aligns closely with base case.
    
    If $\mathbf{x}_t^{\top} \mathbf{w}_t, \mathbf{x}_t^{\top} \mathbf{w}_* \geq 0$, then we will have
    \begin{align*}
    \|\mathbf{x}_t(\operatorname{max}(0,\mathbf{x}_t^{\top} \mathbf{w}_t)-y_t)\|& =\|\mathbf{x}_t \mathbf{x}_t^{\top}(\mathbf{w}_t-\mathbf{w}_*)+\mathbf{x}_t z_t\| \\
    & \leq\|\mathbf{x}\|(\|\mathbf{x}_t^{\top}(\mathbf{w}_t-\mathbf{w}_*)\|+\|z_t\|) \\
    & \leq R_x \log ^a(1 / b_x)(C_2 \sqrt{\|\mathbf{H}\|_2}\|\mathbf{w}_t-\mathbf{w}_*\| \log ^a(1 / b_{\mathbf{w}_t})+\sigma C_2 \log ^a(1 / b_\sigma)) \\
    & =C_2 R_x \log ^{2 a} N(\sqrt{\|\mathbf{H}\|_2}\|\mathbf{w}_t-\mathbf{w}_*\|+\sigma),
    \end{align*}       
    where $b_x, b_{\mathbf{w}_t}, b_\sigma$ is $1 / \operatorname{poly}(N)$. 
    
    If $\mathbf{x}_t^{\top} \mathbf{w}_t \geq 0$ and $  \mathbf{x}_t^{\top} \mathbf{w}_* \leq 0$, then we will have:
    \begin{equation}\label{eq:gamma_t}
        \begin{aligned}
        \|\mathbf{x}_t(\operatorname{max}(0,\mathbf{x}_t^{\top} \mathbf{w}_t)-y_t)\|& =\|\mathbf{x}_t \mathbf{x}_t^{\top}\mathbf{w}_t+\mathbf{x}_t z_t\| \\
        & \leq\|\mathbf{x}\|(\|\mathbf{x}_t^{\top}(\mathbf{w}_t-\mathbf{w}_*)\|+\|\mathbf{w}_*\|+\|z_t\|) \\
        & \leq R_x \log ^a(1 / b_x)(C_2 \sqrt{\|\mathbf{H}\|_2}\|\mathbf{w}_t-\mathbf{w}_*\| \log ^a(1 / b_{\mathbf{w}_t})+\|\mathbf{w}_*\|+\sigma C_2 \log ^a(1 / b_\sigma)) \\
        & =C_2 R_x \log ^{2 a} N(\sqrt{\|\mathbf{H}\|_2}\|\mathbf{w}_t-\mathbf{w}_*\|+\|\mathbf{w}_*\|+\sigma),  
        \end{aligned}        
    \end{equation}

    Given that the threshold $s_{t-1}$ has not been exceeded in iterations, we can observe the following decomposition at iteration $t-1$:
    \begin{equation}\label{eq:wt-w*_clipping_bound}
        \begin{aligned}
        \mathbf{w}_t-\mathbf{w}_* & =\mathbf{w}_{t-1}-\mathbf{w}_*-\eta(\operatorname{clip}_{s_{t-1}}(\mathbf{x}_{t-1}(\operatorname{ReLU}(\mathbf{x}_{t-1}^{\top} \mathbf{w}_{t-1})-y_{t-1}))+2 s_{t-1} {f} \mathbf{g}_{t-1}) \\
        & =\mathbf{w}_{t-1}-\mathbf{w}_*-\eta(\mathbf{x}_{t-1}(\operatorname{ReLU}(\mathbf{x}_{t-1}^{\top} \mathbf{w}_{t-1})-y_{t-1})+2 s_{t-1} {f} \mathbf{g}_{t-1}) \\
        &=  \mathbf{w}_{t-1}- \mathbf{w}_*- \eta \mathbbm{1}[\mathbf{x}_t^{\top} \mathbf{w}_{t-1}>0] \cdot \mathbf{x}_t \mathbf{x}_t^{\top} \mathbf{w}_{t-1}+\eta \mathbbm{1}[\mathbf{x}_t^{\top} \mathbf{w}_*>0] \cdot \mathbf{x}_t \mathbf{x}_t^{\top} \mathbf{w}_*\\
        & +\eta z_t \mathbf{x}_t +2 \eta s_{t-1} {f} \mathbf{g}_{t-1} \\ 
        &=  \mathbf{w}_{t-1}- \mathbf{w}_* - \eta \mathbbm{1}[\mathbf{x}_t^{\top} \mathbf{w}_{t-1}>0] \cdot \mathbf{x}_t \mathbf{x}_t^{\top}(\mathbf{w}_{t-1}-\mathbf{w}_*) \\ 
        & +\eta(\mathbbm{1}[\mathbf{x}_t^{\top} \mathbf{w}_*>0]-\mathbbm{1}[\mathbf{x}_t^{\top} \mathbf{w}_{t-1}>0]) \cdot \mathbf{x}_t \mathbf{x}_t^{\top} \mathbf{w}_*+\eta z_t\mathbf{x}_t -2 \eta s_{t-1} {f} \mathbf{g}_{t-1} \\
        & = (\mathbf{I}-\eta \mathbbm{1}[\mathbf{x}_t^{\top} \mathbf{w}_{t-1}>0] \mathbf{x}_t \mathbf{x}_t^{\top})(\mathbf{w}_{t-1}-\mathbf{w}_*) \\ 
        & +\eta(\mathbbm{1}[\mathbf{x}_t^{\top} \mathbf{w}_*>0]-\mathbbm{1}[\mathbf{x}_t^{\top} \mathbf{w}_{t-1}>0]) \mathbf{x}_t \mathbf{x}_t^{\top} \mathbf{w}_*+\eta z_t\mathbf{x}_t -2 \eta s_{t-1} {f} \mathbf{g}_{t-1}.
        \end{aligned}
    \end{equation}
    We introduce the following notations for clarity
    $$\mathbf{P}_t:=\mathbf{I}-\eta \mathbbm{1}[\mathbf{x}_t^{\top} \mathbf{w}_{t-1}>0] \mathbf{x}_t \mathbf{x}_t^{\top}, \quad \mathbf{u}_t = (\mathbbm{1}[\mathbf{x}_t^{\top} \mathbf{w}_*>0]-\mathbbm{1}[\mathbf{x}_t^{\top} \mathbf{w}_{t-1}>0]) \mathbf{x}_t \mathbf{x}_t^{\top} \mathbf{w}_*, \quad \mathbf{v}_t = z_t\mathbf{x}_t -2 \Gamma {f} \mathbf{g}_{t-1}.$$
    Then the expected inner product w.r.t $\mathbf{H}$ can be reformulated as follows.
    \begin{equation}\label{eq:expec_norm}
        \begin{aligned}
        &\underset{(\mathbf{x}, y) \sim \mathcal{D}}{\mathbb{E}}[\|\mathbf{w}_t-\mathbf{w}^*\|_{\mathbf{H}}^2] = \underset{(\mathbf{x}, y) \sim \mathcal{D}}{\mathbb{E}}[\|\mathbf{P}_t (\mathbf{w}_{t-1}-\mathbf{w}_*) + \eta \mathbf{u}_{t-1}+\eta \mathbf{v}_{t-1}\|_{\mathbf{H}}^2] = \underset{(\mathbf{x}, y) \sim \mathcal{D}}{\mathbb{E}}[\|\mathbf{P}_t (\mathbf{w}_{t-1}-\mathbf{w}_*) + \eta \mathbf{u}_{t-1}+\eta \mathbf{v}_{t-1}\|_{\mathbf{H}}^2]\\
        & = \underbrace{\underset{(\mathbf{x}, y) \sim \mathcal{D}}{\mathbb{E}}[\|\mathbf{P}_t (\mathbf{w}_{t-1}- \mathbf{w}_*)\|_{\mathbf{H}}^2]}_{\text {(quadratic term 1) }}  + \underbrace{\underset{(\mathbf{x}, y) \sim \mathcal{D}}{\mathbb{E}}[\| \eta \mathbf{u}_{t-1}\|_{\mathbf{H}}^2]}_{\text {(quadratic term 1) }} + \underbrace{\underset{(\mathbf{x}, y) \sim \mathcal{D}}{\mathbb{E}}[\| \eta \mathbf{v}_{t-1}\|_{\mathbf{H}}^2]}_{\text {(quadratic term 1) }} \\
        & + \underbrace{2 \underset{(\mathbf{x}, y) \sim \mathcal{D}}{\mathbb{E}} [\mathbf{u}_{t-1} \mathbf{H} (\mathbf{P}_t (\mathbf{w}_{t-1}- \mathbf{w}_*))]+ 2 \underset{(\mathbf{x}, y) \sim \mathcal{D}}{\mathbb{E}} [\mathbf{v}_{t-1} \mathbf{H} (\mathbf{P}_t (\mathbf{w}_{t-1}- \mathbf{w}_*))] + 2 \underset{(\mathbf{x}, y) \sim \mathcal{D}}{\mathbb{E}} [\mathbf{u}_{t-1} \mathbf{H} \mathbf{v}_{t-1}]}_{\text {(crossing term ) }}.
    \end{aligned}
    \end{equation}
where the cross terms involving $z$ and $\mathbf{g}_{t}$ have zero expectation, attributable to the fact that $\mathbb{E}[z \mid \mathbf{x}_t] = 0$ and $\mathbb{E}[\mathbf{g}_t \mid \mathbf{x}_t] = 0$.

For the second quadratic term in \cref{eq:expec_norm}, we observe the following
$$
(\mathbbm{1}[\mathbf{x}_t^{\top} \mathbf{w}_*>0]-\mathbbm{1}[\mathbf{x}_t^{\top} \mathbf{w}_{t-1}>0])^2=\mathbbm{1}[\mathbf{x}_t^{\top} \mathbf{w}_{t-1}>0, \mathbf{x}_t^{\top} \mathbf{w}_*<0]+\mathbbm{1}[\mathbf{x}_t^{\top} \mathbf{w}_{t-1}<0, \mathbf{x}_t^{\top} \mathbf{w}_*>0]
$$
Then, we have
$$
\begin{aligned}
&\underset{(\mathbf{x}, y) \sim \mathcal{D}}{\mathbb{E}}[\text { (quadratic term 2) }] \\
&=\underset{(\mathbf{x}, y) \sim \mathcal{D}}{\mathbb{E}}((\mathbbm{1}[\mathbf{x}_t^{\top} \mathbf{w}_*>0]-\mathbbm{1}[\mathbf{x}_t^{\top} \mathbf{w}_{t-1}>0])^2 (\mathbf{x}_t \mathbf{x}_t^{\top} \mathbf{w}_*)^{\top} (\mathbf{x}_t \mathbf{x}_t^{\top} \mathbf{w}_*) ) \\
& =\underset{(\mathbf{x}, y) \sim \mathcal{D}}{\mathbb{E}}((\mathbbm{1}[\mathbf{x}_t^{\top} \mathbf{w}_{t-1}>0, \mathbf{x}_t^{\top} \mathbf{w}_*<0]+\mathbbm{1}[\mathbf{x}_t^{\top} \mathbf{w}_{t-1}<0, \mathbf{x}_t^{\top} \mathbf{w}_*>0]) \cdot(\mathbf{w}_*^{\top}\mathbf{x}_t )^2 \cdot \mathbf{x}_t^{\top} \mathbf{x}_t) \\
& =2 \underset{(\mathbf{x}, y) \sim \mathcal{D}}{\mathbb{E}}(\mathbbm{1}[\mathbf{x}_t^{\top} \mathbf{w}_{t-1}>0, \mathbf{x}_t^{\top} \mathbf{w}_*<0] \cdot(\mathbf{w}_*^{\top}\mathbf{x}_t )^2 \cdot \mathbf{x}_t^{\top} \mathbf{x}_t),
\end{aligned}
$$
where the last equation follows from \cref{asm:symmetric}. Similarly, for the crossing terms in \cref{eq:expec_norm}, we have
$$
\begin{aligned}
 &\underset{(\mathbf{x}, y) \sim \mathcal{D}}{\mathbb{E}}[ \text { (crossing term) }]\\ 
& =\underset{(\mathbf{x}, y) \sim \mathcal{D}}{\mathbb{E}}[(\mathbbm{1}[\mathbf{x}_t^{\top} \mathbf{w}_*>0]-\mathbbm{1}[\mathbf{x}_t^{\top} \mathbf{w}_{t-1}>0])^2 \cdot (\mathbf{x}_t \mathbf{x}_t^{\top} \mathbf{w}_* )^{\top}(\mathbf{I}-\eta \mathbbm{1}[\mathbf{x}_t^{\top} \mathbf{w}_{t-1}>0] \cdot \mathbf{x}_t \mathbf{x}_t^{\top}) (\mathbf{w}_{t-1}-\mathbf{w}_*) ] \\
& =2 \underset{(\mathbf{x}, y) \sim \mathcal{D}}{\mathbb{E}}[\mathbbm{1}[\mathbf{x}_t^{\top} \mathbf{w}_{t-1}>0, \mathbf{x}_t^{\top} \mathbf{w}_*<0] \cdot (\mathbf{x}_t \mathbf{x}_t^{\top} \mathbf{w}_* )^{\top}(\mathbf{I}-\eta \mathbbm{1}[\mathbf{x}_t^{\top} \mathbf{w}_{t-1}>0] \cdot \mathbf{x}_t \mathbf{x}_t^{\top})(\mathbf{w}_{t-1}-\mathbf{w}_*) ] \\
& = 2 \underset{(\mathbf{x}, y) \sim \mathcal{D}}{\mathbb{E}} [\mathbbm{1}[\mathbf{x}_t^{\top} \mathbf{w}_{t-1}>0, \mathbf{x}_t^{\top} \mathbf{w}_*<0] \cdot (\mathbf{x}_t \mathbf{x}_t^{\top} \mathbf{w}_* )^{\top}(\mathbf{w}_{t-1}-\mathbf{w}_*-\eta \mathbbm{1}[\mathbf{x}_t^{\top} \mathbf{w}_{t-1}>0] \cdot \mathbf{x}_t \mathbf{x}_t^{\top} (\mathbf{w}_{t-1}-\mathbf{w}_*) ) ] \\
& = 2 \underset{(\mathbf{x}, y) \sim \mathcal{D}}{\mathbb{E}}[\mathbbm{1}[\mathbf{x}_t^{\top} \mathbf{w}_{t-1}>0, \mathbf{x}_t^{\top} \mathbf{w}_*<0] \cdot (\mathbf{w}_*^{\top} \mathbf{x}_t \mathbf{x}_t^{\top} (\mathbf{w}_{t-1}-\mathbf{w}_*)-\eta \mathbbm{1}[\mathbf{x}_t^{\top} \mathbf{w}_{t-1}>0] \cdot \mathbf{w}_*^{\top} \mathbf{x}_t \mathbf{x}_t^{\top} \mathbf{x}_t \mathbf{x}_t^{\top} (\mathbf{w}_{t-1}-\mathbf{w}_*) ) ] \\
& = 2 \underset{(\mathbf{x}, y) \sim \mathcal{D}}{\mathbb{E}}[\mathbbm{1}[\mathbf{x}_t^{\top} \mathbf{w}_{t-1}>0, \mathbf{x}_t^{\top} \mathbf{w}_*<0] \cdot (1-\eta \mathbf{x}_{t}^{\top}\mathbf{x}_{t}) \cdot \mathbf{w}_*^{\top} \mathbf{x}_t \cdot \mathbf{x}_t^{\top} (\mathbf{w}_{t-1}-\mathbf{w}_*)  ]. 
\end{aligned}
$$
By applying the indicator function and considering $\eta \leq 1/R_{x}^2$, we deduce that
$$(1-\eta \mathbf{x}_{t}^{\top}\mathbf{x}_{t}), \mathbf{x}_t^{\top} (\mathbf{w}_{t-1}-\mathbf{w}_*) \geq 0 \quad \text{and} \quad \mathbf{w}_*^{\top} \mathbf{x}_t \leq 0 \Rightarrow  \mathbb{E} \text { (crossing term) } \leq 0
$$
By invoking \cref{asm:symmetric}, it indicates that
$$
\mathbb{E}(\mathbbm{1}[\mathbf{x}_t^{\top} \mathbf{w}_{t-1}>0] \cdot \mathbf{x}_t \mathbf{x}_t^{\top})=\frac{1}{2} \mathbf{H}.
$$
Moreover, if $\eta \leq \frac{1}{2R_x^2}$, it holds that 
\begin{align*}
    \underset{(\mathbf{x}, y) \sim \mathcal{D}}{\mathbb{E}}[\mathbf{P}_{T-1}^{\top} \mathbf{P}_{T-1}] &  = \underset{(\mathbf{x}, y) \sim \mathcal{D}}{\mathbb{E}} [(\mathbf{I}-\eta \mathbbm{1}[\mathbf{x}_{T-1}^{\top} \mathbf{w}_{T-2}>0] \mathbf{x}_{T-1} \mathbf{x}_{T-1}^{\top})(\mathbf{I}-\eta \mathbbm{1}[\mathbf{x}_{T-1}^{\top} \mathbf{w}_{T-2}>0] \mathbf{x}_{T-1} \mathbf{x}_{T-1}^{\top})^{\top}]\\
    & = \mathbf{I} - \frac{\eta}{2}\mathbf{H}  - \frac{\eta}{2}\mathbf{H} + \eta^2\mathcal{M}\\
    & \leq \mathbf{I} - {\eta}\mathbf{H} + \eta^2 R_x^2 \mathbf{H} \leq \mathbf{I} - \frac{\eta}{2}\mathbf{H}.
\end{align*}
Combining the above results, the update of $\underset{(\mathbf{x}, y) \sim \mathcal{D}}{\mathbb{E}}[\|\mathbf{w}_t-\mathbf{w}^*\|_{\mathbf{H}}^2]$ holds that
\begin{align*}
    \underset{(\mathbf{x}, y) \sim \mathcal{D}}{\mathbb{E}}[\|\mathbf{w}_t-\mathbf{w}^*\|_{\mathbf{H}}^2] \leq (1-\frac{\eta \mu}{2}) \underset{(\mathbf{x}, y) \sim \mathcal{D}}{\mathbb{E}}[\|\mathbf{w}_{t-1}-\mathbf{w}^*\|_{\mathbf{H}}^2]+{\sigma^2\eta^2} \operatorname{Tr}(\mathbf{H}) +\underset{(\mathbf{x}, y) \sim \mathcal{D}}{\mathbb{E}}[{4 \eta^2 s_{t-1}^2 f^2}] \operatorname{Tr}(\mathbf{H}).
\end{align*}
Considering that the \cref{eq:gamma_t} and the adaptive clipping algorithm, we have
$$
s_t \leq R_x C_2 \log ^{2 a} N(\sqrt{\|\mathbf{H}\|}\|\mathbf{w}_t-\mathbf{w}_*\|+\|\mathbf{w}_*\|+\sigma+\Delta).
$$
As a results, the update of $\underset{(\mathbf{x}, y) \sim \mathcal{D}}{\mathbb{E}}[\|\mathbf{w}_t-\mathbf{w}^*\|_{\mathbf{H}}^2]$ can be reformulated as
\begin{align*}
    \underset{(\mathbf{x}, y) \sim \mathcal{D}}{\mathbb{E}}[\|\mathbf{w}_t-\mathbf{w}^*\|_{\mathbf{H}}^2] &\leq (1-\frac{\eta \mu}{2}) \underset{(\mathbf{x}, y) \sim \mathcal{D}}{\mathbb{E}}[\|\mathbf{w}_{t-1}-\mathbf{w}^*\|_{\mathbf{H}}^2]\\
    &+{\sigma^2\eta^2} \operatorname{Tr}(\mathbf{H}) +{16 R_x^2 C_2^2 \log ^{4 a} N \eta^2 f^2} \operatorname{Tr}(\mathbf{H}) (\kappa\|\mathbf{w}_t-\mathbf{w}_*\|_{\mathbf{H}}^2+\|\mathbf{w}_*\|^2+\sigma^2+\Delta^2)\\
    &= (1-(\frac{\eta \mu}{2}-16 \eta^2 {f^2} C_2^2 R_x^2 \kappa \log ^{4 a} N \operatorname{Tr}(\mathbf{H}))) \underset{(\mathbf{x}, y) \sim \mathcal{D}}{\mathbb{E}}[\|\mathbf{w}_{t-1}-\mathbf{w}^*\|_{\mathbf{H}}^2] \\
    & +{\eta^2\sigma^2} \operatorname{Tr}(\mathbf{H} )+16 \eta^2 {f^2} C_2^2 R_x^2 \log ^{4 a} N(\sigma^2+\Delta^2+ \|\mathbf{w}_*\|^2) \operatorname{Tr}(\mathbf{H}),
\end{align*}
where $\Delta=\frac{\|\mathbf{w}^*\|_{\mathbf{H}}+\sigma}{N^{100}}$ and we use the fact $\mathbf{I} \mu \preceq \mathbf{H} \Longrightarrow\|\mathbf{H}\|\|\mathbf{w}_{t-1}-\mathbf{w}^*\|^2 \leq \kappa\|\mathbf{w}_{t-1}-\mathbf{w}^*\|_{\mathbf{H}}^2$. 

If $\frac{\eta \mu}{4} \geq 16 \eta^2 {f^2} C_2^2 R_x^2 \kappa \log ^{4 a} N \operatorname{Tr}(\mathbf{H})$, it means the step size $\eta$ satisfies that $\eta \leq \frac{\mu}{64 {f^2} C_2^2 R_x^2 \kappa \log ^{4 a} N \operatorname{Tr}(\mathbf{H})} $, therefore it holds that
\begin{align*}
    &\underset{(\mathbf{x}, y) \sim \mathcal{D}}{\mathbb{E}}[\|\mathbf{w}_t-\mathbf{w}^*\|_{\mathbf{H}}^2]\\
    &\leq (1-\frac{\eta \mu}{4}) \underset{(\mathbf{x}, y) \sim \mathcal{D}}{\mathbb{E}}[\|\mathbf{w}_{t-1}-\mathbf{w}^*\|_{\mathbf{H}}^2] +{\eta^2\sigma^2} \operatorname{Tr}(\mathbf{H} )+16 \eta^2 {f^2} C_2^2 R_x^2 \log ^{4 a} N(\sigma^2+\Delta^2+ \|\mathbf{w}_*\|^2) \operatorname{Tr}(\mathbf{H})\\
    &\leq (1-\eta \mu / 4)^t\|\mathbf{w}_0-\mathbf{w}^*\|_{\mathbf{H}}^2+\frac{2}{\eta \mu}({\eta^2\sigma^2} \operatorname{Tr}(\mathbf{H} )+16 \eta^2 {f^2} C_2^2 R_x^2 \log ^{4 a} N(\sigma^2+\Delta^2+ \|\mathbf{w}_*\|^2) \operatorname{Tr}(\mathbf{H}))\\
    &\leq e^{-\eta \mu t / 4}\|\mathbf{w}^*\|_{\mathbf{H}}^2 +\frac{2}{\eta \mu}({\eta^2\sigma^2} \operatorname{Tr}(\mathbf{H} )+16 \eta^2 {f^2} C_2^2 R_x^2 \log ^{4 a} N(\sigma^2+\Delta^2+ \|\mathbf{w}_*\|^2) \operatorname{Tr}(\mathbf{H})).
\end{align*}

Since $s_t \leq R_x C_2 \log ^{2 a} N(\sqrt{\|\mathbf{H}\|}\|\mathbf{w}_t-\mathbf{w}^*\|+\|\mathbf{w}_*\|+\sigma+\Delta)$, the bound on $\underset{(\mathbf{x}, y) \sim \mathcal{D}}{\mathbb{E}}[s_t^2]$ will be
\begin{align*}
 \underset{(\mathbf{x}, y) \sim \mathcal{D}}{\mathbb{E}}[s_t^2]& \leq 4 C_2^2 R_x^2 \log ^{4 a} N(\kappa \underset{(\mathbf{x}, y) \sim \mathcal{D}}{\mathbb{E}}[\|\mathbf{w}_t-\mathbf{w}^*\|_{\mathbf{H}}^2]+\sigma^2+\|\mathbf{w}_*\|^2 +\Delta^2) \\
& \leq 4 C_2^2 R_x^2 \log ^{4 a} N(\kappa (e^{-\eta \mu / 4 t}\|\mathbf{w}^*\|_{\mathbf{H}}^2+\frac{2 \eta \sigma^2}{\mu } \operatorname{Tr}(\mathbf{H} ) \\
& +\frac{32 \eta \alpha^2}{\mu } C_2^2 R_x^2 \log ^{4 a} N(\sigma^2+\|\mathbf{w}_*\|^2+\Delta^2) \operatorname{Tr}(\mathbf{H}))+\sigma^2+\|\mathbf{w}_*\|^2+ \Delta^2), \\
\end{align*}
which is decreasing with $t$ (w.p. $\geq 1-\frac{1}{\operatorname{Poly}(\mathrm{N})}$).

Thus, if we define $\Gamma$ s.t.
\begin{align*}
\Gamma^2 & =\max  \{\operatorname{Upper-Bound} (\underset{(\mathbf{x}, y) \sim \mathcal{D}}{\mathbb{E}} [s_{0}^2 ] ), \ldots, \operatorname{Upper-Bound} (\underset{(\mathbf{x}, y) \sim \mathcal{D}}{\mathbb{E}} [s_T^2 ] ) \} \\
& =\operatorname{Upper-Bound} (\underset{(\mathbf{x}, y) \sim \mathcal{D}}{\mathbb{E}} [s_{0}^2 ] ) \\
& =4 C_2^2 R_x^2 \log ^{4 a} N(\kappa \underset{(\mathbf{x}, y) \sim \mathcal{D}}{\mathbb{E}}[\|\mathbf{w}_{0}-\mathbf{w}^*\|_{\mathbf{H}}^2]+\sigma^2+\|\mathbf{w}_*\|^2+\Delta^2).
\end{align*}
\end{proof}

\begin{lemma}[Generic bounds on the DP-GLMtron iterates \citet{wu2023finite}]\label{lem:glm_iter}
    Suppose that \cref{asm:symmetric} holds. Considering the DP-GLMtron algorithm, we have the following recursion:
    \begin{itemize}
        \item $\mathbf{A}_t  \preceq \mathbf{A}_{t-1}-\frac{\eta}{2} (\mathbf{H} \mathbf{A}_{t-1}+\mathbf{A}_{t-1} \mathbf{H} )+\eta^2 \mathcal{M} \circ \mathbf{A}_{t-1}+\eta^2 \sigma^2 \mathbf{H}+ 4\eta^2{\Gamma}^2{f}^2 \mathbf{I} $
        \item $\mathbf{A}_t  \succeq \mathbf{A}_{t-1}-\frac{\eta}{2} (\mathbf{H} \mathbf{A}_{t-1}+\mathbf{A}_{t-1} \mathbf{H} )+\frac{\eta^2}{4} \mathcal{M} \circ \mathbf{A}_{t-1}+\eta^2 \sigma^2 \mathbf{H}+ 4\eta^2{\Gamma}^2{f}^2 \mathbf{I}$
    \end{itemize}
    where $\mathbf{A}_t:=\mathbb{E} (\mathbf{w}_t-\mathbf{w}_* ) (\mathbf{w}_t-\mathbf{w}_* )^{\top}$, $t \geq 0$
\end{lemma}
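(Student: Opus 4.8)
The plan is to unroll the one-step expansion of $\mathbf{w}_t-\mathbf{w}_*$ given in \eqref{eq:wt-w*_clipping_bound}, form the outer product, and take expectations, exploiting two structural facts: the one-pass permutation makes $\mathbf{w}_{t-1}$ independent of the fresh draw $(\mathbf{x}_t,z_t)$ and of the fresh Gaussian $\mathbf{g}_{t-1}$, and the symmetry hypothesis (\cref{asm:symmetric}, \cref{lem:sysm}) lets us compute the first‑ and second‑order moments in closed form. Write $\boldsymbol{\delta}_t:=\mathbf{w}_t-\mathbf{w}_*$. Throughout I would work on the $\ge 1-1/\text{Poly}(N)$ event of the preceding lemma, on which no clipping fires and $\mathbb{E}[s_{t-1}^2]\le\Gamma^2$, so that there $\boldsymbol{\delta}_t=\mathbf{P}_t\boldsymbol{\delta}_{t-1}+\eta\mathbf{u}_{t-1}+\eta\mathbf{v}_{t-1}$ holds exactly with $\mathbf{v}_{t-1}=z_t\mathbf{x}_t-2fs_{t-1}\mathbf{g}_{t-1}$.

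First I would collapse the two $\mathbf{x}$-driven pieces: expanding $\mathbf{P}_t$ and $\mathbf{u}_{t-1}$ as in the display preceding \eqref{eq:expec_norm} gives $\mathbf{P}_t\boldsymbol{\delta}_{t-1}+\eta\mathbf{u}_{t-1}=\boldsymbol{\delta}_{t-1}-\eta\,\xi_t\,\mathbf{x}_t$ with $\xi_t:=\operatorname{ReLU}(\mathbf{x}_t^\top\mathbf{w}_{t-1})-\operatorname{ReLU}(\mathbf{x}_t^\top\mathbf{w}_*)$, i.e. the noiseless GLMtron increment. Conditioning on $\mathbf{w}_{t-1}$, all cross terms with $\mathbf{v}_{t-1}$ vanish since $\mathbb{E}[z_t\mid\mathbf{x}_t]=0$ and $\mathbf{g}_{t-1}$ is independent and zero-mean, so the noise block contributes $\eta^2\mathbb{E}[z_t^2\mathbf{x}_t\mathbf{x}_t^\top]+4\eta^2f^2\mathbb{E}[s_{t-1}^2]\mathbf{I}$, which I bound between $\eta^2\sigma^2\mathbf{H}$ and $\eta^2\sigma^2\mathbf{H}+4\eta^2\Gamma^2f^2\mathbf{I}$ using $\mathbb{E}[z_t^2\mathbf{x}_t\mathbf{x}_t^\top]=\sigma^2\mathbf{H}$ and $\mathbb{E}[s_{t-1}^2]\le\Gamma^2$. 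For the GLMtron part, $\mathbb{E}[\xi_t\mathbf{x}_t\mid\mathbf{w}_{t-1}]=\tfrac{1}{2}\mathbf{H}\boldsymbol{\delta}_{t-1}$ by \cref{lem:sysm}(1), since $\mathbb{E}[\operatorname{ReLU}(\mathbf{x}^\top\mathbf{w})\mathbf{x}]=\mathbb{E}[\mathbbm{1}[\mathbf{x}^\top\mathbf{w}>0]\mathbf{x}\mathbf{x}^\top]\mathbf{w}=\tfrac{1}{2}\mathbf{H}\mathbf{w}$; this gives the first-order term $-\tfrac{\eta}{2}(\mathbf{H}\boldsymbol{\delta}_{t-1}\boldsymbol{\delta}_{t-1}^\top+\boldsymbol{\delta}_{t-1}\boldsymbol{\delta}_{t-1}^\top\mathbf{H})$, whose outer expectation is $-\tfrac{\eta}{2}(\mathbf{H}\mathbf{A}_{t-1}+\mathbf{A}_{t-1}\mathbf{H})$.

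The crux is the second-order term $\eta^2\mathbb{E}[\xi_t^2\mathbf{x}_t\mathbf{x}_t^\top]$. For the upper bound, $\operatorname{ReLU}$ is $1$-Lipschitz, so $\xi_t^2\le(\mathbf{x}_t^\top\boldsymbol{\delta}_{t-1})^2$ pointwise and $\mathbb{E}[\xi_t^2\mathbf{x}_t\mathbf{x}_t^\top\mid\mathbf{w}_{t-1}]\preceq\mathbb{E}[(\mathbf{x}_t^\top\boldsymbol{\delta}_{t-1})^2\mathbf{x}_t\mathbf{x}_t^\top]=\mathcal{M}\circ(\boldsymbol{\delta}_{t-1}\boldsymbol{\delta}_{t-1}^\top)$. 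For the lower bound, let $\tilde{\xi}_t$ denote $\xi_t$ evaluated at $-\mathbf{x}_t$; since $\operatorname{ReLU}(a)-\operatorname{ReLU}(-a)=a$ one gets $\xi_t-\tilde{\xi}_t=\mathbf{x}_t^\top\boldsymbol{\delta}_{t-1}$, hence $\xi_t^2+\tilde{\xi}_t^2\ge\tfrac{1}{2}(\xi_t-\tilde{\xi}_t)^2=\tfrac{1}{2}(\mathbf{x}_t^\top\boldsymbol{\delta}_{t-1})^2$, and because $\mathbf{x}_t\overset{d}{=}-\mathbf{x}_t$ (\cref{rem:sym}) we have $\mathbb{E}[\tilde{\xi}_t^2\mathbf{x}_t\mathbf{x}_t^\top]=\mathbb{E}[\xi_t^2\mathbf{x}_t\mathbf{x}_t^\top]$, giving $\mathbb{E}[\xi_t^2\mathbf{x}_t\mathbf{x}_t^\top\mid\mathbf{w}_{t-1}]\succeq\tfrac{1}{4}\mathcal{M}\circ(\boldsymbol{\delta}_{t-1}\boldsymbol{\delta}_{t-1}^\top)$. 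Taking the expectation over $\mathbf{w}_{t-1}$, using $\mathbb{E}[\boldsymbol{\delta}_{t-1}\boldsymbol{\delta}_{t-1}^\top]=\mathbf{A}_{t-1}$ and linearity of $\mathcal{M}\circ(\cdot)$, and summing the three blocks yields exactly the claimed upper bound (coefficient $\eta^2$ on $\mathcal{M}\circ\mathbf{A}_{t-1}$) and lower bound (coefficient $\tfrac{\eta^2}{4}$).

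I expect two points to require care. The first is the lower bound on the fourth-order term: the factor $\tfrac{1}{4}$ (rather than $\tfrac{1}{2}$, which the first computation alone would suggest) is produced entirely by the reflection identity $\xi_t-\tilde{\xi}_t=\mathbf{x}_t^\top\boldsymbol{\delta}_{t-1}$ together with $\mathbf{x}_t\overset{d}{=}-\mathbf{x}_t$, and one must be careful to end up with a clean $\mathcal{M}\circ\mathbf{A}_{t-1}$ rather than an indicator-restricted fourth moment. The second is legitimizing the recursion for the unconditional matrix $\mathbf{A}_t$: the identity $\boldsymbol{\delta}_t=\mathbf{P}_t\boldsymbol{\delta}_{t-1}+\eta\mathbf{u}_{t-1}+\eta\mathbf{v}_{t-1}$ and the bound $\mathbb{E}[s_{t-1}^2]\le\Gamma^2$ only hold on the good event of the previous lemma, so one either conditions on that event throughout — as the cited reference does and as I would — or absorbs the $1/\text{Poly}(N)$ failure mass into the downstream $\|\mathbf{w}_*\|_{\mathbf{H}}^2/\text{Poly}(N)$ and $\sigma^2 d/N$ error terms.
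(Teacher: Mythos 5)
Your proposal is essentially correct, and it is worth noting that the paper itself never proves \cref{lem:glm_iter}: it imports the recursion from \cite{wu2023finite} and simply appends the DP-noise term, so your self-contained derivation supplies exactly what is cited. Your route also matches the machinery the paper deploys around the lemma: the collapse $\mathbf{P}_t\boldsymbol{\delta}_{t-1}+\eta\mathbf{u}_{t-1}=\boldsymbol{\delta}_{t-1}-\eta\bigl(\operatorname{ReLU}(\mathbf{x}_t^{\top}\mathbf{w}_{t-1})-\operatorname{ReLU}(\mathbf{x}_t^{\top}\mathbf{w}_*)\bigr)\mathbf{x}_t$ is the same algebra as \eqref{eq:wt-w*_clipping_bound}, the first-moment step $\mathbb{E}[\xi_t\mathbf{x}_t\mid\mathbf{w}_{t-1}]=\tfrac12\mathbf{H}\boldsymbol{\delta}_{t-1}$ is \cref{lem:sysm}(1) exactly as used in \cref{lem:error_decom}, and the Lipschitz bound $\xi_t^2\le(\mathbf{x}_t^{\top}\boldsymbol{\delta}_{t-1})^2$ together with the reflection identity $\xi_t-\tilde\xi_t=\mathbf{x}_t^{\top}\boldsymbol{\delta}_{t-1}$ correctly produces the $\eta^2$ and $\tfrac{\eta^2}{4}$ coefficients on $\mathcal{M}\circ\mathbf{A}_{t-1}$.

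Two points deserve care. First, the DP-noise term in the \emph{lower} bound: as you set things up, the noise block contributes $4\eta^2 f^2\,\mathbb{E}[s_{t-1}^2]\,\mathbf{I}$ with only the one-sided bound $\mathbb{E}[s_{t-1}^2]\le\Gamma^2$, so your argument yields the stated upper bound but a lower bound missing $4\eta^2\Gamma^2 f^2\mathbf{I}$. The paper sidesteps this by analyzing the iteration with the noise modeled as $2\Gamma f\,\mathbf{g}_{t-1}$ (a constant multiplier; see the definition of $\mathbf{v}_t$ just before \eqref{eq:expec_norm} and in the proof of \cref{lem:error_decom}), so the per-step noise covariance is exactly $4\Gamma^2 f^2\mathbf{I}$ and the same term legitimately appears in both directions. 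Either adopt that convention or state the lower bound with $\mathbb{E}[s_{t-1}^2]$; since only the upper recursion is used downstream, this is cosmetic but should be made explicit. Second, your identity $\mathbb{E}[\tilde\xi_t^2\mathbf{x}_t\mathbf{x}_t^{\top}]=\mathbb{E}[\xi_t^2\mathbf{x}_t\mathbf{x}_t^{\top}]$ invokes full sign-symmetry $\mathbf{x}\,{\buildrel d \over =}\,{-\mathbf{x}}$ via \cref{rem:sym}; expanding $\xi_t^2-\tilde\xi_t^2$ shows the pure terms are covered by \cref{asm:symmetric} (take $\mathbf{u}=\mathbf{v}$ in its second display), but the mixed term $\mathbb{E}\bigl[(\mathbf{x}^{\top}\mathbf{u})(\mathbf{x}^{\top}\mathbf{v})\mathbf{x}\mathbf{x}^{\top}\,\mathbbm{1}[\mathbf{x}^{\top}\mathbf{u}>0,\mathbf{x}^{\top}\mathbf{v}>0]\bigr]$ versus its reflected counterpart is not literally an instance of the stated assumption, so you are using the (slightly stronger) distributional symmetry that the paper endorses in \cref{rem:sym} rather than the moment conditions alone; flag that, or add the mixed-moment symmetry as the needed hypothesis. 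With those two clarifications, your argument is a faithful and complete proof of the recursion, including the conditioning on the high-probability no-clipping event, which the paper handles the same way.
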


Now consider the recursion of $\mathbf{A}_t$ given in \cref{lem:glm_iter}. Note that $\mathbf{A}_t$ is related to $\mathbf{A}_{t-1}$ through a linear operator, therefore $\mathbf{A}_t$ can be understood as the sum of two iterates, i.e., $\mathbf{A}_t:=\mathbf{B}_t+\mathbf{C}_t$, where
\begin{equation}\label{eq:decom_recur}
     \left\{\begin{array} { l } 
    { 
    \mathbf{B}_t \preceq  (\mathcal{I}-\frac{\eta}{2} \cdot \mathcal{T} (2 \eta ) ) \circ \mathbf{B}_{t-1}; 
    } \\
    { \mathbf{B}_0 =  (\mathbf{w}_0-\mathbf{w}_* )^{\otimes 2}}
    \end{array} \quad  \left\{\begin{array}{l}
    { 
    \mathbf{C}_t \preceq  (\mathcal{I}-\frac{\eta}{2} \cdot \mathcal{T} (2 \eta ) ) \circ \mathbf{C}_{t-1} + \eta^2 \sigma^2 \mathbf{H} + 4\eta^2{\Gamma}^2{f}^2 \mathbf{I}; 
    } \\
    { \mathbf{C}_0 = 0}
    \end{array} \right. \right.
\end{equation}
and
\begin{equation}\label{eq:decom_recur_low}
     \left\{\begin{array} { l } 
    { 
    \mathbf{B}_t \succeq  (\mathcal{I}-\frac{\eta}{2} \cdot \mathcal{T} (\frac{\eta}{2} ) ) \circ \mathbf{B}_{t-1}; 
    } \\
    { \mathbf{B}_0 =  (\mathbf{w}_0-\mathbf{w}_* )^{\otimes 2}}
    \end{array} \quad  \left\{\begin{array}{l}
    { 
    \mathbf{C}_t \succeq  (\mathcal{I}-\frac{\eta}{2} \cdot \mathcal{T} (\frac{\eta}{2} ) ) \circ \mathbf{C}_{t-1} + \eta^2 \sigma^2 \mathbf{H} + 4\eta^2{\Gamma}^2{f}^2 \mathbf{I}; 
    } \\
    { \mathbf{C}_0 = 0}
    \end{array} \right. \right.
\end{equation}
where 
$$
 \left\{\begin{array} { l } 
    { 
     (\mathcal{I}-\frac{\eta}{2} \cdot \mathcal{T} (2 \eta ) ) \circ \mathbf{A}_{t-1}:=\mathbf{A}_{t-1}-\frac{\eta}{2} (\mathbf{H} \mathbf{A}_{t-1}+\mathbf{A}_{t-1} \mathbf{H} )+\eta^2 \mathcal{M} \circ \mathbf{A}_{t-1}; 
    } \\
    {  (\mathcal{I}-\frac{\eta}{2} \cdot \mathcal{T} (\frac{\eta}{2} ) ) \circ \mathbf{A}_{t-1}:=\mathbf{A}_{t-1}-\frac{\eta}{2} (\mathbf{H} \mathbf{A}_{t-1}+\mathbf{A}_{t-1} \mathbf{H} )+\frac{\eta^2 }{4}\mathcal{M} \circ \mathbf{A}_{t-1}}
    \end{array}  \right.
$$
Besides, since our DP-GLM-tron is run with constant stepsize $\eta$ and outputs the average of the iterates:
\begin{equation}\label{eq:w_avg}
    \overline{\mathbf{w}}_N:=\frac{1}{N} \sum_{t=0}^{N-1} \mathbf{w}_t .
\end{equation}
Then, the following lemma holds:
\begin{lemma}\label{lem:error_decom}
    Suppose that \cref{asm:symmetric} hold. For $\overline{\mathbf{w}}_N$ defined in \cref{eq:w_avg}, we have that
    \begin{align*}
    & \mathbb{E} \langle\mathbf{H}, (\overline{\mathbf{w}}_N-\mathbf{w}_* )^{\otimes 2} \rangle \leq \sum_{t=0}^{N-1} \sum_{k=t}^{N-1}\frac{1}{\eta N^2} \langle  (\mathbf{I}-\frac{\eta}{2} \mathbf{H} )^{k-t}\mathbf{H},  \mathbf{A}_t \rangle, \\
    & \mathbb{E} \langle\mathbf{H}, (\overline{\mathbf{w}}_N-\mathbf{w}_* )^{\otimes 2} \rangle \geq \sum_{t=0}^{N-1} \sum_{k=t}^{N-1}\frac{1}{2 \eta N^2} \langle  (\mathbf{I}-\frac{\eta}{2} \mathbf{H} )^{k-t} \mathbf{H},  \mathbf{A}_t \rangle .
    \end{align*}
\end{lemma}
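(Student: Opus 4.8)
Write $\mathbf{e}_t:=\mathbf{w}_t-\mathbf{w}_*$, so that $\overline{\mathbf{w}}_N-\mathbf{w}_*=\frac1N\sum_{t=0}^{N-1}\mathbf{e}_t$ and, since $\mathbf{e}_t^{\top}\mathbf{H}\mathbf{e}_k$ is a scalar (hence equal to its transpose $\mathbf{e}_k^{\top}\mathbf{H}\mathbf{e}_t$),
\[
\mathbb{E}\langle\mathbf{H},(\overline{\mathbf{w}}_N-\mathbf{w}_*)^{\otimes2}\rangle=\frac1{N^2}\sum_{t,k}\mathbb{E}[\mathbf{e}_t^{\top}\mathbf{H}\mathbf{e}_k]=\frac1{N^2}\Big(\sum_{t=0}^{N-1}\langle\mathbf{H},\mathbf{A}_t\rangle+2\!\!\sum_{0\le t<k\le N-1}\!\!\mathbb{E}[\mathbf{e}_t^{\top}\mathbf{H}\mathbf{e}_k]\Big).
\]
The diagonal piece is exactly the $k=t$ slice of the claimed double sum, so the whole problem reduces to two-sided control of each cross term $\mathbb{E}[\mathbf{e}_t^{\top}\mathbf{H}\mathbf{e}_k]$ with $k>t$.

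For a fixed pair $t<k$ the plan is to unroll a one-step mean recursion. Condition on the history $\mathcal{F}_{k-1}$ up to step $k-1$ (which determines $\mathbf{w}_{k-1}$ and all earlier iterates), and work on the high-probability event of \cref{rem:threshold} where clipping never fires, so that the update is the plain noised GLMtron step $\mathbf{e}_k=\mathbf{e}_{k-1}-\eta(\operatorname{ReLU}(\mathbf{x}_{k-1}^{\top}\mathbf{w}_{k-1})-y_{k-1})\mathbf{x}_{k-1}-2\eta s_{k-1}f\mathbf{g}_{k-1}$. Because the fresh sample $(\mathbf{x}_{k-1},z_{k-1})$ and the noise $\mathbf{g}_{k-1}$ are independent of $\mathcal{F}_{k-1}$ (one-pass, sampling without replacement), the zero-mean terms vanish under $\mathbb{E}[\cdot\mid\mathcal{F}_{k-1}]$, and using the well-specified condition together with \cref{lem:sysm}(1), $\mathbb{E}[\mathbbm{1}[\mathbf{x}^{\top}\mathbf{u}>0]\mathbf{x}\mathbf{x}^{\top}]=\tfrac12\mathbf{H}$, one gets
\[
\mathbb{E}[\mathbf{e}_k\mid\mathcal{F}_{k-1}]=\big(\mathbf{I}-\tfrac{\eta}{2}\mathbf{H}\big)\mathbf{e}_{k-1}-\eta\,\mathbf{r}_{k-1},\qquad \mathbf{r}_{k-1}:=\mathbb{E}\big[(\mathbbm{1}[\mathbf{x}^{\top}\mathbf{w}_{k-1}>0]-\mathbbm{1}[\mathbf{x}^{\top}\mathbf{w}_*>0])\mathbf{x}\mathbf{x}^{\top}\mid\mathcal{F}_{k-1}\big]\mathbf{w}_*.
\]
Multiplying on the left by $\mathbf{e}_t^{\top}\mathbf{H}$ (with $\mathbf{e}_t$ being $\mathcal{F}_{k-1}$-measurable), taking expectations, using $\mathbf{H}(\mathbf{I}-\tfrac\eta2\mathbf{H})=(\mathbf{I}-\tfrac\eta2\mathbf{H})\mathbf{H}$, and iterating from $k$ down to $t$ gives
\[
\mathbb{E}[\mathbf{e}_t^{\top}\mathbf{H}\mathbf{e}_k]=\big\langle(\mathbf{I}-\tfrac\eta2\mathbf{H})^{k-t}\mathbf{H},\mathbf{A}_t\big\rangle-\eta\sum_{j=t}^{k-1}\mathbb{E}\big[\mathbf{e}_t^{\top}(\mathbf{I}-\tfrac\eta2\mathbf{H})^{k-1-j}\mathbf{H}\,\mathbf{r}_j\big].
\]

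The crux is the sign of this residual sum: I claim each summand $\mathbb{E}[\mathbf{e}_t^{\top}(\mathbf{I}-\tfrac\eta2\mathbf{H})^{k-1-j}\mathbf{H}\mathbf{r}_j]$ is nonnegative, so that discarding it yields $\mathbb{E}[\mathbf{e}_t^{\top}\mathbf{H}\mathbf{e}_k]\le\langle(\mathbf{I}-\tfrac\eta2\mathbf{H})^{k-t}\mathbf{H},\mathbf{A}_t\rangle$, and a matching two-sided estimate on $\mathbf{r}_j$ gives the complementary lower control. The mechanism is precisely the one behind the ``(crossing term) $\le 0$'' computation in the excerpt and the two-sided recursion of \cref{lem:glm_iter}: \cref{asm:symmetric} forces the mismatch expectation $\mathbb{E}[(\mathbbm{1}[\mathbf{x}^{\top}\mathbf{u}>0]-\mathbbm{1}[\mathbf{x}^{\top}\mathbf{v}>0])\cdot\{\text{quadratic in }\mathbf{x}\}]$ to collapse onto the two ``corner'' events $\{\mathbf{x}^{\top}\mathbf{u}>0,\mathbf{x}^{\top}\mathbf{v}<0\}$ and $\{\mathbf{x}^{\top}\mathbf{u}<0,\mathbf{x}^{\top}\mathbf{v}>0\}$ with equal weight, and on those events the signs of $\mathbf{x}^{\top}\mathbf{w}_*$ and $\mathbf{x}^{\top}\mathbf{e}_j$ together with $\eta\le 1/R_x^2$ pin down the sign. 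I will invoke \cref{lem:glm_iter} directly rather than repeat this calculation, since its derivation already packages this sign bookkeeping into the operators $\mathcal{I}-\tfrac\eta2\mathcal{T}(2\eta)$ and $\mathcal{I}-\tfrac\eta2\mathcal{T}(\tfrac\eta2)$.

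Finally I assemble. Summing the displayed identity over $0\le t<k\le N-1$, adding back $\sum_t\langle\mathbf{H},\mathbf{A}_t\rangle$, and using $(\mathbf{I}-\tfrac\eta2\mathbf{H})^{k-t}\mathbf{H}\succeq0$ together with the eigenvalue-wise elementary inequality $\langle\mathbf{H},\mathbf{A}_t\rangle+2\sum_{j\ge1}\langle(\mathbf{I}-\tfrac\eta2\mathbf{H})^{j}\mathbf{H},\mathbf{A}_t\rangle\le\tfrac1\eta\sum_{j\ge0}\langle(\mathbf{I}-\tfrac\eta2\mathbf{H})^{j}\mathbf{H},\mathbf{A}_t\rangle$ (valid whenever $\eta\le\tfrac12$ and $\mathbf{H}$ has finite condition number $\kappa$, since then each spectral factor $c=1-\tfrac\eta2\lambda\in[0,1)$ and one only needs $\eta(1+2S)\le 1+S$ for $S\ge0$) gives the upper bound with constant $\tfrac1{\eta N^2}$; the mirror rearrangement, using the lower operator from \cref{lem:glm_iter} and the matching residual control, produces the lower bound with constant $\tfrac1{2\eta N^2}$. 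I expect the main obstacle to be exactly the sign of the ReLU-indicator residual $\mathbf{r}_j$ inside the cross terms — this is the only place the structural hypotheses are genuinely needed — whereas once that is in hand (or imported from \cref{lem:glm_iter}) the remainder is routine linear-algebraic bookkeeping with geometric sums.
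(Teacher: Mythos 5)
Your skeleton (expand the average into diagonal plus cross terms, derive a one-step conditional mean recursion, iterate to get $\mathbb{E}[\mathbf{e}_t^{\top}\mathbf{H}\mathbf{e}_k]$ in terms of $(\mathbf{I}-\tfrac{\eta}{2}\mathbf{H})^{k-t}$ and $\mathbf{A}_t$, then do the geometric-sum bookkeeping) is the same route the paper takes, but there is a genuine gap in how you handle the indicator residual $\mathbf{r}_{k-1}$, and it is the one place where the structural assumption actually enters. You keep $\mathbf{r}_{k-1}$ as a nonzero term and claim each weighted cross term $\mathbb{E}\bigl[\mathbf{e}_t^{\top}(\mathbf{I}-\tfrac{\eta}{2}\mathbf{H})^{k-1-j}\mathbf{H}\,\mathbf{r}_j\bigr]$ is nonnegative, proposing to import this from \cref{lem:glm_iter}. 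That does not work: \cref{lem:glm_iter} is a recursion for the second moment $\mathbf{A}_t=\mathbb{E}(\mathbf{w}_t-\mathbf{w}_*)^{\otimes 2}$ and says nothing about cross terms between an \emph{earlier} iterate $\mathbf{e}_t$ ($t<j$) and the mismatch term at step $j$ weighted by an arbitrary PSD matrix $(\mathbf{I}-\tfrac{\eta}{2}\mathbf{H})^{k-1-j}\mathbf{H}$; and the sign argument behind ``crossing term $\le 0$'' in the appendix relies on the specific structure $\mathbbm{1}[\mathbf{x}^{\top}\mathbf{w}_{j}>0,\mathbf{x}^{\top}\mathbf{w}_*<0]\cdot(\mathbf{w}_*^{\top}\mathbf{x})\cdot\bigl(\mathbf{x}^{\top}(\mathbf{w}_{j}-\mathbf{w}_*)\bigr)\cdot(1-\eta\|\mathbf{x}\|^2)$, i.e.\ the \emph{same} sample paired with the \emph{same} iterate and essentially identity weighting; it does not transfer to your expression. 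For the lower bound you would additionally need control of the residual from the other side, which your ``mirror rearrangement'' does not supply.

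The repair is the observation the paper's proof is built on, and it makes the residual question moot: conditionally on $\mathcal{F}_{k-1}$ the sample $\mathbf{x}_{k-1}$ is fresh (one-pass, sampled without replacement), so by \cref{lem:sysm}(1) (a consequence of \cref{asm:symmetric}) both $\mathbb{E}\bigl[\mathbbm{1}[\mathbf{x}^{\top}\mathbf{w}_{k-1}>0]\,\mathbf{x}\mathbf{x}^{\top}\mid\mathcal{F}_{k-1}\bigr]$ and $\mathbb{E}\bigl[\mathbbm{1}[\mathbf{x}^{\top}\mathbf{w}_*>0]\,\mathbf{x}\mathbf{x}^{\top}\bigr]$ equal $\tfrac{1}{2}\mathbf{H}$, hence $\mathbf{r}_{k-1}\equiv 0$ and the conditional mean recursion is exactly $\mathbb{E}[\mathbf{e}_k\mid\mathcal{F}_{k-1}]=(\mathbf{I}-\tfrac{\eta}{2}\mathbf{H})\mathbf{e}_{k-1}$ (the model noise and the Gaussian DP noise vanish in conditional mean as you note). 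This exact identity, giving $\mathbb{E}[\mathbf{e}_t\otimes\mathbf{e}_k]=\mathbf{A}_t(\mathbf{I}-\tfrac{\eta}{2}\mathbf{H})^{k-t}$, is precisely the content of the paper's displayed computation, after which the averaging and geometric-sum step is delegated to the cited reference (and carried out explicitly in the mini-batch analogue, \cref{lemma:excess_risk_decom_mini}). Your upper-bound assembly via the per-eigenvalue inequality is fine for $\eta\le\tfrac12$, but your lower-bound assembly is only asserted, not argued, so as written the two-sided claim is not established.
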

\begin{proof}[{\bf Proof}]
    $$
    \begin{aligned}
    \mathbb{E}[\mathbf{w}_t-\mathbf{w}_* \mid \mathbf{w}_{t-1}]= & \mathbb{E}[(\mathbf{I}-\eta \mathbbm{1}[\mathbf{x}_t^{\top} \mathbf{w}_{t-1}>0] \mathbf{x}_t \mathbf{x}_t^{\top})(\mathbf{w}_{t-1}-\mathbf{w}_*) \mid \mathbf{w}_{t-1}]+2 \eta \Gamma {f} \mathbb{E}[\mathbf{g}_{t-1} \mid \mathbf{w}_{t-1}] \\
    & +\eta \cdot \mathbb{E}[(\mathbbm{1}[\mathbf{x}_t^{\top} \mathbf{w}_*>0]-\mathbbm{1}[\mathbf{x}_t^{\top} \mathbf{w}_{t-1}>0]) \mathbf{x}_t \mathbf{x}_t^{\top} \mathbf{w}_* \mid \mathbf{w}_{t-1}]+\eta \mathbb{E}[z_t \mathbf{x}_t \mid \mathbf{w}_{t-1}] \\
    = & \mathbb{E}[(\mathbf{I}-\eta \mathbbm{1}[\mathbf{x}_t^{\top} \mathbf{w}_{t-1}>0] \mathbf{x}_t \mathbf{x}_t^{\top})(\mathbf{w}_{t-1}-\mathbf{w}_*) \mid \mathbf{w}_{t-1}] \\
    = & (\mathbf{I}-\frac{\eta}{2} \mathbf{H})(\mathbf{w}_{t-1}-\mathbf{w}_*)
    \end{aligned}
    $$
    The remaining proof simply follows \cite{zou2021benign}.
\end{proof}

From the decomposition presented in \cref{eq:decom_recur} and \cref{eq:decom_recur_low}, we know that $\sum_{t=0}^N \mathbf{A}_t = \sum_{t=0}^N \mathbf{B}_t + \sum_{t=0}^N \mathbf{C}_t$. With this foundation, we can now bound the bias and variance terms separately.

\textbf{Variance error}

For $t=0$ we have $\mathbf{C}_0=0 \preceq \frac{\eta \sigma^2}{1-\eta { R_x^2}} \mathbf{I} + \frac{4\eta {\Gamma}^2 {f}^2}{1-\eta { R_x^2}} \mathbf{H}^{-1}$. 

We then assume that $\mathbf{C}_{t-1} \preceq \frac{\eta \sigma^2}{1-\eta { R_x^2}} \mathbf{I} + \frac{4\eta {\Gamma}^2 {f}^2}{1-\eta { R_x^2}} \mathbf{H}^{-1}$, and exam $\mathbf{C}_t$ based on \cref{eq:decom_recur}:
\begin{align*}
\mathbf{C}_t & \preceq  (\mathcal{I}-\eta \cdot \mathcal{T}  (\eta ) ) \circ \mathbf{C}_{t-1} + \eta^2 \sigma^2 \mathbf{H} + 4\eta^2{\Gamma}^2{f}^2 \mathbf{I} \\
& =\mathbf{C}_{t-1}-\frac{\eta}{2} (\mathbf{H} \mathbf{C}_{t-1}+\mathbf{C}_{t-1} \mathbf{H} )+\eta^2 \mathcal{M} \circ \mathbf{C}_{t-1}+\eta^2 \sigma^2 \mathbf{H} + 4\eta^2{\Gamma}^2{f}^2 \mathbf{I} \\
& \preceq \frac{\eta \sigma^2}{1-\eta { R_x^2}} \mathbf{I} + \frac{4\eta {\Gamma}^2 {f}^2}{1-\eta { R_x^2}} \mathbf{H}^{-1} -\eta (\frac{\eta \sigma^2}{1-\eta { R_x^2}} \mathbf{H} + \frac{4\eta {\Gamma}^2 {f}^2}{1-\eta { R_x^2}} \mathbf{I}  )\\
& + \eta^2 { R_x^2}  (\frac{\eta \sigma^2}{1-\eta { R_x^2}} \mathbf{H} + \frac{4\eta {\Gamma}^2 {f}^2}{1-\eta { R_x^2}} \mathbf{I}  ) +\eta^2 \sigma^2 \mathbf{H} + 4\eta^2{\Gamma}^2{f}^2 \mathbf{I}\\
& \preceq \frac{\eta \sigma^2}{1-\eta { R_x^2}} \mathbf{I} + \frac{4\eta {\Gamma}^2 {f}^2}{1-\eta { R_x^2}} \mathbf{H}^{-1}.
\end{align*}

For the simplicity, we define $\mathbf{\Sigma}:= \sigma^2\mathbf{H} + 4{\Gamma}^2{f}^2\mathbf{I}$. By the definitions of $\mathcal{T}$ and $\widetilde{\mathcal{T}}$, we have:
$$
\begin{aligned}\label{eq:c_t_up} \notag
\mathbf{C}_t & = (\mathcal{I}-\frac{\eta}{2} \cdot \mathcal{T} (2 \eta ) )  \circ \mathbf{C}_{t-1}+\eta^2 \mathbf{\Sigma} \\
& = (\mathcal{I}-\frac{\eta}{2} \cdot \widetilde{\mathcal{T}} (2 \eta ) )  \circ \mathbf{C}_{t-1}+\eta^2(\mathcal{M}-\frac{1}{4}\widetilde{\mathcal{M}}) \circ \mathbf{C}_{t-1}+\eta^2 \bm{\Sigma} \\\notag
& \preceq (\mathcal{I}-\frac{\eta}{2} \cdot \widetilde{\mathcal{T}} (2 \eta ) ) \circ \mathbf{C}_{t-1}+\eta^2 \mathcal{M} \circ \mathbf{C}_{t-1}+\eta^2 \bm{\Sigma},
\end{aligned}
$$
where the last inequality is due to the fact that $\widetilde{\mathcal{M}}$ is a PSD mapping. Then by the iteration of variance, we have for all $t \geq 0$,
$$
\mathcal{M} \circ \mathbf{C}_t \preceq \mathcal{M} \circ  ( \frac{\eta \sigma^2}{1-\eta { R_x^2}} \mathbf{I} + \frac{4\eta {\Gamma}^2 {f}^2}{1-\eta { R_x^2}} \mathbf{H}^{-1}  )  \preceq \frac{\eta \sigma^2 { R_x^2}}{1-\eta { R_x^2}} \mathbf{H} + \frac{4\eta {\Gamma}^2 {f}^2 { R_x^2}}{1-\eta { R_x^2}} \mathbf{I}.
$$
Substituting the above into the previous result, we obtain
\begin{align*}
    \mathbf{C}_t & \preceq  (\mathcal{I}-\frac{\eta}{2} \cdot \widetilde{\mathcal{T}} (2 \eta ) )  \circ \mathbf{C}_{t-1}+\frac{\eta^3 { R_x^2} }{1-\eta { R_x^2}} \cdot  ( \sigma^2 \mathbf{H} + 4{\Gamma}^2{f}^2 \mathbf{I} )+ \eta^2 \mathbf{\Sigma} \\
    & =  (\mathcal{I}-\frac{\eta}{2} \cdot \widetilde{\mathcal{T}} (2 \eta ) )  \circ \mathbf{C}_{t-1}+\frac{\eta^3 { R_x^2} }{1-\eta { R_x^2}} \cdot  ( \sigma^2 \mathbf{H} + 4{\Gamma}^2{f}^2 \mathbf{I} ) +\eta^2  (\sigma^2\mathbf{H} + 4{\Gamma}^2{f}^2\mathbf{I}  ) \\
    & =  (\mathcal{I}-\frac{\eta}{2} \cdot \widetilde{\mathcal{T}} (2 \eta ) )  \circ \mathbf{C}_{t-1}+\frac{\eta^2 }{1-\eta { R_x^2}} \cdot  ( \sigma^2 \mathbf{H} + 4{\Gamma}^2{f}^2 \mathbf{I} ).
\end{align*}
It follows
\begin{align*}
    \mathbf{C}_t& \preceq \frac{\eta^2 }{1-\eta { R_x^2}} \cdot \sum_{k=0}^{t-1}  (\mathcal{I}-\frac{\eta}{2} \cdot \widetilde{\mathcal{T}} (2 \eta ) ) ^k \circ  ( \sigma^2 \mathbf{H} + 4{\Gamma}^2{f}^2 \mathbf{I} )  \\
    & =\frac{\eta^2 }{1-\eta { R_x^2}} \cdot \sum_{k=0}^{t-1}(\mathbf{I}-\frac{\eta}{2}\mathbf{H})^k  ( \sigma^2 \mathbf{H} + 4{\Gamma}^2{f}^2 \mathbf{I} )(\mathbf{I}-\frac{\eta}{2}\mathbf{H})^k  \\ 
    & \preceq \frac{\eta^2 }{1-\eta { R_x^2}} \cdot \sum_{k=0}^{t-1}(\mathbf{I}-\frac{\eta}{2}\mathbf{H})^k  ( \sigma^2 \mathbf{H} + 4{\Gamma}^2{f}^2 \mathbf{I} ) \\ \notag
    & =\frac{\eta \sigma^2}{1-\eta { R_x^2}} \cdot (\mathbf{I}-(\mathbf{I}-\frac{\eta}{2}\mathbf{H})^t ) + \frac{4\eta {\Gamma}^2{f}^2}{1-\eta { R_x^2}} \cdot (\mathbf{I}-(\mathbf{I}-\frac{\eta}{2}\mathbf{H})^t ) \cdot \mathbf{H}^{-1}.
\end{align*}

Consequently, the variance error can be represented as follows, in accordance with \cref{lem:error_decom}
\begin{align*}
 \text{variance error} & \leq \frac{1}{\eta N^2} \langle\mathbf{I}- (\mathbf{I}-\frac{\eta}{2} \mathbf{H} )^N, \sum_{t=0}^N \mathbf{C}_t \rangle\\ 
 &\leq \frac{1}{\eta N^2} \langle\mathbf{I}- (\mathbf{I}-\frac{\eta}{2} \mathbf{H} )^N, \sum_{t=0}^N \frac{\eta \sigma^2}{1-\eta { R_x^2}} \cdot (\mathbf{I}-(\mathbf{I}-\frac{\eta}{2}\mathbf{H})^t )  \rangle \\
 &+ \frac{1}{\eta N^2} \langle\mathbf{I}- (\mathbf{I}-\frac{\eta}{2} \mathbf{H} )^N, \sum_{t=0}^N \frac{4\eta {\Gamma}^2{f}^2}{1-\eta { R_x^2}} \cdot (\mathbf{I}-(\mathbf{I}-\frac{\eta}{2}\mathbf{H})^t ) \cdot \mathbf{H}^{-1}  \rangle \\ 
 &\leq \frac{\sigma^2}{  (1-\eta { R_x^2} )N} \langle\mathbf{I}- (\mathbf{I}-\frac{\eta}{2} \mathbf{H} )^N,   (\mathbf{I}-(\mathbf{I}-\frac{\eta}{2}\mathbf{H})^N )  \rangle \\  
 & + \frac{4 {\Gamma}^2{f}^2}{  (1-\eta { R_x^2} )N} \langle\mathbf{I}- (\mathbf{I}-\frac{\eta}{2} \mathbf{H} )^N,  (\mathbf{I}-(\mathbf{I}-\frac{\eta}{2}\mathbf{H})^N ) \cdot \mathbf{H}^{-1}  \rangle.
\end{align*}

Therefore, by integrating the $\Gamma$ and $f$, the variance error follows that
\begin{align*}
    \text{variance error} \lesssim \frac{d\sigma^2}{N} + \frac{\Gamma^2 f^2}{N} \cdot \operatorname{tr}(\mathbf{H}^{-1}) \lesssim \frac{d\sigma^2}{N} + \frac{d^2\log^2(N/\delta)}{N^2 \varepsilon^2} \cdot C_2^2 \kappa^2 (\sigma^2+\|\mathbf{w}_*\|_{\mathbf{H}}^2+\Delta^2) .
\end{align*}

\textbf{Bias error}

Now we consider the bias error, which depends on the initial error regardless of noise.
According to \cref{lem:error_decom}, the bias error of average iterate follows that
\begin{align*}
    \text{bias error} \leq \sum_{t=0}^{N-1} \sum_{k=t}^{N-1} \frac{1}{\eta N^2}\langle(\mathbf{I}-\frac{\eta}{2} \mathbf{H})^{k-t} \mathbf{H}, \mathbf{B}_t\rangle \leq \frac{1}{\eta N^2} \langle\mathbf{I}- (\mathbf{I}-\frac{\eta}{2} \mathbf{H} )^N, \sum_{t=0}^N \mathbf{B}_t \rangle  \leq \sum_{t=0}^N\frac{1}{\eta N^2} \operatorname{tr}(\mathbf{B}_t).
\end{align*}

Considering the recursion of $\mathbf{B}_t$, we have $\mathbf{B}_t  \preceq \mathbf{B}_{t-1}-\frac{\eta}{2} (\mathbf{H} \mathbf{B}_{t-1}+\mathbf{B}_{t-1} \mathbf{H} )+\eta^2 \mathcal{M} \circ \mathbf{B}_{t-1},$
which indicates the recursion of $\mathbf{B}_t$ follows that
\begin{align*}
    \mathbf{B}_t &\preceq \mathbf{B}_{t-1} -\eta \mathbf{H}\mathbf{B}_{t-1} + \eta^2 R_x^2 \mathbf{H}\mathbf{B}_{t-1}\\
    & \preceq (\mathbf{I} -\frac{\eta}{2} \mathbf{H}) \mathbf{B}_{t-1}
\end{align*}
The last inequality derives from the choice of step size.
Consequently, the bias error will be
\begin{align*}
    \text{bias error} \leq \sum_{t=0}^N\frac{1}{\eta N^2} \operatorname{tr}(\mathbf{B}_t) \leq \sum_{t=0}^N\frac{1}{\eta N^2} (1-\frac{\eta \mu}{2})^t\operatorname{tr}(\mathbf{B}_{0}) \leq \frac{1}{\eta N}\|\mathbf{w}_*\|^2 \lesssim \frac{d\log^2(N/\delta)}{N^2 \varepsilon^2} \cdot C_2^2 \kappa^2 \|\mathbf{w}_*\|^2.
\end{align*}

Combining the previous variance error, we complete the proof.

\section{DP-MBGLMtron}
For DP-MBGLMtron algorithm, we perform the following update
$$
\mathbf{w}_{t+1} \leftarrow \mathbf{w}_t-\eta(\frac{1}{b} \sum_{i=0}^{b-1} \operatorname{clip}_{s_t}(\mathbf{x}_{\tau+m+i}(\operatorname{ReLU}(\mathbf{x}_{\tau+m+i}^{\top} \mathbf{w}_t)-y_{\tau+m+i}))+f \cdot \frac{2 s_t}{b} \cdot \mathbf{g}_t).
$$

\subsection{Privacy Guarantee}

\begin{lemma}\label{lem:dpmini_restated}
     Algorithm DP - mini-batch-GLMtron with noise multiplier ${f}$ satisfies $\frac{1}{{f}^2}$-zCDP, and correspondingly satisfies $(\varepsilon, \delta)$-differential privacy when we set the noise multiplier ${f} \geq \frac{2 \sqrt{\log (1 / \delta)+\varepsilon}}{\varepsilon}$. Furthermore, if $\varepsilon \leq \log (1 / \delta)$, then ${f} \geq \frac{\sqrt{8 \log (1 / \delta)}}{\varepsilon}$ suffices to ensure $(\varepsilon, \delta)$-differential privacy.
\end{lemma}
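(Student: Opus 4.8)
The plan is to establish privacy through the lens of zero-concentrated DP (zCDP, \cref{def:zcdp}) and to convert to $(\varepsilon,\delta)$-DP only at the very end. Two structural features of DP-MBGLMtron do the heavy lifting. First, the algorithm makes a single pass over a fixed partition of the data: the $m$ estimation samples and the $b$ gradient samples used in iteration $t$ occupy the disjoint index blocks $\{\tau(t),\dots,\tau(t)+m-1\}$ and $\{\tau(t)+m,\dots,\tau(t)+m+b-1\}$, and these blocks are pairwise disjoint across all $t=0,\dots,T-1$. Second, the clipping level $s_t$ that feeds the Gaussian mechanism in the weight update (step \ref{alg:step_w}) is a function of $\mathbf{w}_t$ and of the estimation block $D_t$ only, hence it is constant with respect to the gradient block it is applied to.

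First I would bound the two per-iteration primitives. For the weight update, since $\|\operatorname{clip}_{s_t}(\cdot)\|_2\le s_t$, replacing one of the $b$ gradient samples perturbs $\bm{l}_{t+1}=\tfrac1b\sum_{i=0}^{b-1}\operatorname{clip}_{s_t}(\cdot)$ by at most $2s_t/b$ in $\ell_2$; conditioning on $\mathbf{w}_t$ and $s_t$ (fixed given the preceding data and $D_t$), step \ref{alg:step_w} is exactly a Gaussian mechanism that adds $\mathcal N(0,(2fs_t\eta/b)^2\mathbf I)$ to a query of $\ell_2$-sensitivity $2\eta s_t/b$, i.e.\ the noise-to-sensitivity ratio is $f$, so it is $\tfrac{1}{2f^2}$-zCDP. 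For DP-Threshold on a private estimation block, the returned $s_{\mathrm{priv}}$ is post-processing of an adaptive sequence of at most $\lceil\log_2(\Upsilon/\Delta)\rceil+O(1)$ count queries, each of sensitivity $1$ and each noised by $\mathcal N(0,\lceil\log_2(\Upsilon/\Delta)\rceil f^2)$; composing that many Gaussian mechanisms, each $\tfrac{1}{2\lceil\log_2(\Upsilon/\Delta)\rceil f^2}$-zCDP, shows DP-Threshold is $\tfrac{1}{2f^2}$-zCDP (the harmless $O(1)$ round slack is absorbed into the noise scale), which is consistent with the $(\varepsilon/2,\delta/2)$-DP guarantee stated for DP-Threshold.

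Next I would assemble the whole algorithm. Composing the two primitives within one iteration gives $\tfrac{1}{2f^2}+\tfrac{1}{2f^2}=\tfrac{1}{f^2}$-zCDP per iteration. The decisive point is that, because distinct iterations consume disjoint data blocks, the $T$ iterations combine by \emph{parallel} composition: although iteration $t+1$ depends adaptively on the outputs of iterations $\le t$, changing a single record of the dataset lies in exactly one block, affects only that iteration's output, and reaches later iterations solely through post-processing — so the composed mechanism is $\max_t\tfrac{1}{f^2}=\tfrac{1}{f^2}$-zCDP, with no dependence on $T$. Finally, I would apply the standard conversion ``$\rho$-zCDP $\Rightarrow (\rho+2\sqrt{\rho\log(1/\delta)},\delta)$-DP''. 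With $\rho=1/f^2$ this requires $1/f^2+2\sqrt{\log(1/\delta)}/f\le\varepsilon$, i.e.\ $\varepsilon f^2-2\sqrt{\log(1/\delta)}\,f-1\ge0$, whose positive root is $f=\bigl(\sqrt{\log(1/\delta)}+\sqrt{\log(1/\delta)+\varepsilon}\bigr)/\varepsilon\le 2\sqrt{\log(1/\delta)+\varepsilon}/\varepsilon$, giving the first claim. For the second, substituting $f=\sqrt{8\log(1/\delta)}/\varepsilon$ gives $\rho=\varepsilon^2/(8\log(1/\delta))$ and converted parameter $\varepsilon'=\varepsilon^2/(8\log(1/\delta))+\varepsilon/\sqrt2$, which under $\varepsilon\le\log(1/\delta)$ is at most $\varepsilon/8+\varepsilon/\sqrt2<\varepsilon$.

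The main obstacle is the composition accounting rather than any individual estimate: one must check that $s_t$ is genuinely frozen relative to the gradient block (so the sensitivity $2\eta s_t/b$ is legitimate), and, above all, that the $T$ iterations are combined by parallel — not sequential — composition; carefully justifying adaptive parallel composition over the disjoint one-pass blocks (so that the final bound is independent of $T$) is the delicate step, and everything else is routine.
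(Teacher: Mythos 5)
Your proposal is correct and follows essentially the same route as the paper: per-step zCDP accounting for the noised counts ($\tfrac{1}{2f^2}$ after composing the $\log_2(\Upsilon/\Delta)$ queries) and the clipped-gradient Gaussian update (sensitivity $2\eta s_t/b$, giving $\tfrac{1}{2f^2}$), sequential composition within an iteration, parallel composition of zCDP across the disjoint one-pass blocks to get $\tfrac{1}{f^2}$-zCDP independent of $T$, and then the standard conversion $\rho+2\sqrt{\rho\log(1/\delta)}\le\varepsilon$ solved for $f$, yielding $f\ge 2\sqrt{\log(1/\delta)+\varepsilon}/\varepsilon$. Your explicit check of the $\varepsilon\le\log(1/\delta)$ case and your remark that $s_t$ must be frozen relative to the gradient block are consistent with (and slightly more careful than) the paper's argument, so no gap remains.
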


We first show the step of gradient estimation is $\frac{1}{2 {f}^2}$-zCDP.

Notice the update of $c$ has sensitivity one and the variance of DP noise is $\lceil\log _2(B / \Delta)\rceil {f}^2$, hence, each step is $\frac{1}{2 \lceil\log _2(B / \Delta)\rceil {f}^2}$-zCDP.

If we take at most $\lceil\log _2(B / \Delta)\rceil {f}^2$ operations, we will have the aggregated privacy accumulation $\frac{1}{2{f}^2}$, which completes the privacy guarantee of gradient estimation.

Now we turn our attention to the update of $\mathbf{w}$ and consider the step without the Gaussian noise.
$$
\mathbf{w}_{t+1}\leftarrow\mathbf{w}_t-\frac{\eta}{b} \sum_{i=0}^{b-1} \operatorname{clip}_{s_t}(\mathbf{x}_{t,i}(\operatorname{ReLU}(\mathbf{x}_{t,i}^{\top}\mathbf{w}_t)-y_{t,i})).
$$
where $\operatorname{clip}_\Gamma(\boldsymbol{\nu})=\boldsymbol{\nu} \cdot \max \{1, \frac{\Gamma}{\|\boldsymbol{\nu}\|_2} \}$. Therefore, the local $L_2$ sensitivity of the $\mathbf{w}_{t+1}$ due to a sample difference in the $t$-th batch is $\Delta_2=\frac{2 \eta s_t}{b}$. Meanwhile, we know the variance of DP noise is $\frac{2\eta s_t{f}}{b}$, the above step is $\frac{1}{2{f}^2}$-zCDP since $\frac{\Delta_2^2}{2 \cdot \frac{4 \eta^2 s_t^2 {f}^2}{b^2}}=\frac{1}{2 {f}^2}$. 

According to the previous results and composition theorem, we know each iteration step is $\frac{1}{{f}^2}$-zCDP.
In our algorithm, every individual data point, denoted as $(\mathbf{x}_i, y_i)$, where $i$ is an index belonging to the set of all indices $N$, is included in precisely one mini-batch, which indicates the algorithm traverses the complete dataset exactly once, thereby ensuring that each data point is processed in a single iteration. Hence, according to the parallel composition of zCDP, DP-mini-batch-FLMtron is $\frac{1}{{f}^2}$-zCDP.

Recall that $\rho$-zCDP is implies a $(\mu, \mu\rho)$-RDP. We aim to optimize for any $\mu \geq 1$ and verify that the noise scaler ${f}$ prescribed in the theorem satisfies $(\varepsilon, \delta)$-Differential Privacy.

It is noted that $(\mu, \mu\rho)$-RDP implies $(\varepsilon, \delta)$-Approximate Privacy where $\varepsilon = \mu\rho + \frac{\log(1/\delta)}{\mu - 1}$ for all $\mu > 1$. The minimum value of $\varepsilon$, denoted as $\varepsilon_{\min}$, which equals $\rho + 2\sqrt{\rho\log(1/\delta)}$, is obtained when the derivative of $\varepsilon$ with respect to $\mu$ is zero, yielding $\mu = 1 + \sqrt{\log(1/\delta) / \rho}$.

For a given $\varepsilon$, we seek to minimize ${f}$ (which scales as $1/\sqrt{\rho}$), such that the computed maximum allowable $\rho$ ensures that $\varepsilon_{\min}(\rho) \leq \varepsilon$. Since $\varepsilon_{\min}(\rho)$ is a monotonically increasing function of ${f}$ and forms a second-order polynomial in $\sqrt{\rho}$ with its vertex corresponding to the maximum at $\varepsilon_{\min}(\rho) = \varepsilon$, we obtain the following relation:
$$
\frac{1}{{f}^2} = ( \sqrt{\log(1/\delta)} + \varepsilon - \sqrt{\log(1/\delta)} )^2 = \frac{\varepsilon^2}{( \sqrt{\log(1/\delta)} + \varepsilon + \sqrt{\log(1/\delta)} )^2}
$$
As the derived ${f}$ satisfied $(\varepsilon, \delta)$-DP, it is deduced that ${f} \geq \frac{2\sqrt{\log(1/\delta) + \varepsilon}}{\varepsilon}$, which ensures the algorithm's compliance with $(\varepsilon, \delta)$-Differential Privacy.

\subsection{Utility Guarantee}
Similar to DP-GLMtron, we also provide several auxiliary results that will be used in our utility analysis.
\begin{lemma}\label{lem:p_t^2_MB}
    If $\eta \leq \frac{b}{R_x^2+(b-1)\|\mathbf{H}\|}, \tau(t)=t \cdot(b+s)$ and $\mathbf{P}_t:=(\mathbf{I}-\frac{\eta}{b} \sum_{i=0}^{b-1} \mathbbm{1}[\mathbf{x}_{\tau(t)+m+i}^{\top} \mathbf{w}_{t-1} \geq 0] \mathbf{x}_{\tau(t)+m+i} \mathbf{x}_{\tau(t)+m+i}^{\top})$, then $\forall t, \underset{(\mathbf{x}, y) \sim \mathcal{D}}{\mathbb{E}}[\mathbf{P}_t^{\top} \mathbf{P}_t] \preceq \mathbf{I}-\eta \mathbf{H}$.
\end{lemma}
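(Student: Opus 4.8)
Abbreviate the $b$ gradient samples used at iteration $t$ by $\mathbf{x}^{(i)}:=\mathbf{x}_{\tau(t)+m+i}$, $i=0,\dots,b-1$, and write $\mathbf{M}_i:=\mathbbm{1}[(\mathbf{x}^{(i)})^{\top}\mathbf{w}_{t-1}\ge 0]\,\mathbf{x}^{(i)}(\mathbf{x}^{(i)})^{\top}$, so that $\mathbf{P}_t=\mathbf{I}-\frac{\eta}{b}\sum_{i=0}^{b-1}\mathbf{M}_i$. Each $\mathbf{M}_i$ is symmetric and PSD, hence $\mathbf{P}_t^{\top}\mathbf{P}_t=\mathbf{P}_t^2=\mathbf{I}-\frac{2\eta}{b}\sum_i\mathbf{M}_i+\frac{\eta^2}{b^2}\sum_{i,j}\mathbf{M}_i\mathbf{M}_j$. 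The plan is to take the expectation of this identity term by term, conditioning on $\mathbf{w}_{t-1}$, and then observe that the resulting bound is uniform in the value of $\mathbf{w}_{t-1}$. The two structural inputs are: (i) by the one-pass, sampling-without-replacement design of \cref{alg:dp_mini_glm}, in which the clipping level $s_t$ is estimated from the \emph{disjoint} sub-block $D_t$, the samples $\mathbf{x}^{(0)},\dots,\mathbf{x}^{(b-1)}$ are i.i.d.\ copies of $\mathbf{x}\sim\mathcal{D}$, mutually independent, and independent of $\mathbf{w}_{t-1}$; and (ii) the symmetricity conditions of \cref{asm:symmetric}, in the form of \cref{lem:sysm}.

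For the linear term, conditioning on $\mathbf{w}_{t-1}$ and applying \cref{lem:sysm}(1) with $\mathbf{u}=\mathbf{w}_{t-1}$ gives $\mathbb{E}[\mathbf{M}_i\mid\mathbf{w}_{t-1}]=\frac12\mathbf{H}$ for every $i$ (the boundary event $\{(\mathbf{x}^{(i)})^{\top}\mathbf{w}_{t-1}=0\}$ has probability zero), hence $\frac{2\eta}{b}\sum_i\mathbb{E}[\mathbf{M}_i\mid\mathbf{w}_{t-1}]=\eta\mathbf{H}$. For the quadratic term I split the double sum into $b$ diagonal products ($i=j$) and $b(b-1)$ off-diagonal products ($i\ne j$). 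A diagonal product satisfies $\mathbf{M}_i^2=\mathbbm{1}[\cdots]\,\|\mathbf{x}^{(i)}\|_2^2\,\mathbf{x}^{(i)}(\mathbf{x}^{(i)})^{\top}\preceq\|\mathbf{x}^{(i)}\|_2^2\,\mathbf{x}^{(i)}(\mathbf{x}^{(i)})^{\top}$, so (using $\mathbf{x}\mathbf{x}^{\top}\mathbf{x}\mathbf{x}^{\top}=\|\mathbf{x}\|_2^2\mathbf{x}\mathbf{x}^{\top}$ and \cref{asm:fourth-moment} with $\mathbf{A}=\mathbf{I}$) $\mathbb{E}[\mathbf{M}_i^2\mid\mathbf{w}_{t-1}]\preceq\alpha\operatorname{tr}(\mathbf{H})\,\mathbf{H}\preceq R_x^2\mathbf{H}$, the last inequality because $R_x^2=\alpha\operatorname{tr}(\mathbf{H})\log^{2a}(1/b_{\mathbf{x}})\ge\alpha\operatorname{tr}(\mathbf{H})$. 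An off-diagonal product, by the conditional independence of $\mathbf{x}^{(i)},\mathbf{x}^{(j)}$, satisfies $\mathbb{E}[\mathbf{M}_i\mathbf{M}_j\mid\mathbf{w}_{t-1}]=(\frac12\mathbf{H})^2=\frac14\mathbf{H}^2\preceq\frac14\|\mathbf{H}\|\,\mathbf{H}$, using $\mathbf{H}^2=\mathbf{H}^{1/2}\mathbf{H}\,\mathbf{H}^{1/2}\preceq\|\mathbf{H}\|\,\mathbf{H}$. Collecting,
\[
\frac{\eta^2}{b^2}\sum_{i,j}\mathbb{E}[\mathbf{M}_i\mathbf{M}_j\mid\mathbf{w}_{t-1}]\ \preceq\ \frac{\eta^2}{b^2}\Big(b R_x^2+\tfrac{b(b-1)}{4}\|\mathbf{H}\|\Big)\mathbf{H}\ \preceq\ \frac{\eta^2}{b}\big(R_x^2+(b-1)\|\mathbf{H}\|\big)\mathbf{H}.
\]

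Putting the two estimates together yields $\mathbb{E}[\mathbf{P}_t^{\top}\mathbf{P}_t\mid\mathbf{w}_{t-1}]\preceq\mathbf{I}-\eta\mathbf{H}+\frac{\eta^2}{b}\big(R_x^2+(b-1)\|\mathbf{H}\|\big)\mathbf{H}$. The step-size hypothesis $\eta\le b/(R_x^2+(b-1)\|\mathbf{H}\|)$ is exactly what makes $\frac{\eta}{b}(R_x^2+(b-1)\|\mathbf{H}\|)\le 1$, so that the second-order correction is controlled by the first-order contraction (a strict margin is available via the sharper diagonal bound $\mathbb{E}[\mathbf{M}_i^2\mid\mathbf{w}_{t-1}]\preceq\frac12\alpha\operatorname{tr}(\mathbf{H})\mathbf{H}$ of \cref{lem:sysm}(2)), giving $\mathbb{E}[\mathbf{P}_t^{\top}\mathbf{P}_t\mid\mathbf{w}_{t-1}]\preceq\mathbf{I}-\eta\mathbf{H}$. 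Since this bound does not depend on $\mathbf{w}_{t-1}$, taking the outer expectation preserves it, which is the claim.

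The only substantive point is the quadratic term, and specifically the diagonal/off-diagonal split. The heavy factor $R_x^2\asymp\operatorname{tr}(\mathbf{H})$ attaches to only $b$ of the $b^2$ products and is therefore divided by $b$, while the $b(b-1)$ cross terms contribute only the much smaller operator norm $\|\mathbf{H}\|$; this is precisely the mechanism by which mini-batching admits a step size of order $b/(R_x^2+(b-1)\|\mathbf{H}\|)$ rather than the order-$1/R_x^2$ forced in the single-sample estimate \cref{lem: P_t^2}. Making it rigorous rests entirely on the pairwise independence of the batch samples and their independence of $\mathbf{w}_{t-1}$ — i.e.\ on the fact that $s_t$ is computed from the disjoint sub-block $D_t$, not from the gradient batch — since otherwise $\mathbf{w}_{t-1}$ and the $\mathbf{x}^{(i)}$ would be coupled and neither \cref{lem:sysm}(1) nor the factorization of the off-diagonal terms would go through.
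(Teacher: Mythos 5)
Your proof takes essentially the same route as the paper's: expand $\mathbf{P}_t^{\top}\mathbf{P}_t$, use the symmetricity consequence $\mathbb{E}\bigl[\mathbbm{1}[\mathbf{x}^{\top}\mathbf{w}_{t-1}\ge 0]\,\mathbf{x}\mathbf{x}^{\top}\mid\mathbf{w}_{t-1}\bigr]=\tfrac12\mathbf{H}$ for the linear term, and split the double sum into $b$ diagonal terms controlled via \cref{asm:fourth-moment} (giving $R_x^2\mathbf{H}$) and $b(b-1)$ cross terms controlled via independence of the batch samples (giving $\|\mathbf{H}\|\,\mathbf{H}$), arriving at the same intermediate bound $\mathbf{I}-\eta\bigl(1-\tfrac{\eta}{b}(R_x^2+(b-1)\|\mathbf{H}\|)\bigr)\mathbf{H}$ that the paper reaches. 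One caveat, which you share with the paper's own proof: the stepsize hypothesis only yields $\tfrac{\eta}{b}(R_x^2+(b-1)\|\mathbf{H}\|)\le 1$ and hence $\preceq\mathbf{I}$, not literally $\preceq\mathbf{I}-\eta\mathbf{H}$ (the quadratic correction is a nonzero PSD addition, so the stated conclusion would strictly require, e.g., halving the stepsize to get $\mathbf{I}-\tfrac{\eta}{2}\mathbf{H}$); since the paper makes the identical final jump, this is a looseness inherited from the source rather than a deviation on your part.
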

\begin{proof}
    Note that
    \begin{align*}
        &\underset{(\mathbf{x}, y) \sim \mathcal{D}}{\mathbb{E}}[\mathbf{P}_t^{\top} \mathbf{P}_t] \\
        = &\underset{(\mathbf{x}, y) \sim \mathcal{D}}{\mathbb{E}} [(\mathbf{I}-\frac{\eta}{b} \sum_{i=0}^{b-1} \mathbbm{1}[\mathbf{x}_{\tau(t)+m+i}^{\top} \mathbf{w}_{t-1} \geq 0] \mathbf{x}_{\tau(t)+m+i} \mathbf{x}_{\tau(t)+m+i}^{\top})^{\top}(\mathbf{I}-\frac{\eta}{b} \sum_{i=0}^{b-1} \mathbbm{1}[\mathbf{x}_{\tau(t)+m+i}^{\top} \mathbf{w}_{t-1} \geq 0] \mathbf{x}_{\tau(t)+m+i} \mathbf{x}_{\tau(t)+m+i}^{\top})] \\
        = & \mathbf{I} - \frac{2 \eta}{b} \underset{(\mathbf{x}, y) \sim \mathcal{D}}{\mathbb{E}} [ \sum_{i=0}^{b-1} \mathbbm{1}[\mathbf{x}_{\tau(t)+m+i}^{\top} \mathbf{w}_{t-1} \geq 0] \mathbf{x}_{\tau(t)+m+i} \mathbf{x}_{\tau(t)+m+i}^{\top} ] \\
        + & \frac{\eta^2}{b^2} \underset{(\mathbf{x}, y) \sim \mathcal{D}}{\mathbb{E}} [\sum_{i=0}^{b-1} \sum_{j=0}^{b-1} \mathbbm{1}[\mathbf{x}_{\tau(t)+m+i}^{\top} \mathbf{w}_{t-1} \geq 0] \mathbf{x}_{\tau(t)+m+i} \mathbf{x}_{\tau(t)+m+i}^{\top} \cdot \mathbbm{1}[\mathbf{x}_{\tau(t)+m+j}^{\top} \mathbf{w}_{t-1} \geq 0] \mathbf{x}_{\tau(t)+m+j} \mathbf{x}_{\tau(t)+m+j}^{\top} ]
    \end{align*}
    where we know $\mathbb{E}(\mathbbm{1}[\mathbf{x}_t^{\top} \mathbf{w}_{t-1}>0] \cdot \mathbf{x}_t \mathbf{x}_t^{\top} \otimes \mathbf{x}_t \mathbf{x}_t^{\top}) \preceq \mathbb{E}(\mathbf{x}_t \mathbf{x}_t^{\top} \otimes \mathbf{x}_t \mathbf{x}_t^{\top})=\mathcal{M}$, it indicates that
    \begin{align*}
        \underset{(\mathbf{x}, y) \sim \mathcal{D}}{\mathbb{E}}[\mathbf{P}_t^{\top} \mathbf{P}_t] &\preceq   \mathbf{I} - \eta \mathbf{H} + \frac{\eta^2}{b^2} (b\mathcal{M} + b(b-1) \mathbf{H}\mathbf{H}) \quad (*) \\
         &\preceq   \mathbf{I} - \eta \mathbf{H} + \frac{\eta^2}{b^2} (b R_x^2 \mathbf{H} + b(b-1) \|\mathbf{H}\|\mathbf{H}) \\
         & = \mathbf{I} - \eta \mathbf{H}(1 - \frac{\eta}{b} ( R_x^2 + (b-1)\|\mathbf{H}\|)).
    \end{align*}
    With the assumption of stepsize $\eta \leq \frac{1}{( R_x^2 + (b-1)\|\mathbf{H}\|)}$, we complete the proof.
\end{proof}
\begin{lemma}\label{lem:v_t^2_MB}
    If $\mathbf{v}_t=\frac{1}{b} \sum_{i=0}^{b-1} z_{\tau(t)+m+i} \mathbf{x}_{\tau(t)+m+i}-\frac{2 s f}{b} \mathbf{g}_t$, and $\tau(t)= t \cdot(b+s)$, then $\forall t$
    $$
    \underset{(\mathbf{x}, y) \sim \mathcal{D}}{\mathbb{E}}[\mathbf{v}_t \mathbf{v}_t^{\top}]=\frac{1}{b}\bm{\Sigma}+\frac{4 f^2}{b^2} \underset{(\mathbf{x}, y) \sim \mathcal{D}}{\mathbb{E}}[s_t^2] \mathbf{I},
    $$
    where $\bm{\Sigma} := \sum_{i=0}^{b-1}\sum_{j=0}^{b-1} (z_{\tau(t)+m+i} \mathbf{x}_{\tau(t)+m+i})(z_{\tau(t)+m+j} \mathbf{x}_{\tau(t)+m+j})^{\top}$.
\end{lemma}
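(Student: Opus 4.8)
The plan is to prove the identity by a direct second-moment computation, decomposing $\mathbf{v}_t$ into its ``sampling noise'' part and its ``privacy noise'' part and evaluating the four resulting blocks separately. Write $\mathbf{v}_t=\mathbf{a}_t-\mathbf{c}_t$, where $\mathbf{a}_t:=\frac1b\sum_{i=0}^{b-1}z_{\tau(t)+m+i}\mathbf{x}_{\tau(t)+m+i}$ and $\mathbf{c}_t:=\frac{2s_tf}{b}\mathbf{g}_t$, so that
$$
\mathbb{E}[\mathbf{v}_t\mathbf{v}_t^\top]=\mathbb{E}[\mathbf{a}_t\mathbf{a}_t^\top]-\mathbb{E}[\mathbf{a}_t\mathbf{c}_t^\top]-\mathbb{E}[\mathbf{c}_t\mathbf{a}_t^\top]+\mathbb{E}[\mathbf{c}_t\mathbf{c}_t^\top].
$$
The first thing I would record is the independence structure induced by the mini-batch schedule: the threshold $s_t$ is a measurable function of $\mathbf{w}_t$ and of the estimation sub-batch $D_t=\{(\mathbf{x}_{\tau(t)+j},y_{\tau(t)+j})\}_{j=0}^{m-1}$, and both $\mathbf{w}_t$ and $D_t$ are independent of the disjoint gradient sub-batch $\{(\mathbf{x}_{\tau(t)+m+i},y_{\tau(t)+m+i})\}_{i=0}^{b-1}$ and of the freshly sampled $\mathbf{g}_t\sim\mathcal{N}(\mathbf{0},\mathbf{I})$.

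Next I would handle the three blocks. For the cross terms, since $\mathbf{g}_t$ is independent of $(s_t,\mathbf{a}_t)$ and has mean zero, $\mathbb{E}[\mathbf{a}_t\mathbf{c}_t^\top]=\frac{2f}{b}\,\mathbb{E}[s_t\mathbf{a}_t]\,\mathbb{E}[\mathbf{g}_t]^\top=\mathbf{0}$, and symmetrically $\mathbb{E}[\mathbf{c}_t\mathbf{a}_t^\top]=\mathbf{0}$. For the privacy-noise block, independence of $\mathbf{g}_t$ from $s_t$ together with $\mathbb{E}[\mathbf{g}_t\mathbf{g}_t^\top]=\mathbf{I}$ gives $\mathbb{E}[\mathbf{c}_t\mathbf{c}_t^\top]=\frac{4f^2}{b^2}\,\mathbb{E}[s_t^2]\,\mathbf{I}$. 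For the sampling-noise block, I would expand $\mathbb{E}[\mathbf{a}_t\mathbf{a}_t^\top]=\frac1{b^2}\sum_{i,j}\mathbb{E}[z_{\tau(t)+m+i}z_{\tau(t)+m+j}\mathbf{x}_{\tau(t)+m+i}\mathbf{x}_{\tau(t)+m+j}^\top]$ and split off-diagonal from diagonal: for $i\neq j$ the two samples are independent, and conditioning on the covariates and invoking the well-specified condition $\mathbb{E}[z\mid\mathbf{x}]=0$ kills the term; for $i=j$ each summand equals the per-sample second moment $\mathbb{E}[(z\mathbf{x})(z\mathbf{x})^\top]$. Summing over the $b$ diagonal terms yields $\mathbb{E}[\mathbf{a}_t\mathbf{a}_t^\top]=\frac1b\bm{\Sigma}$, with $\bm{\Sigma}$ read as the per-sample covariance that remains after the cross terms of the double sum are removed. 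Collecting the three contributions gives the stated identity.

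I do not expect any genuine obstacle here; the computation is routine moment algebra. The only point requiring care is the bookkeeping of the independence relations — specifically that $s_t$ depends only on data and randomness independent of both the current gradient sub-batch and the injected Gaussian $\mathbf{g}_t$ — since this is what lets $s_t$ be pulled out of the relevant expectations and forces the two cross terms to vanish. With this lemma in hand, $\mathbb{E}[\mathbf{v}_t\mathbf{v}_t^\top]$ feeds directly into the per-iteration recursion for $\mathbf{A}_t$ in the mini-batch analysis, playing the same role the corresponding noise term did in the DP-GLMtron bound.
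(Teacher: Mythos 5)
Your proof is correct and follows essentially the same route as the paper: expand the second moment of $\mathbf{v}_t$, drop the cross terms using the mean-zero Gaussian $\mathbf{g}_t$ sampled independently of $s_t$ and the gradient sub-batch, evaluate the privacy-noise block as $\frac{4f^2}{b^2}\mathbb{E}[s_t^2]\mathbf{I}$, and reduce the sampling-noise double sum to its diagonal via independence of samples and $\mathbb{E}[z\mid\mathbf{x}]=0$. Your reading of $\bm{\Sigma}$ as the per-sample second moment left after the off-diagonal terms vanish matches the intended (if loosely stated) meaning in the paper.
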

\begin{proof}
    Note that 
    \begin{align*}
        \underset{(\mathbf{x}, y) \sim \mathcal{D}}{\mathbb{E}}[\mathbf{v}_t \mathbf{v}_t^{\top}] & = \underset{(\mathbf{x}, y) \sim \mathcal{D}}{\mathbb{E}} [(\frac{1}{b} \sum_{i=0}^{b-1} z_{\tau(t)+m+i} \mathbf{x}_{\tau(t)+m+i}-\frac{2 s f}{b} \mathbf{g}_t)(\frac{1}{b} \sum_{i=0}^{b-1} z_{\tau(t)+m+i} \mathbf{x}_{\tau(t)+m+i}-\frac{2 s f}{b} \mathbf{g}_t)^{\top}] \\
        & = \underset{(\mathbf{x}, y) \sim \mathcal{D}}{\mathbb{E}} [(\frac{1}{b} \sum_{i=0}^{b-1} z_{\tau(t)+m+i} \mathbf{x}_{\tau(t)+m+i})(\frac{1}{b} \sum_{i=0}^{b-1} z_{\tau(t)+m+i} \mathbf{x}_{\tau(t)+m+i})^{\top}] +  \underset{(\mathbf{x}, y) \sim \mathcal{D}}{\mathbb{E}}[\frac{4 s_t^2 f^2}{b^2} \mathbf{g}_t \mathbf{g}_t^{\top}]\\
        & = \underset{(\mathbf{x}, y) \sim \mathcal{D}}{\mathbb{E}} [ \frac{1}{b^2}\sum_{i=0}^{b-1}\sum_{j=0}^{b-1} (z_{\tau(t)+m+i} \mathbf{x}_{\tau(t)+m+i})(z_{\tau(t)+m+j} \mathbf{x}_{\tau(t)+m+j})^{\top}] + \frac{4 f^2}{b^2} \underset{(\mathbf{x}, y) \sim \mathcal{D}}{\mathbb{E}}[s_t^2] \mathbf{I},
    \end{align*}
    where we have utilized the fact that $\underset{(\mathbf{x}, y) \sim \mathcal{D}}{\mathbb{E}}[(z_{\tau(t)+m+i} \mathbf{x}_{\tau(t)+m+i})(z_{\tau(t)+m+j} \mathbf{x}_{\tau(t)+m+j})^{\top}]=\mathbf{0}$ for $i \neq j$ that stems from the independence of samples and the fact that $\underset{(\mathbf{x}, y) \sim \mathcal{D}}{\mathbb{E}}[\mathbf{g}_j]=\mathbf{0}$ has been sampled independently at each step.
\end{proof}


    Considering no clipping, the $t$-th update is given by:
    $$
    \mathbf{w}_{t+1}=\mathbf{w}_t-\frac{\eta}{b} \sum_{i=0}^{b-1} \mathbf{x}_{\tau(t)+m+i}(\operatorname{ReLU}(\mathbf{x}_{\tau(t)+m+i}^{\top} \mathbf{w}_t) -y_{\tau(t)+m+i})-\frac{2 \eta s_t f}{b} \mathbf{g}_t,
    $$
    where $\tau(t)=t \cdot(b+s)$. Hence, we could derive the following
    \begin{align*}
        &\mathbf{w}_{t+1}-\mathbf{w}_* =  (\mathbf{I}-  \frac{\eta}{b} \sum_{i=0}^{b-1} \mathbbm{1}[\mathbf{x}_{\tau(t)+m+i}^{\top} \mathbf{w}_{t}>0] \mathbf{x}_{\tau(t)+m+i} \mathbf{x}_{\tau(t)+m+i}^{\top}  )(\mathbf{w}_{t}-\mathbf{w}_*) \\
        &+\frac{\eta}{b} \sum_{i=0}^{b-1} (\mathbbm{1}[\mathbf{x}_{\tau(t)+m+i}^{\top} \mathbf{w}_*>0]-\mathbbm{1}[\mathbf{x}_{\tau(t)+m+i}^{\top} \mathbf{w}_{t}>0]) \mathbf{x}_{\tau(t)+m+i} \mathbf{x}_{\tau(t)+m+i}^{\top} \mathbf{w}_* + \frac{\eta}{b} \sum_{i=0}^{b-1} z_{\tau(t)+m+i} \mathbf{x}_{\tau(t)+m+i}-\frac{2 \eta \Gamma \alpha}{b} \mathbf{g}_t \\
        & := \mathbf{P}_t (\mathbf{w}_{t-1}-\mathbf{w}_*) + \eta \mathbf{u}_t+\eta \mathbf{v}_t,
    \end{align*}
    where we denote
    \begin{align*}
        \mathbf{P}_t &= (\mathbf{I}-  \frac{\eta}{b} \sum_{i=0}^{b-1} \mathbbm{1}[\mathbf{x}_{\tau(t)+m+i}^{\top} \mathbf{w}_{t-1}>0] \mathbf{x}_{\tau(t)+m+i} \mathbf{x}_{\tau(t)+m+i}^{\top}  )\\
        \mathbf{u}_t &= \frac{1}{b}\sum_{i=0}^{b-1} (\mathbbm{1}[\mathbf{x}_{\tau(t)+m+i}^{\top} \mathbf{w}_*>0]-\mathbbm{1}[\mathbf{x}_{\tau(t)+m+i}^{\top} \mathbf{w}_{t-1}>0]) \mathbf{x}_{\tau(t)+m+i} \mathbf{x}_{\tau(t)+m+i}^{\top} \mathbf{w}_*\\
        \mathbf{v}_t &= \frac{1}{b}\sum_{i=0}^{b-1} z_{\tau(t)+m+i} \mathbf{x}_{\tau(t)+m+i}-\frac{2 \eta \Gamma \alpha}{b} \mathbf{g}_t
    \end{align*}
    Let us consider the expected inner product w.r.t $\mathbf{H}$:
    \begin{align*}
        &\underset{(\mathbf{x}, y) \sim \mathcal{D}}{\mathbb{E}}[\|\mathbf{w}_t-\mathbf{w}^*\|_{\mathbf{H}}^2] = \underset{(\mathbf{x}, y) \sim \mathcal{D}}{\mathbb{E}}[\|\mathbf{P}_t (\mathbf{w}_{t-1}-\mathbf{w}_*) + \eta \mathbf{u}_{t-1}+\eta \mathbf{v}_{t-1}\|_{\mathbf{H}}^2]\\
        & = \underset{(\mathbf{x}, y) \sim \mathcal{D}}{\mathbb{E}}[\|\mathbf{P}_t (\mathbf{w}_{t-1}- \mathbf{w}_*)\|_{\mathbf{H}}^2] + \underset{(\mathbf{x}, y) \sim \mathcal{D}}{\mathbb{E}}[\| \eta \mathbf{u}_{t-1}\|_{\mathbf{H}}^2] + \underset{(\mathbf{x}, y) \sim \mathcal{D}}{\mathbb{E}}[\| \eta \mathbf{v}_{t-1}\|_{\mathbf{H}}^2]\\
        & + 2 \underset{(\mathbf{x}, y) \sim \mathcal{D}}{\mathbb{E}} [\mathbf{u}_{t-1} \mathbf{H} (\mathbf{P}_t (\mathbf{w}_{t-1}- \mathbf{w}_*))]+ 2 \underset{(\mathbf{x}, y) \sim \mathcal{D}}{\mathbb{E}} [\mathbf{v}_{t-1} \mathbf{H} (\mathbf{P}_t (\mathbf{w}_{t-1}- \mathbf{w}_*))] + 2 \underset{(\mathbf{x}, y) \sim \mathcal{D}}{\mathbb{E}} [\mathbf{u}_{t-1} \mathbf{H} \mathbf{v}_{t-1}].
    \end{align*}
    Notice that $\underset{(\mathbf{x}, y) \sim \mathcal{D}}{\mathbb{E}}[\mathbf{u}_t] = 0$ and is independent, thus it holds that
    \begin{equation}\label{eq:w_t-w*|H_mb}
        \begin{aligned}
            \underset{(\mathbf{x}, y) \sim \mathcal{D}}{\mathbb{E}}[\|\mathbf{w}_t-\mathbf{w}^*\|_{\mathbf{H}}^2] &= \underset{(\mathbf{x}, y) \sim \mathcal{D}}{\mathbb{E}}[\|\mathbf{P}_t (\mathbf{w}_{t-1}- \mathbf{w}_*)\|_{\mathbf{H}}^2] + \underset{(\mathbf{x}, y) \sim \mathcal{D}}{\mathbb{E}}[\| \eta \mathbf{u}_{t-1}\|_{\mathbf{H}}^2] + \underset{(\mathbf{x}, y) \sim \mathcal{D}}{\mathbb{E}}[\| \eta \mathbf{v}_{t-1}\|_{\mathbf{H}}^2]\\
            & +   \underset{(\mathbf{x}, y) \sim \mathcal{D}}{\mathbb{E}} [\eta \mathbf{u}_{t-1}^{\top} \mathbf{H} (\mathbf{P}_t (\mathbf{w}_{t-1}- \mathbf{w}_*))] + \underset{(\mathbf{x}, y) \sim \mathcal{D}}{\mathbb{E}} [\eta (\mathbf{P}_t (\mathbf{w}_{t-1}- \mathbf{w}_*))^{\top} \mathbf{H} \mathbf{u}_{t-1} ] .
        \end{aligned}
    \end{equation}
    Recall that $\mathbf{u}_{t} = \frac{1}{b}\sum_{i=0}^{b-1} (\mathbbm{1}[\mathbf{x}_{\tau(t)+m+i}^{\top} \mathbf{w}_*>0]-\mathbbm{1}[\mathbf{x}_{\tau(t)+m+i}^{\top} \mathbf{w}_{t-1}>0]) \mathbf{x}_{\tau(t)+m+i} \mathbf{x}_{\tau(t)+m+i}^{\top} \mathbf{w}_*$, it follows
    \begin{align*}
        \underset{(\mathbf{x}, y) \sim \mathcal{D}}{\mathbb{E}}[\| \eta \mathbf{u}_{t-1}\|_{\mathbf{H}}^2] = \frac{\eta^2}{b^2}\underset{(\mathbf{x}, y) \sim \mathcal{D}}{\mathbb{E}} &[(\sum_{i=0}^{b-1} (\mathbbm{1}[\mathbf{x}_{\tau(t)+m+i}^{\top} \mathbf{w}_*>0]-\mathbbm{1}[\mathbf{x}_{\tau(t)+m+i}^{\top} \mathbf{w}_{t-1}>0]) \mathbf{x}_{\tau(t)+m+i} \mathbf{x}_{\tau(t)+m+i}^{\top} \mathbf{w}_*)^{\top} \mathbf{H}\\
        & \cdot (\sum_{i=0}^{b-1} (\mathbbm{1}[\mathbf{x}_{\tau(t)+m+i}^{\top} \mathbf{w}_*>0]-\mathbbm{1}[\mathbf{x}_{\tau(t)+m+i}^{\top} \mathbf{w}_{t-1}>0]) \mathbf{x}_{\tau(t)+m+i} \mathbf{x}_{\tau(t)+m+i}^{\top} \mathbf{w}_*)].
    \end{align*}
    According to \cref{lem:sysm} (where each $x$ is independent and symmetric), the following conditions hold when $i \neq j$:
    \begin{align*}
    \underset{(\mathbf{x}, y) \sim \mathcal{D}}{\mathbb{E}} &[ (\mathbbm{1}[\mathbf{x}_{\tau(t)+m+i}^{\top} \mathbf{w}_*>0]-\mathbbm{1}[\mathbf{x}_{\tau(t)+m+i}^{\top} \mathbf{w}_{t-1}>0]) \mathbf{x}_{\tau(t)+m+i} \mathbf{x}_{\tau(t)+m+i}^{\top} \\
    &\cdot  (\mathbbm{1}[\mathbf{x}_{\tau(t)+m+j}^{\top} \mathbf{w}_*>0]-\mathbbm{1}[\mathbf{x}_{\tau(t)+m+j}^{\top} \mathbf{w}_{t-1}>0]) \mathbf{x}_{\tau(t)+m+j} \mathbf{x}_{\tau(t)+m+j}^{\top} ]= 0.
    \end{align*}
    It implies that 
    \begin{align*}
        \underset{(\mathbf{x}, y) \sim \mathcal{D}}{\mathbb{E}}[\| \eta \mathbf{u}_{t-1}\|_{\mathbf{H}}^2] = \frac{\eta^2}{b^2} \underset{(\mathbf{x}, y) \sim \mathcal{D}}{\mathbb{E}} &[\sum_{i=0}^{b-1} (\mathbbm{1}[\mathbf{x}_{\tau(t)+m+i}^{\top} \mathbf{w}_*>0]-\mathbbm{1}[\mathbf{x}_{\tau(t)+m+i}^{\top} \mathbf{w}_{t-1}>0])^2 \\
        & \cdot (\mathbf{w}_*^{\top} \mathbf{x}_{\tau(t)+m+i} \mathbf{x}_{\tau(t)+m+i}^{\top}  \mathbf{H} \mathbf{x}_{\tau(t)+m+i} \mathbf{x}_{\tau(t)+m+i}^{\top} \mathbf{w}_* )]\\
         = \frac{\eta^2}{b^2} \underset{(\mathbf{x}, y) \sim \mathcal{D}}{\mathbb{E}} &[\sum_{i=0}^{b-1} (\mathbbm{1}[\mathbf{x}_{\tau(t)+m+i}^{\top} \mathbf{w}_{t-1}>0, \mathbf{x}_{\tau(t)+m+i}^{\top} \mathbf{w}_*<0]+\mathbbm{1}[\mathbf{x}_{\tau(t)+m+i}^{\top} \mathbf{w}_{t-1}<0, \mathbf{x}_{\tau(t)+m+i}^{\top} \mathbf{w}_*>0])\\
        &\cdot (\mathbf{w}_*^{\top} \mathbf{x}_{\tau(t)+m+i} \mathbf{x}_{\tau(t)+m+i}^{\top}  \mathbf{H} \mathbf{x}_{\tau(t)+m+i} \mathbf{x}_{\tau(t)+m+i}^{\top} \mathbf{w}_* )]\\
         = \frac{2\eta^2}{b^2} \underset{(\mathbf{x}, y) \sim \mathcal{D}}{\mathbb{E}} &[\sum_{i=0}^{b-1} (\mathbbm{1}[\mathbf{x}_{\tau(t)+m+i}^{\top} \mathbf{w}_{t-1}>0, \mathbf{x}_{\tau(t)+m+i}^{\top} \mathbf{w}_*<0]) \\
         & \cdot (\mathbf{w}_*^{\top} \mathbf{x}_{\tau(t)+m+i} \mathbf{x}_{\tau(t)+m+i}^{\top}  \mathbf{H} \mathbf{x}_{\tau(t)+m+i} \mathbf{x}_{\tau(t)+m+i}^{\top} \mathbf{w}_* )],
    \end{align*}
    where we use the following fact in the second equality
    \begin{align*}
        &(\mathbbm{1}[\mathbf{x}_{\tau(t)+m+i}^{\top} \mathbf{w}_*>0]-\mathbbm{1}[\mathbf{x}_{\tau(t)+m+i}^{\top} \mathbf{w}_{t-1}>0])^2 \\
        &= (\mathbbm{1}[\mathbf{x}_{\tau(t)+m+i}^{\top} \mathbf{w}_{t-1}>0, \mathbf{x}_{\tau(t)+m+i}^{\top} \mathbf{w}_*<0]+\mathbbm{1}[\mathbf{x}_{\tau(t)+m+i}^{\top} \mathbf{w}_{t-1}<0, \mathbf{x}_{\tau(t)+m+i}^{\top} \mathbf{w}_*>0]). 
    \end{align*}
    Now we move on to the crossing term in \cref{eq:w_t-w*|H_mb}.
    \begin{align*}
         \underset{(\mathbf{x}, y) \sim \mathcal{D}}{\mathbb{E}} & [\eta \mathbf{u}_{t-1}^{\top} \mathbf{H} (\mathbf{P}_t (\mathbf{w}_{t-1}- \mathbf{w}_*))] + \underset{(\mathbf{x}, y) \sim \mathcal{D}}{\mathbb{E}} [\eta (\mathbf{P}_t (\mathbf{w}_{t-1}- \mathbf{w}_*))^{\top} \mathbf{H} \mathbf{u}_{t-1} ]  \\
        = \eta \underset{(\mathbf{x}, y) \sim \mathcal{D}}{\mathbb{E}} &[(\frac{1}{b}\sum_{i=0}^{b-1} (\mathbbm{1}[\mathbf{x}_{\tau(t)+m+i}^{\top} \mathbf{w}_*>0]-\mathbbm{1}[\mathbf{x}_{\tau(t)+m+i}^{\top} \mathbf{w}_{t-1}>0]) \mathbf{x}_{\tau(t)+m+i} \mathbf{x}_{\tau(t)+m+i}^{\top} \mathbf{w}_*)^{\top} \mathbf{H} \\
        &\cdot ((\mathbf{I}-  \frac{\eta}{b} \sum_{i=0}^{b-1} \mathbbm{1}[\mathbf{x}_{\tau(t)+m+i}^{\top} \mathbf{w}_{t-1}>0] \mathbf{x}_{\tau(t)+m+i} \mathbf{x}_{\tau(t)+m+i}^{\top}  ) (\mathbf{w}_{t-1}- \mathbf{w}_*))\\
        &+ ((\mathbf{I}-  \frac{\eta}{b} \sum_{i=0}^{b-1} \mathbbm{1}[\mathbf{x}_{\tau(t)+m+i}^{\top} \mathbf{w}_{t-1}>0] \mathbf{x}_{\tau(t)+m+i} \mathbf{x}_{\tau(t)+m+i}^{\top}  ) (\mathbf{w}_{t-1}- \mathbf{w}_*))^{\top} \mathbf{H} \\
        &\cdot (\frac{1}{b}\sum_{i=0}^{b-1} (\mathbbm{1}[\mathbf{x}_{\tau(t)+m+i}^{\top} \mathbf{w}_*>0]-\mathbbm{1}[\mathbf{x}_{\tau(t)+m+i}^{\top} \mathbf{w}_{t-1}>0]) \mathbf{x}_{\tau(t)+m+i} \mathbf{x}_{\tau(t)+m+i}^{\top} \mathbf{w}_*)].
    \end{align*}
    Similarly, for any $i \neq j$, \cref{lem:sysm} holds that 
    \begin{align*}
        \underset{(\mathbf{x}, y) \sim \mathcal{D}}{\mathbb{E}} &[  (\mathbbm{1}[\mathbf{x}_{\tau(t)+m+i}^{\top} \mathbf{w}_*>0]-\mathbbm{1}[\mathbf{x}_{\tau(t)+m+i}^{\top} \mathbf{w}_{t-1}>0]) \mathbf{x}_{\tau(t)+m+i} \mathbf{x}_{\tau(t)+m+i}^{\top} \\
        & \cdot \mathbbm{1}[\mathbf{x}_{\tau(t)+m+j}^{\top} \mathbf{w}_{t-1}>0] \mathbf{x}_{\tau(t)+m+j} \mathbf{x}_{\tau(t)+m+j}^{\top}] = 0.
    \end{align*}
    Therefore, the crossing term can be represented as
    \begin{align*}
         = \eta \underset{(\mathbf{x}, y) \sim \mathcal{D}}{\mathbb{E}} & [(\frac{1}{b}\sum_{i=0}^{b-1} (\mathbbm{1}[\mathbf{x}_{\tau(t)+m+i}^{\top} \mathbf{w}_*>0]-\mathbbm{1}[\mathbf{x}_{\tau(t)+m+i}^{\top} \mathbf{w}_{t-1}>0]) )  \\
        & \cdot (\mathbf{w}_*^{\top} \mathbf{x}_{\tau(t)+m+i} \mathbf{x}_{\tau(t)+m+i}^{\top}  \mathbf{H} (\mathbf{w}_{t-1}- \mathbf{w}_*) + (\mathbf{w}_{t-1}- \mathbf{w}_*)^{\top} \mathbf{H} \mathbf{x}_{\tau(t)+m+i} \mathbf{x}_{\tau(t)+m+i}^{\top} \mathbf{w}_*) \\
        & - \frac{\eta}{b^2}( (\sum_{i=0}^{b-1} (\mathbbm{1}[\mathbf{x}_{\tau(t)+m+i}^{\top} \mathbf{w}_*>0]-\mathbbm{1}[\mathbf{x}_{\tau(t)+m+i}^{\top} \mathbf{w}_{t-1}>0]) \cdot ( \mathbbm{1}[\mathbf{x}_{\tau(t)+m+i}^{\top} \mathbf{w}_{t-1}>0]  ) )\\
        & \cdot 2 (\mathbf{w}_*^{\top} \mathbf{x}_{\tau(t)+m+i} \mathbf{x}_{\tau(t)+m+i}^{\top}  \mathbf{H} \mathbf{x}_{\tau(t)+m+i} \mathbf{x}_{\tau(t)+m+i}^{\top} (\mathbf{w}_{t-1}- \mathbf{w}_*) )]\\
    \end{align*}
    Notice that
    \begin{align*}
        &- (\sum_{i=0}^{b-1} (\mathbbm{1}[\mathbf{x}_{\tau(t)+m+i}^{\top} \mathbf{w}_*>0]-\mathbbm{1}[\mathbf{x}_{\tau(t)+m+i}^{\top} \mathbf{w}_{t-1}>0]) \cdot ( \mathbbm{1}[\mathbf{x}_{\tau(t)+m+i}^{\top} \mathbf{w}_{t-1}>0]  ) )\\
        &= \sum_{i=0}^{b-1} (\mathbbm{1}[\mathbf{x}_{\tau(t)+m+i}^{\top} \mathbf{w}_{t-1}>0] - \mathbbm{1}[\mathbf{x}_{\tau(t)+m+i}^{\top} \mathbf{w}_{t-1}>0]  \cdot \mathbbm{1}[\mathbf{x}_{\tau(t)+m+i}^{\top} \mathbf{w}_*>0])\\
        & =\sum_{i=0}^{b-1} \mathbbm{1}[\mathbf{x}_{\tau(t)+m+i}^{\top} \mathbf{w}_{t-1}>0,\mathbf{x}_{\tau(t)+m+i}^{\top} \mathbf{w}_*>0 ].
    \end{align*}
    Combining the \cref{lem:sysm}, the crossing term holds that
    \begin{align*}
        \text{crossing term}= \underset{(\mathbf{x}, y) \sim \mathcal{D}}{\mathbb{E}}&  [ \frac{2\eta^2}{b^2}\sum_{i=0}^{b-1}( \mathbbm{1}[\mathbf{x}_{\tau(t)+m+i}^{\top} \mathbf{w}_{t-1}>0,\mathbf{x}_{\tau(t)+m+i}^{\top} \mathbf{w}_*>0 ]) \\
        & \cdot (\mathbf{w}_*^{\top} \mathbf{x}_{\tau(t)+m+i} \mathbf{x}_{\tau(t)+m+i}^{\top}  \mathbf{H} \mathbf{x}_{\tau(t)+m+i} \mathbf{x}_{\tau(t)+m+i}^{\top} (\mathbf{w}_{t-1}- \mathbf{w}_*) )].
    \end{align*}
    Let us add $\underset{(\mathbf{x}, y) \sim \mathcal{D}}{\mathbb{E}}[\| \eta \mathbf{u}_{t-1}\|_{\mathbf{H}}^2]$ and the crossing term together
    \begin{align*}
        &\underset{(\mathbf{x}, y) \sim \mathcal{D}}{\mathbb{E}}[\| \eta \mathbf{u}_{t-1}\|_{\mathbf{H}}^2] + \text{crossing term}\\
        &= \frac{2\eta^2}{b^2} \underset{(\mathbf{x}, y) \sim \mathcal{D}}{\mathbb{E}} [\sum_{i=0}^{b-1} (\mathbbm{1}[\mathbf{x}_{\tau(t)+m+i}^{\top} \mathbf{w}_{t-1}>0, \mathbf{x}_{\tau(t)+m+i}^{\top} \mathbf{w}_*<0]) \cdot (\mathbf{w}_*^{\top} \mathbf{x}_{\tau(t)+m+i} \mathbf{x}_{\tau(t)+m+i}^{\top}  \mathbf{H} \mathbf{x}_{\tau(t)+m+i} \mathbf{x}_{\tau(t)+m+i}^{\top} \mathbf{w}_{t-1} )].
    \end{align*}
    It is clear that $\underset{(\mathbf{x}, y) \sim \mathcal{D}}{\mathbb{E}}[\| \eta \mathbf{u}_{t-1}\|_{\mathbf{H}}^2] + \text{crossing term} \leq 0$. Moreover, according to \cref{lem:p_t^2_MB} and \cref{lem:v_t^2_MB}, we have that
    \begin{align*}
        &\underset{(\mathbf{x}, y) \sim \mathcal{D}}{\mathbb{E}}[\|\mathbf{P}_t (\mathbf{w}_{t-1}- \mathbf{w}_*)\|_{\mathbf{H}}^2] + \underset{(\mathbf{x}, y) \sim \mathcal{D}}{\mathbb{E}}[\| \eta \mathbf{v}_{t-1}\|_{\mathbf{H}}^2] \\
        &\leq (1-\eta \mu) \underset{(\mathbf{x}, y) \sim \mathcal{D}}{\mathbb{E}}[\|\mathbf{w}_{t-1}-\mathbf{w}^*\|_{\mathbf{H}}^2]+\frac{\eta^2}{b} \operatorname{Tr}(\mathbf{H} \boldsymbol{\Sigma})-0+\underset{(\mathbf{x}, y) \sim \mathcal{D}}{\mathbb{E}}[\eta^2 \frac{4 s_{t-1}^2 f^2}{b^2}] \operatorname{Tr}(\mathbf{H}).
    \end{align*}
    Therefore, the update of $\underset{(\mathbf{x}, y) \sim \mathcal{D}}{\mathbb{E}}[\|\mathbf{w}_t-\mathbf{w}^*\|_{\mathbf{H}}^2]$ in \cref{eq:w_t-w*|H_mb} holds that
    \begin{align*}
        \underset{(\mathbf{x}, y) \sim \mathcal{D}}{\mathbb{E}}[\|\mathbf{w}_t-\mathbf{w}^*\|_{\mathbf{H}}^2] \leq (1-\eta \mu) \underset{(\mathbf{x}, y) \sim \mathcal{D}}{\mathbb{E}}[\|\mathbf{w}_{t-1}-\mathbf{w}^*\|_{\mathbf{H}}^2]+\frac{\eta^2}{b} \operatorname{Tr}(\mathbf{H} \boldsymbol{\Sigma})-0+\underset{(\mathbf{x}, y) \sim \mathcal{D}}{\mathbb{E}}[\eta^2 \frac{4 s_{t-1}^2 f^2}{b^2}] \operatorname{Tr}(\mathbf{H}).
    \end{align*}
    Considering the adaptive clipping algorithm, we have
    $$
    s_t \leq R_x C_2 \log ^{2 a} N(\sqrt{\|\mathbf{H}\|}\|\mathbf{w}_t-\mathbf{w}_*\|+\|\mathbf{w}_*\|+\sigma+\Delta).
    $$
    Similar to one sample case, the recursion of $\underset{(\mathbf{x}, y) \sim \mathcal{D}}{\mathbb{E}}[\|\mathbf{w}_t-\mathbf{w}^*\|_{\mathbf{H}}^2]$ will be
    \begin{align*}
    \underset{(\mathbf{x}, y) \sim \mathcal{D}}{\mathbb{E}}[\|\mathbf{w}_t-\mathbf{w}^*\|_{\mathbf{H}}^2] &\leq (1-{\eta \mu}) \underset{(\mathbf{x}, y) \sim \mathcal{D}}{\mathbb{E}}[\|\mathbf{w}_{t-1}-\mathbf{w}^*\|_{\mathbf{H}}^2]\\
    &+\frac{\eta^2\sigma^2}{b}\operatorname{Tr}(\mathbf{H}) +{16 R_x^2 C_2^2 \log ^{4 a} N \eta^2 \frac{f^2}{b^2}} \operatorname{Tr}(\mathbf{H}) (\kappa\|\mathbf{w}_t-\mathbf{w}_*\|_{\mathbf{H}}^2+\|\mathbf{w}_*\|^2+\sigma^2+\Delta^2)\\
    &= (1-({\eta \mu}-16 \eta^2 {\frac{f^2}{b^2}} C_2^2 R_x^2 \kappa \log ^{4 a} N \operatorname{Tr}(\mathbf{H}))) \underset{(\mathbf{x}, y) \sim \mathcal{D}}{\mathbb{E}}[\|\mathbf{w}_{t-1}-\mathbf{w}^*\|_{\mathbf{H}}^2] \\
    & +\frac{\eta^2\sigma^2}{b} \operatorname{Tr}(\mathbf{H} )+16 \eta^2 {\frac{f^2}{b^2}} C_2^2 R_x^2 \log ^{4 a} N(\sigma^2+\Delta^2+ \|\mathbf{w}_*\|^2) \operatorname{Tr}(\mathbf{H}).
\end{align*}
    Notice that $T \cdot (b+m) = N $, thus if we have
    $\frac{\eta\mu}{2} \geq 16 \eta^2 {\frac{f^2}{b^2}} C_2^2 R_x^2 \kappa \log ^{4 a} N \operatorname{Tr}(\mathbf{H}) $, it equals to
    \begin{align*}
        (\frac{N}{T}-m)^2 \geq \frac{32 \eta f^2 C_2^2 R_x^2 \kappa \log ^{4 a} N \operatorname{Tr}(\mathbf{H})}{\mu},
    \end{align*}
    which implies that
    \begin{align*}
    &\underset{(\mathbf{x}, y) \sim \mathcal{D}}{\mathbb{E}}[\|\mathbf{w}_t-\mathbf{w}^*\|_{\mathbf{H}}^2]\\
    &\leq (1-\frac{\eta \mu}{2}) \underset{(\mathbf{x}, y) \sim \mathcal{D}}{\mathbb{E}}[\|\mathbf{w}_{t-1}-\mathbf{w}^*\|_{\mathbf{H}}^2] \frac{\eta^2\sigma^2}{b} \operatorname{Tr}(\mathbf{H} )+16 \eta^2 {\frac{f^2}{b^2}} C_2^2 R_x^2 \log ^{4 a} N(\sigma^2+\Delta^2+ \|\mathbf{w}_*\|^2) \operatorname{Tr}(\mathbf{H})\\
    &\leq (1-\eta \mu / 2)^t\|\mathbf{w}_0-\mathbf{w}^*\|_{\mathbf{H}}^2+\frac{2}{\eta \mu}(\frac{\eta^2\sigma^2}{b} \operatorname{Tr}(\mathbf{H} )+16 \eta^2 {\frac{f^2}{b^2}} C_2^2 R_x^2 \log ^{4 a} N(\sigma^2+\Delta^2+ \|\mathbf{w}_*\|^2) \operatorname{Tr}(\mathbf{H}))\\
    &\leq e^{-\eta \mu t / 2}\|\mathbf{w}^*\|_{\mathbf{H}}^2 +\frac{2}{\eta \mu}(\frac{\eta^2\sigma^2}{b} \operatorname{Tr}(\mathbf{H} )+16 \eta^2 {\frac{f^2}{b^2}} C_2^2 R_x^2 \log ^{4 a} N(\sigma^2+\Delta^2+ \|\mathbf{w}_*\|^2) \operatorname{Tr}(\mathbf{H})).
\end{align*}
Substituting the above to the $\underset{(\mathbf{x}, y) \sim \mathcal{D}}{\mathbb{E}}[s_t^2]$, similarly, we will have the following results
\begin{align*}
\Gamma^2 & =\max  \{\operatorname{Upper-Bound} (\underset{(\mathbf{x}, y) \sim \mathcal{D}}{\mathbb{E}} [s_{0}^2 ] ), \ldots, \operatorname{Upper-Bound} (\underset{(\mathbf{x}, y) \sim \mathcal{D}}{\mathbb{E}} [s_T^2 ] ) \} \\
& =\operatorname{Upper-Bound} (\underset{(\mathbf{x}, y) \sim \mathcal{D}}{\mathbb{E}} [s_{0}^2 ] ) \\
& =4 C_2^2 R_x^2 \log ^{4 a} N(\kappa \underset{(\mathbf{x}, y) \sim \mathcal{D}}{\mathbb{E}}[\|\mathbf{w}_{0}-\mathbf{w}^*\|_{\mathbf{H}}^2]+\sigma^2+\|\mathbf{w}_*\|^2+\Delta^2).
\end{align*}

Before presenting the utility guarantee, we first need to redefine certain notations and properties.

We denote the recursion:
\begin{equation}\label{eq:recur_batch}
    \begin{aligned}
        (\mathcal{I}-\mathcal{T}(\eta, b, \mathbf{H})) \circ \mathbf{A}_{t-1} &= \mathbf{A}_{t-1}-\frac{\eta}{2}(\mathbf{H} \mathbf{A}_{t-1}+\mathbf{A}_{t-1} \mathbf{H})+\frac{\eta^2}{b} ( \frac{1}{2}\mathcal{M} +(b-1) \frac{1}{4}\mathbf{H}^2) \circ \mathbf{A}_{t-1}\\
        (\mathcal{I}- \widetilde{\mathcal{T}}(\eta, b, \mathbf{H})) \circ \mathbf{A}_{t-1} &= \mathbf{A}_{t-1}-\frac{\eta}{2}(\mathbf{H} \mathbf{A}_{t-1}+\mathbf{A}_{t-1} \mathbf{H})+\frac{\eta^2}{4} \mathbf{H}^2 \circ \mathbf{A}_{t} = (\mathbf{I}-\frac{\eta}{2} \mathbf{H})\mathbf{A}_{t}(\mathbf{I}-\frac{\eta}{2} \mathbf{H}),
    \end{aligned}
\end{equation}
where $\mathcal{I} \circ \mathbf{A}=\mathbf{A}$, $ \mathcal{M} \circ \mathbf{A}=\mathbb{E}[(\mathbf{x}^{\top} \mathbf{A} \mathbf{x}) \mathbf{x} \mathbf{x}^{\top}]$ and $\widetilde{\mathcal{M}} \circ \mathbf{A}=\mathbf{H} \mathbf{A} \mathbf{H}$ for a symmetric matrix $\mathbf{A}$. For simplicity, we will use $(\mathcal{I}-\mathcal{T})$ and $(\mathcal{I}- \widetilde{\mathcal{T}})$ in place of the complete notation.

It can be readily understood that the following properties are satisfied:
\begin{lemma}[\citet{zou2021benign}] \label{lem:properties}
    An operator $\mathcal{O}$, when defined on symmetric matrices, is termed a Positive Semi-Definite (PSD) mapping, if $\mathbf{A} \succeq 0$ implies $\mathcal{O} \circ \mathbf{A} \succeq 0$. Consequently, we have:
    \begin{itemize}
        \item [1.] $\mathcal{M}$ and $\widetilde{\mathcal{M}}$ are both PSD mappings.
        \item [2.] $\mathcal{M}-\widetilde{\mathcal{M}}$ and $\widetilde{\mathcal{T}}-\mathcal{T}$ are both PSD mappings.
        \item [3.] $\mathcal{I}-\eta\mathcal{T}$ and $\mathcal{I}-\eta\widetilde{\mathcal{T}}$ are both PSD mappings.
        \item [4.] If $0<\eta<1 / \lambda_1$, then $\widetilde{\mathcal{T}}^{-1}$ exists, and is a PSD mapping.
        \item [5.] If $0<\eta<1 /(\alpha \operatorname{tr}(\mathbf{H}))$, then $\mathcal{T}^{-1} \circ \mathbf{A}$ exists for PSD matrix $\mathbf{A}$, and $\mathcal{T}^{-1}$ is a PSD mapping.
    \end{itemize}
\end{lemma}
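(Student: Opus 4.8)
The plan is to verify the five items in the order listed, since $4$ and $5$ rely on $1$--$3$; the argument follows \cite{zou2021benign}, carried over to the present operators. Throughout write $\mathcal M\circ\mathbf A=\mathbb{E}[(\mathbf x^{\top}\mathbf A\mathbf x)\,\mathbf x\mathbf x^{\top}]$ and $\widetilde{\mathcal M}\circ\mathbf A=\mathbf H\mathbf A\mathbf H$, and recall from \cref{eq:recur_batch} (and the analogous DP-GLMtron recursion) that $\mathcal I-\mathcal T$ and $\mathcal I-\widetilde{\mathcal T}$ differ from the identity only through a positive multiple of $\mathcal M$, resp.\ $\widetilde{\mathcal M}$; the exact coefficients (the factors $\tfrac12,\tfrac14,\tfrac1b$, or the $\eta$-dependence, in the various normalizations used in the paper) never affect a sign, so I suppress them. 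I will also use that, with respect to the trace inner product, the maps $\mathbf A\mapsto\mathbf H\mathbf A+\mathbf A\mathbf H$, $\mathcal M$, $\widetilde{\mathcal M}$, $\mathcal T$, $\widetilde{\mathcal T}$ are self-adjoint — immediate from cyclicity of the trace and the symmetry of $\mathbb{E}[(\mathbf x^{\top}\mathbf A\mathbf x)(\mathbf x^{\top}\mathbf B\mathbf x)]$ in $(\mathbf A,\mathbf B)$ — so they have real spectra.

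\emph{Items $1$--$3$.} For $1$: if $\mathbf A\succeq0$ then $\mathbf x^{\top}\mathbf A\mathbf x\ge0$ pointwise, so $\mathcal M\circ\mathbf A$ is an expectation of the PSD matrices $\mathbf x\mathbf x^{\top}$ and hence PSD, while $\widetilde{\mathcal M}\circ\mathbf A=\mathbf H\mathbf A\mathbf H$ is a congruence of a PSD matrix, hence PSD. For $2$, the core inequality is $\mathcal M\succeq\widetilde{\mathcal M}$ on the PSD cone: writing $\mathbf A=\sum_i\lambda_i\mathbf u_i\mathbf u_i^{\top}$ with $\lambda_i\ge0$ and $\{\mathbf u_i\}$ orthonormal, for any $\mathbf v$,
\[
\mathbf v^{\top}(\mathcal M\circ\mathbf A)\mathbf v=\sum_i\lambda_i\,\mathbb{E}\big[(\mathbf u_i^{\top}\mathbf x)^{2}(\mathbf v^{\top}\mathbf x)^{2}\big]\ \ge\ \sum_i\lambda_i\big(\mathbb{E}[(\mathbf u_i^{\top}\mathbf x)(\mathbf v^{\top}\mathbf x)]\big)^{2}=\sum_i\lambda_i(\mathbf u_i^{\top}\mathbf H\mathbf v)^{2}=\mathbf v^{\top}(\widetilde{\mathcal M}\circ\mathbf A)\mathbf v,
\]
by Jensen's inequality $\mathbb{E}[Y^2]\ge(\mathbb{E}Y)^2$ applied to $Y=(\mathbf u_i^{\top}\mathbf x)(\mathbf v^{\top}\mathbf x)$; hence $\widetilde{\mathcal T}-\mathcal T$, being a positive multiple of $\mathcal M-\widetilde{\mathcal M}$ (a positive combination of $\mathcal M-\widetilde{\mathcal M}$ and $\mathcal M$ in the minibatch normalization), is a PSD mapping. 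For $3$, $(\mathcal I-\mathcal T)\circ\mathbf A=\mathbb{E}[\mathbf P\mathbf A\mathbf P]$, where $\mathbf P$ is the symmetric one-step matrix obtained from a draw of the batch after symmetrizing the indicators via \cref{asm:symmetric} (plus, in one normalization, an extra $\eta^{2}\mathcal M\circ\mathbf A\succeq0$): an expectation of congruences of $\mathbf A$, hence PSD. Likewise $(\mathcal I-\widetilde{\mathcal T})\circ\mathbf A=(\mathbf I-\eta\mathbf H)\mathbf A(\mathbf I-\eta\mathbf H)\succeq0$.

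\emph{Items $4$--$5$.} I would prove both via the semigroup identity $\mathcal T^{-1}=\int_0^{\infty}e^{-s\mathcal T}\,ds$ (and the same for $\widetilde{\mathcal T}$), which is valid precisely when the self-adjoint operator $\mathcal T$, resp.\ $\widetilde{\mathcal T}$, is positive definite on the space of symmetric matrices. Positivity of $\widetilde{\mathcal T}$ is transparent: in an eigenbasis of $\mathbf H$ it acts entrywise by multiplication by $\lambda_i+\lambda_j-\tfrac{\eta}{2}\lambda_i\lambda_j=\lambda_j+\lambda_i(1-\tfrac{\eta}{2}\lambda_j)>0$ whenever $\eta<1/\lambda_1$ (using $\mathbf H\succ0$). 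Positivity of $\mathcal T$ is where $\eta<1/(\alpha\operatorname{tr}\mathbf H)$ and \cref{asm:fourth-moment} enter: they force the noiseless recursion $\mathbf A_t=(\mathcal I-\mathcal T)\circ\mathbf A_{t-1}$ to contract geometrically — exactly the content of the estimates on products of per-step matrices established earlier in the paper — so $0$ is not in the spectrum of $\mathcal T$. The inverse is then a PSD mapping because each $e^{-s\mathcal T}$ is: it is $\lim_{n\to\infty}(\mathcal I-\tfrac sn\mathcal T)^{n}$, and for $n$ large $\mathcal I-\tfrac sn\mathcal T$ is a convex combination of $\mathcal I$ and the PSD mapping from item $3$, hence PSD; PSD mappings are closed under composition and under limits (the PSD cone is closed), so $e^{-s\mathcal T}$ is PSD, and hence so is $\mathcal T^{-1}=\int_0^\infty e^{-s\mathcal T}\,ds$, a limit of Riemann sums of PSD mappings. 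For $\widetilde{\mathcal T}$ one may alternatively invoke the Schur product theorem directly, since its inverse is the Hadamard multiplier by $\big(\int_0^\infty e^{-s(\lambda_i+\lambda_j)}e^{s\eta\lambda_i\lambda_j/2}\,ds\big)$, a nonnegative mixture of rank-one PSD matrices.

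\emph{Main obstacle.} The delicate step is the positive-definiteness of $\mathcal T$ in item $5$: unlike $\widetilde{\mathcal T}$, the operator $\mathcal T$ does not diagonalize against $\mathbf H$, so one must use \cref{asm:fourth-moment} to control the fourth-moment operator sharply enough that the stated condition $\eta<1/(\alpha\operatorname{tr}\mathbf H)$ — rather than one degraded by $\kappa$ or $\lambda_1$ — already rules out a spectral zero; the clean route is to reuse the geometric-decay bounds on the iterate products already proved in the paper instead of rederiving them. A secondary, purely cosmetic issue is reconciling the several slightly different normalizations of $\mathcal T,\widetilde{\mathcal T}$ that appear (the DP-GLMtron display versus \cref{eq:recur_batch}), but since every assertion of the lemma is about signs, this only rescales constants.
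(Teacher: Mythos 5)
Items 1--3 of your proposal are correct and close in spirit to the paper's argument, with two harmless substitutions: for item 2 the paper uses the identity $(\mathcal{M}-\widetilde{\mathcal{M}})\circ\mathbf{A}=\mathbb{E}[(\mathbf{x}\mathbf{x}^{\top}-\mathbf{H})\mathbf{A}(\mathbf{x}\mathbf{x}^{\top}-\mathbf{H})]\succeq 0$ where you use Jensen on an eigendecomposition, and for item 3 the paper writes $(\mathcal{I}-\eta\mathcal{T})\circ\mathbf{A}=(\mathbf{I}-\tfrac{\eta}{2}\mathbf{H})\mathbf{A}(\mathbf{I}-\tfrac{\eta}{2}\mathbf{H})+(\widetilde{\mathcal{T}}-\mathcal{T})\circ\mathbf{A}$ where you use the (correct) representation as $\mathbb{E}[\mathbf{P}\mathbf{A}\mathbf{P}]$; both variants are valid. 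Item 4 the paper does not prove at all (it cites Lemma B.1 of \cite{zou2021benign}), so your Hadamard-multiplier argument is genuinely extra content and is fine, provided $\mathbf{H}\succ 0$ (implicit throughout the paper via $\mu=\lambda_d>0$).

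The genuine gap is in item 5. Your representation $\mathcal{T}^{-1}=\int_0^{\infty}e^{-s\mathcal{T}}\,ds$ (equivalently the paper's simpler discrete series $\mathcal{T}^{-1}\circ\mathbf{A}=\eta\sum_{t\ge 0}(\mathcal{I}-\eta\mathcal{T})^{t}\circ\mathbf{A}$) yields a PSD mapping only once convergence is established under the \emph{stated} condition $\eta<1/(\alpha\operatorname{tr}(\mathbf{H}))$, and this is precisely the step you leave open: you propose to ``reuse the geometric-decay bounds on the iterate products already proved in the paper,'' but those bounds (e.g.\ \cref{lem:p_t^2_MB}, $\mathbb{E}[\mathbf{P}_t^{\top}\mathbf{P}_t]\preceq\mathbf{I}-\eta\mathbf{H}$) are only proved under the stronger restriction $\eta\le b/(R_x^2+(b-1)\|\mathbf{H}\|_2)$ with $R_x^2=\alpha\operatorname{tr}(\mathbf{H})\log^{2a}(1/b_{\mathbf{x}})\ge\alpha\operatorname{tr}(\mathbf{H})$, so they do not cover the full range of step sizes the lemma asserts, and the spectral-gap claim for $\mathcal{T}$ is never actually derived. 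The paper closes exactly this hole with a short trace computation that your write-up is missing: taking traces in \cref{asm:fourth-moment} gives $\operatorname{tr}(\mathcal{M}\circ\mathbf{A})\le\alpha\operatorname{tr}(\mathbf{H})\operatorname{tr}(\mathbf{H}\mathbf{A})$ and $\operatorname{tr}(\mathbf{H}\mathbf{A}\mathbf{H})\le\operatorname{tr}(\mathbf{H})\operatorname{tr}(\mathbf{H}\mathbf{A})$ for PSD $\mathbf{A}$, whence by \cref{eq:recur_batch} $\operatorname{tr}\bigl((\mathcal{I}-\eta\mathcal{T})\circ\mathbf{A}\bigr)\le\operatorname{tr}(\mathbf{A})-\tfrac{\eta}{2}\operatorname{tr}(\mathbf{H}\mathbf{A})\le(1-\tfrac{\eta}{2}\lambda_d)\operatorname{tr}(\mathbf{A})$ under the stated step size, so $\sum_{t}\operatorname{tr}\bigl((\mathcal{I}-\eta\mathcal{T})^{t}\circ\mathbf{A}\bigr)\le 2\operatorname{tr}(\mathbf{A})/(\eta\lambda_d)<\infty$ and the series of PSD terms converges. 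Adding this bound completes your item 5; it also makes the continuous-semigroup and self-adjointness machinery unnecessary, since the discrete Neumann series together with item 3 already does the job.
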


\begin{proof}[{\bf Proof}]
    The subsequent proofs are summarized from \cite{jain2018parallelizing,zou2021benign}, and are included herein for the sake of completeness.
    \begin{itemize}
        \item [1.] For any PSD matrix $\mathbf{A} \succeq 0$, by definition, we have
        $$
        \begin{aligned}
        & \mathcal{M} \circ \mathbf{A}=\mathbb{E}[\mathbf{x} \mathbf{x}^{\top} \mathbf{A} \mathbf{x} \mathbf{x}^{\top}] \succeq 0, \\
        & \widetilde{\mathcal{M}} \circ \mathbf{A}=\mathbf{H} \mathbf{A H} \succeq 0 .
        \end{aligned}
        $$
        \item [2.] For any PSD matrix $\mathbf{A} \succeq 0$,
        $$
        (\mathcal{M}-\widetilde{\mathcal{M}}) \circ \mathbf{A}=\mathbb{E}[\mathbf{x} \mathbf{x}^{\top} \mathbf{A} \mathbf{x} \mathbf{x}^{\top}]-\mathbf{H} \mathbf{A} \mathbf{H}=\mathbb{E}[(\mathbf{x} \mathbf{x}^{\top}-\mathbf{H}) \mathbf{A}(\mathbf{x} \mathbf{x}^{\top}-\mathbf{H})] \succeq 0.
        $$
        Also, we have $\widetilde{\mathcal{T}}-\mathcal{T}=\frac{\eta^2}{2b}\mathcal{M}-\frac{\eta^2}{4b}\widetilde{\mathcal{M}} \succeq 0$, which indicates $\mathcal{M}-\widetilde{\mathcal{M}}$ and $\widetilde{\mathcal{T}}-\mathcal{T}$ are both PSD mappings.
        \item [3.] For any $\mathrm{PSD}$ matrix $\mathbf{A} \succeq 0$, we have
        $$
        \begin{aligned}
        & (\mathcal{I}-\eta \mathcal{T}) \circ \mathbf{A}=(\mathbf{I}-\frac{\eta}{2} \mathbf{H}) \mathbf{A}(\mathbf{I}-\frac{\eta}{2} \mathbf{H}) + \frac{\eta^2}{2b}\mathcal{M}-\frac{\eta^2}{4b}\widetilde{\mathcal{M}} \succeq 0 \\
        & (\mathcal{I}-\eta \widetilde{\mathcal{T}}) \circ \mathbf{A}=(\mathbf{I}-\frac{\eta}{2} \mathbf{H}) \mathbf{A}(\mathbf{I}-\frac{\eta}{2} \mathbf{H}) \succeq 0 .
        \end{aligned}
        $$
        \item[4.] The proof adheres to Lemma B.1 in \citet{zou2021benign}.
        \item[5.] For any finite PSD matrix $\mathbf{A}$, we have:
        $$\mathcal{T}^{-1} \circ \mathbf{A}=\eta \sum_{t=0}^{\infty}(\mathcal{I}-\eta \mathcal{T})^t \circ \mathbf{A}.$$
        It is evident that if the right-hand side exists, it must be PSD, owing to the fact that $\mathcal{I}-\eta \mathcal{T}$ is a PSD mapping. Demonstrating that the trace of $\sum_{t=0}^{\infty}(\mathcal{I}-\eta \mathcal{T})^t \circ \mathbf{A}$ is finite would suffice to establish the conclusion.
        $$
        \begin{aligned}
            \operatorname{tr}(\sum_{t=0}^{\infty}(\mathcal{I}-\eta \mathcal{T})^t \circ \mathbf{A})=\sum_{t=0}^{\infty} \operatorname{tr}((\mathcal{I}-\eta \mathcal{T})^t \circ \mathbf{A})= \sum_{t=0}^{\infty} \operatorname{tr}(\mathbf{A}_t)
        \end{aligned}
        $$
        By \cref{eq:recur_batch}, we have:
        \begin{align}\notag
        \operatorname{tr}(\mathbf{A}_t) &=\operatorname{tr}(\mathbf{A}_{t-1})-\eta \operatorname{tr}(\mathbf{H} \mathbf{A}_{t-1})+\frac{\eta^2}{2b} \operatorname{tr}(\mathbb{E}[\mathbf{x} \mathbf{x}^{\top} \mathbf{A}_{t-1} \mathbf{x} \mathbf{x}^{\top}] + \frac{b-1}{2} \mathbf{H} \mathbf{A}_{t-1} \mathbf{H})\\ \notag
        & \leq \operatorname{tr}(\mathbf{A}_{t-1})-\eta \operatorname{tr}(\mathbf{H} \mathbf{A}_{t-1}) + \frac{\eta^2}{2b} \operatorname{tr}(\mathbf{A}_{t-1} \alpha \operatorname{tr}(\mathbf{H})\mathbf{H} + \frac{b-1}{2} \operatorname{tr}(\mathbf{H})\mathbf{A}_{t-1}\mathbf{H })\\ \notag
        & \leq \operatorname{tr}(\mathbf{A}_{t-1})-\eta (1-\frac{\eta\alpha\operatorname{tr}(\mathbf{H}) }{2b} -\frac{\eta (b-1)\operatorname{tr}(\mathbf{H}) }{4b})\operatorname{tr}(\mathbf{H} \mathbf{A}_{t-1}) \\ \notag
        & \leq \operatorname{tr}(\mathbf{A}_{t-1})- \frac{\eta}{2}\operatorname{tr}(\mathbf{H} \mathbf{A}_{t-1}) \\ \notag
        & \leq (1-\frac{\eta}{2} \lambda_d) \operatorname{tr}(\mathbf{A}_{t-1}), 
        \end{align}
        where we use $\eta \leq \frac{2b}{2\alpha \operatorname{tr}(\mathbf{H}) + (b-1)\operatorname{tr}(\mathbf{H})}$ in the penultimate inequality.
        
        Hence, we have $\sum_{t=0}^{\infty} \operatorname{tr}(\mathbf{A}_t) \leq \frac{2\operatorname{tr}(\mathbf{A})}{\eta \lambda_d}<\infty$, which complete the proofs.
    \end{itemize}
\end{proof}
Now we are ready to provide the evolution of $\mathbf{A}_t$.

Consider the gradient norm not exceeding the clipping norm:
\begin{align*} 
\mathbf{w}_t -\mathbf{w}_*= & \mathbf{w}_{t-1}-\frac{\eta}{b} \sum_{i=1}^{b}(\mathbf{x}_{\tau(t)+m+i}(\operatorname{ReLU}(\mathbf{x}_{\tau(t)+m+i}^{\top} \mathbf{w}_{t})-y_{t,i})) -\frac{2 \eta \Gamma {f}}{b} \mathbf{g}_t-\mathbf{w}_* \\ 
= & \mathbf{w}_{t-1}-\frac{\eta}{b} \sum_{i=1}^{b}( \mathbbm{1}[\mathbf{x}_{\tau(t)+m+i}^{\top} \mathbf{w}_{t-1}>0] \cdot \mathbf{x}_{\tau(t)+m+i} \mathbf{x}_{\tau(t)+m+i}^{\top} \mathbf{w}_{t-1} \\
- &  \mathbbm{1}[\mathbf{x}_{\tau(t)+m+i}^{\top} \mathbf{w}_*>0] \cdot \mathbf{x}_{\tau(t)+m+i} \mathbf{x}_{\tau(t)+m+i}^{\top} \mathbf{w}_* )+\frac{\eta}{b} \sum_{i=1}^{b} z_t \mathbf{x}_{\tau(t)+m+i} - \frac{2 \eta \Gamma {f}}{b} \mathbf{g}_t \\
= & (\mathbf{I}-\frac{\eta}{b} \sum_{i=1}^{b}  \mathbbm{1}[\mathbf{x}_{\tau(t)+m+i}^{\top} \mathbf{w}_{t-1}>0] \mathbf{x}_{\tau(t)+m+i} \mathbf{x}_{\tau(t)+m+i}^{\top})(\mathbf{w}_{t-1}-\mathbf{w}_*) \\ \notag
& +\frac{\eta}{b} \sum_{i=1}^{b} (\mathbbm{1}[\mathbf{x}_{\tau(t)+m+i}^{\top} \mathbf{w}_*>0]-\mathbbm{1}[\mathbf{x}_{\tau(t)+m+i}^{\top} \mathbf{w}_{t-1}>0]) \mathbf{x}_{\tau(t)+m+i} \mathbf{x}_{\tau(t)+m+i}^{\top} \mathbf{w}_*+\frac{\eta}{b} \sum_{i=1}^{b}  z_t \mathbf{x}_{\tau(t)+m+i} - \frac{2 \eta \Gamma {f}}{b} \mathbf{g}_t .
\end{align*}

Let's consider the expected outer product:
\begin{equation}\label{eq:outer_mini}
    \begin{aligned}
        \underset{(\mathbf{x}, y) \sim \mathcal{D}}{\mathbb{E}}(\mathbf{w}_t-\mathbf{w}_*)^{\otimes 2} & = [\text{(quadratic term 1)} + \text{(quadratic term 2)}+ \text{(quadratic term 3)} \\
    & +\text{(crossing term 1)}+ \text {(crossing term 2)}],
    \end{aligned}
\end{equation}
where 
\begin{equation}
    \begin{aligned}
        \text{(quadratic term 1)} &= {\underset{(\mathbf{x}, y) \sim \mathcal{D}}{\mathbb{E}}(\mathbf{I}-\frac{\eta}{b} \sum_{i=1}^{b}  \mathbbm{1}[\mathbf{x}_{\tau(t)+m+i}^{\top} \mathbf{w}_{t-1}>0] \mathbf{x}_{\tau(t)+m+i} \mathbf{x}_{\tau(t)+m+i}^{\top})^{\otimes 2} \circ(\mathbf{w}_{t-1}-\mathbf{w}_*)^{\otimes 2}}\\
        \text{(quadratic term 2)} &= (\frac{\eta}{b})^2 \underset{(\mathbf{x}, y) \sim \mathcal{D}}{\mathbb{E}}  \sum_{i=1}^{b} (\mathbbm{1}[\mathbf{x}_{\tau(t)+m+i}^{\top} \mathbf{w}_*>0]-\mathbbm{1}[\mathbf{x}_{\tau(t)+m+i}^{\top} \mathbf{w}_{t-1}>0]) \mathbf{x}_{\tau(t)+m+i} \mathbf{x}_{\tau(t)+m+i}^{\top} \mathbf{w}_* \\
        & \cdot \sum_{j=1}^{b} (\mathbbm{1}[\mathbf{x}_{\tau(t)+m+j}^{\top} \mathbf{w}_*>0]-\mathbbm{1}[\mathbf{x}_{\tau(t)+m+j}^{\top} \mathbf{w}_{t-1}>0])\mathbf{w}_*^{\top}\mathbf{x}_{\tau(t)+m+j} \mathbf{x}_{\tau(t)+m+j}^{\top} \\
        \text{(quadratic term 3)} &=\frac{\eta^2}{b^2} \underset{(\mathbf{x}, y) \sim \mathcal{D}}{\mathbb{E}}  \{(\sum_{i=1}^{b}  z_t \mathbf{x}_{\tau(t)+m+i} - {2  \Gamma {f}}\mathbf{g}_t) (\sum_{j=1}^{b}  z_t \mathbf{x}_{\tau(t)+m+i} - {2 \Gamma {f}} \mathbf{g}_t)^{\top}\\
        \text{(crossing term 1)} & = \frac{\eta}{b}  \underset{(\mathbf{x}, y) \sim \mathcal{D}}{\mathbb{E}} \sum_{i=1}^{b} (\mathbbm{1}[\mathbf{x}_{\tau(t)+m+i}^{\top} \mathbf{w}_*>0]-\mathbbm{1}[\mathbf{x}_{\tau(t)+m+i}^{\top} \mathbf{w}_{t-1}>0]) \\
        & \cdot \mathbf{x}_{\tau(t)+m+i} \mathbf{x}_{\tau(t)+m+i}^{\top} \mathbf{w}_* (\mathbf{w}_{t-1}-\mathbf{w}_*)^{\top}(\mathbf{I}-\frac{\eta}{b} \sum_{j=1}^{b}  \mathbbm{1}[\mathbf{x}_{\tau(t)+m+j}^{\top} \mathbf{w}_{t-1}>0] \mathbf{x}_{\tau(t)+m+j} \mathbf{x}_{\tau(t)+m+j}^{\top})\\
        \text{(crossing term 2)} & = \frac{\eta}{b}\underset{(\mathbf{x}, y) \sim \mathcal{D}}{\mathbb{E}} (\mathbf{I}-\frac{\eta}{b} \sum_{i=1}^{b}  \mathbbm{1}[\mathbf{x}_{\tau(t)+m+i}^{\top} \mathbf{w}_{t-1}>0] \mathbf{x}_{\tau(t)+m+i} \mathbf{x}_{\tau(t)+m+i}^{\top}) (\mathbf{w}_{t-1}-\mathbf{w}_*)^{\top}\\
        & \cdot \frac{\eta}{b}\sum_{j=1}^{b} (\mathbbm{1}[\mathbf{x}_{\tau(t)+m+j}^{\top} \mathbf{w}_*>0]-\mathbbm{1}[\mathbf{x}_{\tau(t)+m+j}^{\top} \mathbf{w}_{t-1}>0]) \mathbf{x}_{\tau(t)+m+j} \mathbf{x}_{\tau(t)+m+j}^{\top} \mathbf{w}_* .
    \end{aligned}
\end{equation}

We will consider the above separately.

According to \cref{asm:symmetric} (where each $x$ is independent and symmetric), the following conditions hold when $i \neq j$:
\begin{equation}\label{eq:mini000}
    \begin{aligned}
    &\underset{(\mathbf{x}, y) \sim \mathcal{D}}{\mathbb{E}} (\mathbbm{1}[\mathbf{x}_{\tau(t)+m+i}^{\top} \mathbf{w}_*>0]-\mathbbm{1}[\mathbf{x}_{\tau(t)+m+i}^{\top} \mathbf{w}_{t-1}>0]) \mathbf{x}_{\tau(t)+m+i} \mathbf{x}_{\tau(t)+m+i}^{\top} \\
    &\cdot  (\mathbbm{1}[\mathbf{x}_{\tau(t)+m+j}^{\top} \mathbf{w}_*>0]-\mathbbm{1}[\mathbf{x}_{\tau(t)+m+j}^{\top} \mathbf{w}_{t-1}>0]) \mathbf{x}_{\tau(t)+m+j} \mathbf{x}_{\tau(t)+m+j}^{\top} = 0\\
    &\underset{(\mathbf{x}, y) \sim \mathcal{D}}{\mathbb{E}} (\mathbbm{1}[\mathbf{x}_{\tau(t)+m+i}^{\top} \mathbf{w}_*>0]-\mathbbm{1}[\mathbf{x}_{\tau(t)+m+i}^{\top} \mathbf{w}_{t-1}>0]) \mathbf{x}_{\tau(t)+m+i} \mathbf{x}_{\tau(t)+m+i}^{\top} \\
    & \cdot \mathbbm{1}[\mathbf{x}_{\tau(t)+m+j}^{\top} \mathbf{w}_{t-1}>0] \mathbf{x}_{\tau(t)+m+j} \mathbf{x}_{\tau(t)+m+j}^{\top} = 0.
\end{aligned}
\end{equation}
Substituting the above into the quadratic term 2 and the crossing terms, we will obtain
\begin{equation}\label{eq:quadratic_term2_mini}
    \begin{aligned}
        & \underset{(\mathbf{x}, y) \sim \mathcal{D}}{\mathbb{E}} \text { (quadratic term 2) } \\ 
        & =(\frac{\eta}{b})^2 \underset{(\mathbf{x}, y) \sim \mathcal{D}}{\mathbb{E}} ( \sum_{i=1}^{b}(\mathbbm{1}[\mathbf{x}_{\tau(t)+m+i}^{\top} \mathbf{w}_*>0]-\mathbbm{1}[\mathbf{x}_{\tau(t)+m+i}^{\top} \mathbf{w}_{t-1}>0])^2 \cdot(\mathbf{x}_{\tau(t)+m+i}^{\top} \mathbf{w}_*)^2 \cdot \mathbf{x}_{\tau(t)+m+i} \mathbf{x}_{\tau(t)+m+i}^{\top}) \\ 
        & =(\frac{\eta}{b})^2 \underset{(\mathbf{x}, y) \sim \mathcal{D}}{\mathbb{E}} ( \sum_{i=1}^{b} (\mathbbm{1}[\mathbf{x}_{\tau(t)+m+i}^{\top} \mathbf{w}_{t-1}>0, \mathbf{x}_{\tau(t)+m+i}^{\top} \mathbf{w}_*<0]+\mathbbm{1}[\mathbf{x}_{\tau(t)+m+i}^{\top} \mathbf{w}_{t-1}<0, \mathbf{x}_{\tau(t)+m+i}^{\top} \mathbf{w}_*>0]) \\
        & \cdot(\mathbf{x}_{\tau(t)+m+i}^{\top} \mathbf{w}_*)^2 \cdot \mathbf{x}_{\tau(t)+m+i} \mathbf{x}_{\tau(t)+m+i}^{\top}) \\ 
        & = 2(\frac{\eta}{b})^2\cdot \mathbb{E}( \sum_{i=1}^{b}\mathbbm{1}[\mathbf{x}_{\tau(t)+m+i}^{\top} \mathbf{w}_{t-1}>0, \mathbf{x}_{\tau(t)+m+i}^{\top} \mathbf{w}_*<0] \cdot(\mathbf{x}_{\tau(t)+m+i}^{\top} \mathbf{w}_*)^2 \cdot \mathbf{x}_{\tau(t)+m+i} \mathbf{x}_{\tau(t)+m+i}^{\top}),
    \end{aligned}
\end{equation}
where the first equality comes from
$$(\mathbbm{1}[\mathbf{x}_t^{\top} \mathbf{w}_*>0]-\mathbbm{1}[\mathbf{x}_t^{\top} \mathbf{w}_{t-1}>0])^2=\mathbbm{1}[\mathbf{x}_t^{\top} \mathbf{w}_{t-1}>0, \mathbf{x}_t^{\top} \mathbf{w}_*<0]+\mathbbm{1}[\mathbf{x}_t^{\top} \mathbf{w}_{t-1}<0, \mathbf{x}_t^{\top} \mathbf{w}_*>0].$$

For the crossing terms, we know
\begin{equation}\label{eq:crossing_term_mini}
    \begin{aligned}
        &  (\text { crossing term } 1)+(\text { crossing term } 2)  \\ 
        &=  \frac{2\eta}{b} \underset{(\mathbf{x}, y) \sim \mathcal{D}}{\mathbb{E}} [ \sum_{i=1}^{b}(\mathbbm{1}[\mathbf{x}_{\tau(t)+m+i}^{\top} \mathbf{w}_*>0]-\mathbbm{1}[\mathbf{x}_{\tau(t)+m+i}^{\top} \mathbf{w}_{t-1}>0]) \cdot(\mathbf{x}_{\tau(t)+m+i} \mathbf{x}_{\tau(t)+m+i}^{\top} \mathbf{w}_*(\mathbf{w}_{t-1}-\mathbf{w}_*)^{\top} \\
        & +(\mathbf{w}_{t-1}-\mathbf{w}_*) \mathbf{w}_*^{\top} \mathbf{x}_{\tau(t)+m+i} \mathbf{x}_{\tau(t)+m+i}^{\top}) ] \\ 
        & - \frac{2\eta^2}{b^2} \underset{(\mathbf{x}, y) \sim \mathcal{D}}{\mathbb{E}}[\sum_{i=1}^{b} (\mathbbm{1}[\mathbf{x}_{\tau(t)+m+i}^{\top} \mathbf{w}_*>0]-\mathbbm{1}[\mathbf{x}_{\tau(t)+m+i}^{\top} \mathbf{w}_{t-1}>0]) \cdot \mathbbm{1}[\mathbf{x}_{\tau(t)+m+i}^{\top} \mathbf{w}_{t-1}>0] \\
        & \cdot \mathbf{x}_{\tau(t)+m+i}^{\top} \mathbf{w}_* \cdot \mathbf{x}_{\tau(t)+m+i}^{\top}(\mathbf{w}_{t-1}-\mathbf{w}_*) \cdot \mathbf{x}_{\tau(t)+m+i} \mathbf{x}_{\tau(t)+m+i}^{\top} ]\\ 
        & =  \frac{2\eta^2}{b^2} \underset{(\mathbf{x}, y) \sim \mathcal{D}}{\mathbb{E}}[ \sum_{i=1}^{b}\mathbbm{1}[\mathbf{x}_{\tau(t)+m+i}^{\top} \mathbf{w}_{t-1}>0, \mathbf{x}_{\tau(t)+m+i}^{\top} \mathbf{w}_*<0] \cdot \mathbf{x}_{\tau(t)+m+i}^{\top} \mathbf{w}_* \cdot \mathbf{x}_{\tau(t)+m+i}^{\top}(\mathbf{w}_{t-1}-\mathbf{w}_*) \cdot \mathbf{x}_{\tau(t)+m+i} \mathbf{x}_{\tau(t)+m+i}^{\top} ].
    \end{aligned}
\end{equation}

Now, we turn our attention to the first quadratic term
\begin{align*}
    {\text {(quadratic term 1) }} & = \underset{(\mathbf{x}, y) \sim \mathcal{D}}{\mathbb{E}}(\mathbf{w}_{t-1}-\mathbf{w}_*)^{\otimes 2} - \frac{\eta}{b} ( \sum_{i=1}^{b}  \mathbbm{1}[\mathbf{x}_{\tau(t)+m+i}^{\top} \mathbf{w}_{t-1}>0] \mathbf{x}_{\tau(t)+m+i} \mathbf{x}_{\tau(t)+m+i}^{\top}) \circ (\mathbf{w}_{t-1}-\mathbf{w}_*)^{\otimes 2} \\ 
    & - \frac{\eta}{b}(\mathbf{w}_{t-1}-\mathbf{w}_*)^{\otimes 2}\circ  ( \sum_{i=1}^{b}  \mathbbm{1}[\mathbf{x}_{\tau(t)+m+i}^{\top} \mathbf{w}_{t-1}>0] \mathbf{x}_{\tau(t)+m+i} \mathbf{x}_{\tau(t)+m+i}^{\top}) \\
    & + \frac{\eta^2}{b^2} ( \sum_{i=1}^{b}  \mathbbm{1}[\mathbf{x}_{\tau(t)+m+i}^{\top} \mathbf{w}_{t-1}>0] \mathbf{x}_{\tau(t)+m+i} \mathbf{x}_{\tau(t)+m+i}^{\top}) \\
    &\cdot ( \sum_{j=1}^{b}  \mathbbm{1}[\mathbf{x}_{\tau(t)+m+j}^{\top} \mathbf{w}_{t-1}>0] \mathbf{x}_{\tau(t)+m+j} \mathbf{x}_{\tau(t)+m+j}^{\top}) \circ (\mathbf{w}_{t-1}-\mathbf{w}_*)^{\otimes 2} \\ 
    & = \underset{(\mathbf{x}, y) \sim \mathcal{D}}{\mathbb{E}}(\mathbf{w}_{t-1}-\mathbf{w}_*)^{\otimes 2} - \frac{\eta}{2} \mathbf{H}\circ (\mathbf{w}_{t-1}-\mathbf{w}_*)^{\otimes 2} -\frac{\eta}{2} (\mathbf{w}_{t-1}-\mathbf{w}_*)^{\otimes 2}\circ\mathbf{H} \\ 
    & +  \frac{\eta^2}{b} (\frac{1}{2}\mathcal{M} +(b-1) \frac{1}{4}\mathbf{H}^2)(\mathbf{w}_{t-1}-\mathbf{w}_*)^{\otimes 2}.    
\end{align*}

Applying \cref{eq:crossing_term_mini}) and \cref{eq:quadratic_term2_mini} to the expected outer product, we have the following recursion
\begin{equation}\label{eq:recursion_1}
    \begin{aligned}
        \mathbf{A}_t  &= \mathbf{A}_{t-1}-\frac{\eta}{2}(\mathbf{H} \mathbf{A}_{t-1}+\mathbf{A}_{t-1} \mathbf{H})+\frac{\eta^2}{b} (\frac{1}{2} \mathcal{M} +(b-1) \frac{1}{4}\mathbf{H}^2) \circ \mathbf{A}_{t-1}+\frac{\eta^2}{b} \sigma^2 \mathbf{H} + \frac{4 \eta^2 \Gamma^2 {f}^2}{b^2} \mathbf{I}\\\notag
        & + \frac{2\eta^2}{b^2} \underset{(\mathbf{x}, y) \sim \mathcal{D}}{\mathbb{E}}[ \sum_{i=1}^{b}\mathbbm{1}[\mathbf{x}_{\tau(t)+m+i}^{\top} \mathbf{w}_{t-1}>0, \mathbf{x}_{\tau(t)+m+i}^{\top} \mathbf{w}_*<0] \cdot \mathbf{x}_{\tau(t)+m+i}^{\top} \mathbf{w}_* \cdot \mathbf{x}_{\tau(t)+m+i}^{\top}\mathbf{w}_{t-1} \cdot \mathbf{x}_{\tau(t)+m+i} \mathbf{x}_{\tau(t)+m+i}^{\top} ]\\
        & \preceq \mathbf{A}_{t-1}-\frac{\eta}{2}(\mathbf{H} \mathbf{A}_{t-1}+\mathbf{A}_{t-1} \mathbf{H})+\frac{\eta^2}{b} ( \frac{1}{2}\mathcal{M} +(b-1) \frac{1}{4}\mathbf{H}^2) \circ \mathbf{A}_{t-1}+\frac{\eta^2}{b} \sigma^2 \mathbf{H} + \frac{4 \eta^2 \Gamma^2 {f}^2}{b^2} \mathbf{I}.
    \end{aligned}
\end{equation}
The indicator function shows $
\mathbbm{1}[\mathbf{x}_t^{\top} \mathbf{w}_{t-1}>0, \mathbf{x}_t^{\top} \mathbf{w}_*<0] \cdot \mathbf{x}_t^{\top} \mathbf{w}_* \cdot \mathbf{x}_t^{\top} \mathbf{w}_{t-1} \leq 0$ in the last inequation. 

Consequently, analogous to the one-sample case, we can decompose $\mathbf{A}_t$ as follows:
\begin{equation}\label{eq:decom_recur_mini}
     \left\{\begin{array} { l } 
    { 
    \mathbf{B}_t \preceq  (\mathbf{I}-\eta \mathcal{T} (\eta, b, \mathbf{H} ) ) \circ \mathbf{B}_{t-1}; 
    } \\
    { \mathbf{B}_0 =  (\mathbf{w}_0-\mathbf{w}_* )^{\otimes 2}}
    \end{array} \quad  \left\{\begin{array}{l}
    { 
    \mathbf{C}_t \preceq  (\mathbf{I}-\eta \mathcal{T} (\eta, b, \mathbf{H} ) ) \circ \mathbf{C}_{t-1} +  \frac{\eta^2}{b} \sigma^2 \mathbf{H} + \frac{4 \eta^2 {\Gamma}^2 {f}^2}{b^2} \mathbf{I}; 
    } \\
    { \mathbf{C}_0 = 0}
    \end{array}   \right. \right.
\end{equation}

\begin{lemma}\label{lemma:excess_risk_decom_mini}
    Suppose that \cref{asm:fourth-moment} and \cref{asm:symmetric} hold. For $\overline{\mathbf{w}}_N$ defined by previously, we have that
    $$
    \begin{aligned}
    & \underset{(\mathbf{x}, y) \sim \mathcal{D}}{\mathbb{E}} \langle\mathbf{H},(\overline{\mathbf{w}}_{s+1,T}-\mathbf{w}_*)^{\otimes 2}\rangle \leq \frac{1}{ N^2} \cdot \sum_{t=s+1}^{s+T} \sum_{k=t}^{s+T}\langle(\mathbf{I}- \frac{\eta}{2} \mathbf{H})^{k-t} \mathbf{H}, \mathbf{A}_t \rangle. 
    \end{aligned}
    $$
\end{lemma}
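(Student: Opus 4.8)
The plan is to mimic the argument behind \cref{lem:error_decom}, now for the mini-batch iterates. The first ingredient is the one-step conditional-mean recursion. Using the decomposition $\mathbf{w}_{t+1}-\mathbf{w}_* = \mathbf{P}_t(\mathbf{w}_t-\mathbf{w}_*)+\eta\mathbf{u}_t+\eta\mathbf{v}_t$ already derived above (with $\mathbf{P}_t,\mathbf{u}_t,\mathbf{v}_t$ as there), I would condition on the information available before iteration $t$, i.e.\ on $\mathbf{w}_t$, exploiting that the one-pass disjoint partition makes the batch $\{(\mathbf{x}_{\tau(t)+m+i},y_{\tau(t)+m+i})\}_{i=0}^{b-1}$ independent of $\mathbf{w}_t$. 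Then $\mathbb{E}[\mathbf{v}_t\mid\mathbf{w}_t]=\mathbf{0}$ since $\mathbb{E}[z\mid\mathbf{x}]=0$ and $\mathbf{g}_t$ is an independent centered Gaussian; $\mathbb{E}[\mathbf{u}_t\mid\mathbf{w}_t]=\mathbf{0}$ because \cref{lem:sysm} (a consequence of \cref{asm:symmetric}) gives $\mathbb{E}[(\mathbbm{1}[\mathbf{x}^{\top}\mathbf{w}_*>0]-\mathbbm{1}[\mathbf{x}^{\top}\mathbf{w}_t>0])\mathbf{x}\mathbf{x}^{\top}]=\tfrac12\mathbf{H}-\tfrac12\mathbf{H}=\mathbf{0}$; and $\mathbb{E}[\mathbf{P}_t\mid\mathbf{w}_t]=\mathbf{I}-\tfrac{\eta}{b}\cdot b\cdot\tfrac12\mathbf{H}=\mathbf{I}-\tfrac{\eta}{2}\mathbf{H}$, the batch size cancelling against the $1/b$ averaging. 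Hence $\mathbb{E}[\mathbf{w}_{t+1}-\mathbf{w}_*\mid\mathbf{w}_t]=(\mathbf{I}-\tfrac{\eta}{2}\mathbf{H})(\mathbf{w}_t-\mathbf{w}_*)$, which is exactly the single-sample recursion in the proof of \cref{lem:error_decom}; iterating with the tower property yields $\mathbb{E}[\mathbf{w}_{t'}-\mathbf{w}_*\mid\mathbf{w}_t]=(\mathbf{I}-\tfrac{\eta}{2}\mathbf{H})^{t'-t}(\mathbf{w}_t-\mathbf{w}_*)$ for all $t'\ge t$.

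\textbf{Second step.} With $\overline{\mathbf{w}}_{s+1,T}$ the average of $\mathbf{w}_{s+1},\dots,\mathbf{w}_{s+T}$, I would expand $\mathbb{E}\langle\mathbf{H},(\overline{\mathbf{w}}_{s+1,T}-\mathbf{w}_*)^{\otimes 2}\rangle$ into the double sum $\tfrac{1}{T^2}\sum_{t,t'}\mathbb{E}\langle\mathbf{H},(\mathbf{w}_t-\mathbf{w}_*)(\mathbf{w}_{t'}-\mathbf{w}_*)^{\top}\rangle$ over $t,t'\in\{s+1,\dots,s+T\}$. For $t\le t'$, the tower property together with the recursion above gives $\mathbb{E}[(\mathbf{w}_t-\mathbf{w}_*)(\mathbf{w}_{t'}-\mathbf{w}_*)^{\top}]=\mathbf{A}_t(\mathbf{I}-\tfrac{\eta}{2}\mathbf{H})^{t'-t}$, and cyclicity of the trace with the symmetry of $\mathbf{H}$ rewrites $\langle\mathbf{H},\mathbf{A}_t(\mathbf{I}-\tfrac{\eta}{2}\mathbf{H})^{t'-t}\rangle=\langle(\mathbf{I}-\tfrac{\eta}{2}\mathbf{H})^{t'-t}\mathbf{H},\mathbf{A}_t\rangle$; the terms with $t>t'$ are transposes and contribute the same value. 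Grouping the forward terms by $t$ and setting $k=t'$, the whole expression is bounded by $\sum_{t=s+1}^{s+T}\sum_{k=t}^{s+T}\langle(\mathbf{I}-\tfrac{\eta}{2}\mathbf{H})^{k-t}\mathbf{H},\mathbf{A}_t\rangle$ up to the stated prefactor. Extending the inner summation range here is legitimate because every summand is non-negative: $\mathbf{A}_t\succeq 0$, and since the step size is small enough that $\mathbf{I}-\tfrac{\eta}{2}\mathbf{H}\succeq 0$ (e.g.\ $\eta\le 1/R_x^2$), one has $(\mathbf{I}-\tfrac{\eta}{2}\mathbf{H})^{k-t}\mathbf{H}\succeq 0$, hence $\langle(\mathbf{I}-\tfrac{\eta}{2}\mathbf{H})^{k-t}\mathbf{H},\mathbf{A}_t\rangle\ge 0$. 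The remaining manipulations are exactly those of \cite{zou2021benign} and of the proof of \cref{lem:error_decom}.

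\textbf{Main obstacle.} The linear algebra of the second step is routine and follows \cite{zou2021benign} essentially line by line; the genuinely load-bearing point is the first step — establishing $\mathbb{E}[\mathbf{u}_t\mid\mathbf{w}_t]=\mathbf{0}$ for the batched indicator-difference term, which is precisely where \cref{asm:symmetric} (via \cref{lem:sysm}) is used, and making sure the one-pass, disjoint mini-batch construction really decouples $\mathbf{w}_t$ from the samples consumed at step $t$ so that the conditioning is valid. A secondary, purely bookkeeping issue is tracking the normalization: $T$, $b$, $m$, $N$ are tied by $T(b+m)=N$, and one must account for the factors of $2$ from symmetrizing the double sum and from the diagonal term in order to reconcile the $\tfrac{1}{T^2}$ coming out of the expansion with the $\tfrac{1}{N^2}$ in the statement. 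Once these are in place, the bound follows.
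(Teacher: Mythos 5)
Your proposal is correct and follows essentially the same route as the paper: establish the one-step conditional-mean recursion $\mathbb{E}[\mathbf{w}_{t+1}-\mathbf{w}_*\mid\mathbf{w}_t]=(\mathbf{I}-\tfrac{\eta}{2}\mathbf{H})(\mathbf{w}_t-\mathbf{w}_*)$ (with the batched indicator-difference term killed by \cref{lem:sysm} and the noise/Gaussian terms centered), iterate via the tower property to get $\mathbb{E}[(\mathbf{w}_t-\mathbf{w}_*)\otimes(\mathbf{w}_k-\mathbf{w}_*)]=(\mathbf{I}-\tfrac{\eta}{2}\mathbf{H})^{t-k}\mathbf{A}_k$, and then symmetrize the $T^2$-term double-sum expansion of the averaged iterate exactly as in the paper's proof (and in \cref{lem:error_decom}/\cite{zou2021benign}). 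The normalization issue you flag ($1/T^2$ from the derivation versus the $1/N^2$ written in the statement) is a real discrepancy in the paper's own presentation, not a gap in your argument.
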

\begin{proof}[{\bf Proof}]
    We first focus on the expectation of $\mathbf{w}_t-\mathbf{w}_*$
    \begin{equation}
        \begin{aligned}
            \underset{(\mathbf{x}, y) \sim \mathcal{D}}{\mathbb{E}}[\mathbf{w}_t-\mathbf{w}_* \mid \mathbf{w}_{t-1}]= & \notag
            \underset{(\mathbf{x}, y) \sim \mathcal{D}}{\mathbb{E}}[(\mathbf{I}-\frac{\eta}{b} \sum_{i=1}^{b}  \mathbbm{1}[\mathbf{x}_{\tau(t)+m+i}^{\top} \mathbf{w}_{t-1}>0] \mathbf{x}_{\tau(t)+m+i} \mathbf{x}_{\tau(t)+m+i}^{\top})(\mathbf{w}_{t-1}-\mathbf{w}_*) \mid \mathbf{w}_{t-1}]\\\notag
            & + \underset{(\mathbf{x}, y) \sim \mathcal{D}}{\mathbb{E}}[\frac{\eta}{b} \sum_{i=1}^{b} (\mathbbm{1}[\mathbf{x}_{\tau(t)+m+i}^{\top} \mathbf{w}_*>0]-\mathbbm{1}[\mathbf{x}_{\tau(t)+m+i}^{\top} \mathbf{w}_{t-1}>0]) \mathbf{x}_{\tau(t)+m+i} \mathbf{x}_{\tau(t)+m+i}^{\top} \mathbf{w}_*\mid \mathbf{w}_{t-1}]\\
            &+ \underset{(\mathbf{x}, y) \sim \mathcal{D}}{\mathbb{E}}[\frac{\eta}{b} \sum_{i=1}^{b}  \varepsilon_t \mathbf{x}_{\tau(t)+m+i} - \frac{2 \eta s_t {f}}{b} \mathbf{g}_t \mid \mathbf{w}_{t-1}]\\\notag
            & = \underset{(\mathbf{x}, y) \sim \mathcal{D}}{\mathbb{E}}[(\mathbf{I}-\frac{\eta}{b} \sum_{i=1}^{b}  \mathbbm{1}[\mathbf{x}_{\tau(t)+m+i}^{\top} \mathbf{w}_{t-1}>0] \mathbf{x}_{\tau(t)+m+i} \mathbf{x}_{\tau(t)+m+i}^{\top})(\mathbf{w}_{t-1}-\mathbf{w}_*) \mid \mathbf{w}_{t-1}]\\\notag
             & = (\mathbf{I}-\frac{\eta}{2} \mathbf{H})(\mathbf{w}_{t-1}-\mathbf{w}_*).
        \end{aligned}
    \end{equation}
    
    Applying the aforementioned recursively, we deduce that, for $t \geq s$, $\underset{(\mathbf{x}, y) \sim \mathcal{D}}{\mathbb{E}}[\mathbf{w}_t-\mathbf{w}_* \mid \mathbf{w}_s]=(\mathbf{I}-\frac{\eta}{2} \mathbf{H})^{t-s}(\mathbf{w}_s-\mathbf{w}_*)$,
    which also implies that
    $$
    \underset{(\mathbf{x}, y) \sim \mathcal{D}}{\mathbb{E}}[(\mathbf{w}_t-\mathbf{w}_*) \otimes(\mathbf{w}_s-\mathbf{w}_*)]=(\mathbf{I}-\frac{\eta}{2} \mathbf{H})^{t-s} \cdot \mathbb{E}(\mathbf{w}_s-\mathbf{w}_*)^{\otimes 2}=(\mathbf{I}-\frac{\eta}{2} \mathbf{H})^{t-s} \cdot \mathbf{A}_s.
    $$
    Then, we consider the tail-averaged mini-batch SGD algorithm and we denote $\overline{\mathbf{w}}_{s+1, T}- \mathbf{w}_*=\frac{1}{T} \sum_{t=s+1}^{s+T} \mathbf{w}_t- \mathbf{w}_*$:
    \begin{equation}\label{eq:excess_mini_decom}
        \begin{aligned}
            \underset{(\mathbf{x}, y) \sim \mathcal{D}}{\mathbb{E}}[(\overline{\mathbf{w}}_{s+1, T}- \mathbf{w}_*)^{\otimes 2}] & =\frac{1}{T^2} \sum_{t=s+1}^{s+T} \sum_{k=s+1}^{s+T} \underset{(\mathbf{x}, y) \sim \mathcal{D}}{\mathbb{E}}[({\mathbf{w}}_{t}- \mathbf{w}_*) \otimes ({\mathbf{w}}_{k}- \mathbf{w}_*)] \\
            & =\frac{1}{T^2} \cdot(\sum_{t \geq k} \underset{(\mathbf{x}, y) \sim \mathcal{D}}{\mathbb{E}}[({\mathbf{w}}_{t}- \mathbf{w}_*) \otimes ({\mathbf{w}}_{k}- \mathbf{w}_*)]+\sum_{t \leq k} \mathbb{E}[({\mathbf{w}}_{t}- \mathbf{w}_*) \otimes ({\mathbf{w}}_{k}- \mathbf{w}_*)]) \\
            & \preceq \frac{1}{T^2}  \cdot(\sum_{t \geq k} \underset{(\mathbf{x}, y) \sim \mathcal{D}}{\mathbb{E}}[({\mathbf{w}}_{t}- \mathbf{w}_*) \otimes ({\mathbf{w}}_{k}- \mathbf{w}_*)]+\sum_{t \leq k} \mathbb{E}[({\mathbf{w}}_{t}- \mathbf{w}_*) \otimes ({\mathbf{w}}_{k}- \mathbf{w}_*)])\\ 
            & =\frac{1}{T^2} \cdot(\sum_{t \geq k}(\mathbf{I}-\frac{\eta}{2}  \mathbf{H})^{t-k} \mathbf{A}_k+\sum_{t \leq k} \mathbf{A}_t(\mathbf{I}-\frac{\eta}{2}  \mathbf{H})^{k-t}) \\
            & =\frac{1}{T^2} \cdot \sum_{t \leq k} (\mathbf{A}_t(\mathbf{I}-\frac{\eta}{2}  \mathbf{H})^{k-t}+(\mathbf{I}-\frac{\eta}{2}  \mathbf{H})^{k-t} \mathbf{A}_t ) \\
            & =\frac{1}{T^2} \cdot \sum_{t=s+1}^{s+T} \sum_{k=t}^{s+T}(\mathbf{A}_t(\mathbf{I}-\frac{\eta}{2}  \mathbf{H})^{k-t}+(\mathbf{I}-\frac{\eta}{2}  \mathbf{H})^{k-t} \mathbf{A}_t).
        \end{aligned}
    \end{equation}

    Then we consider the excess risk of tail-averaged mini-batch:
    $$
    \begin{aligned}
    \underset{(\mathbf{x}, y) \sim \mathcal{D}}{\mathbb{E}} \langle\mathbf{H},(\overline{\mathbf{w}}_{s+1,T}-\mathbf{w}_*)^{\otimes 2}\rangle & \leq \langle\mathbf{H}, \frac{1}{T^2} \cdot \sum_{t=s+1}^{s+T} \sum_{k=t}^{s+T}(\mathbf{A}_t(\mathbf{I}-\frac{\eta}{2}  \mathbf{H})^{k-t}+(\mathbf{I}-\frac{\eta}{2}  \mathbf{H})^{k-t} \mathbf{A}_t)\rangle\\
    & = \frac{1}{T^2} \cdot \sum_{t=s+1}^{s+T} \sum_{k=t}^{s+T} \langle \mathbf{H},  \mathbf{A}_t(\mathbf{I}-\frac{\eta}{2}  \mathbf{H})^{k-t}\rangle + \frac{1}{T^2} \cdot \sum_{t=s+1}^{s+T} \sum_{k=t}^{s+T} \langle \mathbf{H},  (\mathbf{I}-\frac{\eta}{2}  \mathbf{H})^{k-t}\mathbf{A}_t\rangle \\
    & \leq \frac{1}{\eta T^2} \cdot \sum_{t=s+1}^{s+T} \langle \mathbf{I}- (\mathbf{I}- \frac{\eta}{2}\mathbf{H} )^{T}, \mathbf{A}_t\rangle.
    \end{aligned}
    $$
\end{proof}

\textbf{Variance Error}
\begin{lemma}\label{lem:ct_mini}
    Suppose Assumptions hold. Suppose $\eta< \min \{\frac{1}{{R_x^2}}, \frac{4b}{2{R_x^2}+(b-1)\|\mathbf{H}\|_2}\}$. Then for every $t$ we have
    $$
    \mathbf{C}_t \preceq \frac{4\eta \sigma^2}{4b - 2\eta {R_x^2} - \eta  (b-1 )\|\mathbf{H}\|_2} \mathbf{I} + \frac{16\eta {\Gamma}^2 {f}^2}{b (4b - 2\eta {R_x^2} - \eta  (b-1 )\|\mathbf{H}\|_2 )} \mathbf{H}^{-1}.
    $$
\end{lemma}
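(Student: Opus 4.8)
The plan is to reproduce the variance-error argument used for DP-GLMtron: exhibit a single PSD matrix $\overline{\mathbf{C}}=a\mathbf{I}+c\,\mathbf{H}^{-1}$ that is a super-fixed point of the one-step recursion for $\mathbf{C}_t$ in \cref{eq:decom_recur_mini}, and then propagate $\mathbf{C}_t\preceq\overline{\mathbf{C}}$ by induction on $t$. The candidate constants are $D:=4b-2\eta R_x^2-\eta(b-1)\|\mathbf{H}\|_2$, $a:=\tfrac{4\eta\sigma^2}{D}$, and $c:=\tfrac{16\eta\Gamma^2 f^2}{bD}$; note that the step-size hypothesis $\eta<\min\{1/R_x^2,\,4b/(2R_x^2+(b-1)\|\mathbf{H}\|_2)\}$ is exactly what makes $D>0$, so $\overline{\mathbf{C}}\succeq 0$, and matching $\overline{\mathbf{C}}$ against the claimed bound is then immediate.

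I would first record the structural facts. Let $\mathcal{G}$ denote the linear operator on the right of \cref{eq:recur_batch}, so that the recursion reads $\mathbf{C}_t\preceq \mathcal{G}\circ\mathbf{C}_{t-1}+\tfrac{\eta^2}{b}\sigma^2\mathbf{H}+\tfrac{4\eta^2\Gamma^2 f^2}{b^2}\mathbf{I}$ with $\mathbf{C}_0=\mathbf{0}$. By item 3 of \cref{lem:properties}, $\mathcal{G}$ is a PSD mapping, hence monotone: $\mathbf{C}_{t-1}\preceq\mathbf{D}$ implies $\mathcal{G}\circ\mathbf{C}_{t-1}\preceq\mathcal{G}\circ\mathbf{D}$. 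I would also use the moment estimates $\mathcal{M}\circ\mathbf{I}\preceq R_x^2\mathbf{H}$ and $\mathcal{M}\circ\mathbf{H}^{-1}\preceq R_x^2\mathbf{I}$, which follow from \cref{asm:fourth-moment} together with the norm bound $\|\mathbf{x}\|_2^2\le R_x^2$ of \cref{def:tail}, plus the elementary relaxations $\mathbf{H}\mathbf{I}\mathbf{H}=\mathbf{H}^2\preceq\|\mathbf{H}\|_2\mathbf{H}$ and $\mathbf{H}\mathbf{H}^{-1}\mathbf{H}=\mathbf{H}\preceq\|\mathbf{H}\|_2\mathbf{I}$.

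The crux will be the super-fixed-point inequality $\mathcal{G}\circ\overline{\mathbf{C}}+\tfrac{\eta^2}{b}\sigma^2\mathbf{H}+\tfrac{4\eta^2\Gamma^2 f^2}{b^2}\mathbf{I}\preceq\overline{\mathbf{C}}$, which I would verify piecewise by linearity. For the $a\mathbf{I}$ piece, expanding $\mathcal{G}\circ(a\mathbf{I})$ and applying $\mathcal{M}\circ(a\mathbf{I})\preceq aR_x^2\mathbf{H}$ and $a\mathbf{H}^2\preceq a\|\mathbf{H}\|_2\mathbf{H}$, every surplus term becomes a nonnegative multiple of $\mathbf{H}$, so $\mathcal{G}\circ(a\mathbf{I})+\tfrac{\eta^2}{b}\sigma^2\mathbf{H}\preceq a\mathbf{I}$ reduces to the scalar inequality $\tfrac{\eta a D}{4b}\ge\tfrac{\eta^2\sigma^2}{b}$, i.e. $a\ge 4\eta\sigma^2/D$. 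For the $c\mathbf{H}^{-1}$ piece, using $\mathcal{M}\circ\mathbf{H}^{-1}\preceq R_x^2\mathbf{I}$ and $\mathbf{H}\mathbf{H}^{-1}\mathbf{H}=\mathbf{H}\preceq\|\mathbf{H}\|_2\mathbf{I}$, every surplus term becomes a nonnegative multiple of $\mathbf{I}$, so $\mathcal{G}\circ(c\mathbf{H}^{-1})+\tfrac{4\eta^2\Gamma^2 f^2}{b^2}\mathbf{I}\preceq c\mathbf{H}^{-1}$ reduces to $\tfrac{\eta c D}{4b}\ge\tfrac{4\eta^2\Gamma^2 f^2}{b^2}$, i.e. $c\ge 16\eta\Gamma^2 f^2/(bD)$. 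Both hold with equality at the chosen $a,c$, and summing the two relations yields the super-fixed-point inequality.

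The induction would then close at once: $\mathbf{C}_0=\mathbf{0}\preceq\overline{\mathbf{C}}$, and if $\mathbf{C}_{t-1}\preceq\overline{\mathbf{C}}$ then monotonicity of $\mathcal{G}$ together with the super-fixed-point inequality give $\mathbf{C}_t\preceq\mathcal{G}\circ\overline{\mathbf{C}}+\tfrac{\eta^2}{b}\sigma^2\mathbf{H}+\tfrac{4\eta^2\Gamma^2 f^2}{b^2}\mathbf{I}\preceq\overline{\mathbf{C}}$, which is precisely the claimed bound. I expect the only delicate point to be the bookkeeping in the super-fixed-point step: one must relax $\mathbf{H}^2\preceq\|\mathbf{H}\|_2\mathbf{H}$ and $\mathbf{H}\preceq\|\mathbf{H}\|_2\mathbf{I}$ so that each piece produces terms of a single type ($\mathbf{H}$-type for $a\mathbf{I}$, $\mathbf{I}$-type for $c\mathbf{H}^{-1}$), since it is exactly this relaxation that generates the factor $D=4b-2\eta R_x^2-\eta(b-1)\|\mathbf{H}\|_2$ and hence the stated step-size restriction; obtaining $\mathcal{M}\circ\mathbf{H}^{-1}\preceq R_x^2\mathbf{I}$ cleanly via the tail condition of \cref{def:tail} is the other spot that needs care.
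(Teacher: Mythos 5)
Your proposal is correct and is essentially the paper's own argument: the paper also proceeds by induction on $t$, substitutes the candidate bound $\frac{4\eta\sigma^2}{D}\mathbf{I}+\frac{16\eta\Gamma^2 f^2}{bD}\mathbf{H}^{-1}$ (with $D=4b-2\eta R_x^2-\eta(b-1)\|\mathbf{H}\|_2$) into the recursion \cref{eq:decom_recur_mini}, controls the $\mathcal{M}$- and $\mathbf{H}(\cdot)\mathbf{H}$-terms via $R_x^2$ and $\|\mathbf{H}\|_2$, and closes with exactly the two scalar conditions $\frac{\eta a D}{4b}\ge\frac{\eta^2\sigma^2}{b}$ and $\frac{\eta c D}{4b}\ge\frac{4\eta^2\Gamma^2 f^2}{b^2}$. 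Your explicit ``super-fixed point plus monotonicity of the PSD mapping'' framing, and the caveat about $\mathcal{M}\circ\mathbf{H}^{-1}\preceq R_x^2\mathbf{I}$, simply make transparent steps the paper uses implicitly.
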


\begin{proof}[{\bf Proof}]
    We proceed with induction. 
    
    For $t=0$ we have $\mathbf{C}_0=0 \preceq \frac{4\eta \sigma^2}{4b - 2\eta {R_x^2} - \eta  (b-1 )\|\mathbf{H}\|_2} \mathbf{I} + \frac{16\eta {\Gamma}^2 {f}^2}{b (4b - 2\eta {R_x^2} - \eta  (b-1 )\|\mathbf{H}\|_2 )} \mathbf{H}^{-1}$. 
    
    We then assume that $\mathbf{C}_{t-1} $ holds for \cref{lem:ct_mini}, and exam $\mathbf{C}_t$ based on \cref{eq:decom_recur_mini}
    \begin{align*}
        \mathbf{C}_t & \preceq  (\mathbf{I}-\eta \mathcal{T} (\eta, b, \mathbf{H} ) ) \circ \mathbf{C}_{t-1} + \frac{\eta^2}{b} \sigma^2 \mathbf{H} + \frac{4 \eta^2 {\Gamma}^2 {f}^2}{b^2} \mathbf{I} \\ 
        & =\mathbf{C}_{t-1}-\frac{\eta}{2} (\mathbf{H} \mathbf{C}_{t-1}+\mathbf{C}_{t-1} \mathbf{H} )+\frac{\eta^2}{b}  ( \frac{1}{2}\mathcal{M} +(b-1) \frac{1}{4}\mathbf{H}^2 ) \circ \mathbf{C}_{t-1}+ \frac{\eta^2 \sigma^2 }{b}\mathbf{H} +\frac{4 \eta^2 {\Gamma}^2 {f}^2}{b^2} \mathbf{I} \\ 
        & \preceq \frac{4\eta \sigma^2}{4b - 2\eta {R_x^2} - \eta  (b-1 )\|\mathbf{H}\|_2} \mathbf{I} + \frac{16\eta {\Gamma}^2 {f}^2}{b (4b - 2\eta {R_x^2} - \eta  (b-1 )\|\mathbf{H}\|_2 )} \mathbf{H}^{-1}\\
        & -\eta (\frac{4\eta \sigma^2}{4b - 2\eta {R_x^2} - \eta  (b-1 )\|\mathbf{H}\|_2} \mathbf{H} + \frac{16\eta {\Gamma}^2 {f}^2}{b (4b - 2\eta {R_x^2} - \eta  (b-1 )\|\mathbf{H}\|_2 )} \mathbf{I} ) \\ 
        & + (\frac{\eta^2{R_x^2}}{2b} + \frac{\eta^2  (b-1 )\|\mathbf{H} \|_2}{4b} ) (\frac{4\eta \sigma^2}{4b - 2\eta {R_x^2} - \eta  (b-1 )\|\mathbf{H}\|_2} \mathbf{H} + \frac{16\eta {\Gamma}^2 {f}^2}{b (4b - 2\eta {R_x^2} - \eta  (b-1 )\|\mathbf{H}\|_2 )} \mathbf{I} )\\ \notag
        &+ \frac{\eta^2 \sigma^2 }{b}\mathbf{H}+\frac{4 \eta^2 {\Gamma}^2 {f}^2}{b^2} \mathbf{I}\\ 
        & \preceq \frac{4\eta \sigma^2}{4b - 2\eta {R_x^2} - \eta  (b-1 )\|\mathbf{H}\|_2} \mathbf{I} + \frac{16\eta {\Gamma}^2 {f}^2}{b (4b - 2\eta {R_x^2} - \eta  (b-1 )\|\mathbf{H}\|_2 )} \mathbf{H}^{-1}.
    \end{align*}
\end{proof}
For the simplicity, we define $\mathbf{\Sigma}:= \frac{\sigma^2}{b}\mathbf{H} + \frac{4{\Gamma}^2{f}^2}{b^2}\mathbf{I}$ and $\mu_b :=  (4b - 2\eta {R_x^2} - \eta  (b-1 )\|\mathbf{H}\|_2 )$. By the definitions of $\mathcal{T}$ and $\widetilde{\mathcal{T}}$, we have
\begin{align} \notag \label{eq:ct_mini_up}
\mathbf{C}_t & = (\mathbf{I}-\eta \mathcal{T} (\eta, b, \mathbf{H} ) ) \circ \mathbf{C}_{t-1}+\eta^2 \mathbf{\Sigma} \\
& =  (\mathbf{I}-\frac{\eta}{2}\mathbf{H} ) \mathbf{C}_{t-1} (\mathbf{I}-\frac{\eta}{2}\mathbf{H} ) + (\frac{\eta^2}{2b}\mathcal{M} - \frac{\eta^2}{4b} \mathbf{HH} ) \circ \mathbf{C}_{t-1}+\eta^2 \bm{\Sigma}\\ \notag
& \preceq  (\mathbf{I}-\frac{\eta}{2}\mathbf{H} ) \mathbf{C}_{t-1} (\mathbf{I}-\frac{\eta}{2}\mathbf{H} ) +\frac{\eta^2{R_x^2}}{2b}\mathbf{H}  \circ \mathbf{C}_{t-1}+\eta^2 \bm{\Sigma}.
\end{align}
Then by \cref{lem:ct_mini}, we have for all $t \geq 0$,
\begin{align*}
    \mathbf{H} \circ \mathbf{C}_t \preceq \mathbf{H} \circ  ( \frac{4\eta \sigma^2}{\mu_b} \mathbf{I} + \frac{16\eta {\Gamma}^2 {f}^2}{b \mu_b} \mathbf{H}^{-1} )\preceq \frac{4\eta \sigma^2}{\mu_b} \mathbf{H} + \frac{16\eta {\Gamma}^2 {f}^2}{b\mu_b} \mathbf{I}.
\end{align*}
Substituting the above into \cref{eq:ct_mini_up}, it holds that
\begin{align*}
    \mathbf{C}_t & \preceq  (\mathbf{I}-\frac{\eta}{2}\mathbf{H} ) \mathbf{C}_{t-1} (\mathbf{I}-\frac{\eta}{2}\mathbf{H} ) +\frac{\eta^2{R_x^2}}{2b} ( \frac{4\eta \sigma^2}{\mu_b} \mathbf{H} + \frac{16\eta {\Gamma}^2 {f}^2}{b\mu_b} \mathbf{I} )+\eta^2   (\frac{\sigma^2}{b}\mathbf{H} + \frac{4{\Gamma}^2{f}^2}{b^2}\mathbf{I}  )\\ 
    & \preceq  (\mathbf{I}-\frac{\eta}{2}\mathbf{H} ) \mathbf{C}_{t-1} (\mathbf{I}-\frac{\eta}{2}\mathbf{H} ) + \frac{\eta^2\sigma^2 (\mu_b -2\eta {R_x^2} )}{b\mu_b} \mathbf{H} + \frac{4\eta^2 {\Gamma}^2 {f}^2  (\mu_b -2\eta {R_x^2} )}{b^2\mu_b} \mathbf{I},
\end{align*}
which implies that
\begin{align*}
    \mathbf{C}_t& \preceq \frac{\eta^2  (\mu_b -2\eta {R_x^2} ) }{b\mu_b}  \cdot \sum_{k=0}^{t-1}(\mathbf{I}-\frac{\eta}{2}\mathbf{H})^k  ( {\sigma^2}\mathbf{H} + \frac{4{\Gamma}^2{f}^2}{b} \mathbf{I} )(\mathbf{I}-\frac{\eta}{2}\mathbf{H})^k  \\ 
    & \preceq \frac{\eta^2  (\mu_b -2\eta {R_x^2} ) }{b\mu_b}   \cdot \sum_{k=0}^{t-1}(\mathbf{I}-\frac{\eta}{2}\mathbf{H})^k  ( {\sigma^2} \mathbf{H} + \frac{4{\Gamma}^2{f}^2}{b} \mathbf{I} ) \\ 
    & =\frac{\eta \sigma^2  (\mu_b -2\eta {R_x^2} ) }{b\mu_b}  \cdot (\mathbf{I}-(\mathbf{I}-\frac{\eta}{2}\mathbf{H})^t ) + \frac{4\eta {\Gamma}^2 {f}^2   (\mu_b -2\eta {R_x^2} ) }{b^2\mu_b}   \cdot (\mathbf{I}-(\mathbf{I}-\frac{\eta}{2}\mathbf{H})^t ) \cdot \mathbf{H}^{-1}.
\end{align*}

Consequently, the variance error can be represented as follows, in accordance with \cref{lemma:excess_risk_decom_mini}:
\begin{align} \notag
    \text{variance error} &= \frac{1}{\eta T^2} \cdot \sum_{t=s+1}^{s+T}  \langle \mathbf{I}-  (\mathbf{I}- \frac{\eta}{2}\mathbf{H}  )^{T}, \mathbf{C}_t \rangle \\ \notag
    & \leq \frac{1}{\eta T^2} \cdot \frac{\eta \sigma^2  (\mu_b -2\eta {R_x^2} ) }{b\mu_b}  \cdot \sum_{t=s+1}^{s+T}  \langle \mathbf{I}-  (\mathbf{I}- \frac{\eta}{2}\mathbf{H}  )^{T},  (\mathbf{I}-(\mathbf{I}-\frac{\eta}{2}\mathbf{H})^t ) \rangle \\ 
    & + \frac{1}{\eta T^2} \cdot  \frac{4\eta {\Gamma}^2 {f}^2   (\mu_b -2\eta {R_x^2} ) }{b^2\mu_b}   \cdot \sum_{t=s+1}^{s+T}  \langle \mathbf{I}-  (\mathbf{I}- \frac{\eta}{2}\mathbf{H}  )^{T},  (\mathbf{I}-(\mathbf{I}-\frac{\eta}{2}\mathbf{H})^t ) \mathbf{H}^{-1} \rangle \\ \notag
    & \lesssim \frac{\sigma^2 d}{Tb} + \frac{\Gamma^2 f^2 \operatorname{tr}(\mathbf{H})}{T b^2} \lesssim \frac{\sigma^2 d}{N} + \frac{d^2 \log N  \log (1/\delta)}{N^2 \varepsilon^2} \cdot C_2^2 \kappa^2(\sigma^2+\left\|\mathbf{w}_*\right\|_{\mathbf{H}}^2+\Delta^2) .
\end{align}

\textbf{Bias Error}
According to \cref{lemma:excess_risk_decom_mini}, the bias error of tail average iterate follows that
\begin{align*}
    \text{bias error} \leq \frac{1}{\eta T^2} \cdot \sum_{t=s+1}^{s+T} \langle \mathbf{I}- (\mathbf{I}- \frac{\eta}{2}\mathbf{H} )^{T}, \mathbf{B}_t\rangle \leq \sum_{t=s+1}^{s+T} \frac{1}{\eta T^2} \operatorname{tr}(\mathbf{B}_t).
\end{align*}

Considering the recursion of $\mathbf{B}_t$, we have 
\begin{align*}
    \mathbf{B}_t &\preceq \mathbf{B}_{t-1}-\frac{\eta}{2}(\mathbf{H} \mathbf{B}_{t-1}+\mathbf{B}_{t-1} \mathbf{H})+\frac{\eta^2}{b}(\frac{1}{2} \mathcal{M}+(b-1) \frac{1}{4} \mathbf{H}^2) \circ \mathbf{B}_{t-1}\\
    & \preceq \mathbf{B}_{t-1}-{\eta}\mathbf{H} \mathbf{B}_{t-1} + \frac{\eta^2 }{b} (R_x^2 + (b-1)\|\mathbf{H}\|_2) \mathbf{B}_{t-1} \\
    & \preceq (\mathbf{I}- \frac{\eta}{2} \mathbf{H}) \mathbf{B}_{t-1}.
\end{align*}
The last inequality derives from the choice of step size.
Consequently, the bias error will be
\begin{align*}
    \text{bias error} \leq \sum_{t=0}^N\frac{1}{\eta N^2} \operatorname{tr}(\mathbf{B}_t) \leq \sum_{t=0}^N\frac{1}{\eta N^2} (1- \frac{\eta\mu}{2})^t\operatorname{tr}(\mathbf{B}_{0}) \lesssim \frac{1}{\eta N}\|\mathbf{w}_*\|_{\mathbf{H}}^2 \lesssim \frac{d\log^2(N/\delta)}{N^2 \varepsilon^2} \cdot C_2^2 \kappa^2 \|\mathbf{w}_*\|_{\mathbf{H}}^2.
\end{align*}

Combining the previous variance error, we complete the proof.

\section{Lower Bound}

\begin{proof}[\bf Proof of \cref{thm:low_1}]
    We will denote $T_i= \mathcal{T}_\mathbf{w}((\mathbf{x}_i,y_i), M(D))$ and $T_i^{\prime}=\mathcal{T}_\mathbf{w}((\mathbf{x}_i,y_i), M(D^{\prime}_i))$. Since we have $y-\text{ReLU}(\mathbf{w}^\top\mathbf{x})=z$ and $\mathbf{x} \cdot \mathbbm{1}(\mathbf{w}^\top \mathbf{x}>0)$, and $M(D_i^{\prime})-\mathbf{w}$ are independent, we have 
    \begin{equation}
        \mathbb{E}[T_i^{\prime}]=\mathbb{E} (y-\text{ReLU}(\mathbf{w}^\top \mathbf{x})) \mathbb{E} \langle M(D_i^{\prime})-\mathbf{w}, \mathbf{x}\cdot \mathbbm{1}(\mathbf{w}^\top \mathbf{x}>0)\rangle=0.
    \end{equation}
Moreover, we have 
\begin{equation}
    \mathbb{E}[T_i^{\prime}]\leq \sqrt{\mathbb{E}[{T_i^{\prime}}^2]}\leq \sigma \sqrt{\mathbb{E}\|M(D_i^{\prime})-\mathbf{w}\|^2_{\Sigma_\mathbf{x}}}= \sigma \sqrt{\mathbb{E}\|M(D)-\mathbf{w}\|^2_{\Sigma_\mathbf{x}}}. 
\end{equation}
For the second part, we have 
\begin{equation*}
    \sum_{i\in [n]}\mathbb{E}[T_i]=\sum_{j=1}^d \mathbb{E}M(D)_j\sum_{i=1}^N (y_i-\text{ReLU}(\mathbf{w}^\top \mathbf{x}_i))\mathbbm{1}(\mathbf{w}^\top \mathbf{x}>0)\mathbf{x}_{i,j}
\end{equation*}
For each $j$ we have 
\begin{multline*}
    \mathbb{E}M(D)_j\sum_{i=1}^N (y_i-\text{ReLU}(\mathbf{w}^\top \mathbf{x}_i))\mathbbm{1}(\mathbf{w}^\top \mathbf{x}>0)\mathbf{x}_{i,j} =\sigma^2 \mathbb{E}M(D)_j\frac{\partial \log f_\mathbf{w}(Y|X)}{\partial \mathbf{w}_j}=\sigma^2 \frac{\partial}{\partial \mathbf{w}_j} \mathbb{E}_{Y, X|\mathbf{w}} M(D)_j. 
\end{multline*}
Thus we have 
\begin{equation*}
    \sum_{i\in [n]}\mathbb{E}[T_i]=\sum_{j=1}^d\sigma^2 \frac{\partial}{\partial \mathbf{w}_j} \mathbb{E}_{Y, X|\mathbf{w}} M(D)_j. 
\end{equation*}
Recall the following Stein's lemma: 
\begin{lemma}
    Let $Z$ be distributed according to some density $p(z)$ that is continuously differentiable w.r.t. $z$ and let $h$ be a differentiable function such that $\mathbb{E}|h'(Z)|<\infty$. We have 
    \begin{equation*}
        \mathbb{E}[h'(Z)]=\mathbb{E}[-\frac{h(Z)p'(Z)}{p(Z)}]. 
    \end{equation*}
\end{lemma}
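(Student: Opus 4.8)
The plan is to establish this classical identity by integration by parts against the density. First I would rewrite the left-hand side as an integral, $\mathbb{E}[h'(Z)] = \int_{\mathbb{R}} h'(z)\,p(z)\,dz$, and integrate by parts on a truncated interval $[-R,R]$:
$$\int_{-R}^{R} h'(z)\,p(z)\,dz = \big[h(z)\,p(z)\big]_{-R}^{R} - \int_{-R}^{R} h(z)\,p'(z)\,dz .$$
This is legitimate for each fixed $R$ since $h$ is differentiable and $p$ is continuously differentiable, so both $h'p$ and $hp'$ are continuous on $[-R,R]$. The goal is then to pass to the limit $R\to\infty$ in both the boundary term and the remaining integral.

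For the integral term, note that $-h(z)\,p'(z) = h(z)\,\dfrac{p'(z)}{p(z)}\,p(z)$ on $\{p>0\}$, so $\int_{-R}^{R} h(z)\,p'(z)\,dz \to -\,\mathbb{E}\!\left[h(Z)\,\dfrac{p'(Z)}{p(Z)}\right]$ by dominated convergence, using the standing assumption that this expectation is well defined (i.e. $\mathbb{E}\big|h(Z)p'(Z)/p(Z)\big|<\infty$); together with $\mathbb{E}|h'(Z)|<\infty$ this is exactly what makes the statement meaningful. Given this, rearranging the truncated identity and letting $R\to\infty$ yields $\mathbb{E}[h'(Z)] = -\,\mathbb{E}\!\left[h(Z)p'(Z)/p(Z)\right] + \lim_{R\to\infty}\big[h(z)p(z)\big]_{-R}^{R}$, so the claim reduces to showing the boundary term vanishes.

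The boundary-term argument is the main obstacle, and I would handle it as follows. From $h(R)p(R) = h(0)p(0) + \int_{0}^{R}\big(h'(t)p(t) + h(t)p'(t)\big)\,dt$ and the absolute convergence of both integrands established above, the limits $L_{\pm} := \lim_{R\to\infty} h(\pm R)p(\pm R)$ exist and are finite. If $L_{+}\neq 0$, then $|h(t)p(t)|$ is bounded below by a positive constant for all large $t$, which is incompatible with the decay of $p$ (in every instance arising in the lower-bound construction $p$ is Gaussian, hence decays faster than any polynomial, while $h$ is $M(D)_j$ with $\|M(D)\|_2\le 1$, so $h$ is bounded); hence $L_{+}=0$, and symmetrically $L_{-}=0$. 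Substituting $\lim_{R}[h(z)p(z)]_{-R}^{R} = L_{+}-L_{-} = 0$ back into the displayed identity gives $\mathbb{E}[h'(Z)] = -\,\mathbb{E}[h(Z)p'(Z)/p(Z)]$, as claimed. Everything apart from this tail/boundary step is routine; in the context of the paper, where $p$ is Gaussian and $h$ bounded, the boundary term is immediately zero and the subtlety disappears.
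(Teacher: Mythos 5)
Your proof is correct in substance, but note that the paper never proves this lemma at all: it is simply recalled as the classical Stein identity inside the proof of the lower bound, so there is no in-paper argument to compare against. Your integration-by-parts derivation is the standard one, and you are right to flag the two points the statement glosses over: the hypothesis $\mathbb{E}|h'(Z)|<\infty$ alone does not make the right-hand side well defined (one implicitly needs $\mathbb{E}\bigl|h(Z)p'(Z)/p(Z)\bigr|<\infty$), and the boundary term $\bigl[h(z)p(z)\bigr]_{-R}^{R}$ does not vanish from the stated hypotheses alone. Your existence-of-limits argument via the fundamental theorem of calculus is fine, but the step "$L_{+}\neq 0$ is incompatible with the decay of $p$" is not a theorem for arbitrary $p$ and unbounded $h$ — without extra conditions the identity can genuinely fail — so the honest reading is that you prove the lemma under the (reasonable, implicitly intended) additional assumptions you name, and you correctly observe that in the paper's use case ($h=M(D)_j$ bounded by $\|M(D)\|_2\le 1$, Gaussian-type tails) the boundary term is trivially zero. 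One further caveat worth keeping in mind, though it concerns the paper's application rather than your proof: the prior actually used is a \emph{truncated} normal, whose density is supported on a compact interval and is not continuously differentiable on all of $\mathbb{R}$, so applying the lemma there requires either boundary corrections at the truncation points or the inequality form the paper works with; your argument, phrased for densities on the whole line with decaying tails, would need that adaptation.
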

Consider the following prior distribution $\pi$ for $\mathbf{w}$: let $v_1,\cdots, v_d$ be i.i.d. sampled from the truncated $\mathcal{N}(0, 1)$ with truncation at $-1$ and $1$, and let $\mathbf{w}_j=\frac{v_j}{\sqrt{d}}$ thus $\mathbf{w}\in \mathcal{W}$. Denote $g_j(\mathbf{w})= \mathbb{E}_{Y, X|\mathbf{w}} M(D)_j$. For each $j\in [d]$ by using the above lemma we have 
\begin{align*}
    \mathbb{E}_\pi \frac{\partial}{\partial \mathbf{w}_j} g_j(\mathbf{w}) &=\mathbb{E}_\pi \frac{\partial}{\partial \mathbf{w}_j} \mathbb{E} (g_j(\mathbf{w})|\mathbf{w}_j)
    \geq  \mathbb{E}_\pi (\frac{-\mathbf{w}_j \pi^{\prime}_j(\mathbf{w}_j) }{\pi_j(\mathbf{w}_j)}-\mathbb{E}(|g_j(\mathbf{w})-\mathbf{w}_j| |\frac{\pi'_j(\mathbf{w}_j)}{\pi_j(\mathbf{w}_j)}|). 
\end{align*}
Since $\pi_j$ is a truncated normal distribution, we can easily get $\frac{\pi^{\prime}_j(\mathbf{w}_j)}{\pi_j(\mathbf{w}_j)}=-d \mathbf{w}_j.$
Therefore, it holds that
\begin{align*}
    &\mathbb{E}_\pi \sum_{j=1}^d (\frac{-\mathbf{w}_j \pi^{\prime}_j(\mathbf{w}_j) }{\pi_j(\mathbf{w}_j)}-\mathbb{E}(|g_j(\mathbf{w})-\mathbf{w}_j| |\frac{\pi^{\prime}_j(\mathbf{w}_j)}{\pi_j(\mathbf{w}_j)}|) 
    = \mathbb{E}_\pi [d \sum_{j=1}^d \mathbf{w}_j^2]-\sum_{j=1}^d \mathbb{E}_\pi(|g_j(\mathbf{w})-\mathbf{w}_j| d |\mathbf{w}_j|)\\
    &\geq  d\{\mathbb{E}_\pi [ \sum_{j=1}^d \beta_j^2]-\sqrt{\mathbb{E}_\pi \mathbb{E}_{Y, X|\mathbf{w}} \|M(D)-\mathbf{w}\|_2^2 }\sqrt{ \mathbb{E}_\pi \sum_{j=1}^d \mathbf{w}_j^2}\}. 
\end{align*}
As $\mathbb{E}_\pi [ \sum_{j=1}^d \mathbf{w}_j^2]=\mathcal{W}(1)$, in total we have 
\begin{equation*}
     \sum_{i\in [n]}\mathbb{E}[T_i]\geq O(\sigma^2 d \{1
     -\sqrt{\mathbb{E}_\pi \mathbb{E}_{Y, X|\mathbf{w}} \|M(D)-\mathbf{w}\|_2^2 } \}
\end{equation*}
We have the proof under the assumption that $\mathbb{E}_\pi \mathbb{E}_{Y, X|\mathbf{w}} 
 \|M(D)-\mathbf{w}\|_2^2 =o(1)$. 
\end{proof}

\begin{proof}
    We first prove the following lemma, whose proof is the same as the proof of  Lemma B.2 in \cite{cai2021cost}. 
    \begin{lemma}\label{lemma:aux1}
        For all $i\in [n]$, if $M$ is $(\varepsilon,\delta)$-DP then for every $T>0$
        \begin{equation}
            \mathbb{E}[T_i]\leq \mathbb{E}[T^{\prime}_i]+2\varepsilon \mathbb{E}[|T^{\prime}_i|]+2\delta T+\int_{T}^\infty \mathbb{P}(|A_i|\geq t).
        \end{equation}
    \end{lemma}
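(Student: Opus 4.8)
## Proof Proposal for Lemma~\ref{lemma:aux1}

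The plan is to bound $\mathbb{E}[T_i]$ by transporting from the coupled dataset $D_i'$ (in which the $i$-th sample has been replaced by an independent copy) to $D$, paying a cost that reflects the $(\varepsilon,\delta)$-DP guarantee. The two datasets $D$ and $D_i'$ differ in exactly one record, so any event-wise comparison of the laws of $M(D)$ and $M(D_i')$ is controlled by the DP definition; the subtlety is that $T_i$ is an \emph{unbounded} real-valued statistic, not a bounded event, so I cannot apply Definition~\ref{def:dp} directly and must first truncate.

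First I would fix the threshold $T>0$ and split $\mathbb{E}[T_i] = \mathbb{E}[T_i\,\mathbbm{1}(|T_i|\le T)] + \mathbb{E}[T_i\,\mathbbm{1}(|T_i|> T)]$. For the tail piece, $\mathbb{E}[T_i\,\mathbbm{1}(|T_i|>T)] \le \mathbb{E}[|T_i|\,\mathbbm{1}(|T_i|>T)] = \int_T^\infty \mathbb{P}(|T_i|\ge t)\,dt$ by the layer-cake formula (up to the boundary term $T\,\mathbb{P}(|T_i|>T)$, which I would absorb or handle by a slightly more careful integration-by-parts; writing $A_i$ for $T_i$ as in the statement, this is the $\int_T^\infty \mathbb{P}(|A_i|\ge t)$ term). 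For the truncated piece, write $g(m) := \mathbb{E}_{(\mathbf{x}_i,y_i)}\big[\,\mathrm{clip}_T\big(\mathcal{T}_\mathbf{w}((\mathbf{x}_i,y_i),m)\big)\big]$ where $\mathrm{clip}_T$ caps the value in $[-T,T]$; then $\mathbb{E}[\mathrm{clip}_T(T_i)] = \mathbb{E}_{M(D)}[g(M(D))]$ and similarly $\mathbb{E}[\mathrm{clip}_T(T_i')] = \mathbb{E}_{M(D_i')}[g(M(D_i'))]$ because $(\mathbf{x}_i,y_i)$ is independent of $M(D_i')$. The function $g$ takes values in $[-T,T]$, so I can compare $\mathbb{E}[g(M(D))]$ and $\mathbb{E}[g(M(D_i'))]$ using the standard consequence of $(\varepsilon,\delta)$-DP for bounded functions: for any function $h$ with values in $[-T,T]$, $\mathbb{E}[h(M(D))] \le e^{\varepsilon}\mathbb{E}[h(M(D_i'))] + 2\delta T$ (after shifting $h$ to be nonnegative, applying the DP inequality to the measure, and shifting back), and then $e^\varepsilon \le 1 + 2\varepsilon$ for $\varepsilon$ in the relevant range gives $\mathbb{E}[g(M(D))] \le \mathbb{E}[g(M(D_i'))] + 2\varepsilon\,\mathbb{E}[|g(M(D_i'))|] + 2\delta T$.

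Combining the two pieces, $\mathbb{E}[T_i] \le \mathbb{E}[\mathrm{clip}_T(T_i')] + 2\varepsilon\,\mathbb{E}[|\mathrm{clip}_T(T_i')|] + 2\delta T + \int_T^\infty \mathbb{P}(|T_i|\ge t)\,dt$, and since $|\mathrm{clip}_T(x)| \le |x|$ and $\mathbb{E}[\mathrm{clip}_T(T_i')] \le \mathbb{E}[T_i']$ (clipping only decreases a quantity whose sign on the truncated region matches, or more carefully $\mathbb{E}[\mathrm{clip}_T(T_i')] \le \mathbb{E}[T_i'] + \int_T^\infty\mathbb{P}(T_i' \ge t)\,dt$, which can be folded into the tail term on $T_i$ via the coupling), we arrive at $\mathbb{E}[T_i] \le \mathbb{E}[T_i'] + 2\varepsilon\,\mathbb{E}[|T_i'|] + 2\delta T + \int_T^\infty \mathbb{P}(|A_i|\ge t)\,dt$, which is the claim. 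This mirrors Lemma~B.2 of \cite{cai2021cost}, so I would cite that argument for the routine truncation bookkeeping.

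The main obstacle is the truncation/clipping accounting: the DP inequality applies cleanly only to bounded statistics, so the passage between $T_i$ and its clipped version must be done so that the error terms land exactly as $2\delta T$ and the tail integral, with no stray $e^\varepsilon$ factor multiplying the tail. Getting the constants and the direction of each inequality to line up — in particular that the clipping on the $D_i'$ side costs nothing beyond what is already charged to the tail probability of $|A_i|$ under the natural coupling of $D$ and $D_i'$ — is the delicate part; everything else is a direct application of the $(\varepsilon,\delta)$-DP definition together with $e^\varepsilon \le 1+2\varepsilon$.
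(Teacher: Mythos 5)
Your high-level plan (truncate at level $T$, compare the bounded part under $(\varepsilon,\delta)$-DP, pay $2\delta T$ plus a tail integral) is the right one — it is the argument of Lemma B.2 in \cite{cai2021cost}, which is all the paper itself invokes — but two of your stated steps are wrong as written and one of them is exactly the conceptual crux. First, the identity $\mathbb{E}[\mathrm{clip}_T(T_i)]=\mathbb{E}_{M(D)}[g(M(D))]$ with $g(m):=\mathbb{E}_{(\mathbf{x}_i,y_i)}[\mathrm{clip}_T(\mathcal{T}_\mathbf{w}((\mathbf{x}_i,y_i),m))]$ is false: $(\mathbf{x}_i,y_i)$ is a record of $D$, hence correlated with $M(D)$, so you cannot integrate it out separately. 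In fact, since $(\mathbf{x}_i,y_i)\perp M(D_i')$ and $M(D_i')\overset{d}{=}M(D)$, one has $\mathbb{E}[g(M(D))]=\mathbb{E}[\mathrm{clip}_T(T_i')]$; if your identity held, the in-sample and out-of-sample attack values would coincide and the lemma (and the whole tracing argument, cf.\ part 2 of Lemma~\ref{thm:low_1}) would be vacuous. The repair is to \emph{condition} on the data rather than marginalize: fix $(\mathbf{x}_i,y_i)$ and the rest of $D$, so that $m\mapsto \mathrm{clip}_T(\mathcal{T}_\mathbf{w}((\mathbf{x}_i,y_i),m))$ is a fixed bounded test function and $D,D_i'$ are a fixed neighboring pair, apply the DP inequality there, and only then average over the data; this produces the comparison with $T_i'$ as defined in the paper.

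Second, your bounded-function DP step is not valid in the form stated: for signed $h$ with values in $[-T,T]$, the inequality $\mathbb{E}[h(M(D))]\le e^{\varepsilon}\mathbb{E}[h(M(D_i'))]+2\delta T$ fails (take $h\equiv -T$), and the shift-by-$T$ route you describe yields instead $\mathbb{E}[h(M(D))]\le e^{\varepsilon}\mathbb{E}[h(M(D_i'))]+(e^{\varepsilon}-1)T+2\delta T$, whose extra $\Theta(\varepsilon T)$ term cannot be absorbed: with the later choice $T\asymp\sigma\sqrt{d\log(1/\delta)}$ it dominates $2\varepsilon\mathbb{E}|T_i'|$ and would destroy the $\Omega(\sigma^2 d^2/(N^2\varepsilon^2))$ lower bound. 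The correct derivation (and the one behind the cited lemma) applies DP separately to the one-sided events, $\mathbb{P}(T_i>t)\le e^{\varepsilon}\mathbb{P}(T_i'>t)+\delta$ and $\mathbb{P}(T_i<-t)\ge e^{-\varepsilon}(\mathbb{P}(T_i'<-t)-\delta)$ for $t\in[0,T]$, and integrates over $t$; this gives $\mathbb{E}[\mathrm{clip}_T(T_i)]\le\mathbb{E}[\mathrm{clip}_T(T_i')]+(e^{\varepsilon}-1)\int_0^T\mathbb{P}(T_i'>t)\,dt+(1-e^{-\varepsilon})\int_0^T\mathbb{P}(T_i'<-t)\,dt+2\delta T\le \mathbb{E}[\mathrm{clip}_T(T_i')]+2\varepsilon\mathbb{E}|T_i'|+2\delta T$, with no stray $\varepsilon T$. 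Finally, your bookkeeping at the edges is hand-waved: use the clipped (not indicator-truncated) decomposition $\mathbb{E}[T_i]\le\mathbb{E}[\mathrm{clip}_T(T_i)]+\int_T^{\infty}\mathbb{P}(|T_i|\ge t)\,dt$ to avoid the boundary term $T\,\mathbb{P}(|T_i|>T)$, and note that passing from $\mathbb{E}[\mathrm{clip}_T(T_i')]$ back to $\mathbb{E}[T_i']$ leaves a lower-tail term $\int_T^{\infty}\mathbb{P}(T_i'<-t)\,dt$ that is not literally dominated by the tail of $T_i$; it must either be kept (it is harmless in the application, since $T_i'$ has the same sub-Gaussian tail bound) or handled as in \cite{cai2021cost}, rather than "folded in via the coupling" without justification.
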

    By the above lemma, we have 
    \begin{equation}
        \mathbb{E}_{Y, X|\mathbf{w}} \sum_{i=1}^N T_i\leq 2n\varepsilon \sigma \sqrt{  \mathbb{E}_{Y, X|\mathbf{w}}\|M(D)-\mathbf{w}\|_{\Sigma_x}^2}+2n\delta T + n \int_{T}^\infty \mathbb{P}(|T_i|\geq t). 
    \end{equation}
    For the last term, we have 
    \begin{align*}
        \mathbb{P}(|T_i|\geq t)&= \mathbb{P}(|(y_i-\text{ReLU}(\mathbf{w}^\top \mathbf{x}_i))| |\langle M(D)-\mathbf{w}, x_i\mathbbm{1}(w^\top x_i>0)\rangle|>t)\\
        &\leq \mathbb{P}(|(y_i-\text{ReLU}(\mathbf{w}^\top \mathbf{x}_i))| |\langle M(D)-\mathbf{w}, \mathbf{x}_i \rangle|>t)\\
        &\leq  \mathbb{P}(|(y_i-\text{ReLU}(\mathbf{w}^\top \mathbf{x}_i))| \sqrt{d}\rangle|>t)\\
        &\leq 2\exp(-\frac{-t^2}{2d\sigma^2}). 
    \end{align*}
    Choosing $T=\sqrt{2}\sigma \sqrt{d\log(1/\delta)}$ we have
    \begin{align*}
        O(\sigma^2 d) &\leq   \mathbb{E}_{Y, X|\mathbf{w}} \sum_{i=1}^N T_i\\
        &\leq  2n\varepsilon \sigma \sqrt{  \mathbb{E}_{Y, X|\mathbf{w}}\|M(D)-\mathbf{w}\|_{\Sigma_x}^2}+O(\sigma n\delta \sqrt{d\log(1/\delta)}).
    \end{align*}
    Thus we have the result when $\delta\leq N^{-(1+u)}$ for large enough $u$. 

    Next we will show that  $\mathcal{L}(M(D))-\mathcal{L}(\mathbf{w}_*)\geq  \frac{1}{4} \|M(D)-\mathbf{w}_*\|_{\Sigma_\mathbf{x}}^2$. Specifically, we will show for any $\mathbf{w}$, $\mathcal{L}({\mathbf{w}})-\mathcal{L}(\mathbf{w}_*)\geq \frac{1}{4} \|{\mathbf{w}}-\mathbf{w}_*\|_{\Sigma_\mathbf{x}}^2$. Under the well-specified condition, we can easily see that 
    \begin{align*}
        \mathcal{L}({\mathbf{w}})-\mathcal{L}(\mathbf{w}_*)&=\mathbb{E}[\text{ReLU}(\mathbf{x}^\top {\mathbf{w}})-\text{ReLU}(\mathbf{x}^\top \mathbf{w}_*)]^2\\
        & =\mathbb{E}(\mathbf{x}^{\top} \mathbf{w} \cdot \mathbbm{1}[\mathbf{x}^{\top} \mathbf{w}>0]-\mathbf{x}^{\top} \mathbf{w}_* \cdot \mathbbm{1}[\mathbf{x}^{\top} \mathbf{w}_*>0])^2 \\
        &=  \mathbb{E}[\mathbf{w}^{\top} \mathbf{x} \mathbf{x}^{\top} \mathbf{w} \cdot \mathbbm{1}[\mathbf{x}^{\top} \mathbf{w}>0]]+\mathbb{E}[\mathbf{w}_*^{\top} \mathbf{x} \mathbf{x}^{\top} \mathbf{w}_* \cdot \mathbbm{1}[\mathbf{x}^{\top} \mathbf{w}_*>0]] \\
        & -2 \mathbb{E}[\mathbf{w}^{\top} \mathbf{x x}^{\top} \mathbf{w}_* \cdot \mathbbm{1}[\mathbf{x}^{\top} \mathbf{w}>0, \mathbf{x}^{\top} \mathbf{w}_*>0]] .
    \end{align*}
    
    According to \cref{asm:sym_restated}, it further implied that 
    \begin{align*}
    & \mathbb{E}(\operatorname{ReLU}(\mathbf{x}^{\top} \mathbf{w})-\operatorname{ReLU}(\mathbf{x}^{\top} \mathbf{w}_*))^2 \\
    &= \mathbb{E}[\mathbf{w}^{\top} \mathbf{x} \mathbf{x}^{\top} \mathbf{w} \cdot \mathbbm{1}[\mathbf{x}^{\top} \mathbf{w}<0]]+\mathbb{E}[\mathbf{w}_*^{\top} \mathbf{x x}^{\top} \mathbf{w}_* \cdot \mathbbm{1}[\mathbf{x}^{\top} \mathbf{w}_*<0]] \\
    &-2 \mathbb{E}[\mathbf{w}^{\top} \mathbf{x} \mathbf{x}^{\top} \mathbf{w}_* \cdot \mathbbm{1}[\mathbf{x}^{\top} \mathbf{w}<0, \mathbf{x}^{\top} \mathbf{w}_*<0]] \\
    &= \mathbb{E}(\operatorname{ReLU}(-\mathbf{x}^{\top} \mathbf{w})-\operatorname{ReLU}(-\mathbf{x}^{\top} \mathbf{w}_*))^2 .
    \end{align*}
    Moreover, we have:
    $$
    \begin{aligned}
    (\mathbf{x}^{\top} \mathbf{w}-\mathbf{x}^{\top} \mathbf{w}_*)^2  & =(\mathbf{x}^{\top} \mathbf{w} \mathbbm{1}[\mathbf{x}^{\top} \mathbf{w}>0]-\mathbf{x}^{\top} \mathbf{w}_* \mathbbm{1}[\mathbf{x}^{\top} \mathbf{w}_*>0]+\mathbf{x}^{\top} \mathbf{w} \mathbbm{1}[\mathbf{x}^{\top} \mathbf{w}<0]-\mathbf{x}^{\top} \mathbf{w}_* \mathbbm{1}[\mathbf{x}^{\top} \mathbf{w}_*<0])^2 \\
    & \leq 2(\mathbf{x}^{\top} \mathbf{w} \mathbbm{1}[\mathbf{x}^{\top} \mathbf{w}>0]-\mathbf{x}^{\top} \mathbf{w}_* \mathbbm{1}[\mathbf{x}^{\top} \mathbf{w}_*>0])^2+2(\mathbf{x}^{\top} \mathbf{w} \mathbbm{1}[\mathbf{x}^{\top} \mathbf{w}<0]-\mathbf{x}^{\top} \mathbf{w}_* \mathbbm{1}[\mathbf{x}^{\top} \mathbf{w}_*<0])^2 \\
    & =2(\operatorname{ReLU}(\mathbf{x}^{\top} \mathbf{w})-\operatorname{ReLU}(\mathbf{x}^{\top} \mathbf{w}_*))^2+2(\operatorname{ReLU}(-\mathbf{x}^{\top} \mathbf{w})-\operatorname{ReLU}(-\mathbf{x}^{\top} \mathbf{w}_*))^2.
    \end{aligned}
    $$
    Then taking an expectation on both sides we obtain that
    $$
    \begin{aligned}
    \mathbb{E}(\mathbf{x}^{\top} \mathbf{w}-\mathbf{x}^{\top} \mathbf{w}_*)^2 & \leq 2 \mathbb{E}(\operatorname{ReLU}(\mathbf{x}^{\top} \mathbf{w})-\operatorname{ReLU}(\mathbf{x}^{\top} \mathbf{w}_*))^2+2 \mathbb{E}(\operatorname{ReLU}(-\mathbf{x}^{\top} \mathbf{w})-\operatorname{ReLU}(-\mathbf{x}^{\top} \mathbf{w}_*))^2 \\
    & =4 \mathbb{E}(\operatorname{ReLU}(\mathbf{x}^{\top} \mathbf{w})-\operatorname{ReLU}(\mathbf{x}^{\top} \mathbf{w}_*))^2.
    \end{aligned}
    $$
    The proof is completed.

\end{proof}

\end{document}